\title{When Do Extended Physics-Informed Neural Networks (XPINNs) Improve Generalization?\footnote{Published in SIAM Journal on Scientific Computing (SISC)}}
\author{Zheyuan Hu\thanks{Department of Computer Science, National University of Singapore, Singapore, 119077 (\href{mailto:e0792494@u.nus.edu}{e0792494@u.nus.edu},\href{mailto:kenji@nus.edu.sg}{kenji@nus.edu.sg})}
\and Ameya D. Jagtap\thanks{Division of Applied Mathematics, Brown University, Providence, RI 02912, USA (\href{mailto:ameya\_jagtap@brown.edu}{ameya\_jagtap@brown.edu}, \href{mailto:george\_karniadakis@brown.edu}{george\_karniadakis@brown.edu})}
\and George Em Karniadakis\footnotemark[3] \and  \linebreak Kenji Kawaguchi\footnotemark[2]}
\date{}
\newtheorem{theorem}{Theorem}[section]
\newtheorem{lemma}{Lemma}[section]
\newtheorem{assumption}{Assumption}[section]
\newtheorem{definition}{Definition}[section]
\newcommand{\bx}{\boldsymbol{x}}
\newcommand{\bt}{\boldsymbol{\theta}}
\newcommand{\npde}{five } 
\begin{document}

\maketitle
\begin{abstract}
Physics-informed neural networks (PINNs) have become a popular choice for solving high-dimensional partial differential equations (PDEs) due to their excellent approximation power and generalization ability. Recently, Extended PINNs (XPINNs) based on domain decomposition methods have attracted considerable attention due to their effectiveness in modeling multiscale and multiphysics problems and their parallelization. However, theoretical understanding on their convergence and generalization properties remains unexplored. 
In this study, we take an initial step towards understanding how and when XPINNs outperform PINNs. Specifically, for general multi-layer PINNs and XPINNs, we first provide a prior generalization bound via the complexity of the target functions in the PDE problem, and a posterior generalization bound via the posterior matrix norms of the networks after optimization. 
Moreover, based on our bounds, we analyze the conditions under which XPINNs improve generalization. Concretely, our theory shows that the key building block of XPINN, namely the domain decomposition, introduces a tradeoff for generalization. On the one hand, XPINNs decompose the complex PDE solution into several simple parts, which decreases the complexity needed to learn each part and boosts generalization. On the other hand, decomposition leads to less training data being available in each subdomain, and hence such model is typically prone to overfitting and may become less generalizable. Empirically, we choose \npde PDEs to show when XPINNs perform better than, similar to, or worse than PINNs, hence demonstrating and justifying our new theory.
\end{abstract}


\section{Introduction}
Deep learning has revolutionized numerous fields in computer science, such as computer vision and natural language process. Recently, deep neural networks have also been employed to solve partial differential equations (PDEs) and integrated into the field of scientific computing, thanks to their unique optimization \citep{jagtap2020adaptive,kawaguchi2016deep,Xu2021gnnopt,kawaguchi2021theory} and generalization \citep{kawaguchi2018generalization} abilities. Physics-informed neural networks (PINNs) \cite{raissi2019physics} are among the most popular approaches with a wide variety of successful applications, including heat transfer problems \citep{cai2021physics}, thrombus material properties \citep{yin2021non}, nano-optics \citep{chen2020physics}, and fluid mechanics \citep{cai2021flow,jin2021nsfnets}. PINNs are used as surrogates of a target solution for solving PDEs, and a solution is found by searching for the best parameters of PINNs that satisfy the physical laws governed by the PDEs. A more recent work \cite{jagtap2020extended} proposed Extended PINNs (XPINNs), which improves on PINNs by employing a domain decomposition method for partitioning the PDE problem into several sub-problems on subdomains, where each sub-problem can be solved by individual networks called as sub-PINNs. In XPINNs, the continuity of the PINN functions between each subdomain is maintained. XPINNs facilitate parallel computing, accelerate convergence, and improve generalization empirically. Despite great progress in applications, currently no theoretical understanding exists on when and how XPINNs are better than PINNs.

Recently, some works on theoretically understanding of PINNs have emerged \cite{Luo2020TwoLayerNN, lu2021priori, mishra2020estimates, shin2020convergence}. For two-layer networks, Luo and Yang \cite{Luo2020TwoLayerNN} derived  prior and posterior generalization bounds for PINNs based on Barron space and Rademacher complexity, whereas Lu et al. \cite{lu2021priori} provided prior error estimates based on Barron spaces with the softplus activation. For multi-layer networks, Mishra and Molinaro \cite{mishra2020estimates} introduced abstract formalism and stability properties of the underlying PDEs to derive generalization bounds, and Shin et al. \cite{shin2020convergence} used the Holder continuity constant to bound the generalization of PINNs. While these previous bounds significantly advanced our theoretical understanding of PINNs, we cannot rely on them to study the advantages and disadvantages of multi-layer XPINNs over PINNs. This is because the previous studies focus on PINNs, and the previous bounds either only apply to two-layer networks or depend on variables that are hard to be computed analytically or numerically. For example, the Holder continuity is often difficult to compute efficiently and the assumption of Holder continuity regularization is not widely adopted in practice. Accordingly, it is necessary to employ different approaches to derive new generalization bounds for the multi-layer XPINN in order to understand its advantages and limitations.

In this study, we provide an initial step towards understanding how and when XPINNs improve generalization capabilities of PINNs by proving new generalization bounds for multi-layer XPINNs and PINNs. Specifically, we first discuss the Barron space theory for multi-layer networks to define the function space of neural networks. We then derive a prior generalization bound for PINNs with the complexity of a target function measured via its Barron norm without any further assumption. Furthermore, we derive Rademacher complexity bounds of PINNs via capacity controls based on the spectral norm and the (2,1) norm, which are then used for our posterior generalization bounds for PINNs. We then extend these bounds of PINNs to those of XPINNs by applying the bounds to each of the subdomains in XPINNs and combine them to form the final results. Overall, our theoretical results predict that in terms of generalization, the advantages and disadvantages of XPINNs come from the tradeoff between the reduction in the complexity of decomposed target functions (within each subdomain) and the increase in the over-fitting due to less available training data (in each subdomain). That is, the domain decomposition of XPINNs can make a target function in a subdomain to be less complex than the whole target function, resulting in a reduction in a complexity measure, whereas each sub-network tend to utilize less than the entire available training data. To illustrate when and how XPINNs improve generalization based on our theory, we first provide analytical examples, where we mathematically compute and compare the prior bounds of XPINNs and PINNs. Furthermore, we adopt \npde PDEs to numerically demonstrate our posterior bounds via experiments. Both analytical examples and experimental observations confirm our theoretical prediction and deepen our understanding, demonstrating that the two factors in our generalization bounds lead to a tradeoff, leading to different performances of XPINNs over PINNs on various tasks. 

The remainder of this paper is arranged as follows. In Section 2, we provide properties, background and assumptions on PDEs, PINNs, and the function space for multi-layer neural networks. In Section 3, our main generalization results (both prior and posterior bounds) are presented. In Section 4, discussion on theoretical analysis as well as analytical examples are introduced. In Section 5, extensive experiments are conducted to numerically demonstrate our theory.

\section{Preliminaries}
In this section, we present introductory facts for PDEs, neural networks, as well as PINNs and XPINNs. We use bold-faced lowercase letters to denote vectors, and capital letters to denote matrices and network parameters. Given a vector $\boldsymbol{v}$, we denote its Euclidean norm by $\Vert \boldsymbol{v} \Vert$, while $\Vert \cdot\Vert_p$ refers to the $p$-norms. For matrix norms, we denote the spectral norm by $\Vert \cdot\Vert_2$ and $l_{p,q}$ norms by $\Vert \boldsymbol{W} \Vert_{p,q} = (\sum_j(\sum_k |W_{j,k}|^p)^{q/p})^{1/q}$.
Following convention, we define $\inf$ of a set $S$ to be the infimum of the subset $S$ of $\overline R$ (the set of affinely extended real numbers); e.g., the infimum of the empty set is infinity.

\subsection{PDE Problem}

In this paper, we consider PDEs defined on the bounded domain $\Omega = (-1, 1)^d$. More specifically, the PDEs under consideration are in the form of
\begin{equation}\label{eq:PDE}
\begin{aligned}
    \mathcal{L}u^*&=f \ \text{in}\ \Omega, \qquad
    u^*=g \ \text{on}\ \partial\Omega,
\end{aligned}
\end{equation}
where $\mathcal{L}$ is the differential operator characterizing the PDE, $\partial\Omega$ is the boundary of the set $\Omega$, $f:\bx=(x_{1},\dots,x_d)\in{\Omega}\longmapsto f(\bx) \in \mathbb{R}$ and $g:\bx=(x_{1},\dots,x_d)\in\partial{\Omega}\longmapsto g(\bx) \in \mathbb{R}$ are given functions, and the function  $u^*:\bx=(x_{1},\dots,x_d)\in\overline{\Omega}\longmapsto u^*(\bx) \in \mathbb{R}$ is the unknown solution of PDEs with its domain $\overline{\Omega}=\Omega \cup\partial\Omega$.

\subsection{PINN and XPINN}
In this subsection, we introduce neural network-based PDE solvers PINNs and XPINNs. Specifically, in PINNs we optimize neural networks via gradient-based algorithms to enable the network functions to satisfy the data and the physical laws governed by the PDEs. Given $n_b$ boundary training points $\left\{\boldsymbol{x}_{b,i}\right\}_{i=1}^{n_b}\subset\partial\Omega$ and $n_r$ residual training points $\left\{\boldsymbol{x}_{r,i}\right\}_{i=1}^{n_r}\subset\Omega$, we approximate the true PDE solution $u^*:\overline{\Omega}\rightarrow\mathbb{R}$ by the PINN function $u_{\boldsymbol{\theta}}$ parameterized by $\boldsymbol{\theta}$ via minimizing the empirical loss composed of the boundary loss and the residual loss, as given below.
\begin{equation}
R_S(\boldsymbol{\theta}) = \frac{1}{n_b}\sum_{i=1}^{n_b} {|u_{\boldsymbol{\theta}}(\boldsymbol{x}_{b,i})-g(\boldsymbol{x}_{b,i})|}^2 + \frac{1}{n_r}\sum_{i=1}^{n_r} {|\mathcal{L}u_{\boldsymbol{\theta}}(\boldsymbol{x}_{r,i})-f(\boldsymbol{x}_{r,i})|}^2,
\end{equation}
where the first term is included to force the network to satisfy boundary conditions, while the second term forces the network to satisfy the physical laws described by the PDEs. 

XPINN is an extension of PINN, obtained by decomposing the whole domain $\overline{\Omega}$ into several subdomains, mapped to several sub-PINNs. The continuity between each sub-nets is maintained via the interface loss function and the final solution of  XPINN is the combination and ensemble of all sub-nets, where each of them is responsible for prediction on one subdomain. 
More specifically, the original domain $\Omega$ is decomposed into $N_D$ subdomains as $\Omega = \cup_{i=1}^{N_D} \Omega_i$. The loss of XPINN contains the sum of losses for the sub-nets, which consist of boundary loss and residual loss, plus the interface loss using points on $
\partial\Omega_i\cap\partial\Omega_j$, where $i,j\in\left\{1,2,...,N_D\right\}$ such that  $
\partial\Omega_i\cap\partial\Omega_j\neq \emptyset$ to maintain the continuity between the two sub-nets $i$ and $j$. Mathematically, the XPINN loss for the $i$-th subdomain is
\begin{equation}
    R_S^{i}(\boldsymbol{\theta}^i) + \lambda_I \sum_{i,j:\partial\Omega_i\cap\partial\Omega_j\neq\emptyset} R_I(\boldsymbol{\theta}^i, \boldsymbol{\theta}^j),
\end{equation}
where $\lambda_I \geq 1$ is the weight controlling the strength of interface loss, $\boldsymbol{\theta}^i$ is the parameters for subdomain $i$, and each $R_S^i(\boldsymbol{\theta})$ is the PINN loss for subdomain $i$ containing boundary and residual losses, i.e.
\begin{equation}
    R_S^i(\boldsymbol{\theta}^i) = \frac{1}{n_{b,i}}\sum_{j=1}^{n_{b,i}} {|u_{\boldsymbol{\theta}^i}(\boldsymbol{x}^i_{b,j})-g(\boldsymbol{x}^i_{b,j})|}^2 + \frac{1}{n_{r,i}}\sum_{j=1}^{n_{r,i}} {|\mathcal{L}u_{\boldsymbol{\theta}^i}(\boldsymbol{x}^i_{r,j})-f(\boldsymbol{x}^i_{r,j})|}^2,
\end{equation}
where $n_{b,i}$ and $n_{r,i}$ are the number of boundary points and residual points in subdomain $i$ respectively, $\boldsymbol{x}^i_{b,j}$ and $\boldsymbol{x}^i_{r,j}$ are the $j$-th boundary and residual training points in subdomain $i$, respectively. Moreover, $R_I(\boldsymbol{\theta}^i, \boldsymbol{\theta}^j)$ is the interface loss between the $i$-th and $j$-th subdomains based on several interface training points $\{\boldsymbol{x}^{ij}_{I,k}\}_{k=1}^{n_{I,ij}}\subset\partial\Omega_i\cap\partial\Omega_j$
\begin{equation}
\begin{aligned}
R_I(\boldsymbol{\theta}^i, \boldsymbol{\theta}^j) &= \frac{1}{n_{I,ij}} \sum_{k=1}^{n_{I,ij}}[ |u_{\boldsymbol{\theta}^i}(\boldsymbol{x}^{ij}_{I,k})- \{\{ 
u_{\boldsymbol{\theta}^{avg}} \}\} |^2 +\\&\ |(\mathcal{L}u_{\boldsymbol{\theta}^i}(\boldsymbol{x}^{ij}_{I,k}) - f_i(\boldsymbol{x}^{ij}_{I,k}))-(\mathcal{L}u_{\boldsymbol{\theta}^j}(\boldsymbol{x}^{ij}_{I,k}) - f_j(\boldsymbol{x}^{ij}_{I,k}))|^2 ],
\end{aligned}
\end{equation}
where $\{\{ 
u_{\boldsymbol{\theta}^{avg}} \}\}  = u_{avg} \coloneqq ({u_{\boldsymbol{\theta}^i}(\boldsymbol{x}^{ij}_{I,k})+u_{\boldsymbol{\theta}^j}(\boldsymbol{x}^{ij}_{I,k})})/{2}$,  $n_{I,ij}$ is the number of interface points between the $i$-th and $j$-th subdomains, while $\boldsymbol{x}^{ij}_{I,k}$ is the $k$-th interface points between them. The first term is the average solution continuity between the $i$-th and the $j$-th sub-nets, while the second term is the residual continuity condition on the interface given by the $i$-th and the $j$-th sub-nets.

We also notice a recent paper on improving the training and generalization of XPINN \cite{de2022error}, which includes the following additional interface regularization term for XPINN:
\begin{equation}
R_A(\bt^i, \bt^j) = \frac{1}{n_{I,ij}} \sum_{k=1}^{n_{I,ij}} \sum_{m=1}^d \left| \frac{\partial u_{\bt^i}(\bx^{ij}_{I,k})}{\partial \bx_m}-\frac{\partial u_{\bt^j}(\bx^{ij}_{I,k})}{\partial \bx_m}\right|^2,
\end{equation}
where $d$ is the problem dimension, i.e., $\bx \in \mathbb{R}^d$. The additional interface condition forces the continuity of the first order derivatives between two sub-nets. In our experiment on the Poisson equation in section 5.4, which includes residual discontinuity, we shall show how this additional term improves XPINN via decreasing errors near the interface.

For our discussion on generalization, besides the training losses above, the testing loss evaluating generalization ability is defined as
\begin{equation}
    R_{{D}}(\boldsymbol{\theta})=\mathbb{E}_{\text{Unif}(\partial\Omega)} {|u_{\boldsymbol{\theta}}(\boldsymbol{x})-g(\boldsymbol{x})|}^2 + \mathbb{E}_{\text{Unif}(\Omega)} {|\mathcal{L}u_{\boldsymbol{\theta}}(\boldsymbol{x})-f(\boldsymbol{x})|}^2,
\end{equation}
where $\text{Unif}(A)$ is the uniform distribution on a set $A$. Note that in the definition of the population loss, the interface losses of XPINNs are excluded to  compare PINNs and XPINNs with the same quantity -- the generalization bound for the boundary and residual terms. The beneficial effect of the interface loss is in improving the generalization of the boundary and residual terms instead of helping the generalization of the interface term itself. This is because the interface allows the sub-net in the subdomain $\Omega_i$ to implicitly use samples from other subdomain $\Omega_j$ ($i \neq j$) for regularization through the continuity.

Lastly, we denote the boundary empirical loss and the residual empirical loss by$R_{S\cap\partial\Omega}$ and  $R_{S\cap\Omega}$, respectively, and their population versions by$R_{D\cap\partial\Omega}$ and $R_{D\cap\Omega}$, inspired by the fact that boundary points are on $\partial \Omega$ and that residual points are in $\Omega$. Specifically, their mathematical definitions are:
\begin{equation}\label{eq:loss_definition}
\begin{aligned}
R_{S\cap\partial\Omega} &= \frac{1}{n_b}\sum_{i=1}^{n_b} {|u_{\boldsymbol{\theta}}(\boldsymbol{x}_{b,i})-g(\boldsymbol{x}_{b,i})|}^2. \quad R_{S\cap\Omega} = \frac{1}{n_r}\sum_{i=1}^{n_r} {|\mathcal{L}u_{\boldsymbol{\theta}}(\boldsymbol{x}_{r,i})-f(\boldsymbol{x}_{r,i})|}^2.\\
R_{D\cap\partial\Omega} &= \mathbb{E}_{\text{Unif}(\partial\Omega)} {|u_{\boldsymbol{\theta}}(\boldsymbol{x})-g(\boldsymbol{x})|}^2.\quad R_{D\cap\Omega} = \mathbb{E}_{\text{Unif}(\Omega)} {|\mathcal{L}u_{\boldsymbol{\theta}}(\boldsymbol{x})-f(\boldsymbol{x})|}^2.
\end{aligned}
\end{equation}

\subsection{Neural Networks}
In this subsection, we define neural networks and their related properties.
\begin{definition}\label{def:DNN}
(Neural Network). A deep neural network (DNN) $u_{\boldsymbol{\theta}}:\bx=(x_{1},\dots,x_d$ $)\in\overline{\Omega}\longmapsto u_{\boldsymbol{\theta}}(\bx) \in \mathbb{R}$,
parameterized by $\boldsymbol{\theta}$ of depth $L$ is the composition of $L$ linear functions with element-wise non-linearity $\sigma$, is expressed as below.
\begin{equation}\label{eq:DNN}
    u_{\boldsymbol{\theta}}(\boldsymbol{x})=\boldsymbol{W}^L \sigma (\boldsymbol{W}^{L-1} \sigma(\cdots \sigma(\boldsymbol{W}^1\boldsymbol{x})\cdots ),
\end{equation}
where $\boldsymbol{x}\in\mathbb{R}^d$ is the input, and $\boldsymbol{W}^l\in\mathbb{R}^{m_l \times m_{l-1}}$ is the weight matrix at $l$-th layer with $d=m_0$ and $m_L=1$. The parameter vector $\boldsymbol{\theta}$ is the vectorization of the collection of all parameters. We denote $h$ as the maximal width of the neural network, i.e., $ h = \max(m_L, \cdots, m_0)$.
\end{definition}
We consider DNNs without bias because one can always set $\boldsymbol{x}\leftarrow[\boldsymbol{x},1]$ to involve the bias term.
Note that the non-linearity $\sigma$ is Lipschitz continuous with Lipschitz constant 1 in $\Omega$. The widely adopted ReLU activation function ReLU$(\boldsymbol{x}) = \max(0,\boldsymbol{x})$ cannot be used in our setting due to its non-differentiability.

Because the neural network $u_{\boldsymbol{\theta}}$ is always differentiated with respect to its input in the residual losses in PINNs, we introduce their expressions as follows:
\begin{equation}
    \frac{\partial u_{\boldsymbol{\theta}}(\boldsymbol{x})}{\partial \boldsymbol{x}} = \boldsymbol{W}^L \cdot \boldsymbol{\Phi}^{L-1} \boldsymbol{W}^{L-1} \cdot \dots \cdot \boldsymbol{\Phi}^1 \boldsymbol{W}^1 \in \mathbb{R}^{d},
\end{equation}
\begin{equation}
\begin{aligned}
    \frac{\partial^2 u_{\boldsymbol{\theta}}(\boldsymbol{x})}{\partial \boldsymbol{x}^2} &=\{\sum_{l=1}^{L-1} 
    (\boldsymbol{W}^L\boldsymbol{\Phi}^{L-1}\cdots\boldsymbol{W}^{l+1})
    \text{diag}(\boldsymbol{\Psi}^l\cdots\boldsymbol{\Psi}^1\boldsymbol{W}^1_{:,j})
    (\boldsymbol{W}^l\cdots \boldsymbol{\Phi}^1\boldsymbol{W}^1)\}_{1 \leq j \leq d},
\end{aligned}
\end{equation}
where
$\boldsymbol{\Phi}^l = \text{diag}[\sigma'(\boldsymbol{W}^l\sigma(\boldsymbol{W}^{l-1}\sigma(\cdots\sigma(\boldsymbol{W}^1\boldsymbol{x}))))] \in \mathbb{R}^{m_l\times m_l}$, and
$\boldsymbol{\Psi}^l = \text{diag}[\sigma''(\boldsymbol{W}^l\sigma($ $\boldsymbol{W}^{l-1}\sigma(\cdots\sigma(\boldsymbol{W}^1\boldsymbol{x}))))] \in \mathbb{R}^{m_l\times m_l}$.
In the Appendix, we provide detailed computation of the derivatives.

\subsection{Generalized Barron Space}
In this subsection, we introduce the generalized Barron space \cite{Weinan2020OnTB}, which is a natural building block to construct a function space of multi-layer deep networks. This will facilitate our study of their approximation and generalization properties. We begin by presenting some mathematical background.

Let $X$ be a Banach space such that $X$ embeds continuously into the space $C^{2,1}(\overline{\Omega})$ of functions on $\overline{\Omega}$. We further assume that the closed unit ball $B^X$ in $X$ is closed in the topology of $C^2(\overline{\Omega})$. 

Because $X$ embeds continuously into the space $C^{2,1}(\overline{\Omega})$, the Lipschitz constants of functions in $B^X$ and their derivatives up to second order are bounded by the same constant and thus the subset $B^X$ is uniformly $2$-equicontinuous (see the supplementary for definition).

The subset $B^X$ is pre-compact, i.e., its closure is compact, in the separable Banach space $C^2(\overline{\Omega})$, because $\overline{\Omega}$ is compact, and that $C^0(\overline{\Omega})$ is separable. Since $B^X$ is $C^2$-closed and pre-compact, it is compact and is a Polish space in particular. 

Let $\mu$ be a finite signed measure on the Borel $\sigma$-algebra of $B^X$, with respect to the $C^0$-norm. Then $\mu$ is a signed Radon measure. 
We therefore consider the infinite-dimensional vector function version of the activation $\sigma: B^X \rightarrow C^2(\overline{\Omega})$ given by
$\sigma: g \longmapsto (\sigma \circ g) \text{ for each } g \in  B^X$,
where $(a \circ b)$ represents the composition of functions $a$ and $b$. Then, 
the infinite-dimensional vector function $\sigma$ is continuous due to the Lipschtiz continuity of the dimensional wise version of $\sigma$. Thus, the infinite-dimensional vector function $\sigma$ is strongly measurable (the preimage of Borel sets in $B^X$ are Borel sets in $C^0(\overline{\Omega})$), and $\mu$-integrable in the sense of the Bochner integral. The above construction of new function class containing all $\sigma \circ g$ from the class of $g$ in $B^X$ can be formalized as follows.
\begin{definition} (Generalized Barron Space) The generalized Barron space modeled on $X$ associated with the non-linearity $\sigma$ is a normed space $(\mathcal{B}_{X,\Omega}, \Vert \cdot \Vert_{X,\Omega})$ with
\begin{equation}
\begin{aligned}
\mathcal{B}_{X,\Omega} &= \left\{f \in C^2(\overline{\Omega}): \Vert f \Vert_{X,\Omega} < \infty \right\}, \text{ and }\\
\Vert f \Vert_{X,\Omega} &= \inf \left\{\Vert \mu \Vert_{\mathcal{M}(B^X)}:\mu \in \mathcal{M}(B^X) \ \text{s.t.}\ f=f_\mu \ \text{on}\ \overline{\Omega}  \right\},
\end{aligned}
\end{equation}
where $\mathcal{M}(B^X)$ denotes the space of Radon measures on $B^X$ and
$f_\mu = \int_{B^X} \sigma(g) d \mu(g)$.
Here, the integral represents the Bochner integral with $g \in B^X$.
\end{definition}
For example, if $X$ is the space of linear functions from $\mathbb{R}^d$ to $\mathbb{R}$ (which is isomorphic to $\mathbb{R}^{d+1}$), then the generalized Barron space modeled on $X$ is the (usual) Barron space for two-layer neural networks \cite{Weinan2018APE}. If $X$ is the (usual) Barron space of two-layer neural networks, then the generalized Barron space modeled on $X$ is the space for three-layer neural networks. That is, we can construct the generalized Barron space of $L$-layer networks from that of $(L-1)$-layer networks by recursively applying its definition. This recursive construction leads to the \textit{tree-like function space} (Definition \ref{def:1}), which is a function space of multi-layer neural networks (Theorem \ref{thm:embedding}), as given below.
\begin{definition} \label{def:1}
(Tree-Like Function Space for Deep Networks) The tree-like function space $\mathcal{W}^L(\Omega)$ of depth $L$ is recursively defined by $\mathcal{W}^l(\Omega)=\mathcal{B}_{\mathcal{W}^{l-1}(\Omega), \Omega}$ for all $l \in \{2,3,\dots,L\}$ where $\mathcal{W}^1(\Omega)$ is the space of linear functions from $\mathbb{R}^d$ to $\mathbb{R}$. 
\end{definition}
The term ``tree like function space'' is taken from one of the most related works \cite{Weinan2020OnTB}, and a famous paper on neural network approximation \cite{poggio2017and}. Intuitively, the neural networks resemble the tree structure, where each neuron in the network is a node in the graph, and the edges of the graph connect the neurons in the previous and the present layers. Our ``tree like function space'' has nothing to do with the tree functions in graph theory. Our definition contains a vast function space covering the majority of PDE solutions.
\begin{theorem}\label{thm:embedding}
    (Embedding of Finite Networks). The tree-like function space $\mathcal{W}^L(\Omega)$ contains all finite multi-layer networks $u_{\boldsymbol{\theta}}(\boldsymbol{x})$ of depth $L$ satisfying $\Vert \boldsymbol{W}^l \Vert_{1,\infty} \leq 1, 1 \leq l \leq L-1 $, and the Barron norm of networks satisfies
    $\Vert u_{\boldsymbol{\theta}} \Vert_{\mathcal{W}^L(\Omega)} \leq  \Vert \boldsymbol{W}^L \Vert_{1,\infty}$.
\end{theorem}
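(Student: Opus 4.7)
\emph{Proof plan.} The plan is to induct on the depth $L$, at each step exhibiting an explicit signed Radon measure concentrated on finitely many Dirac masses that realizes the finite network as a Bochner integral over the unit ball of the previous tree-like space. The base case $L = 1$ reduces to observing that $u_{\boldsymbol{\theta}}(\boldsymbol{x}) = \boldsymbol{W}^1 \boldsymbol{x}$ is a linear function and hence lies in $\mathcal{W}^1(\Omega)$ by definition, with its induced norm matching $\Vert \boldsymbol{W}^1 \Vert_{1,\infty}$ under the standard identification of linear maps $\mathbb{R}^d \to \mathbb{R}$ with their coefficient vectors.

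For the inductive step I would decompose
\begin{equation*}
u_{\boldsymbol{\theta}}(\boldsymbol{x}) = \sum_{k=1}^{m_{L-1}} W^L_{1,k} \, \sigma(v_k(\boldsymbol{x})),
\end{equation*}
where $v_k$ denotes the $k$-th pre-activation feeding into the output layer. Each $v_k$ is itself a depth-$(L-1)$ network with hidden weights $\boldsymbol{W}^1,\ldots,\boldsymbol{W}^{L-2}$ and ``output row'' $\boldsymbol{W}^{L-1}_{k,:}$. The hypothesis $\Vert \boldsymbol{W}^{L-1} \Vert_{1,\infty} \leq 1$ forces $\Vert \boldsymbol{W}^{L-1}_{k,:} \Vert_{1,\infty} \leq 1$ for every $k$, and the bounds on the earlier layers are inherited unchanged, so the induction hypothesis yields $v_k \in B^{\mathcal{W}^{L-1}}$ (the closed unit ball). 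I would then set $\mu := \sum_{k=1}^{m_{L-1}} W^L_{1,k} \, \delta_{v_k}$, note that as a finite signed combination of Dirac masses on a Polish space it is automatically a signed Radon measure with total variation $\Vert \mu \Vert_{\mathcal{M}(B^{\mathcal{W}^{L-1}})} = \Vert \boldsymbol{W}^L \Vert_{1,\infty}$, and verify that the Bochner integral collapses to $f_\mu = \sum_k W^L_{1,k} (\sigma \circ v_k) = u_{\boldsymbol{\theta}}$. Taking the infimum in the definition of $\Vert \cdot \Vert_{\mathcal{W}^L(\Omega)}$ then delivers the stated bound.

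The hard part is pure bookkeeping: one must track that the ``output weight'' of the sub-network $v_k$ is the single row $\boldsymbol{W}^{L-1}_{k,:}$ rather than the whole matrix $\boldsymbol{W}^{L-1}$, so that applying the induction hypothesis yields $\Vert v_k \Vert_{\mathcal{W}^{L-1}(\Omega)} \leq 1$ and hence membership in the unit ball, as opposed to in a rescaled ball that would force one to rescale $\mu$ and inflate the final norm bound. The remaining technical checks, namely $C^2$-regularity of $u_{\boldsymbol{\theta}}$ via the $C^{2,1}$ smoothness of $\sigma$, Borel measurability of $\mu$ (immediate for finite Dirac combinations with respect to the $C^0$-topology), and the reduction of the Bochner integral to a finite linear combination, follow directly from the preliminaries already assembled in Section 2.4.
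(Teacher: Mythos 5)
Correct, and essentially the same approach the paper takes: the paper's proof of this theorem is a single line declaring it ``immediate by the definition of generalized Barron space and that of neural networks,'' and your induction (representing each layer's output as a Bochner integral against a finite signed combination of Dirac masses supported on the unit ball of $\mathcal{W}^{L-1}(\Omega)$) simply spells out the bookkeeping that sentence elides. One minor quibble: $\Vert \mu \Vert_{\mathcal{M}(B^{\mathcal{W}^{L-1}})} = \Vert \boldsymbol{W}^L \Vert_{1,\infty}$ holds with equality only when the $v_k$ are pairwise distinct as elements of $B^{\mathcal{W}^{L-1}}$ (otherwise cancellation can make the total variation strictly smaller), but you only need the inequality $\leq$ for the infimum step, so the conclusion is unaffected.
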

Theorem \ref{thm:embedding} shows that the tree-like function space constructed via the generalized Barron space indeed contains the class of multi-layer neural networks, the norm of which is controlled by the $1,\infty$ matrix norm of their parameters. The following is a list of basic properties of the generalized Barron space, which also holds for the tree-like function space and justifies our recursive construction of the tree-like function space:
\begin{theorem}\label{thm:barron_property}
(Property of Generalized Barron Spaces). The following two statements are true. (1) The generalized Barron space is complete in the metric defined by the generalized Barron norm $\Vert \cdot \Vert_{X,\Omega}$: i.e., the generalized Barron space is a Banach space. (2) $\mathcal{B}_{X,\Omega}$ embeds continuously into $C^{2,1}(\overline{\Omega})$ and the closed unit ball of $\mathcal{B}_{X,\Omega}$ is a closed subset of $C^2(\overline{\Omega})$.
\end{theorem}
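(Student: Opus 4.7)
The plan is to view both parts of Theorem \ref{thm:barron_property} through a single linear map $T:\mathcal{M}(B^X)\to C^2(\overline{\Omega})$ defined by $T(\mu)=\int_{B^X}\sigma(g)\,d\mu(g)$ (Bochner integral), with respect to which the generalized Barron norm is exactly the quotient norm $\Vert f\Vert_{X,\Omega}=\inf\{\Vert\mu\Vert_{\mathcal{M}(B^X)}:T(\mu)=f\}$. Under this viewpoint, completeness will come from the quotient-of-Banach-space construction, the continuous embedding into $C^{2,1}(\overline{\Omega})$ will come from a uniform $C^{2,1}$ bound on the integrands $\sigma(g)$ for $g\in B^X$, and closedness of the unit ball in $C^2$ will come from Banach--Alaoglu applied to $\mathcal{M}(B^X)$ together with continuity of pointwise evaluation.

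For (1), I would first verify $T$ is bounded. The embedding $X\hookrightarrow C^{2,1}(\overline{\Omega})$ yields $\sup_{g\in B^X}\Vert g\Vert_{C^{2,1}}\le C_0$, and combined with the smoothness of $\sigma$ this gives $\sup_{g\in B^X}\Vert\sigma(g)\Vert_{C^{2,1}}\le C_1$. Thus the Bochner integrand is uniformly bounded in $C^2(\overline{\Omega})$, differentiation up to order two commutes with $d\mu(g)$, and $\Vert T(\mu)\Vert_{C^2}\le C_1\Vert\mu\Vert_{\mathcal{M}(B^X)}$. By construction $\mathcal{B}_{X,\Omega}=\operatorname{Im}(T)$ and its Barron norm is the quotient norm, so $\mathcal{B}_{X,\Omega}$ is isometric to $\mathcal{M}(B^X)/\ker T$. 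Boundedness of $T$ makes $\ker T$ closed, and since $\mathcal{M}(B^X)$ is a Banach space (total variation norm), the quotient is Banach, proving (1).

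For the embedding half of (2), the same uniform bound gives $\Vert T(\mu)\Vert_{C^{2,1}}\le C_1\Vert\mu\Vert_{\mathcal{M}(B^X)}$, so taking the infimum over representers of $f$ yields $\Vert f\Vert_{C^{2,1}(\overline{\Omega})}\le C_1\Vert f\Vert_{X,\Omega}$, which is the claimed continuous embedding $\mathcal{B}_{X,\Omega}\hookrightarrow C^{2,1}(\overline{\Omega})$.

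The main obstacle is the closedness of the unit ball in the $C^2$-topology. I would take $f_n$ with $\Vert f_n\Vert_{X,\Omega}\le 1$ and $f_n\to f$ in $C^2(\overline{\Omega})$, and for each $n$ select $\mu_n$ with $T(\mu_n)=f_n$ and $\Vert\mu_n\Vert_{\mathcal{M}(B^X)}\le 1+1/n$. Because $B^X$ is compact (as recorded in the excerpt), it is metrizable and $C^0(B^X)$ is separable, hence by Banach--Alaoglu plus weak-$*$ metrizability on bounded sets a subsequence $\mu_{n_k}\rightharpoonup^{*}\mu$ in $\mathcal{M}(B^X)=C^0(B^X)^{*}$, with $\Vert\mu\Vert_{\mathcal{M}(B^X)}\le 1$ by weak-$*$ lower semicontinuity of the norm. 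The delicate step is identifying the limit: for each fixed $\bx\in\overline{\Omega}$, the evaluation functional $g\mapsto(\sigma\circ g)(\bx)=\sigma(g(\bx))$ is continuous on $B^X$ (equipped with the $C^0$-topology) because $g\mapsto g(\bx)$ is $C^0$-continuous and $\sigma$ is Lipschitz, so it lies in $C^0(B^X)$. Weak-$*$ convergence then gives $T(\mu_{n_k})(\bx)\to T(\mu)(\bx)$ pointwise, while $T(\mu_{n_k})=f_{n_k}\to f$ pointwise, forcing $T(\mu)=f$ as $C^0$-functions and therefore as elements of $C^2(\overline{\Omega})$ (both sides lie in $C^2$, so pointwise equality is equality in $C^2$). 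Hence $\Vert f\Vert_{X,\Omega}\le\Vert\mu\Vert_{\mathcal{M}(B^X)}\le 1$, completing the proof.
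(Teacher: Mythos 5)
Your proposal is correct and follows essentially the same route as the paper's proof: both view the Barron norm as the quotient norm of the bounded linear map $\mu\mapsto\int_{B^X}\sigma(g)\,d\mu(g)$ from $\mathcal{M}(B^X)$ into $C^2(\overline{\Omega})$, derive the continuous embedding into $C^{2,1}$ from the uniform $C^{2,1}$ bound on the integrands, and prove $C^2$-closedness of the unit ball by extracting a weak-$*$ convergent subsequence of representing measures and identifying the limit via continuity of point evaluation. The only differences are in presentation: the paper spells out the differentiation-under-the-integral step via dominated convergence where you cite it, and your identification of the weak-$*$ limit through $C^0$-continuous evaluation functionals is a slightly more explicit formulation of what the paper asserts as pointwise convergence of $D^\alpha f_{n_k}$ in the product topology.
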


The last property indicates that $\mathcal{B}_{X,\Omega}$ satisfies the same properties, which we imposed on $X$ during the construction, i.e., we can repeat the construction and consider $B_{\mathcal{B}_{X,\Omega}}$, hence, ensuring the validity of the recursive construction in the tree-like function space. As universal approximators, neural networks can also approximate arbitrary Barron functions accurately.
\begin{theorem}\label{thm:approximation}
(Approximation Properties of Tree-Like Functions). Let $\mathbb{P}$ be a probability measure with compact support in $\Omega$, and $\mathbb{Q}$ be a probability measure with compact support in $\partial\Omega$. Then for any $L \geq 1, f \in \mathcal{W}^L(\Omega)$ and $m \in \mathbb{N}$, there exists a neural network $u_{\bt}(\bx)$ of depth $L$, with width $m_l=m^{L-l+1},\ \forall l>1$ such that
\begin{equation}
\begin{aligned}
\Vert u_{\bt} - f\Vert_{H^2(\mathbb{P})} &\leq \frac{3L\Vert f \Vert_{\mathcal{W}^L(\Omega)}}{\sqrt{m}},\\
\Vert u_{\bt} - f\Vert_{L^2(\mathbb{Q})} &\leq \frac{3C_{\Omega}L\Vert f \Vert_{\mathcal{W}^L(\Omega)}}{\sqrt{m}},
\end{aligned}
\end{equation}
where $C_{\Omega}$ is a universal constant only depends on the domain $\Omega$, and $H^2 = W^{2,2}$ is the Sobolev space, and $\Vert \boldsymbol{\theta} \Vert_{\mathcal{P}} \leq \Vert \boldsymbol{W}^L \Vert_{1,\infty} \leq  \Vert f \Vert_{\mathcal{W}^L(\Omega)}$,
where $\Vert \cdot \Vert_{\mathcal{P}}$ is the path norm defined as 
\begin{equation}
\Vert \boldsymbol{\theta} \Vert_\mathcal{P} = \sum_{i_L}\cdots\sum_{i_0}|\boldsymbol{W}^L_{i_L}\cdots\boldsymbol{W}^1_{i_1 i_0}|.
\end{equation}
\end{theorem}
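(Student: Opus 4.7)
The plan is to prove Theorem \ref{thm:approximation} by induction on the depth $L$, combining a Monte--Carlo sampling step at the outermost layer with a recursive approximation of the sampled inner Barron functions. The base case $L=1$ is immediate: any $f \in \mathcal{W}^1(\Omega)$ is a linear function, hence already a depth-$1$ network of width $1$ with zero error, and the path-norm bound collapses to $\|\bt\|_{\mathcal P}=\|\boldsymbol W^1\|_{1,\infty}$. So the entire argument reduces to carrying out the inductive step from depth $L-1$ to depth $L$ while keeping track of (i) the $H^2$ and $L^2$ error rates, (ii) the accumulated factor $L$, and (iii) the $\|\boldsymbol W^L\|_{1,\infty}$ path-norm bound.

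For the inductive step, I would fix $f \in \mathcal{W}^L(\Omega)$ and a measure $\mu \in \mathcal{M}(B^X)$ with $X=\mathcal W^{L-1}(\Omega)$ whose total variation is arbitrarily close to $\|f\|_{\mathcal{W}^L(\Omega)}$ and $f = \int_{B^X}\sigma(g)\,d\mu(g)$. Writing $\mu = \|\mu\|\cdot \tilde{\mu}\cdot\mathrm{sgn}(\mu)$ with $\tilde{\mu}$ a probability measure on $B^X$, I would draw $m$ i.i.d.\ samples $g_1,\ldots,g_m$ from $\tilde{\mu}$ and set
\begin{equation*}
\widehat f(\bx) \;=\; \frac{\|\mu\|}{m}\sum_{i=1}^{m}\mathrm{sgn}(\mu)(g_i)\,\sigma(g_i(\bx)).
\end{equation*}
A standard Monte--Carlo/Rademacher bound in a Hilbert space, applied separately to $\widehat f-f$, $\partial_j(\widehat f-f)$ and $\partial^2_{jk}(\widehat f-f)$, gives
\begin{equation*}
\mathbb{E}\,\|\widehat f - f\|_{H^2(\mathbb P)}^2 \;\le\; \frac{C\|\mu\|^2}{m},
\end{equation*}
since $B^X$ is uniformly $2$-equicontinuous (so $\sigma(g)$, $\partial_j\sigma(g)$, $\partial^2_{jk}\sigma(g)$ are uniformly bounded in $L^2(\mathbb P)$ by a universal constant) and by the Lipschitz continuity of $\sigma,\sigma',\sigma''$. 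The analogous argument on $\partial\Omega$ produces the same rate with a constant $C_\Omega$ absorbing the trace/bound on $\partial\Omega$. By the probabilistic method, there exists a realization of $(g_1,\ldots,g_m)$ meeting the expected bound; letting $\|\mu\|\downarrow\|f\|_{\mathcal{W}^L(\Omega)}$ yields the claimed $3\|f\|_{\mathcal{W}^L(\Omega)}/\sqrt m$ (the ``3'' is the slack for constants and for the pending inner error).

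Next, each sampled $g_i$ lies in $B^X=B^{\mathcal{W}^{L-1}(\Omega)}$, so $\|g_i\|_{\mathcal{W}^{L-1}(\Omega)}\le 1$. Applying the induction hypothesis with the same width parameter $m$ produces, for each $g_i$, a depth-$(L-1)$ network $\widetilde g_i$ with widths $m^{(L-1)-l+1}$ and $H^2$-error at most $3(L-1)/\sqrt m$. Substituting $\widetilde g_i$ for $g_i$ in $\widehat f$ yields a depth-$L$ network $u_\bt$ whose outer weight matrix $\boldsymbol W^L$ has entries $\tfrac{\|\mu\|}{m}\mathrm{sgn}(\mu)(g_i)$ (giving $\|\boldsymbol W^L\|_{1,\infty}\le \|\mu\|$, hence the path-norm bound by induction on the inner sub-networks) and whose widths are $m^{L-l+1}$ because we sum $m$ independent sub-networks of width $m^{(L-1)-l+1}$ at level $l$. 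The error $\|u_\bt - \widehat f\|_{H^2(\mathbb P)}$ is controlled by the Lipschitz continuity of $\sigma$ in $H^2$ (this is where we need $\sigma\in C^{2,1}$) and by $\max_i\|g_i-\widetilde g_i\|_{H^2(\mathbb P)}\le 3(L-1)/\sqrt m$, multiplied by the outer scaling $\|\mu\|$. Combining via the triangle inequality $\|u_\bt - f\|\le\|u_\bt-\widehat f\|+\|\widehat f - f\|$ telescopes to $3L\|f\|_{\mathcal{W}^L(\Omega)}/\sqrt m$.

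The main obstacle I anticipate is the \emph{$H^2$ Monte--Carlo step}: unlike the textbook two-layer case, we must bound second derivatives uniformly over the sampled $g_i$'s and ensure the Hilbert-space Monte--Carlo variance argument still applies to $\partial^2\sigma(g)$. This is precisely what the uniform $2$-equicontinuity of $B^X$ and the $C^{2,1}$ embedding (Theorem \ref{thm:barron_property}) are designed to supply, but one must verify that these smoothness bounds are inherited by the approximant $\widetilde g_i$ at every layer of the recursion so that composing through $\sigma$ does not blow up the constants. A secondary subtlety is the $L^2(\mathbb Q)$ bound on $\partial\Omega$: because $\mathbb Q$ is supported on a lower-dimensional set, one invokes the $C^0(\overline\Omega)$ continuity of elements of $B^X$ (with the domain-dependent constant $C_\Omega$) to relate pointwise boundary values to Bochner norms, which is why the boundary estimate picks up the extra $C_\Omega$ factor but retains the same $L/\sqrt m$ rate.
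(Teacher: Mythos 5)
Your proof is essentially the paper's proof. The paper derandomizes your Monte--Carlo sampling step by invoking the Maurey/Pisier lemma (``if $f$ lies in the closed convex hull of a bounded set in a Hilbert space, then for every $m$ there is an $m$-term convex combination within $R/\sqrt m$''), but that lemma is itself proved by precisely the i.i.d.\ sampling plus probabilistic method you describe, and the rest of the inductive skeleton — peeling off the outer layer via the Barron representation, replacing each sampled $g_i \in B^{\mathcal W^{L-1}}$ by a depth-$(L-1)$ approximant from the induction hypothesis, stacking widths to get $m^{L-l+1}$, and telescoping the error via the triangle inequality to accumulate the factor $L$ — matches the paper line for line, including your correct identification of the $C^{2,1}$ embedding as the tool that makes $g \mapsto \sigma(g)$ Lipschitz in $H^2$ through the chain rule. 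The one small divergence is the boundary estimate: the paper gets $\|u_\bt - f\|_{L^2(\mathbb Q)}$ by applying the trace theorem to the interior $H^2$ bound, whereas you propose to bound it directly via the $C^0(\overline\Omega)$ continuity of elements of $B^X$; both are routine and yield the same $C_\Omega L/\sqrt m$ rate.
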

The path norm is one type of complexity measure of neural networks correlated to generalization \cite{neyshabur2017exploring}. The above theorem shows that the neural networks can approximate any target function in the generalized Barron space and its derivatives well with complexities controlled by the Barron norm of the target functions, which shows the efficiency of network approximation. Note that since $\Omega$ is fixed in this paper, the constant $C_\Omega$ is actually universal. This is utilized in the proof of our prior generalization bound in Theorem \ref{thm:generalization}.

Actually, the theoretical result that neural networks can approximate a function and its derivative is not new \cite{cardaliaguet1992approximation,attali1997approximations}.
Their proof idea is two-step. First, they show that polynomials are dense in $C^k(\overline{\Omega})$. Second, they can approximate polynomials in $C^k$-norm, using Taylor's expansion. 
In our paper, we adopt a functional analysis approach to adapt to the Barron space setting, i.e., to show additionally that such networks have low complexity measured by Barron norm, which is indispensable to our prior bound.

\section{Theory}

In this section, we introduce our main generalization results, including a prior bound based on the Barron space and a posterior bound based on the Rademacher complexity. For both of them, we use the following assumption adopted from a closely related previous study \cite{Luo2020TwoLayerNN}.

\begin{assumption}
\label{assumption:bounded}
(Symmetry and boundedness of $\mathcal{L}$). Throughout the analysis in this paper, we assume the differential operator $\mathcal{L}$ in the PDE satisfies the following conditions.
The operator $\mathcal{L}$ is a linear second-order differential operator in a non-divergence form, i.e.,
$(\mathcal{L}u^*)(\bx) = \sum_{\alpha=1,\beta=1}^d \boldsymbol{A}_{\alpha \beta} (\bx)u^*_{x_\alpha x_\beta} (\bx)+ \sum_{\alpha=1}^d \boldsymbol{b}_\alpha (\bx)u^*_{x_\alpha} (\bx)+ c(\bx) u^*(\bx)$,
where all $\boldsymbol{A}_{\alpha \beta},\boldsymbol{b}_\alpha, c:\Omega \rightarrow \mathbb{R}$ are given coefficient functions and $u^*_{x_\alpha}$ are the first-order partial derivatives of the function $u^*$ with respect to its $\alpha$-th argument (the variable $x_\alpha$) and $u^*_{x_\alpha x_\beta}$ are the second-order partial derivatives of the function $u^*$ with respect to its $\alpha$-th and $\beta$-th arguments (the variables $x_\alpha$ and $x_\beta$).
Furthermore, there exists constant $K>0$ such that for all $\bx\in\Omega=[-1,1]^d$, and $\alpha,\beta\in[d]$, we have $A_{\alpha\beta}=A_{\beta\alpha}$ and
$A_{\alpha\beta}(\boldsymbol{x}),  b_\alpha(\boldsymbol{x}),  c(\boldsymbol{x}) $ are all $K$-Lipschitz, and their absolute values are not larger than $K$.
\end{assumption}
Because multiplying the network functions by the coefficients $\boldsymbol{A}, \boldsymbol{b}, c$ and differentiation on them influence their complexities, the universal bound on the coefficients $\boldsymbol{A}, \boldsymbol{b}, c$ and the restriction to second order PDEs are required for our estimation on the Rademacher complexity of the hypothesis class of PINNs.  

\subsection{A Prior Generalization Bound (Theorem \ref{thm:generalization})}
In this subsection, we introduce our prior bound based on  the Barron space.
\begin{theorem}\label{thm:generalization}
(A prior generalization bound on PINN). Let the Assumption given in \ref{assumption:bounded} holds, then for any $\delta\in(0,1)$ and the depth $L$, suppose that the true solution $u^*(\boldsymbol{x})$ lies in the tree-like function space $\mathcal{W}^L(\Omega)$, and set $\lambda = {3(2KC_{\Omega}+1)L^2}/{m}$. Let
$\bt^* = \arg \min_{\boldsymbol{\theta}}  R_S(\boldsymbol{\theta}) + \lambda \Vert \boldsymbol{\theta} \Vert^2_{\mathcal{P}}$.
Then, with probability at least $1-\delta$ over the choice of random samples $S=\left\{\boldsymbol{x}_i\right\}_{i=1}^{n_b+n_r} \subset \overline{\Omega}$ with $n_b$ boundary points and $n_r$ residual points, we obtain the following generalization bound
\begin{equation} \label{eq:1}
\begin{aligned}
R_{{D} \cap \partial  \Omega}(\bt^*) &\leq R_{S \cap \partial \Omega}(\bt^*) + 8\Vert u^* \Vert_{\mathcal{W}^L(\Omega)}\frac{C(h)\log n_b}{{\sqrt{n_b}}} + 2\sqrt{\frac{\log(2/\delta)}{n_b}},\\
R_{{D} \cap \Omega}(\bt^*) \leq & R_{S \cap \Omega}(\bt^*)+8\left(\Vert u^* \Vert_{\mathcal{W}^L(\Omega)}\right)^3\frac{C(h,K)\log n_r}{{\sqrt{n_r}}} + 2\sqrt{\frac{\log(2/\delta)}{n_r}},
\end{aligned}
\end{equation}
where $C(h)$ and $C(h,K)$ are universal constants depending only on $h$ and $h,K$, respectively.
\end{theorem}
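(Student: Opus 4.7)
The plan is to follow the classical two-step recipe for a prior generalization bound: combine the approximation estimate of Theorem \ref{thm:approximation} (which gives a ``good'' candidate network of controlled capacity) with uniform convergence over an appropriately sized hypothesis class (which controls the generalization gap of the actual minimizer $\bt^*$). First, I would invoke Theorem \ref{thm:approximation} with width parameter $m$ to produce an approximating network $\tilde u_{\tilde \bt}$ of depth $L$ and maximum width $h = m^L$ satisfying $\Vert \tilde u - u^*\Vert_{H^2(\text{Unif}(\Omega))} \leq 3L\Vert u^*\Vert_{\mathcal{W}^L(\Omega)}/\sqrt m$, $\Vert \tilde u - u^*\Vert_{L^2(\text{Unif}(\partial\Omega))} \leq 3C_\Omega L\Vert u^*\Vert_{\mathcal{W}^L(\Omega)}/\sqrt m$, and path norm $\Vert \tilde\bt\Vert_{\mathcal P} \leq \Vert u^*\Vert_{\mathcal{W}^L(\Omega)}$.

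The second step is to convert these approximation bounds into a path-norm bound for the regularized minimizer $\bt^*$. Applying Cauchy--Schwarz to $\mathcal L v = \sum A_{\alpha\beta} v_{x_\alpha x_\beta} + \sum b_\alpha v_{x_\alpha} + c v$ and using Assumption \ref{assumption:bounded}, one gets $|\mathcal L v(\bx)|^2 \lesssim K^2 \sum_{|\alpha|\leq 2} |\partial^\alpha v(\bx)|^2$, so $R_{D\cap\Omega}(\tilde\bt) \lesssim K^2 \Vert \tilde u - u^*\Vert_{H^2}^2$. Combined with the boundary estimate, this yields a population bound $R_D(\tilde\bt) \leq C_0 (2KC_\Omega + 1) L^2 \Vert u^*\Vert_{\mathcal{W}^L(\Omega)}^2 / m$, and Hoeffding's inequality (valid because $\tilde\bt$ is data-independent and its outputs are bounded in terms of $\Vert u^*\Vert_{\mathcal{W}^L(\Omega)}$) transfers this to $R_S(\tilde\bt)$ with probability $1 - \delta/2$. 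Plugging into the optimality inequality $R_S(\bt^*) + \lambda\Vert\bt^*\Vert_{\mathcal P}^2 \leq R_S(\tilde\bt) + \lambda\Vert u^*\Vert_{\mathcal{W}^L(\Omega)}^2$ with the specified $\lambda = 3(2KC_\Omega + 1) L^2/m$ --- chosen precisely so that $\lambda\Vert u^*\Vert_{\mathcal{W}^L(\Omega)}^2$ matches the approximation error up to an absolute constant --- yields the key capacity bound $\Vert \bt^*\Vert_{\mathcal P} \leq C\Vert u^*\Vert_{\mathcal{W}^L(\Omega)}$.

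With $\bt^*$ confined to the hypothesis class $\mathcal H_B = \{u_\bt : \Vert\bt\Vert_{\mathcal P} \leq B\}$ for $B = C\Vert u^*\Vert_{\mathcal{W}^L(\Omega)}$, I would close the argument with standard uniform convergence via Rademacher complexity. For the boundary term, the Ledoux--Talagrand contraction lemma reduces the Rademacher complexity of $\{(u_\bt - g)^2 : \bt \in \mathcal H_B\}$ to a Lipschitz factor (itself of order $\sup|u_\bt| + \sup|g| \leq CB$, using $\sup|u_\bt| \leq \Vert\bt\Vert_{\mathcal P}$) times the Rademacher complexity of the network class $\mathcal H_B$, which is handled by a Dudley entropy integral that produces the $\log(n_b)/\sqrt{n_b}$ factor together with the width-dependent constant $C(h)$. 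For the residual term, the same template is applied to $\{(\mathcal L u_\bt - f)^2\}$ using the explicit matrix-product expressions for $\partial u_\bt/\partial\bx$ and $\partial^2 u_\bt/\partial\bx^2$ from Section 2.3: each differentiation inserts additional factors of weight matrices (whose norms are controlled by $\Vert\bt\Vert_{\mathcal P}$) together with diagonal factors $\Phi^l, \Psi^l$ of bounded operator norm, so the sup norm of $\mathcal L u_\bt$ scales polynomially in the path norm, and the Lipschitz factor from the squared-loss contraction then delivers the cubic $\Vert u^*\Vert_{\mathcal{W}^L(\Omega)}^3$ dependence and the constant $C(h,K)$ (with $K$ entering through Assumption \ref{assumption:bounded}). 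McDiarmid's bounded-difference inequality supplies the $\sqrt{\log(2/\delta)/n}$ concentration term, and a union bound over the boundary and residual events completes the proof.

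The principal technical difficulty lies in the Rademacher complexity estimate for the residual class: one must carefully track how a single path-norm constraint on $\bt$ simultaneously controls the sup norms of $u_\bt$, $\partial u_\bt/\partial\bx$, and $\partial^2 u_\bt/\partial\bx^2$ through the matrix-product expressions of Section 2.3, and then propagate these through the contraction step for the squared loss. This interplay of two differentiations (each inserting weight-matrix factors) with the squaring is precisely what produces the cubic rather than the naive linear dependence on $\Vert u^*\Vert_{\mathcal{W}^L(\Omega)}$ and the extra $K$-dependence absorbed into $C(h,K)$. A secondary but delicate point is the exact calibration of $\lambda$: it must be large enough for the regularizer to absorb the $O(L^2/m)$ approximation error appearing on the right-hand side of the optimality inequality, yet small enough that $\Vert\bt^*\Vert_{\mathcal P}$ stays within a constant multiple of $\Vert u^*\Vert_{\mathcal{W}^L(\Omega)}$; this is what makes the final bound depend on the Barron norm of $u^*$ alone, with no residual $m$-dependence apart from what is already absorbed into $h = m^L$ inside $C(h)$ and $C(h,K)$.
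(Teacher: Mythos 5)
Your overall architecture --- approximation by Theorem \ref{thm:approximation}, the optimality-inequality trick to bound $\Vert\bt^*\Vert_{\mathcal P}$ by a constant multiple of $\Vert u^*\Vert_{\mathcal{W}^L(\Omega)}$, and uniform convergence via Rademacher complexity with contraction and McDiarmid --- matches the paper's. One step, however, has a genuine gap. You set $\mathbb{P},\mathbb{Q}$ in Theorem \ref{thm:approximation} to the uniform measures so that the approximation estimate controls the population loss $R_D(\tilde\bt)$, and then invoke Hoeffding to transfer to the empirical loss $R_S(\tilde\bt)$. After dividing the optimality inequality by $\lambda = 3(2KC_\Omega+1)L^2/m$, the Hoeffding term $O\bigl(\sqrt{\log(1/\delta)/n}\bigr)$ becomes a contribution of order $O\bigl((m/L^2)\sqrt{\log(1/\delta)/n}\bigr)$ to the bound on $\Vert\bt^*\Vert^2_{\mathcal P}$. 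Since $m$ is a free parameter (it enters the theorem only through $\lambda$ and implicitly through $h$ inside $C(h)$), for large $m$ this extra term is not absorbed, and the claimed capacity bound $\Vert\bt^*\Vert_{\mathcal P}\leq C\Vert u^*\Vert_{\mathcal{W}^L(\Omega)}$ does not follow. The paper sidesteps the issue entirely: it applies Theorem \ref{thm:approximation} with $\mathbb{P},\mathbb{Q}$ chosen to be the \emph{empirical} distributions of the residual and boundary training points, so the approximation estimate directly bounds $R_S(\hat\bt)$ with no concentration step at all; the candidate network $u_{\hat\bt}$ then depends on $S$, which is harmless because the optimality inequality holds for any competitor, data-dependent or not.

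A secondary inaccuracy concerns your attribution of the cubic dependence $\Vert u^*\Vert^3_{\mathcal{W}^L(\Omega)}$ to the squared-loss Lipschitz factor multiplying a polynomially large $\sup|\mathcal{L}u_\bt|$. In the tree-like normalization $\Vert \boldsymbol{W}^l\Vert_{1,\infty}\leq 1$ for $l<L$, so $\sup|\mathcal{L}u_\bt|$ is only \emph{linear} in $\Vert \boldsymbol{W}^L\Vert_{1,\infty}$, and the paper works with the truncated function class so the MSE contraction Lipschitz constant is a fixed absolute constant, not growing with the Barron norm. The cubic actually lives in the Rademacher complexity of the differentiated class $\{\mathcal{L}u_\bt\}$ itself (Lemma \ref{lemma:rad_pinn} and its tree-like analogue in the appendix), where the covering-number estimate for the second-derivative expression --- a sum over $l$ of products of three separate chains of weight matrices --- produces a $\bigl(\prod_l M(l)\bigr)^3$ factor before the loss composition is even considered.
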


In Theorem \ref{thm:generalization}, the generalization  bounds on the right-hand side of equation \eqref{eq:1} (for both the boundary and the residual points) contain three terms, where the first term is the empirical training loss, the second term is the complexity of the model (original network for boundary loss and differentiated network for residual loss), and the third term is the statistical term. 
Moreover, this theorem shows that under certain regularization of the path norm, the generalization errors of PINNs are controlled by the Barron norm of the target function $u^*$. If the target function $u^*$ is more complex (simpler), i.e., it has larger (smaller) Barron norm, the generalization error will be larger (smaller). This reflects a data-dependent bound in which neural networks control their complexity based on those of target functions. The above advantages are also summarized in the appendix to justify our choice of Barron space.

\subsection{A Posterior Generalization Bound (Theorem \ref{thm:post_generalization})}

We now provide a \textit{posterior} generalization bound based on the \textit{optimized} network parameters (which are obtained after optimization). We begin by defining the Rademacher complexity \cite{bartlett2002rademacher}, which is one of key notions in statistical learning theory.
\begin{definition}
(Rademacher Complexity). Let $S=\left\{x_i\right\}_{i=1}^n \subset \overline{\Omega}$ be a dataset containing $n$ samples. The Rademacher complexity of a function class $\mathcal{F}$ on $S$ is defined as
$\text{Rad}(\mathcal{F};S)=\mathbb{E}_{\epsilon}\left[\sup_{f\in\mathcal{F}} \frac{1}{n}\sum_{i=1}^n \epsilon_if(x_i)\right]$,
where $\epsilon_1,\dots,\epsilon_n$ are independent and identically distributed (i.i.d.)  random variables taking values uniformly in $\{-1,1\}$. 
\end{definition}
Intuitively, Rademacher complexity measures the richness of function class $\mathcal{F}$ by studying its ability to fit random labels of $x_i$ generated by $\epsilon_i$. Since simpler function classes tend to generalize better on unseen testing data, we will investigate the Rademacher complexities of PINNs and XPINNs, which begins with
a key lemma on that of neural networks.
\begin{lemma}\label{lemma:rad_nn}
\cite{bartlett2017spectrally} For every $L$, and every set of $n$ points $S \subset \overline{\Omega}$, the hypothesis class $\mathcal{NN}^L_{M,N}$ given by the neural networks
\begin{equation}
\mathcal{NN}^L_{M,N} := \left\{\boldsymbol{x} \mapsto W_L \sigma (W_{L-1} \sigma(\cdots \sigma(W_1\boldsymbol{x})\cdots )\ |\ \Vert W_l \Vert_{2} \leq M(l), \frac{\Vert W_l \Vert_{2,1}}{\Vert W_l \Vert_{2}} \leq N(l)\right\},
\end{equation}
satisfies the Rademacher complexity bound
\begin{equation}
\text{Rad}(\mathcal{NN}^L_{M,N};S) \leq   \frac{4}{n\sqrt{n}} + \frac{18\sqrt{d\log(2h^2)}\log n}{\sqrt{n}} \prod_{l=1}^L M(l)\Big(\sum_{l=1}^L N(l)^{2/3}\Big)^{3/2},
\end{equation} 
where $h$ is the maximal width of the neural network, i.e., $h = \max(m_L, \cdots, m_0)$.
\end{lemma}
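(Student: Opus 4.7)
The plan is to follow the strategy of Bartlett, Foster, and Telgarsky, which proceeds in three stages: (i) bound the covering number of a single weight matrix in spectral norm, (ii) compose these covers through the network using Lipschitz continuity, and (iii) convert the resulting covering number bound on $\mathcal{NN}^L_{M,N}$ into a Rademacher complexity bound via Dudley's entropy integral.

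First, I would establish a matrix covering lemma: for a fixed data matrix $X \in \mathbb{R}^{n \times m_{l-1}}$ (whose rows are the activations going into layer $l$) and for the class of matrices $W$ with $\|W\|_{2,1} \le b$, one can construct an $\epsilon$-cover (with respect to the Frobenius norm of $XW^\top$) of cardinality at most $\lceil b^2 \|X\|_F^2/\epsilon^2 \rceil \log(2 m_l m_{l-1})$. The key tool here is a Maurey-type empirical approximation: express each column of $W$ as a convex combination of scaled basis directions and sample from this combination to obtain a sparse approximant. This is the step where the awkward $(2,1)$ norm enters naturally, since the columnwise $\ell_2$ sum is what controls the variance of the Maurey sampling.

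Second, I would propagate these single-layer covers through the composition. If $\widehat{W}_l$ is an $\epsilon_l$-approximation of $W_l$ in the sense above, and if $\sigma$ is $1$-Lipschitz, then a straightforward telescoping argument (splitting the error into ``early'' layers already substituted and ``late'' layers not yet substituted) bounds the network's output perturbation by $\sum_{l=1}^L \epsilon_l \prod_{l'\neq l} M(l')$. Choosing $\epsilon_l$ proportional to $\bigl(N(l)/M(l)\bigr)^{2/3}\cdot \epsilon$ (with appropriate normalization from the constraint that the total error is $\epsilon$) and applying an AM-GM / Hölder-type optimization yields the characteristic $\bigl(\sum_l N(l)^{2/3}\bigr)^{3/2}$ factor, with a $\prod_l M(l)$ multiplier coming from the Lipschitz propagation. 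Combining with the single-layer bound gives
\begin{equation*}
\log \mathcal{N}\bigl(\mathcal{NN}^L_{M,N}|_S,\, \epsilon,\, \|\cdot\|_2\bigr) \lesssim \frac{\|X\|_F^2 \log(h^2)}{\epsilon^2}\Bigl(\prod_{l=1}^L M(l)\Bigr)^2 \Bigl(\sum_{l=1}^L N(l)^{2/3}\Bigr)^3 .
\end{equation*}

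Third, I would feed this covering number estimate into Dudley's entropy integral, $\text{Rad}(\mathcal{F};S) \le \inf_{\alpha>0}\bigl(4\alpha + \tfrac{12}{\sqrt{n}}\int_{\alpha}^{\sup \|f\|}\sqrt{\log \mathcal{N}(\mathcal{F},\epsilon,\|\cdot\|_2)}\,d\epsilon\bigr)$. Since the integrand is of order $1/\epsilon$, the integral is logarithmic in the upper limit, producing the $\log n$ factor, while setting $\alpha \sim 1/(n\sqrt{n})$ produces the residual $4/(n\sqrt n)$ term. Using $\|X\|_F \le \sqrt{n}\cdot \sup_x \|x\|_2 \le \sqrt{nd}$ on $\Omega = [-1,1]^d$ produces the $\sqrt{d\log(2h^2)}$ factor, giving exactly the stated bound.

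The main obstacle is the matrix covering step (i): getting the bound to scale with $\|W\|_{2,1}$ rather than with $\|W\|_F$ or with the number of parameters. The Maurey argument is nontrivial because one must carefully track which norm controls the variance of the sampling distribution, and the cover must be data-dependent through $\|X\|_F$ to avoid dimension factors. Once this step is available the rest is essentially bookkeeping: the layer composition in (ii) is a standard peeling argument, and Dudley's integral in (iii) is a black box from empirical process theory.
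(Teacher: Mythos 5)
Your proposal correctly reconstructs the Bartlett--Foster--Telgarsky argument: a Maurey-type single-layer matrix cover controlled by the $(2,1)$-norm, Lipschitz peeling to compose the per-layer covers into a whole-network covering number, and Dudley's entropy integral with the choice $\alpha \approx 1/n$ to convert that estimate into the stated Rademacher bound. This is precisely the route the paper follows---it cites \cite{bartlett2017spectrally} for the lemma and reproduces that proof (matrix covering lemma, whole-network covering lemma, Dudley integral) in Appendix~\ref{section:appendix_b}---so the approaches coincide.
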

This lemma controls Rademacher complexities of neural networks by the product of the spectral  norms of the network parameter matrices at each layer, i.e., $M(l)$. The complexity depends on the network depth via the $N(l)$ term as we always have $N(l) \geq 1$.
We extend this result to the differentiated PINNs in the following lemma.
\begin{lemma}\label{lemma:rad_pinn}
(Rademacher Complexity of Differentiated Networks). For every $L$, and every set of $n$ points $S \subset \overline{\Omega}$, the hypothesis class
\begin{equation}
\mathcal{PINN}^L_{M,N}= \left\{\boldsymbol{x} \mapsto\mathcal{L}u(\bx)\ |\ u(\bx)\in\mathcal{NN}^L_{M,N}\right\},
\end{equation}
satisfies the Rademacher complexity bound
\begin{equation}
\begin{aligned}
& \text{Rad}(\mathcal{PINN}^L_{M,N};S) \leq\frac{8K + 4d(L-1)K}{n_r\sqrt{n_r}} +\frac{18K\sqrt{d\log(2h^2)}\log n_r}{\sqrt{n_r}} \cdot \\
&\quad \prod_{l=1}^L M(l) \left(\sum_{l=1}^L N(l)^{2/3}\right)^{3/2}\left[ 1 + \sqrt{2} L\prod_{l=1}^L M(l)+ \sqrt{2} d(L^2-1)\left(\prod_{l=1}^L M(l)\right)^2 \right].
\end{aligned}
\end{equation}
\end{lemma}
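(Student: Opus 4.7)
The plan is to exploit the linearity of $\mathcal{L}$ from Assumption~\ref{assumption:bounded} to split $\mathcal{L}u_\bt$ into zeroth-, first-, and second-order pieces, and to bound the Rademacher complexity of each piece by reducing it to Lemma~\ref{lemma:rad_nn} via scalar contraction and a suitable ``augmented-network'' interpretation of the input derivatives. Concretely, I would write
\begin{equation*}
\mathcal{L}u_{\bt}(\bx) = c(\bx)\, u_{\bt}(\bx) + \sum_{\alpha=1}^d b_\alpha(\bx)\, \frac{\partial u_\bt}{\partial x_\alpha}(\bx) + \sum_{\alpha,\beta=1}^d A_{\alpha\beta}(\bx)\, \frac{\partial^2 u_\bt}{\partial x_\alpha \partial x_\beta}(\bx),
\end{equation*}
apply subadditivity of Rademacher complexity across the three groups, and use that multiplication by a coefficient function bounded by $K$ inflates $\mathrm{Rad}(\cdot)$ by at most a factor $K$ (a scalar-contraction step, since $|c|,|b_\alpha|,|A_{\alpha\beta}|\le K$ on $\Omega$). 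This decomposition already explains why the final bound carries the global factor $K$ and why the bracket has three summands $1$, $\sqrt{2}L\prod_l M(l)$, and $\sqrt{2}d(L^2-1)(\prod_l M(l))^2$.

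For the zeroth-order piece, Lemma~\ref{lemma:rad_nn} applies directly and contributes the $\tfrac{4K}{n_r\sqrt{n_r}}$ statistical constant along with the ``$1$'' inside the bracket. For the first-order piece I would start from the Jacobian formula $\partial u_\bt/\partial\bx = \boldsymbol{W}^L\boldsymbol{\Phi}^{L-1}\boldsymbol{W}^{L-1}\cdots\boldsymbol{\Phi}^1\boldsymbol{W}^1$ given in Section~2.3 and view $\sum_\alpha b_\alpha(\bx)\,u_{\bt,x_\alpha}(\bx) = \boldsymbol{W}^L\boldsymbol{\Phi}^{L-1}\cdots\boldsymbol{\Phi}^1\boldsymbol{W}^1\boldsymbol{b}(\bx)$ as the output of an augmented $L$-layer architecture whose base forward pass both produces the data-dependent diagonal matrices $\boldsymbol{\Phi}^l$ (via its own activations) and feeds a parallel ``derivative chain''. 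Since $\sigma$ is $1$-Lipschitz, $\Vert\boldsymbol{\Phi}^l\Vert_2\le 1$, so the spectral-norm and $(2,1)$-norm budgets of the effective layers in this augmented architecture remain controlled by the original $M(l)$ and $N(l)$. Applying Lemma~\ref{lemma:rad_nn} to the augmented class and then tracking the Lipschitz propagation of each $\boldsymbol{\Phi}^l$ as a function of $\bx$ through the $L$ layers yields the extra $\sqrt{2}L\prod_l M(l)$ multiplier.

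The second-order piece is handled analogously from the Hessian formula in Section~2.3. I would invoke subadditivity once more across the $L-1$ outer summands and the $d$ choices of the column index $j$ appearing in $\mathrm{diag}(\boldsymbol{\Psi}^l\cdots\boldsymbol{\Psi}^1\boldsymbol{W}^1_{:,j})$---which is precisely where the $d(L-1)$ factor in $\tfrac{4d(L-1)K}{n_r\sqrt{n_r}}$ originates---and treat each summand as a product of two Jacobian-type matrix chains separated by a diagonal factor built from $\boldsymbol{\Psi}^l=\mathrm{diag}(\sigma''(\cdot))$. Because $\sigma\in C^2$ on the bounded domain $\overline{\Omega}$, $\Vert\boldsymbol{\Psi}^l\Vert_2$ is uniformly bounded; the two matrix chains supply the squared factor $(\prod_l M(l))^2$, while the combinatorial ranges together with Lipschitz propagation through the $\boldsymbol{\Psi}^l$'s deliver the $\sqrt{2}d(L^2-1)$ coefficient.

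The main technical obstacle I expect is the first-order step: making the augmented-architecture interpretation precise enough that the norm budget passed to Lemma~\ref{lemma:rad_nn} is not worse than the original $M(l)$ and $N(l)$. The subtlety is that $\boldsymbol{\Phi}^l$ and $\boldsymbol{\Psi}^l$ are functions of $\bx$ rather than free parameters, so the argument must cleanly separate parameter-dependent norm contributions (which feed into $M(l)$ and $N(l)$) from data-dependent contractive contributions (which enter only through Lipschitz constants of $\sigma$ and $\sigma'$). Once this separation is carried out for the first derivative, the second-derivative case is structurally identical---longer only because of the $(j,l)$ index bookkeeping in the Hessian sum---and the stated inequality then follows by summing the three pieces and collecting the constants $8K+4d(L-1)K$, $18K$, and $\sqrt{2}$.
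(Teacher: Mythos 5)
Your high-level decomposition matches the paper's: split $\mathcal{L}u_\bt$ into zeroth-, first-, and second-order pieces, use subadditivity, account for the $K$ bound on the coefficients, and sum over the $(j,l)$ indices in the Hessian formula. The place where your plan breaks down is the central technical step you flag as ``the main technical obstacle'': you cannot actually cash it out by applying Lemma~\ref{lemma:rad_nn} to an ``augmented architecture''. The derivative chain $\boldsymbol{W}^L\boldsymbol{\Phi}^{L-1}\boldsymbol{W}^{L-1}\cdots\boldsymbol{\Phi}^1\boldsymbol{W}^1$ is \emph{not} a feedforward network of the form $W_L\sigma(W_{L-1}\sigma(\cdots))$. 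If you write the augmented state as $(h^l,g^l)$ with $h^l=\sigma(W_l h^{l-1})$ and $g^l=\sigma'(W_l h^{l-1})\odot(W_l g^{l-1})$, the update of $g^l$ is a \emph{bivariate, coordinate-coupled} map of $(h^{l-1},g^{l-1})$, not an element-wise $1$-Lipschitz $\sigma$ applied after a linear map. Lemma~\ref{lemma:rad_nn} (Bartlett--Foster--Telgarsky) is proved by a layer-wise covering-number recursion whose induction step is tied to exactly that linear-then-scalar-nonlinearity structure; it is not a statement you can invoke as a black box once the layer map has a different shape. Noting that $\Vert\boldsymbol{\Phi}^l\Vert_2\le1$ does not rescue this: the problem is not the size of $\boldsymbol{\Phi}^l$ but that it is a data-dependent linear operator produced by a side computation, so the hypothesis class is a different object from $\mathcal{NN}^L_{M,N}$.

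What the paper actually does (Appendix, proofs for the classes $\mathcal{F}_2$ and $\mathcal{F}_1$) is re-open the covering-number machinery rather than reuse Lemma~\ref{lemma:rad_nn} directly. It builds, layer by layer, a \emph{joint} cover: one sequence of covers $\mathcal{F}_i$ for the ordinary sub-networks $W_i\sigma(\cdots\sigma(W_1 Z))$ (these control the $\boldsymbol{\Phi}^l$ and $\boldsymbol{\Psi}^l$ via the $1$-Lipschitzness of $\sigma'$ and $\sigma''$), and a second sequence $\mathcal{G}_i$ for the derivative-chain intermediates, with the induction step bounding $\Vert\hat\Phi_l\hat G_l-\Phi_l\cdots W_1 A_{:,}(Z)\Vert_{2,2}$ by splitting into a $\Phi$-error term and a $G$-error term. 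The $\sqrt{2}$ in the final bound comes from this doubling of the covering-number contributions (it shows up as $\sqrt{2d\log(2h^2)}$ in the intermediate lemmas), and the $L$, $(L+1)$, and $d(L-1)$ factors come from the error accumulation across layers and the outer sum over $(j,l)$ in the Hessian. Your ``Lipschitz propagation of $\boldsymbol{\Phi}^l$'' intuition points in the right direction, but to turn it into a proof you need to carry the covering numbers of the $\boldsymbol{\Phi}^l$/$\boldsymbol{\Psi}^l$ sub-networks explicitly through the recursion, not bound their spectral norms; otherwise the supremum over the hypothesis class inside the Rademacher expectation is not controlled. A secondary, smaller issue: the ``scalar contraction to absorb $K$'' step is not a literal application of the standard contraction lemma either, since $c(\bx)u_\bt(\bx)$ is a data-dependent multiplicative perturbation rather than a fixed Lipschitz post-composition; the paper absorbs the coefficient functions into the data matrix $A_{:,j}(Z)$ inside the covering argument, which is the cleaner route.
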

This lemma shows a similar Rademacher complexity bound for PINNs, where the main differences are due to the first order and second order differentiation in PINNs, respectively. 
Using these lemmas, we derive the following posterior generalization bound.
\begin{theorem}\label{thm:post_generalization}
(A posterior generalization bound on PINN). Let Assumptions \ref{assumption:bounded} hold, for any $\delta\in(0,1)$ and the depth $L$, let the (not regularized) empirical loss function be $
    \boldsymbol{\theta}_{S} = \arg \min_{\boldsymbol{\theta}}  R_S(\boldsymbol{\theta}).$
Then, with probability at least $1-\delta$ over the choice of random samples $S=\left\{\boldsymbol{x}_i\right\}_{i=1}^{n_b+n_r} \subset \overline{\Omega}$ with $n_b$ boundary points and $n_r$ residual points, we have the following generalization bound
\begin{equation}
\begin{aligned}
R_{D\cap\partial\Omega}(\bt_S)&\leq R_{S\cap\partial\Omega}(\bt_S)+ \frac{32}{n_b\sqrt{n_b}} + \frac{144\sqrt{d\log(2h^2)}\log n_b}{\sqrt{n_b}}\cdot  \\
&\quad \prod_{l=1}^L M(l)\Big(\sum_{l=1}^L N(l)^{2/3}\Big)^{3/2}+2 \sqrt{\frac{\log(2/\delta(M,N))}{2n_b}} ,
\end{aligned}
\end{equation}
\begin{equation}
\begin{aligned}
&\quad R_{D\cap\Omega}(\bt_S)-R_{S\cap\Omega}(\bt_S)\\ 
&\leq \frac{64K + 32d(L-1)K}{n_r\sqrt{n_r}} + 2 \sqrt{\frac{\log(2/\delta(M,N))}{2n_r}} +  \frac{144K\sqrt{d\log(2h^2)}\log n_r}{\sqrt{n_r}}\cdot\\
&\quad \prod_{l=1}^L M(l) \left(\sum_{l=1}^L N(l)^{2/3}\right)^{3/2} \left[ 1 + \sqrt{2} L\prod_{l=1}^L M(l)+ \sqrt{2} d(L^2-1)\left(\prod_{l=1}^L M(l)\right)^2 \right].
\end{aligned}
\end{equation}
where $M(l) = \lceil \Vert \boldsymbol{W}^l \Vert_{2} \rceil$, and $N(l) = \lceil \Vert \boldsymbol{W}^l \Vert_{2,1} / \Vert \boldsymbol{W}^l \Vert_{2} \rceil$, in which $\lceil a \rceil$ of $a \in \mathbb{R}$ is the smallest integer that is greater than or equal to $a$, and
\begin{equation}
\delta(M,N) = {\delta}/\left[{\prod_{l=1}^L M(l)(M(l)+1)N(l)(N(l)+1)}\right].
\end{equation}
\end{theorem}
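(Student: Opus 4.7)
The plan is to reduce the posterior bound to a union-bounded family of fixed-capacity Rademacher bounds. For each integer tuple $(M(1),\dots,M(L),N(1),\dots,N(L)) \in \mathbb{N}^{2L}$ I would first establish a uniform generalization bound over the hypothesis class $\mathcal{NN}^L_{M,N}$; then take a union bound assigning to each tuple the confidence level $\delta(M,N)$ so that the total failure probability is at most $\delta$; and finally specialize to $M(l) = \lceil \|\boldsymbol{W}^l\|_2 \rceil$ and $N(l) = \lceil \|\boldsymbol{W}^l\|_{2,1}/\|\boldsymbol{W}^l\|_2 \rceil$, which by definition places the trained network $u_{\bt_S}$ inside $\mathcal{NN}^L_{M,N}$ for this specific tuple.

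For a fixed tuple the argument is standard. I would consider the squared loss classes $\mathcal{F}_b = \{\bx \mapsto (u_\theta(\bx)-g(\bx))^2 : u_\theta \in \mathcal{NN}^L_{M,N}\}$ and $\mathcal{F}_r = \{\bx \mapsto (\mathcal{L}u_\theta(\bx)-f(\bx))^2 : u_\theta \in \mathcal{NN}^L_{M,N}\}$, and use symmetrization together with McDiarmid's inequality applied to the bounded losses to obtain, with probability at least $1-\delta(M,N)$,
\begin{equation*}
R_{D\cap\partial\Omega}(\bt) - R_{S\cap\partial\Omega}(\bt) \leq 2\,\mathrm{Rad}(\mathcal{F}_b; S_b) + 2\sqrt{\frac{\log(2/\delta(M,N))}{2 n_b}}
\end{equation*}
uniformly over $u_\theta \in \mathcal{NN}^L_{M,N}$, and analogously for the residual side with $\mathcal{F}_r$. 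I would then invoke the Ledoux--Talagrand contraction principle (with $\phi(t)=t^2$, Lipschitz on any bounded range) to pass from $\mathrm{Rad}(\mathcal{F}_b;S_b)$ to $\mathrm{Rad}(\mathcal{NN}^L_{M,N};S_b)$ and from $\mathrm{Rad}(\mathcal{F}_r;S_r)$ to $\mathrm{Rad}(\mathcal{PINN}^L_{M,N};S_r)$. The uniform range of the arguments being squared is controlled by a layerwise Lipschitz propagation through $\prod_l M(l)$ combined with the bounds on $g$, $f$, and on the coefficients of $\mathcal{L}$ given by Assumption \ref{assumption:bounded}. Plugging in Lemma \ref{lemma:rad_nn} and Lemma \ref{lemma:rad_pinn} yields precisely the capacity factors $\prod_l M(l)$ and $(\sum_l N(l)^{2/3})^{3/2}$ appearing in the theorem, with the constants $32$ and $144$ arising as an overall factor-of-$8$ inflation: factor $2$ from symmetrization and factor $4$ from the squaring contraction on the bounded range.

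To convert the fixed-class statement into the posterior one I would apply a union bound over all tuples $(M,N) \in \mathbb{N}^{2L}$, using the telescoping identity $\sum_{k\geq 1} 1/[k(k+1)] = 1$ separately for each of the $2L$ ceiling factors in the denominator of $\delta(M,N)$. This guarantees $\sum_{(M,N)} \delta(M,N) \leq \delta$, so the uniform bound for the specific integer tuple realized by the trained parameters holds with overall probability at least $1-\delta$. Since $\lceil \|\boldsymbol{W}^l\|_2\rceil \geq \|\boldsymbol{W}^l\|_2$ and similarly for the $(2,1)/2$ ratio, $u_{\bt_S}$ indeed lies in $\mathcal{NN}^L_{M,N}$ for these ceilings, so the bound applies to $\bt_S$ in particular.

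The main obstacle I anticipate is making the contraction step sharp on the residual side. Because $\mathcal{L}u_\theta$ involves two derivatives of the network and is multiplied by the coefficient functions $\boldsymbol{A},\boldsymbol{b},c$, its pointwise range already grows polynomially in $\prod_l M(l)$; one must check that Lemma \ref{lemma:rad_pinn} already absorbs these powers inside its bracketed factor $[1 + \sqrt{2}L\prod_l M(l) + \sqrt{2}d(L^2-1)(\prod_l M(l))^2]$, so that applying contraction does not inflate the final polynomial degree in $\prod_l M(l)$. A secondary subtlety is keeping the ceiling-based integers $M(l), N(l)$ within an order-one multiple of the raw posterior norms, which is what ultimately justifies calling the resulting bound genuinely data-dependent.
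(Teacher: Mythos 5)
Your strategy — union bound over integer tuples $(M,N)$ with confidence splitting $\delta(M,N)$, a fixed-class Rademacher generalization bound via symmetrization and McDiarmid, Ledoux--Talagrand contraction through the squared loss, invocation of Lemmas \ref{lemma:rad_nn} and \ref{lemma:rad_pinn}, and finally specialization to the ceilings of the posterior norms — is exactly the structure of the paper's proof, and your accounting of the factor $8 = 2 \times 4$ and of the telescoping sum $\sum_k 1/[k(k+1)]=1$ is also what the paper uses.

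There is, however, one genuine gap, and it is precisely at the spot you flag as "the main obstacle." You propose to control the range of the argument of the square (and hence the Lipschitz constant of the contraction) by layerwise Lipschitz propagation through $\prod_l M(l)$ and the coefficient bounds from Assumption \ref{assumption:bounded}. That route cannot yield the stated constants: if the range of $u_\theta - g$ (or of $\mathcal{L}u_\theta - f$) scales with $\prod_l M(l)$, the contraction Lipschitz constant for $t\mapsto t^2$ on that range is $2\cdot(\text{range})$, not $4$, and the McDiarmid term would become $O\bigl((\text{range})^2\sqrt{\log(2/\delta(M,N))/n}\bigr)$ rather than the universal $2\sqrt{\log(2/\delta(M,N))/(2n)}$ appearing in the theorem. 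The paper avoids this by working throughout with a truncated hypothesis class: the network outputs (and correspondingly the residual map) are clipped to $[-1,1]$, so the squared loss is bounded by a universal constant $\leq 2$, the squaring map has Lipschitz constant $4$ on the relevant interval regardless of $(M,N)$, and the McDiarmid increment is range-independent. This truncation assumption is stated in the paper's appendix on Rademacher complexity and is implicit in the constants of the theorem; your proof needs to adopt it (or some equivalent bounded-range hypothesis) rather than rely on Lipschitz propagation, and correspondingly Lemma \ref{lemma:rad_pinn} does \emph{not} need to absorb the extra polynomial factors you worried about — those factors simply never arise once the hypothesis class is truncated.
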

The generalization bounds are mainly controlled by the complexity of networks measured by the spectral norm $\left\{M(l)\right\}_{l=1}^L$ and the (2,1) norm $\left\{N(l)\right\}_{l=1}^L$, as well as the last statistical term. 
When compared to the prior bound in Theorem \ref{thm:generalization}, the posterior bound in Theorem  \ref{thm:post_generalization} is easier to compute numerically, because it only involves terms related to neural network parameters. 
Despite the difference, Theorems \ref{thm:generalization} and \ref{thm:post_generalization} are related. Concretely, Theorem \ref{thm:generalization} shows that if the target function $u^*$ is more complex (simpler), i.e., has larger (smaller) Barron norm, the generalization error is larger (smaller), which implies that a complex (simple) neural network has been learnt to fit the target since complex (simple) network generalizes worse (better). On the other hand, the complexity of neural networks can also be reflected by the quantities $\left\{ M(l), N(l)\right\}_{l=1}^L$ in Theorem \ref{thm:post_generalization} since they are directly linked to Rademacher complexity in Lemmas \ref{lemma:rad_nn} and \ref{lemma:rad_pinn}. Hence, we can expect that PINNs learning more complex target functions have larger $\left\{ M(l),N(l)\right\}_{l=1}^L$ quantities. And if a trained PINN has large $\left\{ M(l),N(l)\right\}_{l=1}^L$ quantities, which signifies higher complexity, the target function fitted should also be complicated (i.e., it has a larger Barron norm) due to the implicit regularization of network training, i.e., stochastic gradient training finds out complex (simple) solution given a complex (simple) target.

\subsection{Posterior $L_2$ Error Generalization Bound}
In this subsection, we will bridge the gap between the boundary + residual generalization error with the $L_2$ generalization error. In particular, we adopt the following assumption widely used in numerical PDE methods \cite{bochev2016least,bramble1970rayleigh} and PINN theory \cite{shin2020convergence, mishra2020estimates, de2022error, Ryck2021ErrorAF}. Intuitively, the assumption states that minimization of the boundary and residual errors contributes to the minimization of $L_2$ error.

\begin{assumption}\label{assumption:L2}
Assume that the PDE satisfies the following norm constraint:
\begin{equation}
\begin{aligned}
C_1 \Vert u \Vert_{L_2(\Omega)} \leq \Vert \mathcal{L}u \Vert_{L_2(\Omega)} + \Vert u \Vert_{L_2(\partial\Omega)}, \qquad \forall u \in \mathcal{NN}_{L}^M, \forall L, M,
\end{aligned}
\end{equation}
where the positive constant $C_1$ does not depend on $u$ but on the domain and the coefficients of the operators $\mathcal{L}$.
\end{assumption}
This is Assumption 2.5 in \cite{shin2020error} and Assumption 2.1 in \cite{mishra2020estimates}, which are two theory papers on PINN. In particular, the function space $(X, \Vert \cdot \Vert_X)$ in \cite{shin2020error} becomes the function space of all neural network functions with the $L_2$ norm, $(Y, \Vert \cdot \Vert_Y)$ becomes the $L_2$ function space $(\Omega, \Vert \cdot \Vert_{L_2(\Omega)})$, and $(Z, \Vert \cdot \Vert_Z)$ becomes the $L_2$ function space $(\partial\Omega, \Vert \cdot \Vert_{L_2(\partial\Omega)})$. 

Intuitively, Assumption \ref{assumption:L2} specifies the well-posedness of the PDE problem, and justifies the motivation of minimizing the boundary and residual losses. 

To show that Assumption \ref{assumption:L2} is realistic, various papers have proved that Assumption \ref{assumption:L2} holds for various PDEs. \cite{mishra2020estimates} proves that our Assumption \ref{assumption:L2} holds for Poisson equation, heat equation, wave equation and Stokes equation. \cite{Ryck2021ErrorAF} proves that Assumption \ref{assumption:L2} holds for Kolmogorov equations that include the heat equation and Black-Scholes equation as special cases. \cite{de2022error} proves Assumption \ref{assumption:L2} holds for incompressible Navier-Stokes equations. Lastly, \cite{lu2021priori} proves that optimization of the Deep Ritz Method objective contributes to $L_2$ error, on Poisson equation and static Schrödinger equation. Therefore, we refer the readers to \cite{lu2021priori, mishra2020estimates, Ryck2021ErrorAF, de2022error} for concrete PDE examples satisfying the assumption.

From the practice side \cite{raissi2019physics}, the motivation of the boundary and residual losses is to embed the physical law governed by the PDE and the data into the neural networks. Empirically, as long as the PDE problem is well-posed (i.e., it satisfies Assumption \ref{assumption:L2}) and that data are sufficient (which means the corresponding bound will be small), then PINNs can solve the PDE problem with small error. Assumption \ref{assumption:L2} is basically the foundation of PINNs.

The following theorem bridges the boundary and residual losses with the L2 error.
\begin{theorem}\label{thm:L2_generalization}
Let Assumption \ref{assumption:L2} holds, then for all neural networks parameterized by $\theta$, we can connect their $L_2$ generalization errors with the boundary + residual losses as follows:
\begin{equation}
\Vert u_{\theta} - u \Vert_{L_2(\Omega)} \leq \sqrt{2}C_1^{-1} \left(R_{D\cap\Omega}(\theta) + R_{D\cap\partial\Omega}(\theta)\right)^{1/2}.
\end{equation}
\end{theorem}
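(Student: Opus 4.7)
The plan is to apply Assumption \ref{assumption:L2} directly to the error function $v = u_{\theta} - u$ and then convert the right-hand side into the boundary and residual population losses. Since Assumption \ref{assumption:L2} is stated for all neural network functions, it applies in particular to $u_{\theta} - u$ whenever the assumption's function class is closed under the addition of the true solution, which is the standard interpretation in the cited references (one can equivalently apply it to the neural network $u_{\theta}$ and invoke linearity of $\mathcal{L}$ together with $\mathcal{L}u = f$, $u|_{\partial\Omega} = g$).

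First I would substitute $v = u_{\theta} - u$ into the assumption to obtain
\begin{equation*}
C_1 \Vert u_{\theta} - u \Vert_{L_2(\Omega)} \;\leq\; \Vert \mathcal{L}(u_{\theta} - u) \Vert_{L_2(\Omega)} + \Vert u_{\theta} - u \Vert_{L_2(\partial\Omega)}.
\end{equation*}
Next, since $\mathcal{L}$ is linear (Assumption \ref{assumption:bounded}) and $u$ is the exact PDE solution, the two terms on the right simplify: $\mathcal{L}(u_{\theta} - u) = \mathcal{L}u_{\theta} - f$ pointwise in $\Omega$, and $u_{\theta} - u = u_{\theta} - g$ pointwise on $\partial\Omega$. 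Thus
\begin{equation*}
C_1 \Vert u_{\theta} - u \Vert_{L_2(\Omega)} \;\leq\; \Vert \mathcal{L}u_{\theta} - f \Vert_{L_2(\Omega)} + \Vert u_{\theta} - g \Vert_{L_2(\partial\Omega)}.
\end{equation*}

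Then I would identify the right-hand side with the population losses defined in \eqref{eq:loss_definition}. Under the convention used in the paper (in which the uniform-measure expectation and the $L_2$ norm are identified up to the absorption of $|\Omega|$ and $|\partial\Omega|$ into $C_1$), we have $\Vert \mathcal{L}u_{\theta} - f \Vert^2_{L_2(\Omega)} = R_{D \cap \Omega}(\theta)$ and $\Vert u_{\theta} - g \Vert^2_{L_2(\partial\Omega)} = R_{D \cap \partial\Omega}(\theta)$. Substituting gives
\begin{equation*}
C_1 \Vert u_{\theta} - u \Vert_{L_2(\Omega)} \;\leq\; \sqrt{R_{D\cap\Omega}(\theta)} + \sqrt{R_{D\cap\partial\Omega}(\theta)}.
\end{equation*}

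Finally, I would apply the elementary inequality $a + b \leq \sqrt{2}\sqrt{a^2 + b^2}$ (a direct consequence of Cauchy--Schwarz in $\mathbb{R}^2$) with $a = \sqrt{R_{D\cap\Omega}(\theta)}$ and $b = \sqrt{R_{D\cap\partial\Omega}(\theta)}$, and divide by $C_1$ to conclude
\begin{equation*}
\Vert u_{\theta} - u \Vert_{L_2(\Omega)} \;\leq\; \sqrt{2}\, C_1^{-1} \bigl(R_{D\cap\Omega}(\theta) + R_{D\cap\partial\Omega}(\theta)\bigr)^{1/2}.
\end{equation*}
There is no genuine technical obstacle here; Assumption \ref{assumption:L2} is precisely tailored to make this deduction immediate. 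The only point that deserves care is the first step, namely verifying that the assumption can be invoked on the difference $u_{\theta} - u$ (rather than on $u_{\theta}$ alone), which the companion literature \cite{shin2020error, mishra2020estimates, de2022error} typically handles by stating the well-posedness inequality for an ambient space containing both $u_{\theta}$ and $u$.
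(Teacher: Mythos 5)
Your proof matches the paper's argument exactly: apply Assumption \ref{assumption:L2} to the error $u_\theta - u$, invoke the inequality $a+b\le\sqrt{2}\,(a^2+b^2)^{1/2}$, and identify the resulting $L_2$ norms with the population losses $R_{D\cap\Omega}$ and $R_{D\cap\partial\Omega}$. The two caveats you flag --- that Assumption \ref{assumption:L2} must be read as holding for the difference $u_\theta - u$ rather than just $u_\theta$, and that the normalization factors from $\mathbb{E}_{\mathrm{Unif}(\Omega)}$ and $\mathbb{E}_{\mathrm{Unif}(\partial\Omega)}$ have to be absorbed into $C_1$ --- are real subtleties that the paper's proof passes over without comment, but they do not change the route.
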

Here, we can use both the prior bound (Theorem \ref{thm:generalization}) and the posterior bound (Theorem \ref{thm:post_generalization}) to evaluate the $L_2$ distance between solution of PINN or XPINN and the true solution of PDEs.

\subsection{Comparing XPINN and PINN by Theorem \ref{thm:generalization}} 
In this subsection, we compare PINNs with XPINNs by using the generalization bound in Theorem \ref{thm:generalization}. We focus on comparing PINN with XPINN on the residual loss, i.e. $R_{D\cap\Omega}$ and $R_{S\cap\Omega}$, because it is more representative of the differentiated nets in PINNs. The case for boundary loss is similar but simpler, which is included in the Appendix.
Specifically, the comparison is performed by computing their respective theoretical bounds.
In particular, the generalization performance of PINN depends on the upper bound in Theorem \ref{thm:generalization}, which is: $R_{S \cap \Omega}(\bt^*)+8\left(\Vert u^* \Vert_{\mathcal{W}^L(\Omega)}\right)^3{C(h,K)\log n_r}/{{\sqrt{n_r}}} + 2\sqrt{{2(\log(2/\delta))}/{n_r}}$,
where $n_r$ is the number of residual training points.

For XPINN's generalization, we can apply Theorem \ref{thm:generalization} to each of the subdomains in the XPINN. Specifically, for the $i$-th sub-net in the $i$-th subdomain of XPINN, i.e., the $\Omega_i, i\in\left\{1,2,...,N_D\right\}$, its generalization performance is upper bounded by
$R_{S \cap \Omega_i}(\bt^*)+8\left(\Vert u^* \Vert_{\mathcal{W}^L(\Omega_i)}\right)^3{C(h,K)\log n_{r,i}}/{{\sqrt{n_{r,i}}}} + 2\sqrt{{2(\log(2/\delta))}/{n_{r,i}}}$,
where $n_{r,i}$ is the number of training boundary points in the $i$-th subdomain. 

Hence, since the $i$-th subdomain has $n_{r,i}$ training boundary points and is in charge of the prediction of $\frac{n_{r,i}}{n_r}$ proportion of testing data, we weight-averaged their generalization errors to get the generalization error of XPINN:
$\sum_{i=1}^{N_D} \frac{n_{r,i}}{n_r}(R_{S \cap \Omega_i}(\bt^*)+8\left(\Vert u^* \Vert_{\mathcal{W}^L(\Omega_i)}\right)^3{C(h,K)\log n_{r,i}}/{{\sqrt{n_{r,i}}}} + 2\sqrt{{2(\log(2/\delta))}/{n_{r,i}}})$.
If we omit the last term and assume that their empirical losses are similar, i.e.,
$R_{S \cap \Omega} \approx \sum_{i=1}^{N_D}({n_{r,i}}/{n_r})R_{S \cap \Omega_i}$, and $\sqrt{{2\log(2/\delta)}/{n_{r,i}}} \ll \Vert u^* \Vert_{\mathcal{W}^L(\Omega)}, \Vert u^* \Vert_{\mathcal{W}^L(\Omega_{i})}$,
then comparing the generalization ability of PINN and XPINN reduces to the following:
\begin{equation} \label{eq:bar_compare}
{\Vert u^* \Vert^3_{\mathcal{W}^L(\Omega)}} {\text{(PINN)}} \ \qquad \text{versus} \qquad {\sum_{i=1}^{N_D}\frac{\log n_{r,i}\sqrt{n_{r,i}}}{\log n_r\sqrt{n_r}}\Vert u^* \Vert^3_{\mathcal{W}^L(\Omega_i)}} {\text{(XPINN)}},
\end{equation}
where the model having smaller corresponding quantity is more generalizable.

In the next section, we will present three analytic examples and adopt the above comparison method to illustrate the circumstances under which XPINN is better or worse than PINN.

\subsection{Comparing XPINN and PINN by Theorem \ref{thm:post_generalization}}
The comparison using Theorem \ref{thm:post_generalization} is also done via computing their respective theoretical bounds. The residual loss is considered in the main text while the case for boundary loss is included in the supplementary material. Concretely, we denote the upper bound of PINN testing loss as $B_{\text{PINN}}$ and those of the sub-net $i$ in XPINN as $B_{i,\text{XPINN}}$, $i \in \left\{1,2,...,N_D\right\}$, which are provided by the right sides of Theorem \ref{thm:post_generalization}, i.e., the bounds are
\begin{equation}
\begin{aligned}
&\quad B_{\text{PINN}} = \frac{64K + 32d(L-1)K}{n_r\sqrt{n_r}} + 2 \sqrt{\frac{\log(2/\delta(M,N))}{2n_r}} +  \frac{144K\sqrt{d\log(2h^2)}\log n_r}{\sqrt{n_r}}\cdot\\
&\quad \prod_{l=1}^L M(l) \left(\sum_{l=1}^L N(l)^{2/3}\right)^{3/2} \left[ 1 + \sqrt{2} L\prod_{l=1}^L M(l)+ \sqrt{2} d(L^2-1)\left(\prod_{l=1}^L M(l)\right)^2 \right],
\end{aligned}
\end{equation}
and
\begin{equation}
\begin{aligned}
&\quad B_{i,\text{XPINN}} = \frac{64K + 32d(L-1)K}{n_{r,i}\sqrt{n_{r,i}}} + 2 \sqrt{\frac{\log(\frac{2}{\delta(M_i,N_i)})}{2n_{r,i}}} +  \frac{144K\sqrt{d\log(2h^2)}\log n_{r,i}}{\sqrt{n_{r,i}}}\cdot\\
&\quad \prod_{l=1}^L M_i(l) \left(\sum_{l=1}^L N_i(l)^{2/3}\right)^{3/2} \left[ 1 + \sqrt{2} L\prod_{l=1}^L M_i(l)+ \sqrt{2} d(L^2-1)\left(\prod_{l=1}^L M_i(l)\right)^2 \right].
\end{aligned}
\end{equation} 
Specifically, we assume that all sub-PINNs as well as the PINN model use neural networks with depth $L$ and width $h$. In the bound of PINN, $n_r$ is the total number of residual training samples. $M(l) = \lceil \Vert \boldsymbol{W}^l \Vert_{2} \rceil$, and $N(l) = \lceil \Vert \boldsymbol{W}^l \Vert_{2,1} / \Vert \boldsymbol{W}^l \Vert_{2} \rceil$, where $\boldsymbol{W}^l$ is the $l$-th layer parameter matrix in the PINN model. Moreover, in the bound of XPINN, $n_{r,i}$ is the number of residual training samples in subdomain $i$. $M_i(l) = \lceil \Vert \boldsymbol{W}^l_i \Vert_{2} \rceil$, and $N_i(l) = \lceil \Vert \boldsymbol{W}^l_i \Vert_{2,1} / \Vert \boldsymbol{W}^l_i \Vert_{2} \rceil$, where $\boldsymbol{W}^l_i$ is the $l$-th layer parameter matrix of the $i$-th subnet in the XPINN model.
Because the $i$-th sub-net in XPINN is in charge of the prediction of $\frac{n_{r,i}}{n_r}$ proportion of testing data, we weight-averaged their bounds to get that of XPINN, i.e., $B_{\text{XPINN}} = \sum_{i=1}^{N_D} ({n_{r,i}}/{n_r})B_{i,\text{XPINN}}$ where $B_{\text{XPINN}}$ is the bound for XPINN. Thus, we only need to compare $B_{\text{PINN}}$ with $B_{\text{XPINN}}$, where the model having smaller corresponding quantity is more generalizable.
These quantities can be directly measured and calculated from the trained deep nets, which allows for easy numerical validation. Thus, we validate this comparison method in computational experiments.

\subsection{Comparing XPINN and PINN by Theorem \ref{thm:L2_generalization}}
In this subsection, we compare PINNs with XPINNs on the $L_2$ error by using the generalization bound in Theorem \ref{thm:L2_generalization}. 
Specifically, the comparison is performed by computing their respective theoretical bounds.
In particular, we have already shown how to compare PINN and XPINN based on the boundary and residual losses. Denote the boundary and residual bounds of PINN (XPINN) as $B_{\text{PINN}}^{\text{boundary}}$ ($B_{\text{XPINN}}^{\text{boundary}}$) and $B_{\text{PINN}}^{\text{residual}}$ ($B_{\text{XPINN}}^{\text{residual}}$). Then, we only need to compare the following quantites:
\begin{equation}
\left(B_{\text{PINN}}^{\text{boundary}} + B_{\text{PINN}}^{\text{residual}}\right)^{1/2} (\text{PINN}) \quad \text{versus} \quad \left(B_{\text{XPINN}}^{\text{boundary}} + B_{\text{XPINN}}^{\text{residual}}\right)^{1/2} (\text{XPINN}).
\end{equation}

\section{Analytical Examples Based on Theorem \ref{thm:generalization}}
In this section, we provide analytical examples to further analyze the prior generalization bound in Theorem \ref{thm:generalization}. The examples ensure analytical expressions of the Barron norm, which results in precise calculation of prior bounds. Specifically, we show in what cases XPINNs are better than, similar to, and worse than PINNs in order to demonstrate the tradeoff in XPINN generalization.

\begin{figure}[htbp]
\centering
\includegraphics[scale=0.39]{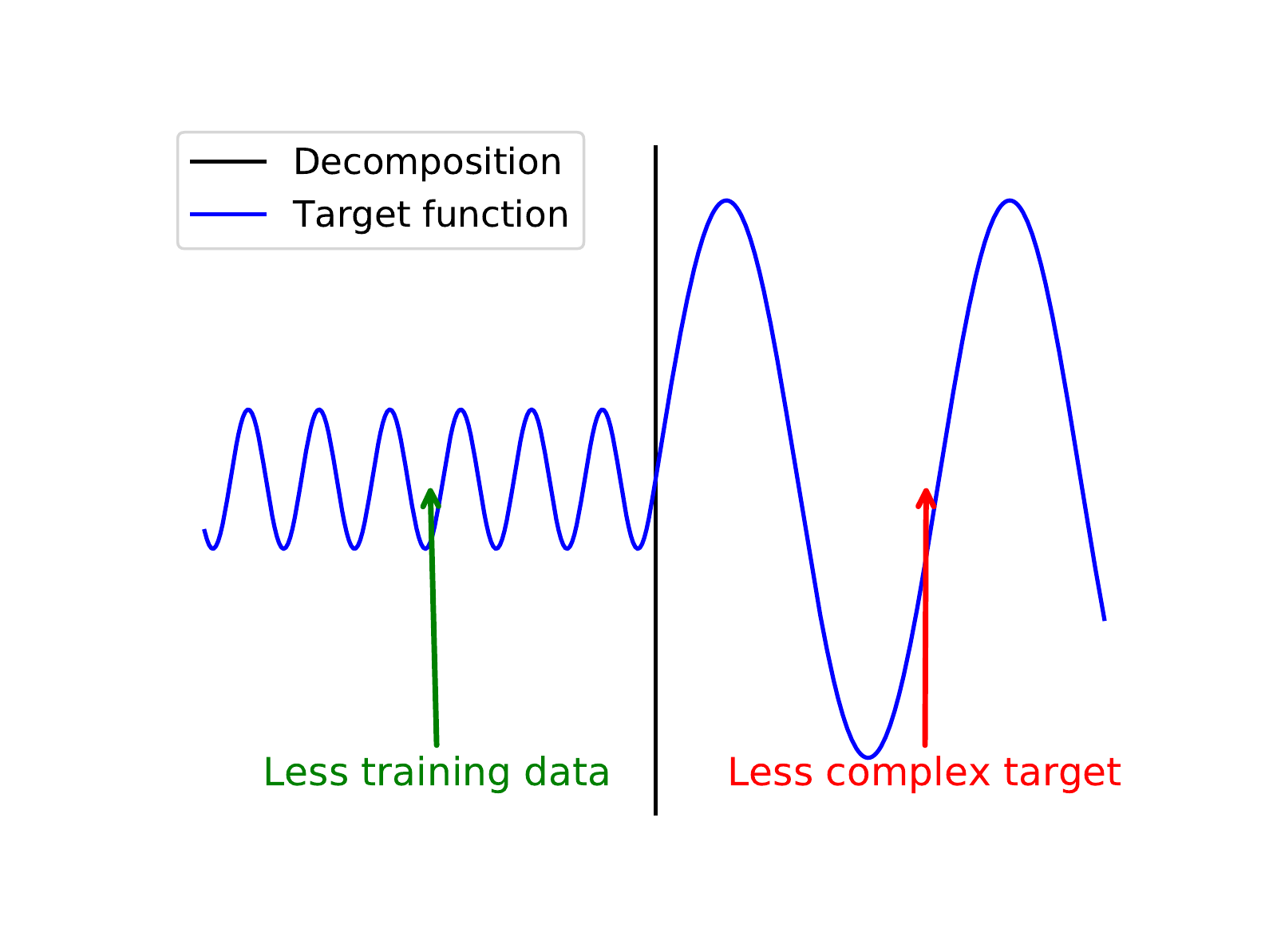}
\includegraphics[scale=0.39]{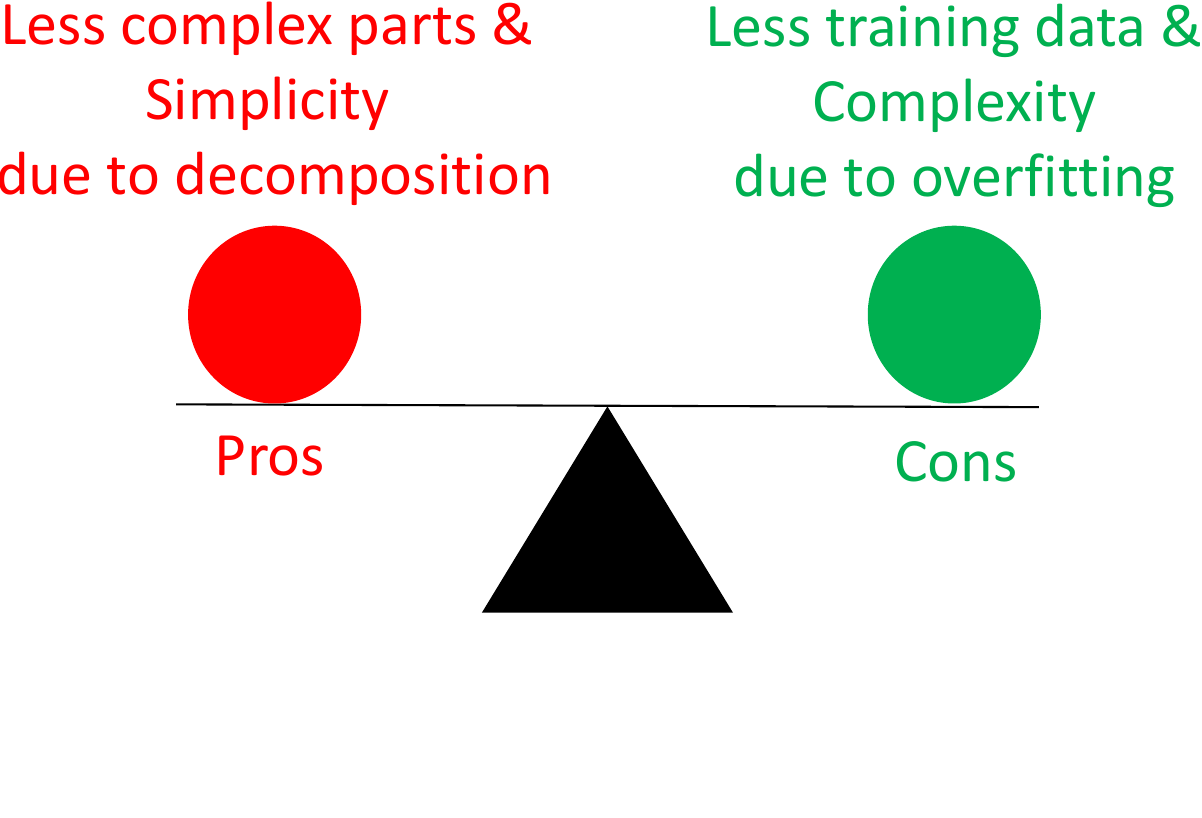}
\caption{Left: Decomposition causes simpler target in each part (red), but less
training data leads to overfitting. Right: Tradeoff between the two factors.}
\label{fig:XPINN}
\end{figure}

\subsection{Case where XPINN Outperforms PINN} \label{sec:1}
Let us consider the target function $u^*(x, y) = 2 \sin x + \sin y$, on the broken line $\Omega = \Omega_1 \cup \Omega_2 = [0,1]\times \left\{0\right\} \cup \left\{0\right\} \times [0,1]$. Obviously, we have $\Vert u^* \Vert_{\mathcal{W}^2(\Omega)}=3$, recall that $\mathcal{W}^2(\Omega)$ is the natural function space of two-layer sine networks on $\Omega$. Interestingly, if we restrict $u^*(x,y)$ to $\Omega_1$, we have $u^*(x,y)=2\sin x$ on $\Omega_1$, with a Barron norm $\Vert u^* \Vert_{\mathcal{W}^2(\Omega_1)}=2$. Similarly, if we restrict our observation to $\Omega_2$, we have $\Vert u^* \Vert_{\mathcal{W}^2(\Omega_1)}=1$. Since the lines $\Omega_1$ and $\Omega_2$ have the same length, it is natural to assume that the numbers of train residual data points on $\Omega_1$ and $\Omega_2$ are the same, i.e. $n_{r,1}=n_{r,2}=n_r/2$ in equation (\ref{eq:bar_compare}). In addition, we also assume that there is sufficient training data, with $\log n_r \gg \log 2$. We compare a PINN on $\Omega$ and an XPINN with two sub-nets on $\Omega_1$ and $\Omega_2$, respectively. Applying Theorem \ref{thm:generalization} and our discussion in section 3.1,
$27=\Vert u^* \Vert_{\mathcal{W}^2(\Omega)}^3  \gtrsim (\Vert u^* \Vert_{\mathcal{W}^2(\Omega_1)}^3+\Vert u^* \Vert_{\mathcal{W}^2(\Omega_2)}^3)/\sqrt{2} = {(8+1)}/{\sqrt{2}} = {9}/{\sqrt{2}}$.
Thus, XPINN generalizes better than PINN.

The underlying reason is as follows: $u^*$ remains simple on $\Omega_2$, while being complex on $\Omega_1$. Since XPINN optimizes several sub-nets at different subdomains, XPINN may learn a simple network on $\Omega_2$ where the solution is simpler, (i.e., very small $\Vert u^* \Vert_{\mathcal{W}^2(\Omega_2)}=1$). Also, it learns a complex network in other subdomains, where the solution is relatively complicated, (i.e., relatively large $\Vert u^* \Vert_{\mathcal{W}^2(\Omega_1)}=2$). Therefore, compared with PINN, which learns a very complex network on the whole domain $\Omega$ (extremely large $\Vert u^* \Vert_{\mathcal{W}^2(\Omega)}=3$), XPINN tends to have lower overall complexity, because it is complex on only part of the domain and remains simple on the rest of the domain, which leads to better generalization.

\subsection{Case where XPINN is Worse Than PINN} \label{sec:2}
Let us consider the same target function $u^*(x, y) = 2 \sin x + \frac{1}{2}\sin y$, on a different broken line $\Omega = \Omega_1 \cup \Omega_2$, where $\Omega_1 = [0,1]\times \left\{0\right\}$ and $ \Omega_2 = \left\{(x,y)|y=x,x\in[0,{\sqrt{2}}/{2}]\right\}$. Obviously, we have $\Vert u^* \Vert_{\mathcal{W}^2(\Omega)}=2.5$. Moreover, if we restrict $u^*(x,y)$ to $\Omega_1$, we have $u^*(x,y)=2\sin x$ on $\Omega_1$, with a Barron norm $\Vert u^* \Vert_{\mathcal{W}^2(\Omega_1)}=2$. However, even if we restrict our observation to $\Omega_2$, we still have $\Vert u^* \Vert_{\mathcal{W}^2(\Omega_2)}=2.5$. Since the lines $\Omega_1$ and $\Omega_2$ have the same length, it is natural to assume that the numbers of train residual data points on $\Omega_1$ and $\Omega_2$ are the same, i.e. $n_{r,1}=n_{r,2}=n_r/2$ in equation (\ref{eq:bar_compare}). We compare a PINN on $\Omega$ and an XPINN with two sub-nets on $\Omega_1$ and $\Omega_3$ respectively. Applying Theorem \ref{thm:generalization} and following our discussion in section 3.1, we have
$15.625=\Vert u^* \Vert_{\mathcal{W}^2(\Omega)}^3 \lesssim (\Vert u^* \Vert_{\mathcal{W}^2(\Omega_1)}^3+\Vert u^* \Vert_{\mathcal{W}^2(\Omega_2)}^3)/\sqrt{2} = {(2^3+2.5^3)}/{\sqrt{2}} = {23.625}/{\sqrt{2}} \approx 16.70$.
Thus, in this example XPINN is worse than PINN.

Although XPINN decreases the target function complexity via decomposition, at least on $\Omega_1$ it decreases to 2 from 2.5, it cannot complement the overfitting of less available training data on generalization, which is reflected in the $\sqrt{1/n_r}$ term in the bound, where $n_r$ is the number of residual training samples, i.e. the complexity grows with less data. Unfortunately, in this example, the more complexity brought by overfitting due to less data exceeds the benefit of simpler target function parts after decomposition. Hence, XPINN performs worse than PINN.

\subsection{Illustration of a Tradeoff in XPINN generalization} \label{sec:3}
In this section, we summarize the above two examples and derive a tradeoff in XPINN generalization, which is illustrated in Figure \ref{fig:XPINN}. There are two factors that counter-balance each other to affect XPINN generalization, namely the simplicity of decomposed target function within each subdomain thanks to domain decomposition, and the complexity and inclination to overfit due to less available training data, which counter-balance each other as follows. When the former is more dominant, XPINN outperforms PINN, as in our  example in section \ref{sec:1}. Otherwise, PINN outperforms XPINN,  as in our example in section \ref{sec:2}. When the two factors reach a balance, XPINN and PINN perform similarly.

To make the idea clearer, we consider another analytical example. Let us consider the target function $u^*(x, y) = 2 \sin x + q\sin y$, on the broken line $\Omega = \Omega_1 \cup \Omega_2$, where $\Omega_1 = [0,1]\times \left\{0\right\}$ and $ \Omega_2 = \left\{(x,y)|y=x,x\in[0,{\sqrt{2}}/{2}]\right\}$, where $q\in\mathbb{R}^+$ is a fixed constant to be decided. Obviously, we have $\Vert u^* \Vert_{\mathcal{W}^2(\Omega)}=2+q$. Further, if we restrict $u^*(x,y)$ to $\Omega_1$, we have $u^*(x,y)=2\sin x$ on $\Omega_1$, with a Barron norm $\Vert u^* \Vert_{\mathcal{W}^2(\Omega_1)}=2$. However, even if we restrict our observation to $\Omega_2$, we still have $\Vert u^* \Vert_{\mathcal{W}^2(\Omega_2)}=2+q$. Because the lines $\Omega_1$ and $\Omega_2$ have the same length, it is natural to assume that the numbers of train residual data points on $\Omega_1$ and $\Omega_2$ are the same, i.e. $n_{r,1}=n_{r,2}=n_r/2$ in equation (\ref{eq:bar_compare}). We compare a PINN on $\Omega$ and an XPINN with two sub-nets on $\Omega_1$ and $\Omega_2$, respectively. Applying Theorem \ref{thm:generalization} and following our discussion in Section 3.1, we need to compare the following quantities to determine when XPINN outperforms PINN:
$(2+q)^3=\Vert u^* \Vert_{\mathcal{W}^2(\Omega)}^3 \ \text{versus}\ (\Vert u^* \Vert_{\mathcal{W}^2(\Omega_1)}^3+\Vert u^* \Vert_{\mathcal{W}^2(\Omega_2)}^3)/\sqrt{2} = {(8+(2+q)^3)}/{\sqrt{2}}$.
When $(2+q)^3<8/(\sqrt{2}-1)\approx 19.31$, i.e., $q < 0.683$, PINN is better due to its more obvious effect with less data. When $q>0.683$, XPINN performs better due to the more obvious effect by decomposing complexity into simplicity.

In summary, inspired by the two analytical examples, we have shown in this section that there exists a tradeoff in XPINNs, which results in their different performance when compared to PINNs.
We further demonstrate these phenomena in computational experiments for various PDEs in the next section.

\section{Computational Experiments}

\begin{figure}[htbp]
\centering
\includegraphics[scale=0.6]{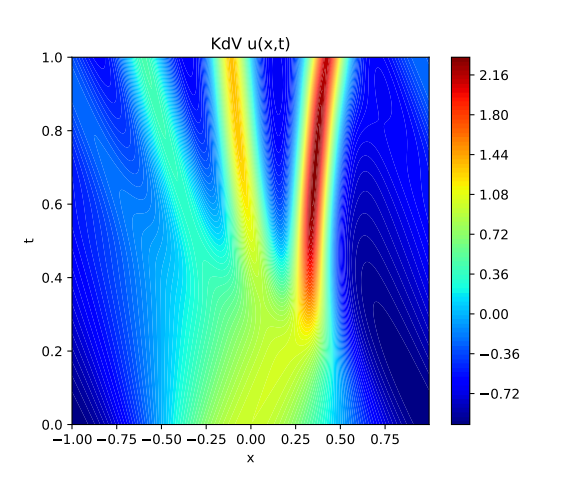}
\includegraphics[scale=0.4]{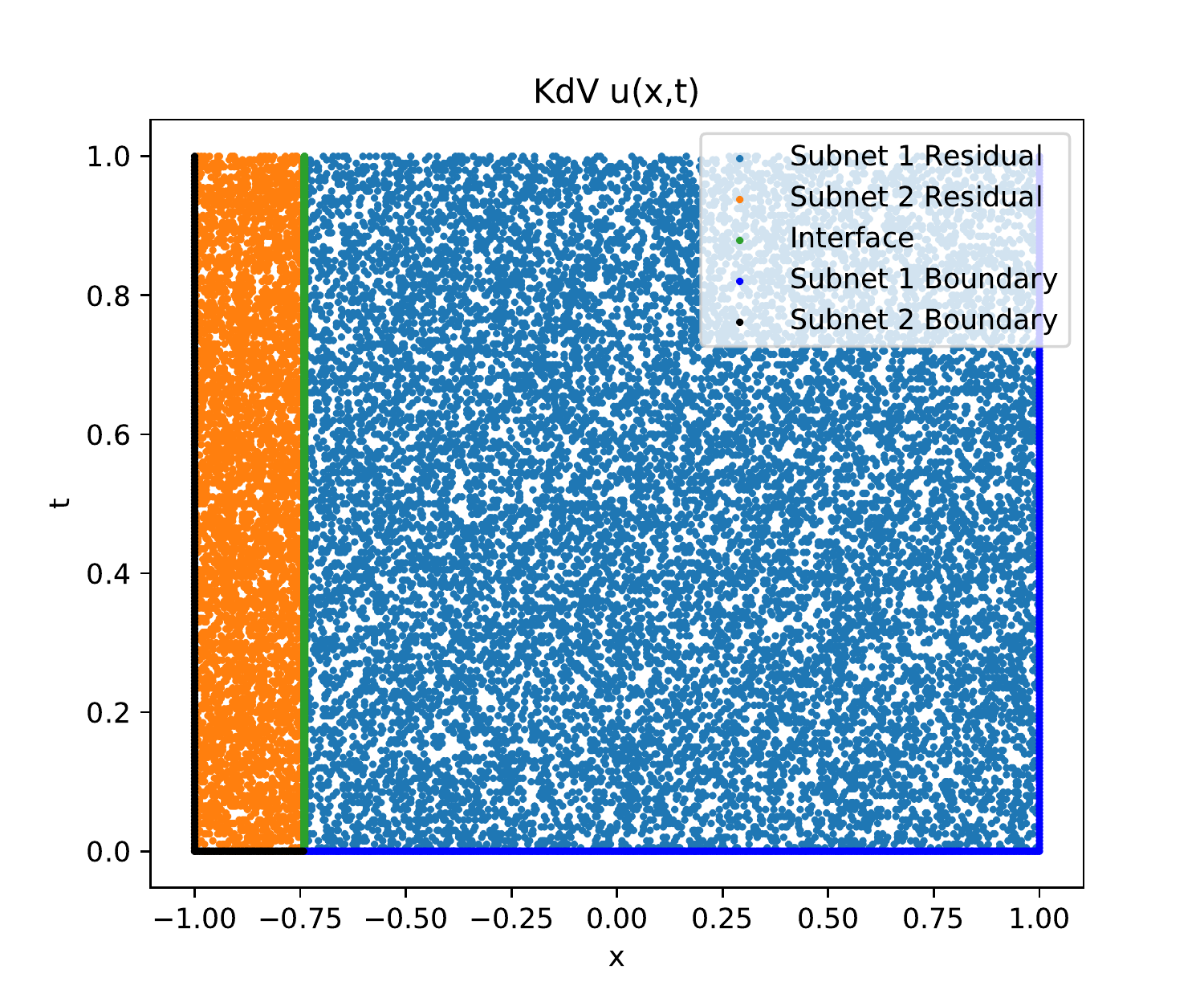}
\caption{Data visualization for the KdV experiment. Left: exact solution of the KdV equation. Right: Training points.}
\label{fig:kdv}
\end{figure}

\begin{figure}[htbp]
\centering
\includegraphics[scale=0.65]{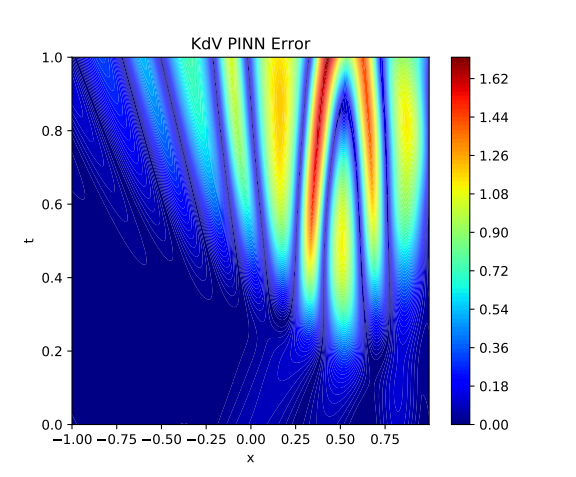}
\includegraphics[scale=0.65]{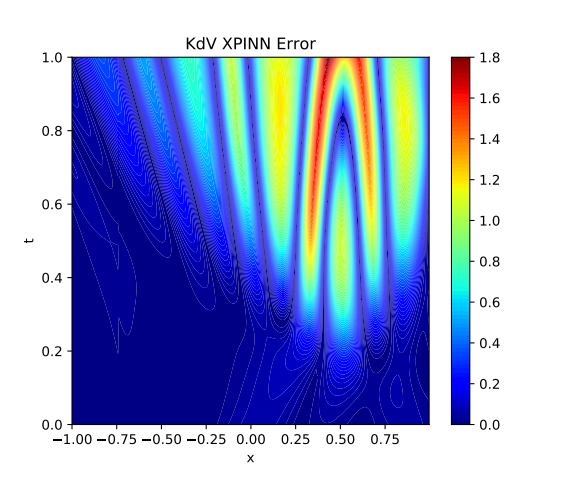}
\caption{Error visualization for the KdV experiment.}
\label{fig:kdv_error}
\end{figure}

\subsection{KdV Equation}
\subsubsection{Setup}
In this experiment, we consider a one-dimensional KdV equation given by
$u_t + u u_x = 0.0025 u_{xxx}, x \in [-1, 1], t \in [0, 1]$,
with the boundary condition of
$u(x, 0) = \cos (\pi x), x\in [-1,1]$ and the periodic initial condition.
The true solution is visualized in Figure \ref{fig:kdv} left. The entire dataset for this PDE is provided by the paper of PINN \cite{raissi2019physics} and CPINN \cite{jagtap2020conservative}.

Following the KdV equation experiment in CPINN \cite{jagtap2020conservative}, the training dataset for PINN contains 18000 residual points and 914 boundary points. The testing dataset for PINN contains 102400 points uniformly distributed within the domain. The backbone model for PINN is a 10-layer neural network with 20 hidden units activated by sine as in our theory. Adam \cite{kingma2014adam} optimizer with 1e-3 learning rate is used for optimization. No regularization is used. 

Moreover, from the solution of the KdV equation in Figure \ref{fig:kdv} left, we observe that it is complex, and that it fluctuates when $x \geq 0$ corresponding to the right part. In contrast, the left part is smoother and simpler.
Hence, to reflect our discussion on the prior bound and analytical examples, for XPINN we partition the whole domain into two subdomains, including (1) the right domain, corresponding to sub-net named XPINN-R as $x > -0.74$ and (2) the left domain, corresponding to sub-net named XPINN-L as $x \leq -0.74$. Then, the target function in subdomain 1 is complicated and it fluctuates significantly, while that in subdomain 2 is simpler and smoother. 

For XPINN, it is given 646 and 268 boundary points, 14000 and 4000 residual points, in subdomain 1 and 2, respectively. The number of interface points is 10000. The backbone models for XPINN are two 10-layer neural networks with 20 hidden units activated by sine as in our theory. Two Adam \cite{kingma2014adam} optimizers with 1e-3 learning rates are used for optimizations. No regularization is used.

For fair comparison, we keep the same training procedure, i.e., training epochs, learning rate, model structure, and weight decay, etc. We train each model for 5000 epochs, and the results reported in the table are those at the 5000-th epochs. 
For both models, we use unity weight for the residual loss and zero weight for the residual continuity loss, and also use unity weight for the boundary loss and the boundary interface loss.
For reproducibility, we run each model for 5 times using fixed random seed 0, 1, 2, 3, 4.

Lastly, we need to pay attention to the choice of $\delta$ in the posterior bound. 
Because the bound in Theorem \ref{thm:post_generalization} holds with probability $1-\delta$, in the computations we take $\delta=0.1$ so that the bound holds with probability 0.9. At the same time, for XPINN, we need to take $\delta=0.05$ because there are two sub-nets for a union bound.

\subsubsection{Results}
{\color{red}
\begin{table}[]
\centering
\caption{Computational results for KdV equation.}
\label{tab:kdv}
\begin{tabular}{|c|c|c|c|c|}
\hline
& Train Loss & Relative $L_2$ error & Complexity & Bound\\ \hline
PINN & 3.597e-3$\pm$7.194e-4 & {6.899e-1$\pm$8.015e-3} & 100.00\% & 100.00\% \\ \hline
XPINN-R & \multirow{2}{*}{5.619e-3$\pm$5.056e-3} & \multirow{2}{*}{{6.955e-1$\pm$9.905e-3}} & 101.31\% & \multirow{2}{*}{121.08\%} \\ \cline{1-1} \cline{4-4}
XPINN-L & & & 28.50\% & \\ \hline
\end{tabular}
\end{table}
}

We present the experimental results including train loss, and test relative L2 error, as well as the calculated theoretical generalization bound in XPINN and PINN models in Table \ref{tab:kdv}. Moreover, we also provide the product of norms of the neural network parameter matrices, which is directly linked to the Rademacher complexity of neural networks and thus their complexities (see Lemma \ref{lemma:rad_nn}). We compute this quantity to provide an intuitive observation on the complexities of the optimized neural networks.

In addition, the products of norms of the weight matrices are presented in the columns ``Complexity" where that of PINN is denoted 100\% for clear comparison. 
The complexity of PINN is similar to that of XPINN-R, while XPINN-R is more complex than XPINN-L. XPINN-R corresponds to the domain $x>-0.74$, where the solution is more complicated and XPINN-L corresponds to the simpler domain, validating the implicit regularization of gradient descent, which learns simple (complex) function with simple (complex) neural networks.
Although XPINN-R is fitting a less complex function, the available data in its subdomain 1 is less than those of PINN, so overfitting causes XPINN-R to be as complex as the PINN fitting the entire function.

Furthermore, the ``Bound" columns in Table \ref{tab:kdv} are the theoretical generalization bounds for PINN and XPINN, where that of PINN is denoted as 100\% for clarity. The theoretical generalization bound of XPINN is 121.08\%, which is slightly larger than PINN and consistent with their testing performances, i.e., the error of PINN (6.899e-1) is slightly better than that of XPINN (6.955e-1), justifying the effectiveness of our generalization bound. 
In Figure \ref{fig:kdv_error}, we visualize the errors of PINN and XPINN. PINN and XPINN have similar error distributions, which justifies their similar performances and bounds.
In conclusion, for the KdV equation, the positive effect of simplicity of target functions in every subdomain brought by the domain decomposition is similar to the negative overfitting effect caused by less available data in each subdomain, which leads to similar performance of XPINN than PINN overall.

\subsection{Heat Equation}
\subsubsection{Setup}

In this subsection, we consider the heat equation, which is a second order linear PDE. The one-dimensional heat equation under consideration is
$u_t = u_{xx}, x\in[-1,1], t\in[0,1]$,
where its boundary conditions on $(x, t) \in \{-1, 1\} \times [0,1]$, and its initial conditions on $(x, t) \in [0, 1] \times \{0\}$, are given by the ground truth solution
\begin{equation}
    u(x, t) = e^{-\pi^2 t}\cos(\pi x) + 0.6 e^{-4\pi^2 t}\cos(2\pi x) + 0.3e^{4t-4}\cosh(2x) + 0.1e^{t-1}\sinh x.
\end{equation}

\begin{figure}[htbp]
\centering
\includegraphics[scale=0.4]{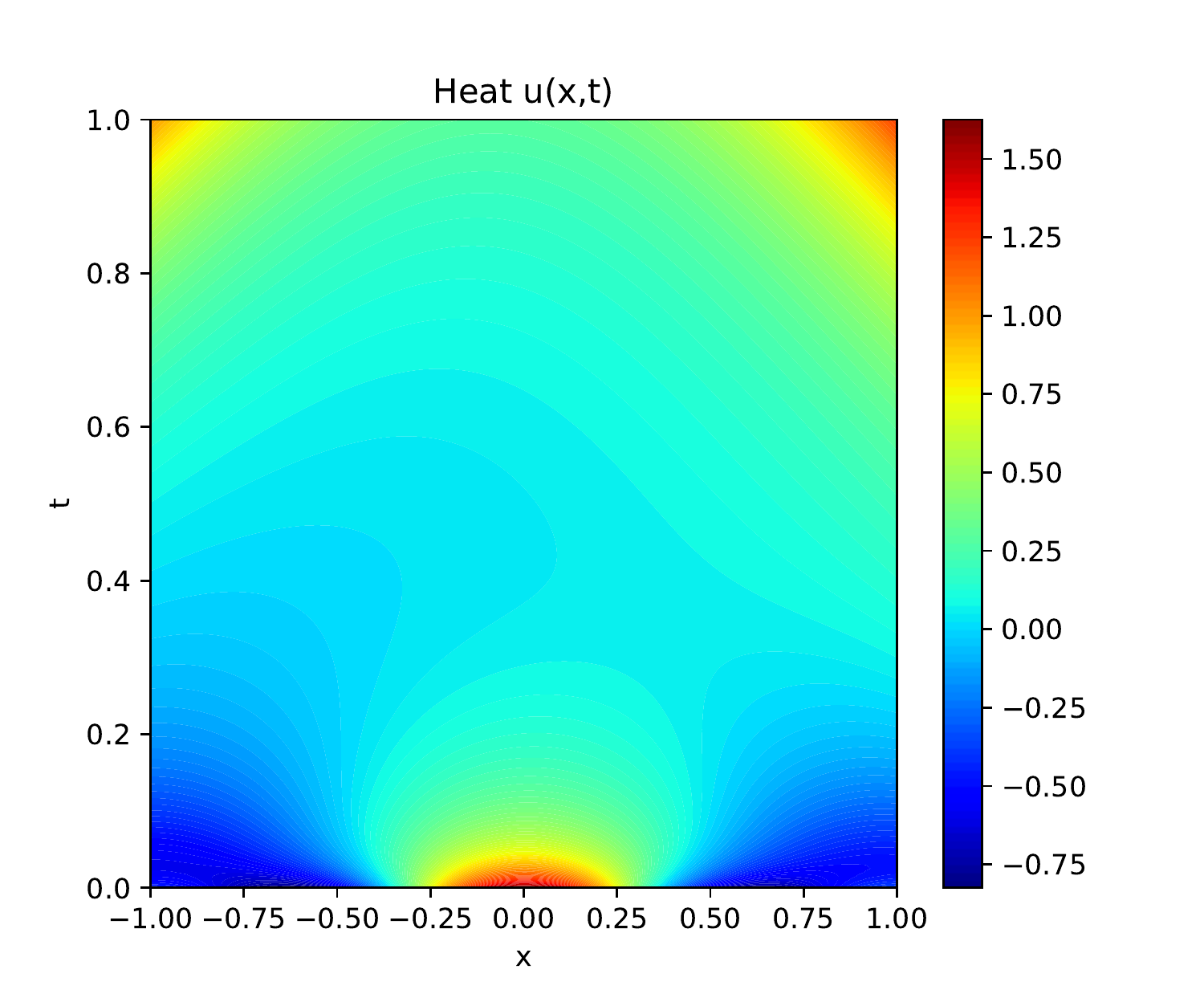}
\includegraphics[scale=0.4]{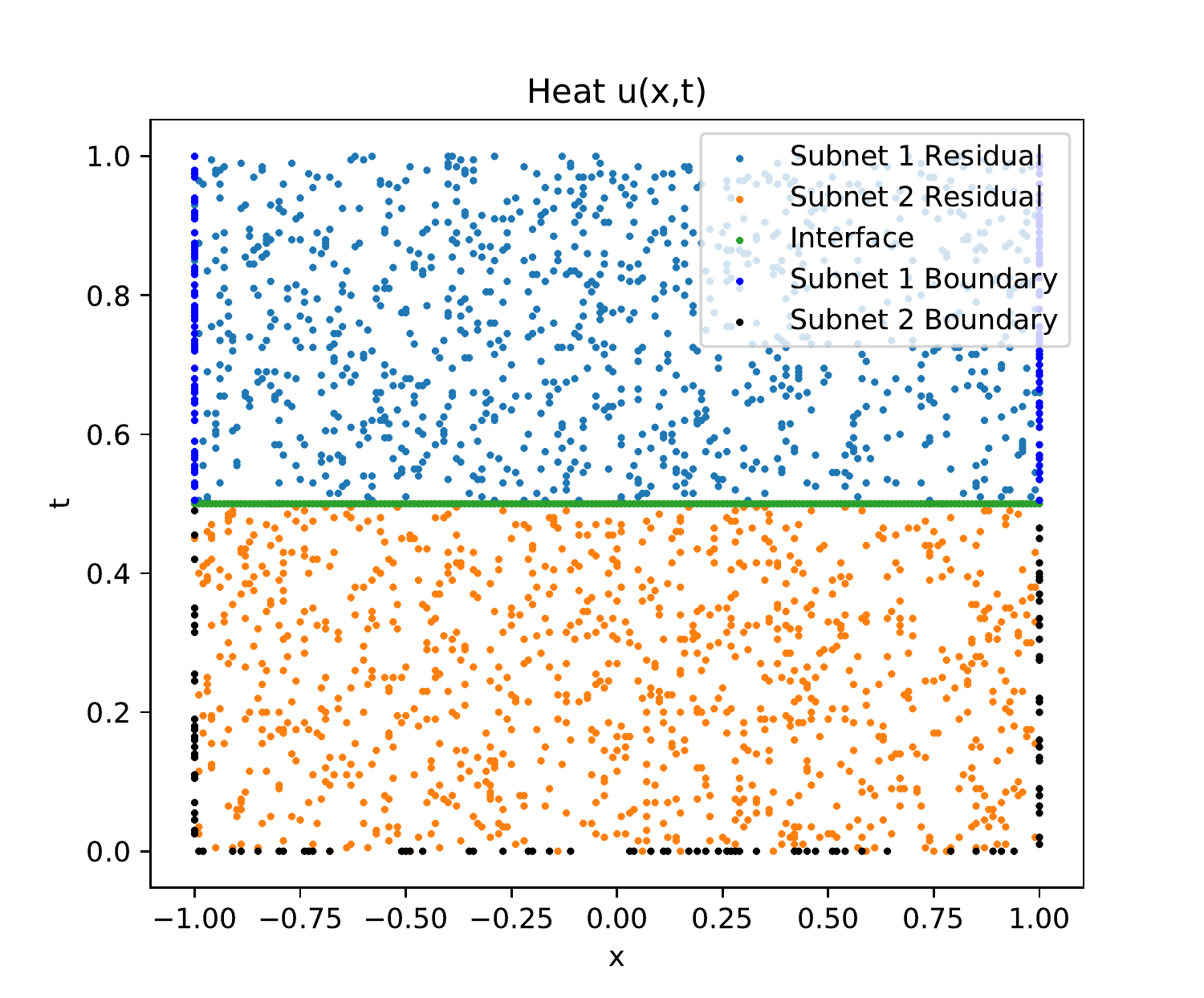}
\caption{Data visualization for the heat experiment. Left: exact solution of the heat equation. Right: Training points.}
\label{fig:heat1}
\end{figure}

\begin{figure}[htbp]
\centering
\includegraphics[scale=0.4]{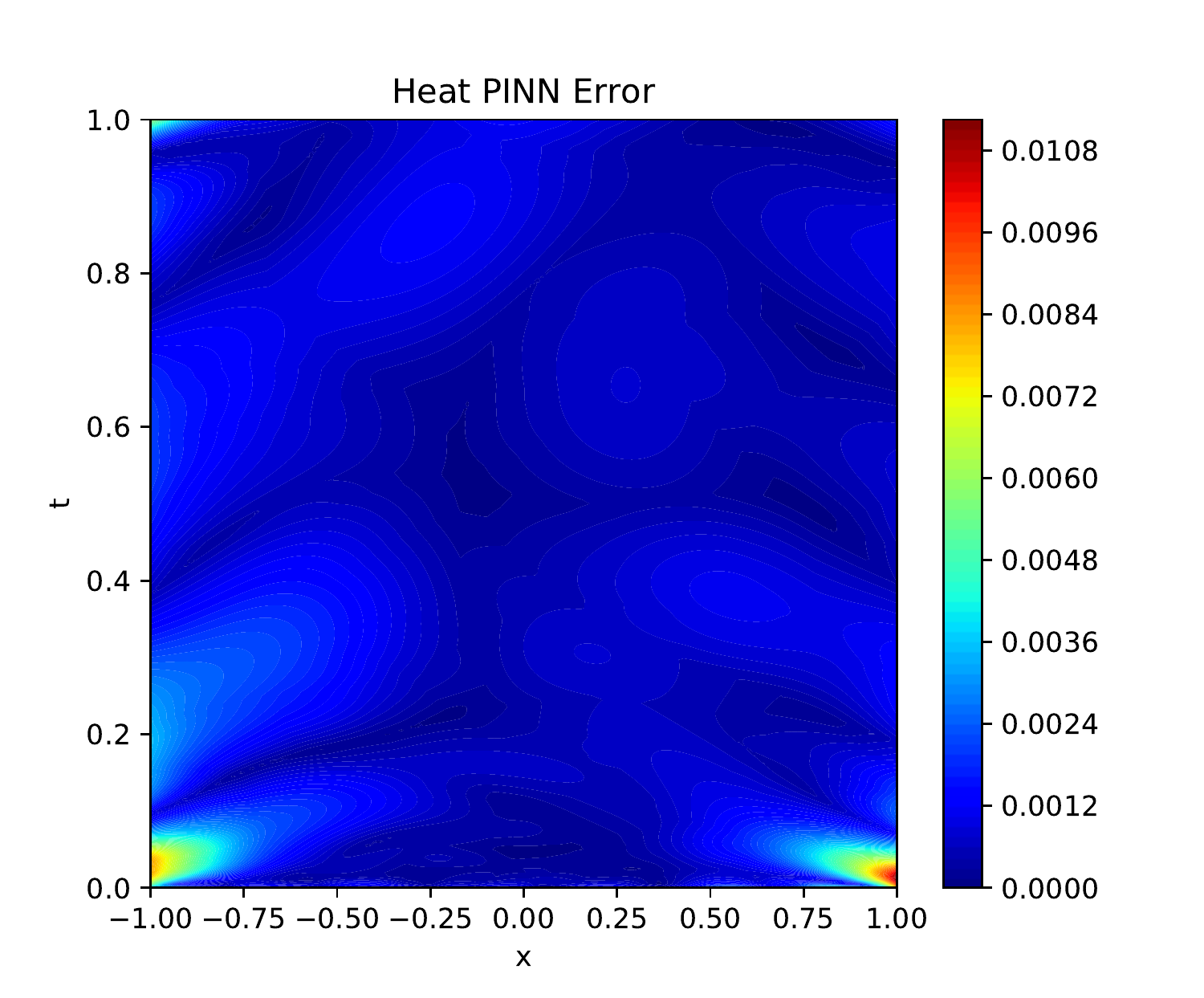}
\includegraphics[scale=0.4]{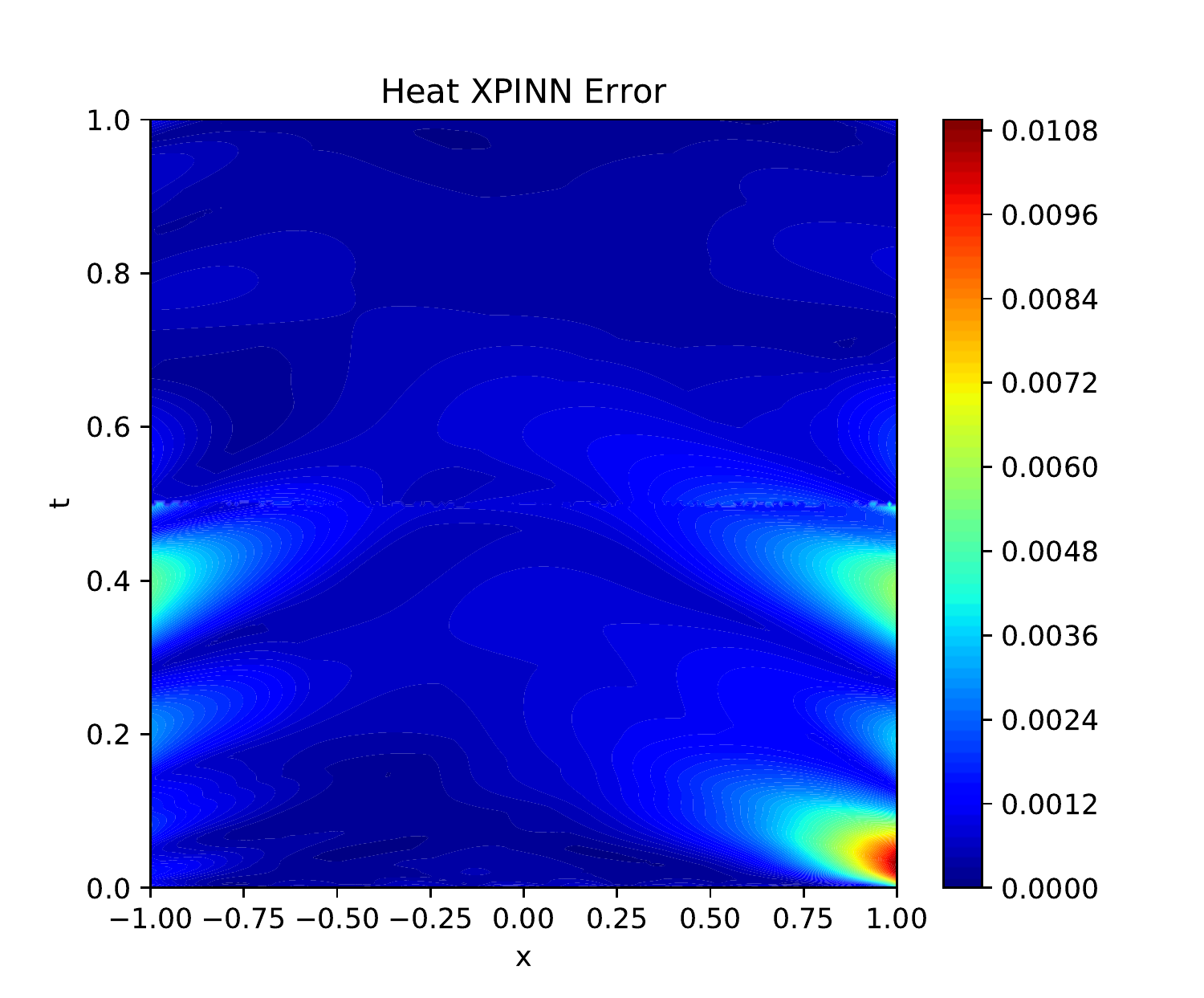}
\caption{Error visualization for the heat experiment.}
\label{fig:heat_error}
\end{figure}

\begin{table}[]
\centering
\caption{Computational results for the heat equation.}
\label{tab:heat}
\begin{tabular}{|c|c|c|c|c|}
\hline
 & Train Loss & {Relative $L_2$ error} & Complexity & Bound \\ \hline
PINN & 8.589e-5$\pm$2.218e-5 & {1.778e-3$\pm$2.195e-4} & 100.00\% & 100.00\% \\ \hline
XPINN-T & \multirow{2}{*}{2.585e-4$\pm$1.726e-4} & \multirow{2}{*}{{4.490e-3$\pm$1.517e-3}} & 156.24\% & \multirow{2}{*}{243.22\%} \\ \cline{1-1} \cline{4-4}
XPINN-B &  &  & 75.75\% &  \\ \hline
\end{tabular}
\end{table}

The solution is visualized in Figure \ref{fig:heat1}. The training dataset for PINN contains 2000 residual points and 200 boundary points, whereas the testing dataset for PINN contains 160801 points within the domain. The {backbone model for PINN} is a 9-layer neural network with 20 hidden units activated by tanh. LBFGS with 1e-1 learning rate is used. No regularization is used. 

For domain decomposition of XPINN, from Figure \ref{fig:heat1} left, we observe that the solution of heat equation is complex near $t=0$ and $t=1$, due to the two nearby heat sources. The two heat sources are also dissimilar: at $t=0$ the source is generated by trigonometric functions, while at $t=1$ it is generated by hyperbolic functions. To design a good XPINN, we should partition the complexities of the two heterogeneous heat sources into different subdomains. Thus, we partition the whole domain into a bottom domain $t\leq0.5$ containing trigonometric heat source, corresponding to XPINN-B, and a top one $t>0.5$ containing the hyperbolic source, corresponding to XPINN-T.

For fair comparison, we keep the same training procedure, i.e., training epochs, learning rate, model structure, and weight decay, etc. We train each model for 20000 epochs, which can train orders of magnitude to $1e-4$ to $1e-5$, and the results reported in the table are those at the 20000-th epochs. 
For both models, we use unity weight for the residual loss and the residual continuity loss, and use 20 weight for the boundary loss and the boundary interface loss, which has been adopted in the original code of XPINN \cite{jagtap2020extended}. 
For reproducibility, we run each model for 5 times using fixed random seed 0, 1, 2, 3, 4.

\subsubsection{Results}
Table \ref{tab:heat} shows the experimental results for the heat equation.
In the table, Train Loss denotes the training boundary plus residual losses, Relative L2 denote the relative L2 error of the model, Complexity denotes the products of norms of the weight matrices to quantify the network complexity, and Bound denotes the theoretical bound for the (relative) L2 error.

PINN (1.778e-3) outperforms XPINN (4.490e-3) in relative L2 error, and its bound (100.00\%) is also smaller than that of XPINN (243.22\%), i.e., our bound is consistent with numerical results. 
For complexity, the order is XPINN-T (156.24\%) $>$ PINN (100\%) $>$ XPINN-B (75.75\%). XPINN-T is the most complex because the top heat source is more complicated than the bottom one, and XPINN-T only has half of the training data. In this case, the negative effect due to overfitting caused by less data available is more obvious than the positive influence of less complex target function, so PINN performs better than XPINN.
In Figure \ref{fig:heat_error}, we visualize the errors of PINN and XPINN. The errors of both models are distributed mainly in the bottom part of the domain, where XPINN-B performs much worse than PINN in the corners of the bottom sub-domain, which may be due to limit data in this sub-domain, finally deteriorating the performance in this part.

\subsection{Advection Equation}
\subsubsection{Setup}
\begin{figure}[htbp]
\centering
\includegraphics[scale=0.4]{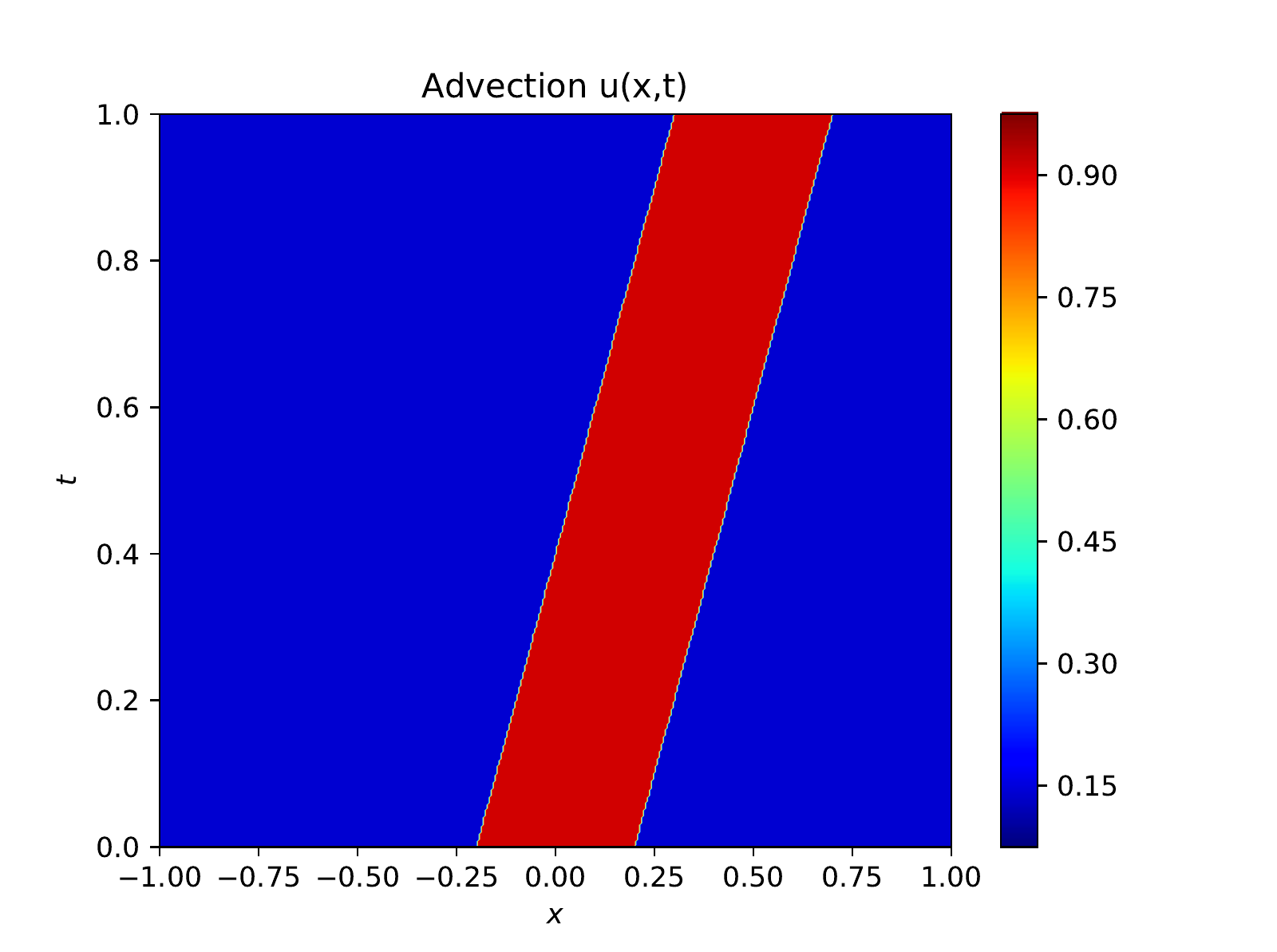}
\includegraphics[scale=0.4]{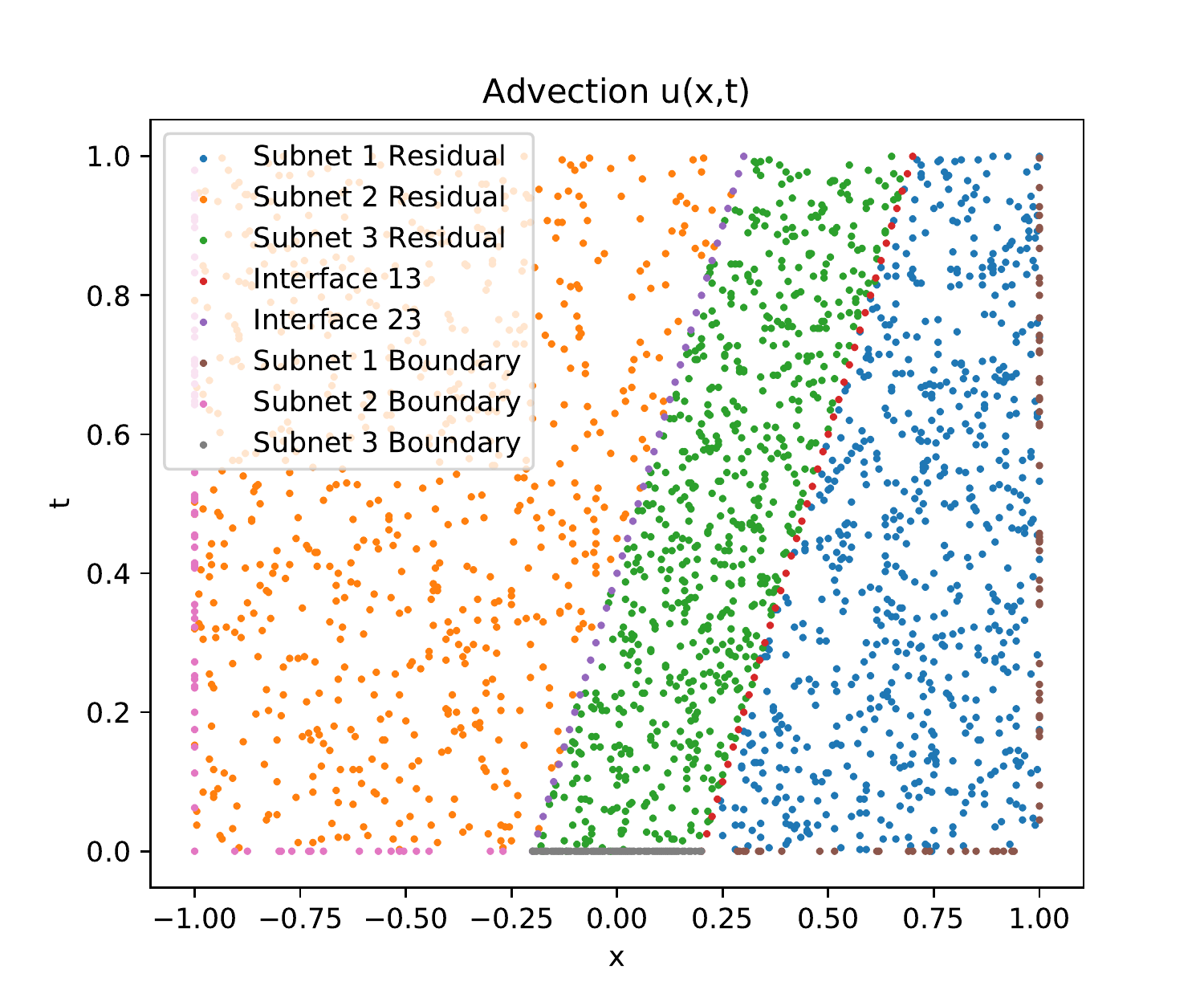}
\caption{Data visualization for the advection experiment. Left: exact solution of the advection equation. Right: Training points.}
\label{fig:advection}
\end{figure}

\begin{figure}[htbp]
\centering
\includegraphics[scale=0.65]{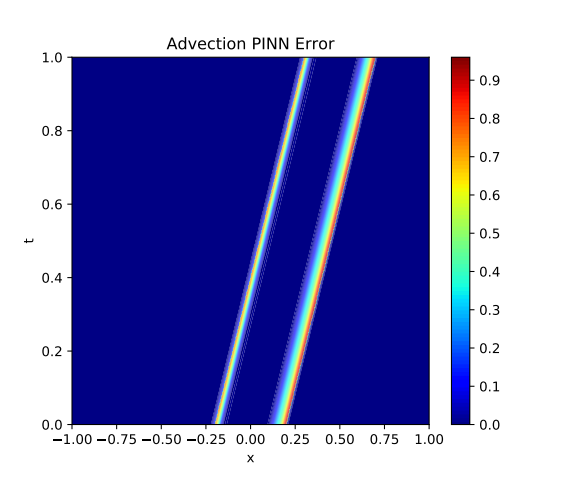}
\includegraphics[scale=0.65]{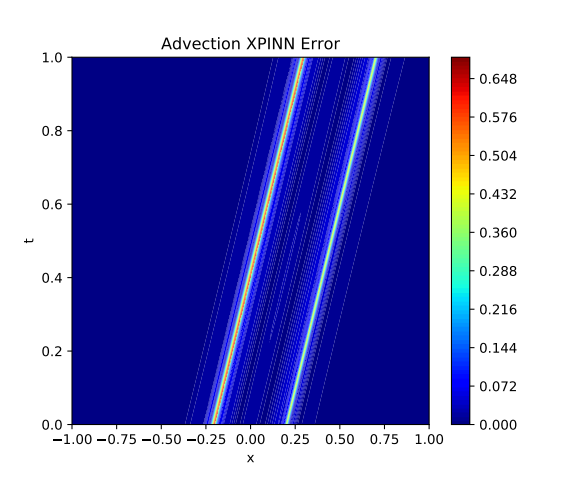}
\caption{Error visualization for the advection equation.}
\label{fig:advection_error}
\end{figure}

In this subsection, we consider the advection equation to show the difference between XPINN and PINN, which is given by $u_t + 0.5 u_x = 0, x\in[-1,1], t\in[0,1]$,
with the initial condition
$u(x,0)=1_{-0.2\leq x \leq 0.2}$.
The solution is presented in Figure \ref{fig:advection} left. The training dataset for PINN contains 2000 residual points and 200 boundary points. The {backbone model for PINN} is a 6-layer neural network with 20 hidden units activated by tanh. {Adam \cite{kingma2014adam} optimizer with 1e-3 learning rate} is used for optimization. No regularization is used. 

For XPINN, from Figure \ref{fig:advection} left, we observe that the solution of the advection equation can be divided into the following two parts, $\left\{-0.2 < x - 0.5t\right\}\cap\left\{x - 0.5t<0.2\right\}$ where $u=1$ and $\left\{x - 0.5t \geq 0.2\right\}$ and $\left\{x - 0.5 t \leq -0.2\right\}$ where $u=0$. Hence, in XPINN-LMR (left, middle, and right), we partition the domain into these three continuous parts mentioned above. XPINN-LMR seems to be a good partition since in each subdomain the target function is extremely simple constant function, while the whole function is complex and discontinuous. The visualization of domain decomposition is provided in Figure \ref{fig:advection} right.

For fair comparison, we keep the same training procedure. We train each model for 5000 epochs, and the results reported in the table are those at the 5000-th epochs. 
For both models, we use unity weight for the residual loss and zero weight for the residual continuity loss, and also use unity weight for the boundary loss and the boundary interface loss.
For reproducibility, we run each model for 5 times using fixed random seed 0, 1, 2, 3, 4.
\subsubsection{Results}

\begin{table}[]
\centering
\caption{Computational results for advection equation.}
\label{tab:advection}
\begin{tabular}{|c|c|c|c|c|}
\hline
Method & Train Loss & {Relative $L_2$ error} & Complexity & Bound \\ \hline
PINN & 1.387e-5$\pm$1.298e-5 & {2.052e-1$\pm$1.001e-1} & 100\% & 100\% \\ \hline
XPINN-L & \multirow{3}{*}{4.239e-3$\pm$2.385e-5} & \multirow{3}{*}{{1.617e-1$\pm$3.582e-2}} & 40.53\% & \multirow{3}{*}{66.59\%} \\ \cline{1-1} \cline{4-4}
XPINN-M &  &  & 53.16\% &  \\ \cline{1-1} \cline{4-4}
XPINN-R &  &  & 79.95\% &  \\ \hline
\end{tabular}
\end{table}

Table \ref{tab:advection} presents all computational results for the advection equation. XPINN (2.052e-1) performs better than PINN (1.617e-1) in relative L2 error, which is consistent with their theoretical bounds, i.e., the bound of XPINN (66.59\%) is also smaller than that of PINN (100\%).
The reason is revealed by the norms. All sub-nets in XPINN (40.53\%, 53.16\%,79.95\%) are less complicated than PINN (100\%), because in each subdomain of XPINN, the target function is constant, whose positive influence is more obvious than the overfitting due to less data. In sum, in the experiment of advection equation, since the target functions in every sub-domains are extremely simple constant functions, the positive influence of less complex target functions is much stronger than the negative effect of overfitting due to less available data in each sub-domains, thus, XPINN performs better than PINN.
Lastly, in Figure \ref{fig:advection_error} we visualize the errors of PINN and XPINN. The error mainly concentrates in the areas near the discontinuity part, where XPINN has smaller error than PINN.

\subsection{Poisson Equation}
\begin{figure}[htbp]
\centering
\includegraphics[scale=0.6]{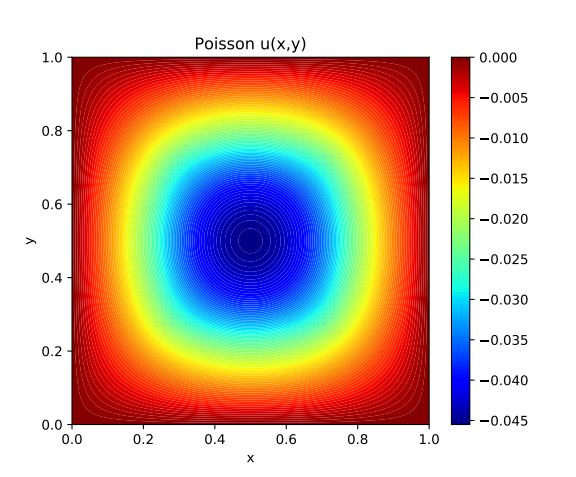}
\includegraphics[scale=0.4]{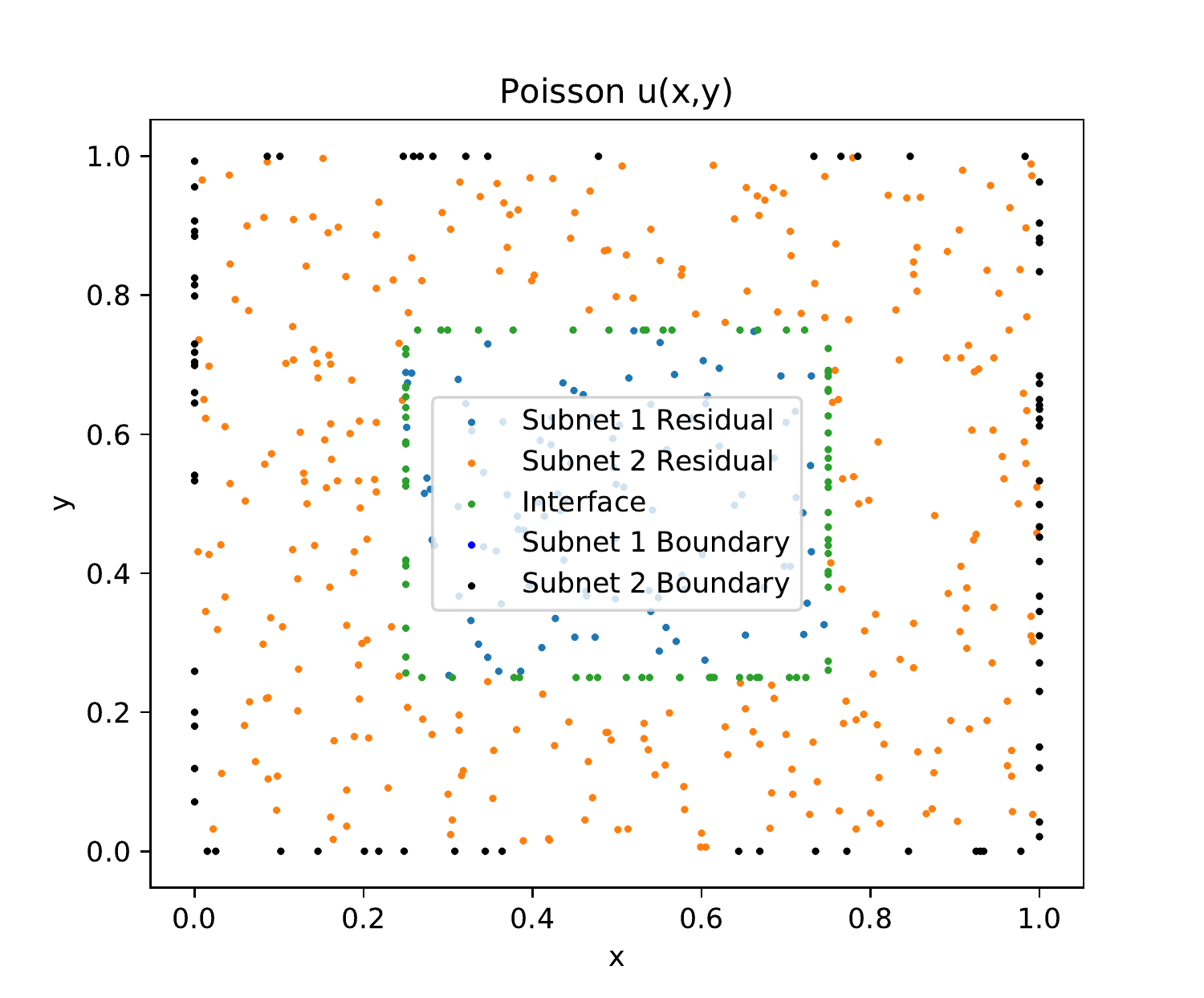}
\caption{Data visualization for the Poisson experiment. Left: exact solution of the Poisson equation. Right: Training points.}
\label{fig:poisson}
\end{figure}
\subsubsection{Setup}
In this subsection, we consider a Poisson equation with residual discontinuity, which is also a second order linear PDE. The equation under consideration is $u_{xx} + u_{yy} = f$, where $(x, y) \in [0, 1] \times [0, 1]$, and $f$ is given by $f(x, y) = 1$ for $(x,y)\in[0.25, 0.75] \times [0.25, 0.75]$, and $f(x, y) = 0$ for the rest of the domain. The boundary condition is zero.

The solution is visualized in Figure \ref{fig:poisson}. The training dataset for PINN contains 400 residual points and 80 boundary points, whereas the testing dataset for PINN contains 1,002,001 points within the domain. The backbone model for PINN is a 9-layer neural network with 20 hidden units activated by tanh. LBFGS with 1e-1 learning rate is used. No regularization is used. 

The weighting strategies for PINN and XPINNs are summarized in Table \ref{tab:poisson_w}. 
For PINN, we use unity weight for residual and 20 weight for boundary. 
For XPINN1, i.e., the simplest XPINN model, we use 20 weight for boundary, boundary interface, and residual interface, while using unity weight for residual. Due to the existence of residual discontinuity, we need to give stronger constraint to maintain the residual continuity. So, we use 20 weight for residual interface rather than unity weight used in the official XPINN code \cite{jagtap2020extended}.

In addition, to test the effective of the newly proposed regularization in \cite{de2022error}, we design the second XPINN2, which builds upon XPINN1, i.e., it adopts the same weights as XPINN1 and use 30 weight for the additional regularization on the first order derivatives near the interface.

For the last XPINN3, we shall see in the results that XPINN2 does not perform well on the boundary. So, we try to remedy XPINN2 by XPINN3 through increasing the weight for the boundary loss, i.e., from 20 to 80 weight on the boundary.

For domain decomposition of XPINN, sub-domain 1 contains the area $(x,y)\in[0.25, 0.75]\times [0.25, 0.75]$, while sub-domain 2 contains the rest of the domain. Specifically, we partition the domain according to the discontinuity, to force each sub-net in the XPINN to focus on one continuous part, rather than fitting the entire function containing residual discontinuity by one network. The sub-net for the sub-domain 1 is called XPINN-M since it is in the middle of the domain, while the sub-net for sub-domain 2 is called XPINN-A since it is around the entire domain.

For fair comparison, we keep the same training procedure, i.e., training epochs, learning rate, model structure, and weight decay, etc. We train each model for 20000 epochs, and the results reported in the table are those at the 20000-th epochs. For reproducibility, we run each model for 5 times using fixed random seed 0, 1, 2, 3, 4.

\begin{table}[]
\centering
\caption{Weighting strategies for the Poisson equation.}
\label{tab:poisson_w}
\begin{tabular}{|c|c|c|c|c|c|}
\hline
 & Residual & Interface R & Additional I & Boundary & Interface B \\ \hline
PINN & 1 & NA & NA & 20 & NA \\\hline
XPINN1 & 1 & 20 & 0 & 20 & 20 \\ \hline
XPINN2 & 1 & 20 & 30 & 20 & 20 \\ \hline
XPINN3 & 1 & 20 & 30 & 80 & 20 \\ \hline
\end{tabular}
\end{table}

\begin{table}[]
\centering
\caption{Computational results for the Poisson equation.}
\label{tab:poisson}
\begin{tabular}{|c|c|c|c|c|}
\hline
 & Train Loss & {Relative $L_2$ error} & Complexity & Bound \\ \hline
PINN & 2.688e-4$\pm$3.411e-4 & {5.553e-2$\pm$2.936e-2} & 100.00\% & 100.00\% \\ \hline
XPINN1-A & \multirow{2}{*}{1.181e-2$\pm$4.319e-3} & \multirow{2}{*}{{4.022e-1$\pm$1.648e-1}} & 142.71\% & \multirow{2}{*}{122.56\%} \\ \cline{1-1} \cline{4-4}
XPINN1-M &  &  & 297.91\% &  \\ \hline
XPINN2-A & \multirow{2}{*}{1.016e-2$\pm$3.713e-3} & \multirow{2}{*}{{1.387e-1$\pm$7.030e-3}} & 183.44\% & \multirow{2}{*}{108.57\%} \\ \cline{1-1} \cline{4-4}
XPINN2-M &  &  & 292.93\% &  \\ \hline
XPINN3-A & \multirow{2}{*}{1.621e-2$\pm$5.222e-3} & \multirow{2}{*}{{1.108e-1$\pm$1.561e-2}} & 195.57\% & \multirow{2}{*}{106.28\%} \\ \cline{1-1} \cline{4-4}
XPINN3-M &  &  & 300.47\% &  \\ \hline
\end{tabular}
\end{table}

\begin{figure}[htbp]
\centering
\includegraphics[scale=0.65]{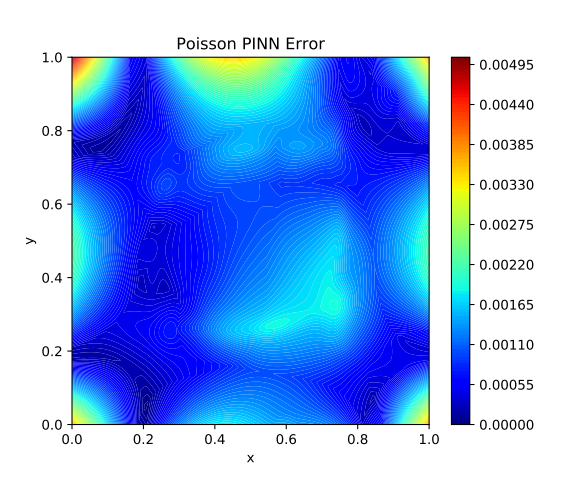}
\includegraphics[scale=0.65]{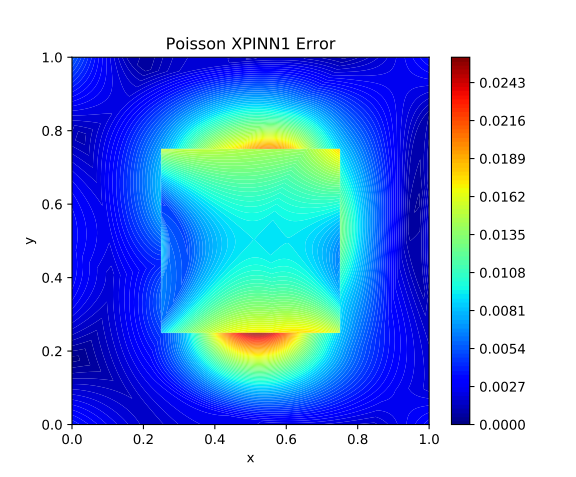}
\includegraphics[scale=0.65]{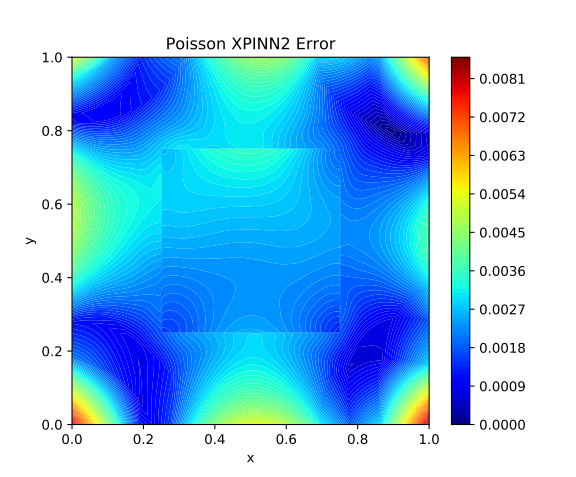}
\includegraphics[scale=0.65]{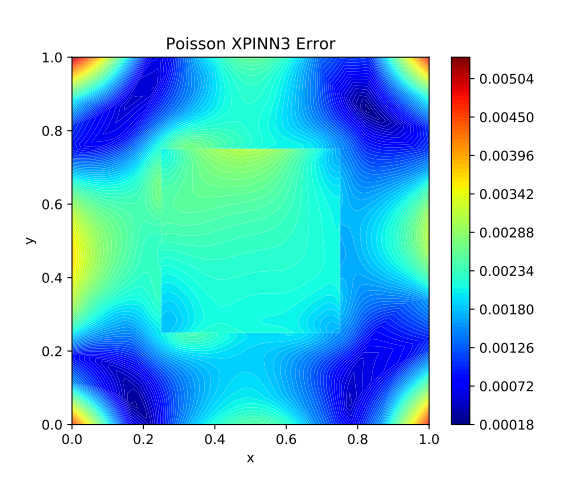}
\caption{Error visualization in the Poisson equation.}
\label{fig:poisson_errors}
\end{figure}
\subsubsection{Results}
Table \ref{tab:poisson} shows the experimental results for the Poisson equation.
PINN (5.553e-2) outperforms XPINN3 (1.108e-1), and XPINN3 outperforms XPINN2 (1.387e-1), and finally XPINN2 outperforms XPINN1 (4.022e-1), and their bounds also point to the same result, i.e., PINN (100\%) $<$ XPINN3 (106.28\%) $<$ XPINN2 (108.57\%) $<$ XPINN1 (122.56\%). All sub-nets in the XPINNs (142.71\%, 297.91\%, 183.44\%, 292.93\%, 195.57\%, 300.47\%) are more complicated then the PINN (100\%), which is the reason accounting for the failure of XPINNs. In this case, the lack of available data in each sub-domain of XPINNs impacts the generalization negatively, whose effect is much more obvious than the reduction of target function complexity in each sub-domain. 

To understand the failure of XPINNs, we visualize the error plots in Figure \ref{fig:poisson_errors}.
For XPINN1, due to the lack of the additional residual interface regularization on the interface \cite{de2022error}, the error is extremely large near the interface, which causes the largest error of XPINN1. 
For XPINN2, although the additional residual regularization does mitigate the error near the interface, the boundary error becomes much larger, which is because the introduction of an additional regularization decreases the importance of boundary loss during the optimization.
To solve the problem, in XPINN3 we add more weight to the boundary loss. While the error on the boundary decreases, the error near the interface increases, which is due to the tradeoff between different loss components, namely the tradeoff between the boundary loss and the interface loss.
To conclude, XPINNs perform worse than PINN since they perform bad either on the boundary or on the interface, which justifies the necessity of XPINNs with adaptive weight adjustment.

\subsection{Compressible Euler Equations}
\subsubsection{Setup}
Next, we consider the nonlinear inviscid compressible Euler equations, which govern the physics of high-speed compressible fluid flows. The inviscid compressible Euler equations admit discontinuous solutions called shock or contact waves, which are difficult to capture with good accuracy. 
The two-dimensional steady-state Euler equations are given as
$F_x(U) + F_y(U) = 0, ~~ (x, y) \in[0,1]^2$,
where fluxes in $x$ and $y$ directions are defined as $F_x(U) = (\rho u, p+ \rho u^2, \rho uv,  p u+\rho u E)$ and $F_y(U) = (\rho v, \rho uv, p + \rho v^2, p v+ \rho v E)$,
\begin{figure}[htbp]
\centering
\includegraphics[width=6.3cm, height=4.5cm]{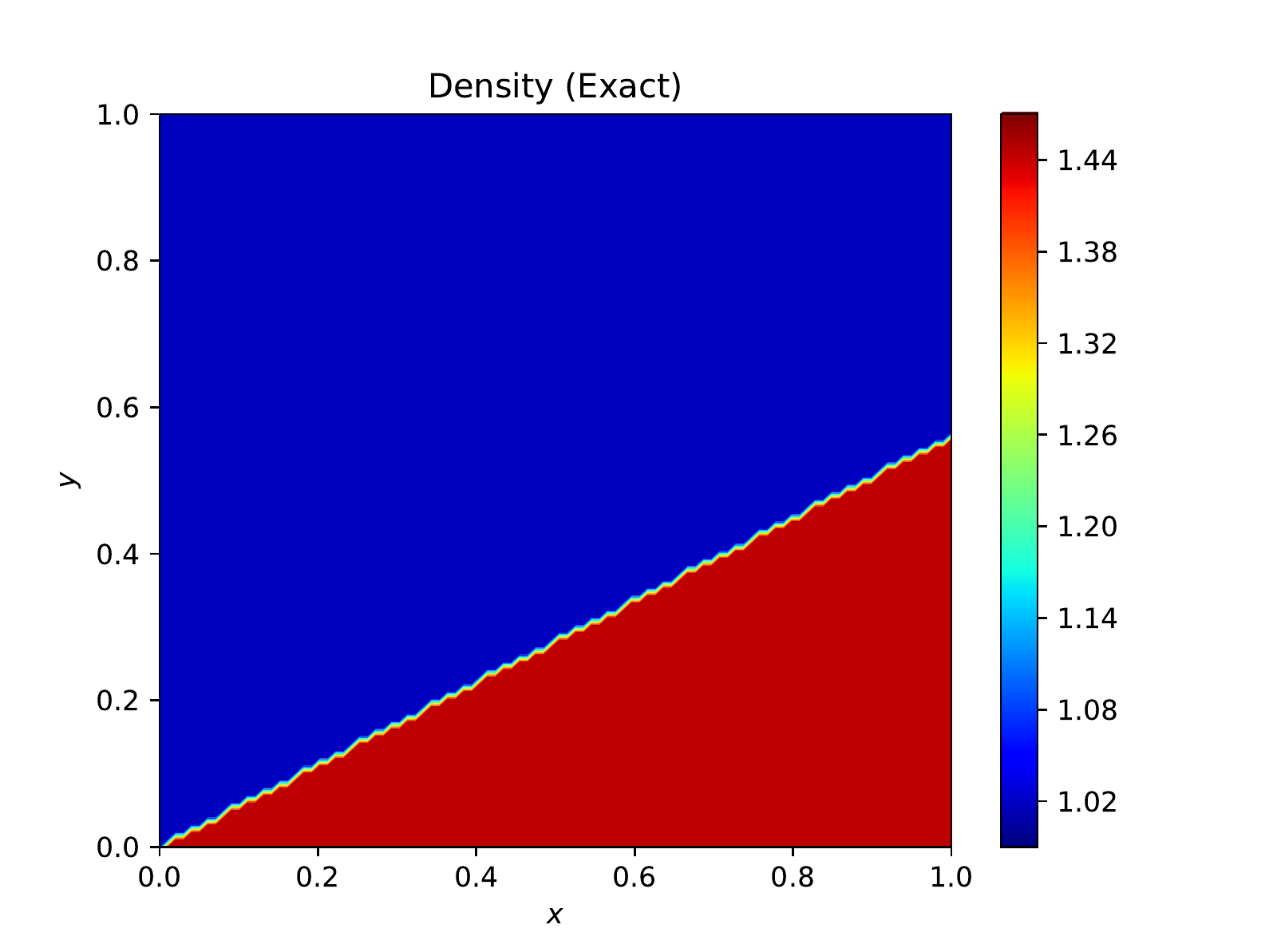}
\includegraphics[width=6.3cm, height=4.5cm]{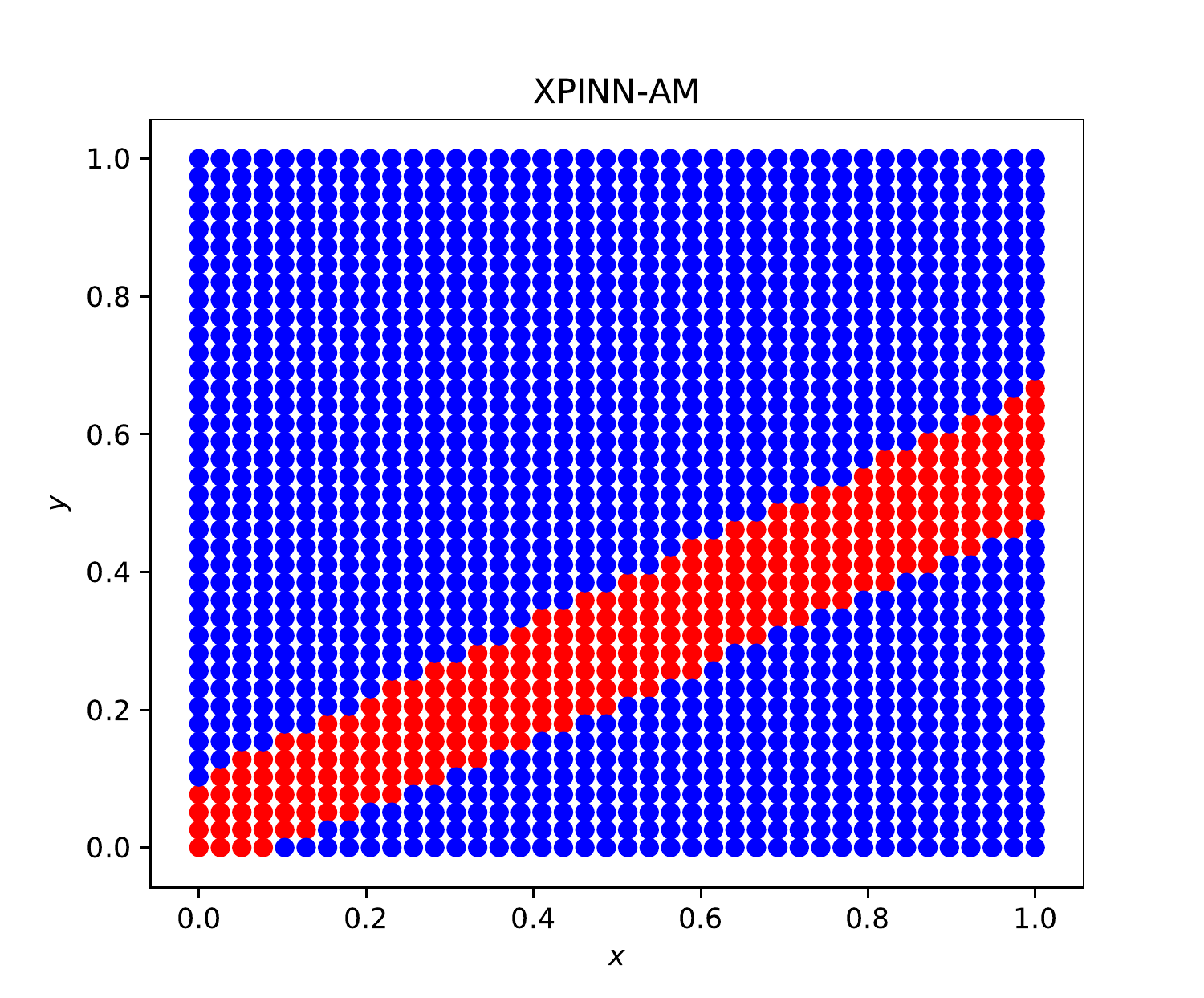}
\caption{Two-dimensional compressible Euler equations: Exact solution (left) and domain decomposition for XPINN-AM (right).}
\label{fig:Euler2D}
\end{figure}
where $\rho, u, v$ and $p$ are density, velocity components in $x$ and $y$ directions, and pressure, respectively. The total energy $E$ is defined as
$E = \frac{p}{\rho(\gamma-1)} + \frac{1}{2} ||\mathbf{u}||_2^2$,
where $\mathbf{u} = (u,v)$. In this case, we are solving the oblique shock wave problem on a square domain $[0,1]^2$.  
The bottom boundary is the wall where slip boundary conditions are applied, whereas left and top boundary are the inflow boundary where Dirichlet boundary conditions are applied. The right boundary has extrapolation boundary conditions. A Mach 2 flow is at an angle of -10 degrees with respect to the bottom wall, which generates an oblique shock at an angle of 29.3 degrees with the bottom horizontal wall.
The exact solution is given as
$$ (\rho, u,v,p) = \begin{cases} (1.0,~ \cos10^o, -\sin10^o, ~0.17857) & \text{before shock}, \\ (1.4584,~ 0.8873,~ 0.0,~ 0.3047) & \text{after shock}. \end{cases}
$$
Among all the primitive variables, we have plotted the fluid density which accurately shows the position of an oblique shock wave. Figure \ref{fig:Euler2D} shows the exact value of density (left) and domain decomposition (right) for XPINN-AM.  
From the domain decomposition figure we observe that the solution of the Euler equations is divided into the following two parts, $\left\{y \geq 0.57x + 0.1 \right\} \cup \left\{y \leq 0.5222 x - 0.0522 \right\}$, where solution is constant (shown by blue points) and the remaining strip (shown by red points), where oblique shock wave is present.

\subsubsection{Results}

\begin{figure}[htbp]
\centering
\includegraphics[width=6.3cm, height=4.5cm]{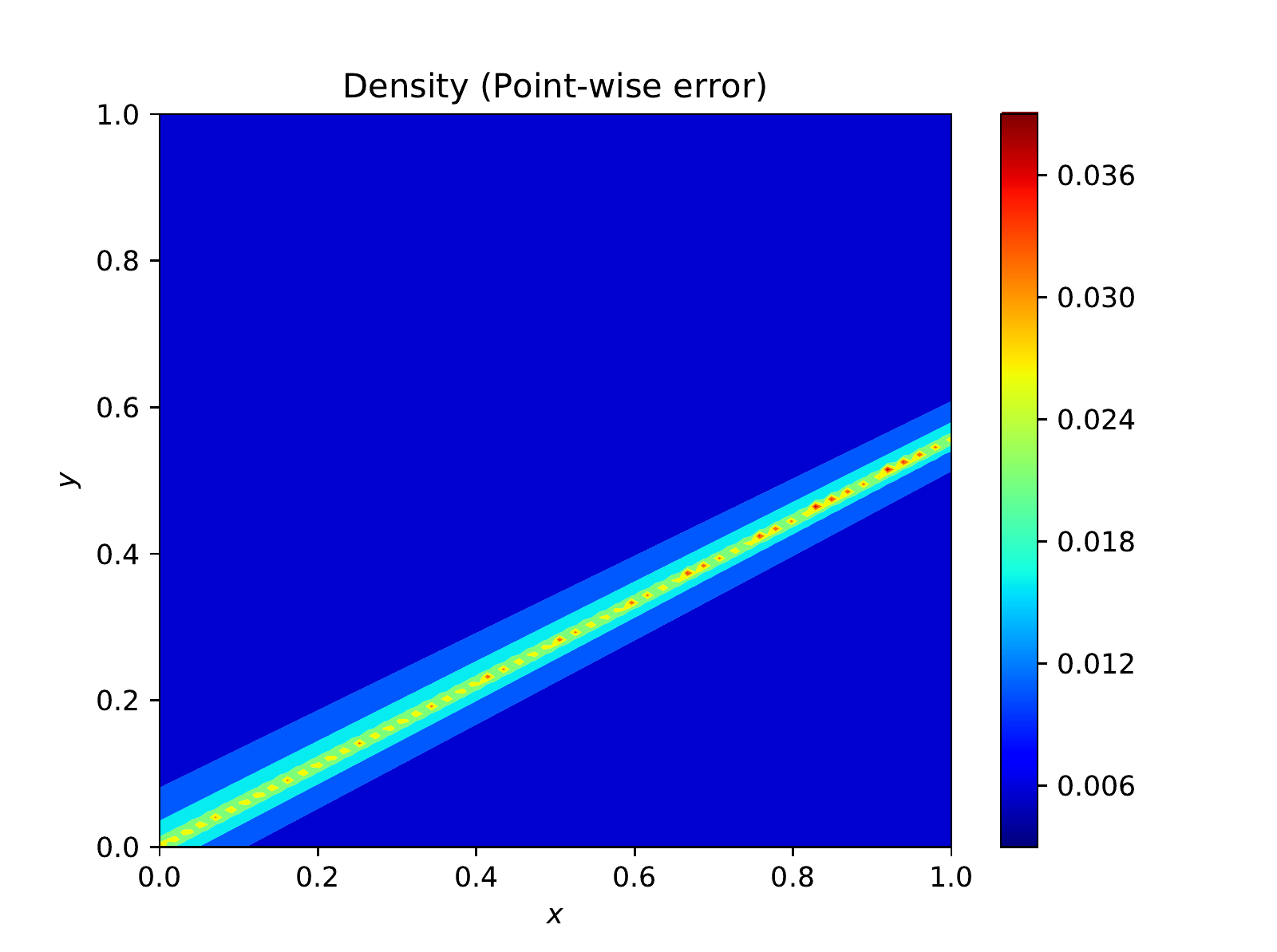}
\includegraphics[width=6.3cm, height=4.5cm]{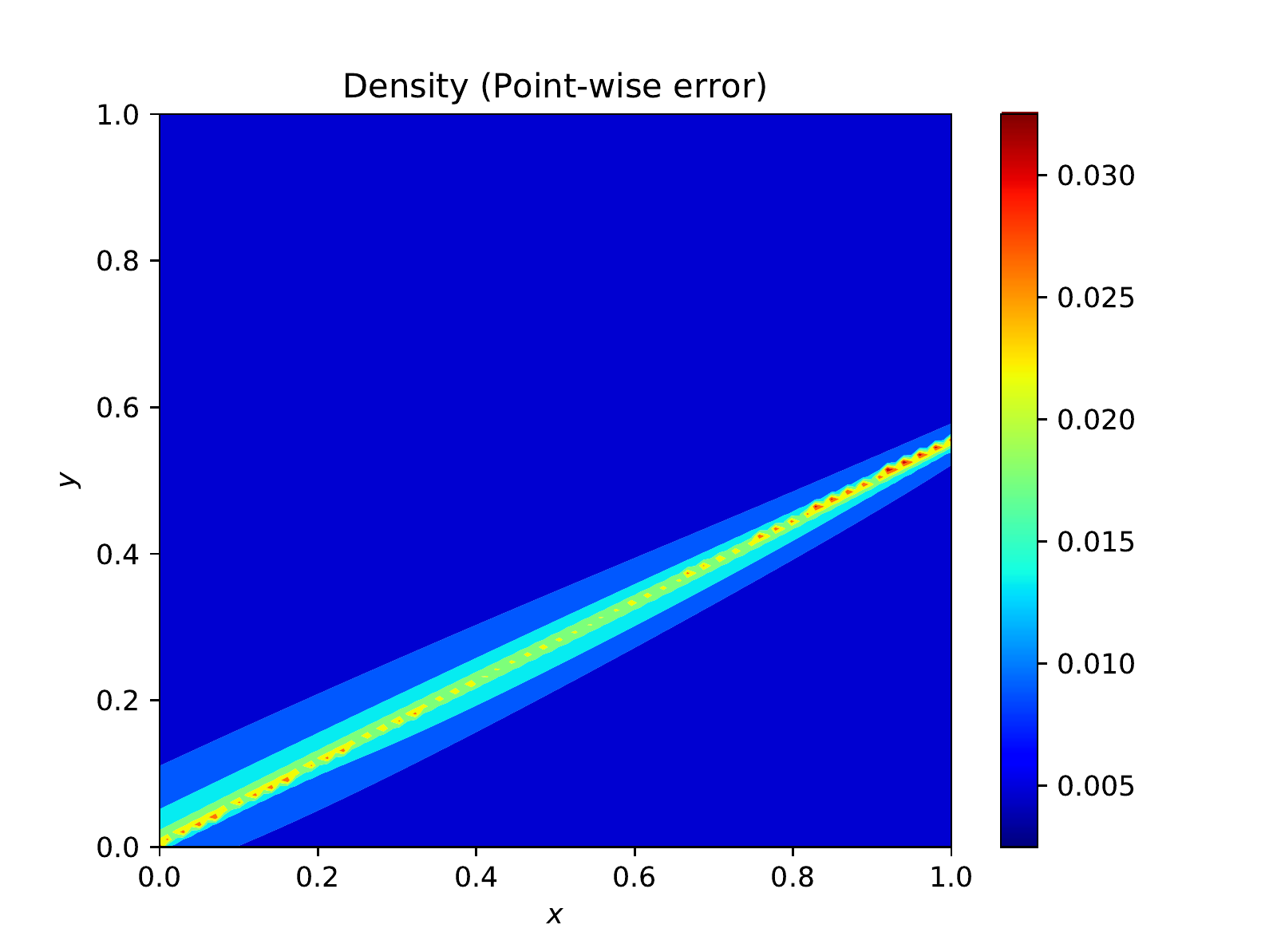}
\caption{Two-dimensional compressible Euler equations: Point-wise errors for PINN (left) and XPINN (right).}
\label{fig:Euler2D2}
\end{figure}

\begin{table}[]
\centering
\caption{Computational results for compressible Euler equations.}
\label{tab:Euler}
\begin{tabular}{|c|c|c|c|c|}
\hline
Method & Train Loss & {Relative $L_2$ error in $\rho$} & Norms & Bound \\ \hline
PINN & 1.819e-3$\pm$6.043e-4 & {3.4604e-2$\pm$7.385e-3} & 100.00\% & 100.00\% \\ \hline
XPINN-A & \multirow{2}{*}{9.210e-4$\pm$1.882e-4} & \multirow{2}{*}{{1.048e-2$\pm$5.3793e-3}} & 37.28\% & \multirow{2}{*}{81.09\%} \\ \cline{1-1} \cline{4-4}
XPINN-M &  &  & 64.37 \% &  \\ \hline
\multicolumn{1}{|c|}{XPINN-T} & \multicolumn{1}{c|}{\multirow{2}{*}{1.067e-3$\pm$4.829e-4}} & \multicolumn{1}{c|}{\multirow{2}{*}{{3.5722e-2$\pm$4.290e-3}}} & \multicolumn{1}{c|}{42.37\%} & \multicolumn{1}{c|}{\multirow{2}{*}{137.63\%}} \\ \cline{1-1} \cline{4-4}
\multicolumn{1}{|c|}{XPINN-B} & \multicolumn{1}{c|}{}  & \multicolumn{1}{c|}{}& \multicolumn{1}{c|}{131.26\%} & \multicolumn{1}{c|}{} \\ \hline
\end{tabular}
\end{table}
We used a deep net with 10000 residual points, 5 hidden-layers with 20 neurons in each layer, and 8e-4 learning rate. The activation function is hyperbolic tangent.
Table \ref{tab:Euler} gives the computational results, and Figure \ref{fig:Euler2D2} gives the point-wise error for the density of the fluid. In this case, the XPINN-AM generalizes better than PINN. Furthermore, the norms of XPINN-AM are much smaller than that of PINN (100\%). We further divide the domain into top ($y \geq 0.5$) and bottom subdomains ($y < 0.5$) for XPINN-TB. The XPINN-TB does not generalize well compared to PINNs, and the complexities of the two sub-nets are 42.37\% and 131.26\% for top and bottom subdomains, respectively. These results prove that the norms and generalization bounds are good indicators for XPINN based domain decomposition, and can be efficiently used to further decompose the subdomains.

\section{Conclusion}
In this study, we have investigated the generalization abilities of PINNs and XPINNs, as well as when and how XPINNs improve generalization. For this purpose, we have provided both prior and posterior generalization bounds to explain this from different viewpoints, where for the former we have developed the Barron space for multi-layer networks, while for the latter we have derived the complexity of norm-based Rademacher for PINNs. 

Through our discussion on theoretical results, analytical examples, and extensive experiments, we conclude that the domain decomposition method in XPINNs introduces a tradeoff on generalization. On the one hand, its advantage is that it decomposes the complex target function into several simple parts, which lead to the phenomenon that the sum of all parts is smaller than the whole. However, on the other hand, domain decomposition causes less available training data in each subdomain, leading to higher empirical Rademacher complexity and makes models prone to overfitting. When the complexity reduction brought by XPINN exceeds the increased complexity caused by less training data, XPINN outperforms PINN, as in our experiment on KdV equation, heat equation, advection equation, compressible Euler equation and the analytical example in Section \ref{sec:1}. When the overfitting caused by insufficient data is more dominant than the simplicity due to domain decomposition, PINN outperforms XPINN, as in our experiment on heat equation, wave equation and our analytical example in Section \ref{sec:2}. When the two factors reach a balance, XPINN and PINN perform similarly, as shown in our experiment on advection equation and our analytical example in Section \ref{sec:3}. 

Our results can also provide a partial explanation for  the following  observation. For long-time integration of several PDEs, it has been empirically observed that only XPINNs are applicable as PINNs  tend to be inaccurate. According to our theory, this is expected as we tend to have  a very high complexity measured by the norms for a whole solution of a long-time integration, which can be decomposed into less complex sub-solutions in XPINNs. 
Our proposed theory can also be useful for adaptive domain decomposition. Specifically, after initialization of a decomposition, we can compute the bound during optimization to know how the corresponding XPINN generalizes. If it does not generalize well, we can diagnose the reason, e.g., the negative overfitting effect caused by less training points is more obvious than the positive effect of less complex target function in each sub-domain, then we can restructure the decomposition by letting the sub-domain(s) contain more training points.
Overall, the present work provides the first theoretical understanding on when and how to employ XPINN for better generalization performances
over the vanilla PINNs.

\section*{Acknowledgment}
A. D. Jagtap and G. E. Karniadakis would like to acknowledge the funding by  OSD/AFOSR  MURI  Grant  FA9550-20-1-0358, and the US Department of Energy (DOE) PhILMs  project  (DE-SC0019453).
The authors would like to thank Michael Penwarden for his insightful comments and reviews on the paper and for his generous help with the code for experiments.

\newpage

\appendix

\section{Preliminary: Functional Analysis}
\begin{definition}
(Multi-Index) If $k$ is a positive integer, and $u \in C^k(\Omega)$, then we define the multi-index $\alpha=(\alpha_1,\cdots,\alpha_d)$ of order $|\alpha| = \alpha_1+\cdots+\alpha_d=k$, and the corresponding derivative is:
\begin{equation}
D^\alpha u = \frac{\partial^{\alpha_1}}{\partial x_1^{\alpha_1}} \cdots \frac{\partial^{\alpha_n}}{\partial x_d^{\alpha_d}}u.
\end{equation}
\end{definition}
\begin{definition} (Holder space)
The Holder space $C^{k,\gamma}(\overline{\Omega})$ consists of all functions $u \in C^k(\overline{\Omega})$ for which the norm
\begin{equation}
\Vert u \Vert_{C^{k,\gamma}(\overline{\Omega})} := \sum_{|\alpha| \leq k}\Vert D^{\alpha} u \Vert_{C(\overline{\Omega})} + \sum_{|\alpha| = k}[ D^{\alpha} u ]_{C^{0,\gamma}(\overline{\Omega})}
\end{equation}
is finite, where the norm and the semi-norm are defined as
\begin{equation}
\begin{aligned}
\Vert u \Vert_{C(\overline{\Omega})} &:= \sup_{x \in \Omega} |u(x)|.\\
[ u ]_{C^{0,\gamma}(\overline{\Omega})} &:= \sup_{x,y \in \Omega} \left\{\frac{|u(x) - u(y)|}{|x - y|^{\gamma}}\right\}.
\end{aligned}
\end{equation}
\end{definition}

\begin{theorem}
The space of functions $C^{k,\gamma}(\overline{\Omega})$ is a Banach space.
\end{theorem}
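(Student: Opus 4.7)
The plan is to verify the two defining properties of a Banach space in turn: first that $(\Vert \cdot \Vert_{C^{k,\gamma}(\overline{\Omega})})$ is a genuine norm on $C^{k,\gamma}(\overline{\Omega})$, then that the space is complete in this norm. The norm axioms are routine: absolute homogeneity and the triangle inequality follow termwise from the corresponding properties of the sup-norm and of the Hölder semi-norm $[\cdot]_{C^{0,\gamma}(\overline{\Omega})}$ (noting that $|a+b|/|x-y|^\gamma \le |a|/|x-y|^\gamma + |b|/|x-y|^\gamma$), while positive definiteness is immediate from the $\Vert D^0 u \Vert_{C(\overline{\Omega})} = \Vert u \Vert_{C(\overline{\Omega})}$ summand.

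The substance of the proof is completeness. Given a Cauchy sequence $\{u_n\} \subset C^{k,\gamma}(\overline{\Omega})$, note that for every multi-index $\alpha$ with $|\alpha| \le k$, the sequence $\{D^\alpha u_n\}$ is Cauchy in $C(\overline{\Omega})$ since $\Vert D^\alpha(u_n - u_m)\Vert_{C(\overline{\Omega})} \le \Vert u_n - u_m \Vert_{C^{k,\gamma}(\overline{\Omega})}$. Because $C(\overline{\Omega})$ is complete in the sup-norm, each $D^\alpha u_n$ converges uniformly to some $v_\alpha \in C(\overline{\Omega})$. Setting $u := v_{(0,\dots,0)}$, a standard iterated application of the ``derivatives commute with uniform limits'' theorem gives $u \in C^k(\overline{\Omega})$ with $D^\alpha u = v_\alpha$ for all $|\alpha| \le k$.

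It remains to show that $u \in C^{k,\gamma}(\overline{\Omega})$ and $\Vert u_n - u \Vert_{C^{k,\gamma}(\overline{\Omega})} \to 0$. This is the main technical step, and the key idea is to pass pointwise limits through the Hölder quotient. For any fixed $x \neq y$ in $\overline{\Omega}$ and any $|\alpha| = k$, the uniform convergence $D^\alpha u_n \to D^\alpha u$ in particular gives pointwise convergence, so
\begin{equation}
\frac{|D^\alpha u(x) - D^\alpha u(y)|}{|x-y|^\gamma} = \lim_{n\to\infty} \frac{|D^\alpha u_n(x) - D^\alpha u_n(y)|}{|x-y|^\gamma} \le \liminf_{n\to\infty} [D^\alpha u_n]_{C^{0,\gamma}(\overline{\Omega})}.
\end{equation}
Since any Cauchy sequence in a normed space is bounded, the right-hand side is finite; taking the supremum over $(x,y)$ on the left yields $[D^\alpha u]_{C^{0,\gamma}(\overline{\Omega})} < \infty$, hence $u \in C^{k,\gamma}(\overline{\Omega})$. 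Exactly the same pointwise-then-supremum argument applied to the difference $D^\alpha(u_m - u_n)$, sending $m \to \infty$ with $n$ fixed, yields $[D^\alpha(u - u_n)]_{C^{0,\gamma}(\overline{\Omega})} \le \liminf_{m} [D^\alpha(u_m - u_n)]_{C^{0,\gamma}(\overline{\Omega})}$, which is controlled by the Cauchy condition and therefore can be made arbitrarily small by choosing $n$ large. Combined with the already established uniform convergence of each $D^\alpha u_n \to D^\alpha u$, this yields $\Vert u - u_n\Vert_{C^{k,\gamma}(\overline{\Omega})} \to 0$.

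The only real obstacle is the pointwise-limit argument for the Hölder semi-norm; everything else is a packaging of classical facts about $C(\overline{\Omega})$ and about derivatives of uniformly convergent sequences. I expect this step to be the decisive one because it is precisely where the non-trivial part of the norm (the $\gamma$-Hölder piece) must be shown to behave well under limits, and the standard trick of exploiting continuity in the two-point evaluation together with the finite $\liminf$ of the semi-norms is what makes it work.
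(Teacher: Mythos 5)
Your proof is correct and complete: the norm axioms are handled correctly, the completeness argument via uniform limits of each $D^\alpha u_n$ in $C(\overline{\Omega})$ followed by the pointwise-then-supremum passage through the H\"older quotient is the standard textbook proof of this classical fact, and the final step controlling $[D^\alpha(u - u_n)]_{C^{0,\gamma}(\overline{\Omega})}$ by sending $m\to\infty$ in the Cauchy estimate is exactly right. The paper itself states this result without proof, treating it as a classical fact, so there is nothing in the paper's argument to compare against; your proof is the canonical one.
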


\begin{definition}
(Completeness) A metric space $X$ is called complete if every Cauchy sequence in $X$ has a limit that is also in $X$.
\end{definition}

\begin{definition}
(Relative compactness) A relatively compact subspace $Y$ of a topological space $X$ is a subset whose closure is compact.
\end{definition}

\begin{lemma}\label{lemma:fa}
Suppose $(f_n)_{n \in \mathbb{N}}$ is an equicontinuous sequence in $C(\overline{\Omega})$, then if $(f_n)_{n \in \mathbb{N}}$ converges to $f$ pointwise, then $f \in C(\overline{\Omega})$ and the convergence is uniform.
\end{lemma}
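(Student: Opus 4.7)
My plan is to prove the two conclusions (continuity of $f$ and uniform convergence) in sequence, both times exploiting equicontinuity to transfer a uniform-in-$n$ estimate across the pointwise limit. This is essentially the classical Arzelà--Ascoli bookkeeping, so the work is organizational rather than conceptual.

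First I would establish $f \in C(\overline{\Omega})$. Fix $x_0 \in \overline{\Omega}$ and $\epsilon > 0$. By equicontinuity of $(f_n)$ at $x_0$, there exists $\delta > 0$, independent of $n$, such that $|f_n(x) - f_n(x_0)| < \epsilon$ whenever $|x - x_0| < \delta$. Sending $n \to \infty$ and invoking pointwise convergence gives $|f(x) - f(x_0)| \leq \epsilon$ on the same $\delta$-neighborhood, so $f$ is continuous at $x_0$; since $x_0$ was arbitrary, $f \in C(\overline{\Omega})$. A crucial byproduct is that $f$ inherits the same modulus of continuity as the family $\{f_n\}$, which will be needed in the uniform convergence step.

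Second I would upgrade pointwise to uniform convergence via compactness of $\overline{\Omega}$. Given $\epsilon > 0$, equicontinuity produces $\delta > 0$ such that $|f_n(x) - f_n(y)| < \epsilon/3$ for all $n$ whenever $|x - y| < \delta$, and by the previous paragraph the same estimate holds with $f$ in place of $f_n$. Compactness of $\overline{\Omega}$ lets me select finitely many points $x_1, \ldots, x_K$ whose open $\delta$-balls cover $\overline{\Omega}$. Pointwise convergence at each of these finitely many points yields $N \in \mathbb{N}$ with $|f_n(x_k) - f(x_k)| < \epsilon/3$ for all $n \geq N$ and $1 \leq k \leq K$. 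For arbitrary $x \in \overline{\Omega}$, choose $k$ with $|x - x_k| < \delta$ and apply the triangle inequality
\begin{equation*}
|f_n(x) - f(x)| \leq |f_n(x) - f_n(x_k)| + |f_n(x_k) - f(x_k)| + |f(x_k) - f(x)| < \epsilon,
\end{equation*}
with the bound uniform in $x$, which is exactly uniform convergence.

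There is essentially no serious obstacle: the only subtlety is the need to establish continuity of $f$ (equivalently, to transfer equicontinuity to the limit) \emph{before} invoking the finite subcover, so that the third term of the triangle inequality is controlled by the same modulus $\delta$. Compactness of $\overline{\Omega}$ is equally indispensable; without it the reduction to finitely many reference points $x_1, \ldots, x_K$ would fail and one could only conclude locally uniform convergence.
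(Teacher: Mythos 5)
Your proof is correct and follows essentially the same route as the paper: equicontinuity first transfers continuity to the pointwise limit $f$, compactness then yields a finite $\delta$-net, and a triangle inequality through the net points combined with pointwise convergence at those finitely many points gives uniform convergence. Your bookkeeping (passing to the limit directly in $|f_n(x)-f_n(x_0)|<\epsilon$ rather than via an auxiliary triangle inequality, and explicitly extracting a single $N$ valid for all net points) is if anything slightly cleaner than the paper's.
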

\begin{proof}
Suppose $x \in \overline{\Omega}$, since $(f_n)_{n \in \mathbb{N}}$ is an equicontinuous sequence in $C(\overline{\Omega})$, there exists $V_x \in \mathcal{N}(x)$, where $\mathcal{N}(x)$ denotes the neighborhood of $x$, such that when $y \in V_x$, for all $n \in \mathbb{N}$ we have
\begin{equation}
|f_n(y) - f_n(x)| \leq \epsilon.
\end{equation}
Therefore, 
\begin{equation}
\begin{aligned}
|f(y) - f(x)| &\leq|f(y) - f_n(y)| + |f_n(x) - f(x)| + |f_n(x) - f_n(y)|\\
&\leq |f(y) - f_n(y)| + |f_n(x) - f(x)| + \epsilon.
\end{aligned}
\end{equation}
Let $n \rightarrow \infty$, $|f(y) - f(x)| \leq \epsilon$. Thus, $f$ is continuous.

Since the set $\overline{\Omega}$ is compact, there exists $x_1,\cdots, x_m$ and their neighborhoods $V_{x_1},\cdots,V_{x_m}$, such that
\begin{equation}
\overline{\Omega} = \cup_{k=1}^m V_{x_k}. \qquad \forall x \in \overline{\Omega}, \exists x_k, x \in V_{x_k}, |f_n(x) - f_n(x_k)| \leq \epsilon, |f(x) - f(x_k)| \leq \epsilon,
\end{equation}
where $m$ is large enough for the given $\epsilon$, and we consider the equicontinuity of $f_n$ and the continuity of $f$.
For all $x \in \overline{\Omega}$, there exists $x_k$ such that $x \in V_{x_k}$. Given large enough $n$, due to the continuity of $f_n$ and $f$, we have
\begin{equation}
|f_n(x) - f(x)| \leq |f_n(x) - f_n(x_k)| + |f_n(x_k) - f(x_k)| + |f(x_k) - f(x)| \leq 3\epsilon.
\end{equation}
Thus the convergence is uniform.
\end{proof}

\begin{theorem}
(Arzela-Ascoli theorem) Suppose the funciton class $\mathcal{H} \subset C(\overline{\Omega})$, then $\mathcal{H}$ is relatively compact in $C(\overline{\Omega})$ if and only if $\mathcal{H}$ is equicontinuous.
\end{theorem}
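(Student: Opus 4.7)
The plan is to establish both implications in the classical way, using the compactness of $\overline{\Omega}$ and the preceding Lemma \ref{lemma:fa} to upgrade pointwise convergence on a dense set to uniform convergence.

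For the sufficiency direction ($\Leftarrow$), I would take an arbitrary sequence $(f_n) \subset \mathcal{H}$ and construct a uniformly convergent subsequence. Since $\overline{\Omega}$ is a compact metric space, it is separable; fix a countable dense subset $\{x_k\}_{k \in \mathbb{N}} \subset \overline{\Omega}$. Applying Cantor's diagonal argument, I would extract nested subsequences $(f_n^{(1)}) \supset (f_n^{(2)}) \supset \cdots$ such that $(f_n^{(k)}(x_k))$ converges for each $k$ (using boundedness of $(f_n)$ in $\mathbb{R}$ at each point, which follows from the implicit uniform boundedness in the hypothesis, as any equicontinuous family that is bounded at one point is uniformly bounded on a connected compact set). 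The diagonal sequence $g_n := f_n^{(n)}$ then converges pointwise on the dense set $\{x_k\}$. Equicontinuity together with the density of $\{x_k\}$ extends this to pointwise convergence on all of $\overline{\Omega}$: given $x \in \overline{\Omega}$ and $\epsilon > 0$, choose $x_k$ close to $x$ so that $|g_n(x) - g_n(x_k)|$ is uniformly small in $n$, and combine with convergence of $(g_n(x_k))$. Finally, Lemma \ref{lemma:fa} promotes this pointwise convergence to uniform convergence, placing the limit in $C(\overline{\Omega})$, which is exactly relative compactness.

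For the necessity direction ($\Rightarrow$), I would exploit the fact that a relatively compact subset of a metric space is totally bounded. Fix $\epsilon > 0$. Since the closure $\overline{\mathcal{H}}$ is compact in $C(\overline{\Omega})$, it can be covered by finitely many $C^0$-balls of radius $\epsilon/3$ with centers $f_1, \dots, f_N \in C(\overline{\Omega})$. Each $f_i$ is continuous on the compact set $\overline{\Omega}$, hence uniformly continuous, yielding some $\delta_i > 0$ such that $|x - y| < \delta_i$ implies $|f_i(x) - f_i(y)| < \epsilon/3$. Set $\delta := \min_i \delta_i > 0$. For any $f \in \mathcal{H}$, pick $i$ with $\|f - f_i\|_{C(\overline{\Omega})} < \epsilon/3$; then for $|x-y|<\delta$,
\begin{equation*}
|f(x) - f(y)| \leq |f(x) - f_i(x)| + |f_i(x) - f_i(y)| + |f_i(y) - f(y)| < \epsilon,
\end{equation*}
with $\delta$ independent of $f$, which is precisely equicontinuity of $\mathcal{H}$.

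The main obstacle is conceptual rather than computational: ensuring that the diagonal argument in the sufficiency direction is justified. This requires that $(f_n(x))$ be a bounded sequence in $\mathbb{R}$ for each $x$, which is a pointwise boundedness hypothesis implicit in the way the theorem is used in the paper (for instance, the Barron-space functions considered here are uniformly bounded, so the hypothesis is clearly satisfied in context). Given that, the rest is routine: the diagonal extraction gives pointwise convergence on the dense set, Lemma \ref{lemma:fa} supplies the uniform convergence, and the necessity direction reduces to the standard total-boundedness argument for compact subsets of metric spaces.
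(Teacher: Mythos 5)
Your proof is correct, but it takes a genuinely different route from the paper's, and it also covers more ground. The paper explicitly proves only the sufficiency direction (equicontinuity $\Rightarrow$ relative compactness), whereas you prove both implications; your necessity argument via total boundedness and uniform continuity of the finitely many centers is the standard one and is fine. For sufficiency, the paper argues by showing $\overline{\mathcal{H}}$ is complete (essentially for free, being a closed subset of $C(\overline{\Omega})$) and then totally bounded, constructing an explicit finite cover of $\overline{\mathcal{H}}$ out of the sets $B_{\overline{z}}$ indexed by tuples $\overline{z} \in Z^n$ of approximate values at the finitely many anchor points $x_1,\dots,x_n$. You instead prove sequential compactness directly: separability of $\overline{\Omega}$ plus Cantor's diagonal argument give a subsequence converging pointwise on a dense set, equicontinuity upgrades this to pointwise convergence on all of $\overline{\Omega}$, and Lemma~\ref{lemma:fa} supplies uniformity. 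Both are classical; the covering route yields a quantitative $\epsilon$-net while the diagonal route is arguably cleaner to check and is the more common textbook presentation. One thing worth emphasizing: you correctly flag that your diagonal extraction needs pointwise boundedness of $\{f(x) : f \in \mathcal{H}\}$, which the theorem statement does not actually hypothesize. This is not a defect of your proof relative to the paper's --- the paper has the same gap, silently invoking compactness of $\bigcup_{i=1}^n \mathcal{H}(x_i)$ in its covering step, which likewise requires pointwise boundedness (as stated, the theorem is false without it; a family of constant functions is a counterexample). So your remark identifies a real omission in the paper's statement, and both proofs require the same repair.
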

\begin{proof}
We shall only use the sufficiency part of this theorem. So, only that part will be proved.

We first prove that $\overline{\mathcal{H}}$ is complete in $C(\Omega)$. We select arbitrary Cauchy sequence $(f_n)_{n \geq 1}$ in $\overline{\mathcal{H}}$. Then for all $\bx \in \overline{\Omega}$, $(f_n(\bx))_{n \geq 1}$ is a Cauchy sequence in the Euclidean space $\mathbb{R}$, which is convergent. By Lemma \ref{lemma:fa}, $(f_n)_{n \geq 1}$ uniformly converges to a continuous $f$. Thus, $\overline{\mathcal{H}}$ is complete.

We only need to show that, for all $\epsilon > 0$, $\overline{\mathcal{H}}$ can be covered by finite balls of radius $\epsilon$. Since $\mathcal{H}$ is equicontinuous, then for all $\bx \in \overline{\Omega}$ and $\epsilon > 0$, there exists open set $O_x \in \mathcal{N}(x)$, where $\mathcal{N}(x)$ denotes the neighbourhood of $x$, such that for all $y \in O_x$, we have $|f(y) - f(x)| \leq \epsilon$ for all $f \in \mathcal{H}$. Since $\overline{\Omega}$ is a compact set, there exists finite $(O_{x_i})_{i=1}^n$, such that $\overline{\Omega} = \cup_{i=1}^n O_{x_i}$.

Since all $f \in \mathcal{H}$ are continuous, the set $\cup_{i=1}^n \mathcal{H}(x_i)$ is compact, so there exists finite set $Z \subset \overline{\Omega}$, such that 
\begin{equation}
\cup_{i=1}^n \mathcal{H}(x_i) \subset \cup_{z \in Z} B(z, \epsilon),
\end{equation}
where $\mathcal{H}(x):=\{f(x): f \in \mathcal{H}\}$.
Now we consider the finite set $Z^n$, for all $\overline{z} = (z_1, \cdots, z_n) \in Z^n$. Let
\begin{equation}
B_{\overline{z}} = \left\{f \in C(\overline{\Omega}): \sup_{1\leq i \leq n}\sup_{x \in O_{x_i}}|f(x)-z_i|< 2\epsilon\right\}.
\end{equation}
Then we can a finite number of open subsets $(B_{\overline{z}})_{\overline{z} \in Z^n}$. 

For all $f,g \in B_{\overline{z}}$, we have
\begin{equation}
\Vert f - g\Vert_{C(\overline{\Omega})} \leq \sup_{1\leq i \leq n}\sup_{x \in O_{x_i}}\left(|f(x)-z_i| + |g(x)-z_i|\right) \leq 4\epsilon.
\end{equation}
Thus $\text{diam}B_{\overline{z}} \leq 4\epsilon$. We want to show $\mathcal{H} \subset \cup_{\overline{z} \in Z^n} B_{\overline{z}}$.

For all $f \in \mathcal{H}$, we know that for all $1 \leq i \leq n$, $f(x_i)$ belongs to one of $B(z_i, \epsilon)$ where $z_i \in Z$, due to the fact that $\cup_{i=1}^n \mathcal{H}(x_i) \subset \cup_{z \in Z} B(z, \epsilon)$. When $x \in O_{x_i}$, then
\begin{equation}
|f(x) - z_i| \leq |f(x) - f(x_i)| + |f(x_i) - z_i| < \epsilon + \epsilon = 2\epsilon.
\end{equation}
Therefore,
\begin{equation}
\sup_{1 \leq i \leq n} \sup_{x \in O_{x_i}}|f(x) - z_i| < 2\epsilon.
\end{equation}
\end{proof}

\begin{lemma}\label{lemma:dominated}
(Lebesgue's dominated convergence theorem) Let $(f_n)_{n \in \mathbb{N}}$ be a sequence of measurable functions in the space $\overline{\Omega}$ with measure $\mu$. Suppose that the sequence converges point-wise to a function $f$ and is dominated by some integrable function $g$ in the sense that $|f_n(x)| \leq g(x)$, for all numbers $n$ in the index set of the sequence and all points $x \in \overline{\Omega}$. Then $f$ is integrable (in the Lebesgue sense) and
\begin{equation}
\lim_{n \rightarrow \infty} \int_{\overline{\Omega}} |f_n(x)-f(x)|d\mu(x) = 0.
\end{equation}
\end{lemma}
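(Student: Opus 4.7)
The plan is to prove Lemma \ref{lemma:dominated} via the standard Fatou's lemma argument, since Fatou's lemma is the usual stepping stone between the monotone convergence theorem and the dominated convergence theorem. I will assume Fatou's lemma as background, since it is a standard measure-theoretic result (and indeed follows from monotone convergence, which is at the same foundational level as the statement itself).

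First, I would establish that $f$ is measurable and integrable. Measurability of $f$ follows because each $f_n$ is measurable and pointwise limits of measurable functions are measurable. Passing to the limit in the inequality $|f_n(x)| \leq g(x)$, valid for every $x \in \overline{\Omega}$ and every $n$, yields $|f(x)| \leq g(x)$ pointwise, so $\int_{\overline{\Omega}} |f|\, d\mu \leq \int_{\overline{\Omega}} g\, d\mu < \infty$, which gives integrability of $f$.

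Next, I would set up the key auxiliary sequence $h_n \coloneqq 2g - |f_n - f|$. The triangle inequality gives $|f_n - f| \leq |f_n| + |f| \leq 2g$, so $h_n \geq 0$ for every $n$, which is exactly the nonnegativity hypothesis Fatou's lemma requires. Moreover, $|f_n - f| \to 0$ pointwise by hypothesis, so $h_n \to 2g$ pointwise, and therefore $\liminf_{n \to \infty} h_n = 2g$. Applying Fatou's lemma to $(h_n)$ gives
\begin{equation}
\int_{\overline{\Omega}} 2g\, d\mu \leq \liminf_{n \to \infty} \int_{\overline{\Omega}} (2g - |f_n - f|)\, d\mu = \int_{\overline{\Omega}} 2g\, d\mu - \limsup_{n \to \infty} \int_{\overline{\Omega}} |f_n - f|\, d\mu.
\end{equation}
Since $\int 2g\, d\mu$ is finite (as $g$ is integrable) I may subtract it from both sides, producing $\limsup_{n \to \infty} \int_{\overline{\Omega}} |f_n - f|\, d\mu \leq 0$. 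Combined with the trivial lower bound $\int |f_n - f|\, d\mu \geq 0$, the desired conclusion $\lim_n \int |f_n - f|\, d\mu = 0$ follows.

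The only delicate point is the finiteness of $\int 2g\, d\mu$, which is what lets me cancel the $\int 2g\, d\mu$ terms on both sides of the Fatou inequality; this is precisely why the integrable dominating function $g$ (rather than just a measurable bound) is part of the hypothesis. I expect this step, together with a careful treatment of $\liminf / \limsup$ of real-valued but possibly unbounded integrals, to be the main (and only) subtle spot. Everything else is routine measure theory and does not require any structure specific to the PDE/PINN setting of this paper.
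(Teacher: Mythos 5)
Your argument is the textbook-standard Fatou-based proof of the dominated convergence theorem, and it is correct: measurability of $f$ as a pointwise limit, domination $|f|\le g$ by passing to the limit, nonnegativity of $h_n = 2g - |f_n - f|$, Fatou applied to $h_n$, cancellation of the finite $\int 2g\,d\mu$, and the squeeze with the trivial lower bound all check out. Note, however, that the paper does not actually prove Lemma \ref{lemma:dominated}; it states it as standard measure-theoretic background (cited without proof) before using it in the proof of Theorem \ref{thm:barron_property}, so there is no competing argument in the paper to compare against — your proof simply fills in the omitted standard reference. One small point worth flagging given the surrounding context: the measure $\mu$ appearing elsewhere in the appendix is a signed Radon measure, and the Fatou step requires a (positive) measure; this is consistent with the lemma as stated (a generic measure $\mu$) and with the paper's use of the total variation $|\mu|$ in the relevant integrals, so no correction to your argument is needed.
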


\begin{definition}
(Sobolev space) The Sobolev space $W^{k,p}(\Omega)$ contains all locally summable functions $u: \Omega \rightarrow \mathbb{R}$ such that for each multi-index $\alpha$ with $|\alpha| \leq k$, $D^\alpha u$ exists in the weak sense and belongs to $L^p(\Omega)$. Furthermore, if $u \in W^{k,p}(\Omega)$, we define its norm to be
\begin{equation}
\begin{aligned}
\Vert u \Vert_{W^{k,p}(\Omega)} &:= \left(\sum_{|\alpha| \leq k}\int_{\Omega} |D^\alpha u|^p\right)^{1/p}. \qquad p \in [1, \infty).\\
\Vert u \Vert_{W^{k,p}(\Omega)} &:= \sum_{|\alpha| \leq k}\text{ess sup}_{\Omega} |D^\alpha u|. \qquad p = \infty.
\end{aligned}
\end{equation}
\end{definition}

\begin{theorem}\label{thm:sob_hil}
When $p=2$, the Sobolev space $H^p(\Omega) = W^{k,2}(\Omega)$ is a Hilbert space.
\end{theorem}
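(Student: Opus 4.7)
The plan is to exhibit an inner product on $W^{k,2}(\Omega)$ whose induced norm coincides with the Sobolev norm $\Vert \cdot \Vert_{W^{k,2}(\Omega)}$ defined above, and then verify completeness. Since a complete inner product space is by definition a Hilbert space, these two steps together give the result.

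First, I would define the candidate inner product by
$$\langle u, v \rangle_{W^{k,2}(\Omega)} := \sum_{|\alpha| \leq k} \int_{\Omega} D^{\alpha} u(x)\, D^{\alpha} v(x)\, dx,$$
where each $D^{\alpha} u$ is taken in the weak sense. Bilinearity is immediate from linearity of the weak derivative and of the Lebesgue integral; symmetry is obvious; positive-definiteness follows because $\langle u, u \rangle = 0$ forces in particular $\Vert u \Vert_{L^2(\Omega)}^2 = 0$, so $u = 0$ almost everywhere. By construction $\langle u, u \rangle_{W^{k,2}(\Omega)}^{1/2} = \Vert u \Vert_{W^{k,2}(\Omega)}$, so the Sobolev norm is induced by an inner product.

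Next I would establish completeness. Let $\{u_n\}$ be Cauchy in $W^{k,2}(\Omega)$. Because $\Vert D^{\alpha} u_n - D^{\alpha} u_m \Vert_{L^2(\Omega)} \leq \Vert u_n - u_m \Vert_{W^{k,2}(\Omega)}$ for every $|\alpha| \leq k$, each sequence $\{D^{\alpha} u_n\}$ is Cauchy in $L^2(\Omega)$. Invoking completeness of $L^2(\Omega)$ (the standard Riesz--Fischer theorem), there exist $u^{(\alpha)} \in L^2(\Omega)$ with $D^{\alpha} u_n \to u^{(\alpha)}$ in $L^2$. Set $u := u^{(0)}$. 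It then remains to identify each $u^{(\alpha)}$ with the weak derivative $D^{\alpha} u$.

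The crux is passing to the limit in the weak-derivative relation. For any test function $\varphi \in C_c^{\infty}(\Omega)$, the definition of the weak derivative gives
$$\int_{\Omega} D^{\alpha} u_n \cdot \varphi\, dx = (-1)^{|\alpha|} \int_{\Omega} u_n \cdot D^{\alpha} \varphi\, dx.$$
By the Cauchy--Schwarz inequality, $L^2$-convergence of $\{u_n\}$ and $\{D^{\alpha} u_n\}$ transfers through the pairing against the fixed test functions $\varphi$ and $D^{\alpha} \varphi$, so passing $n \to \infty$ yields $\int u^{(\alpha)} \varphi\, dx = (-1)^{|\alpha|} \int u \, D^{\alpha} \varphi\, dx$, i.e.\ $D^{\alpha} u = u^{(\alpha)}$ in the weak sense. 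Hence $u \in W^{k,2}(\Omega)$ and $\Vert u_n - u \Vert_{W^{k,2}(\Omega)}^2 = \sum_{|\alpha|\leq k} \Vert D^{\alpha} u_n - u^{(\alpha)} \Vert_{L^2}^2 \to 0$. The only mildly delicate point is this limit-passage in the distributional identity, but it is entirely routine given $L^2$-completeness and Cauchy--Schwarz, so no real obstacle arises.
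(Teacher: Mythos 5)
Your proof is correct and is the standard textbook argument (exhibit the natural $L^2$-type inner product, verify it induces the Sobolev norm, and then push $L^2$-completeness through the weak-derivative identity against test functions via Cauchy--Schwarz). The paper states this theorem as background without supplying a proof, so there is no authorial argument to compare against; your write-up fills that gap correctly and with the usual approach.
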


\begin{theorem}\label{thm:trace}
(Trace Theorem) Assume $\Omega$ is bounded and $\Omega$ is $C^1$. Then there exists a bounded linear operator:
\begin{equation}
T: W^{1,p}(\Omega) \rightarrow L^p(\partial\Omega)
\end{equation}
such that
\begin{itemize}
\item $Tu =u|_{\partial \Omega}$ if $u \in W^{1,p}(\Omega) \cap C(\overline{\Omega})$.
\item For each $u \in W^{1,p}(\Omega)$ and a constant $C$ depending only on $p$ and $\Omega$, 
\begin{equation}
\Vert Tu \Vert_{L^p(\partial\Omega)} \leq C \Vert u \Vert_{W^{1,p}(\Omega)}.
\end{equation}
\end{itemize}
\end{theorem}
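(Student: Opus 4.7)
The plan is to follow the classical three-step strategy (reduction to a flat boundary, a local estimate on a half-ball, then density). First, I would work with $u \in C^1(\overline{\Omega})$ and establish the pointwise estimate $\|u\|_{L^p(\partial\Omega)} \leq C\|u\|_{W^{1,p}(\Omega)}$; the bounded linear operator $T$ on $W^{1,p}(\Omega)$ is then obtained by extension using the density of $C^1(\overline{\Omega}) \cap W^{1,p}(\Omega)$ in $W^{1,p}(\Omega)$ (a standard consequence of mollification together with the $C^1$ regularity of $\partial\Omega$).

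The core local estimate is proved on a half-ball. Fix $x_0 \in \partial\Omega$; because $\partial\Omega$ is $C^1$, after a rotation and translation there is a ball $B(x_0,r)$ and a $C^1$ diffeomorphism $\Phi$ that maps $\Omega \cap B(x_0, r)$ onto the upper half-ball $B^+$ and $\partial\Omega \cap B(x_0, r)$ onto the flat disk $\{y_d = 0\} \cap \overline{B^+}$. Let $\zeta \in C_c^\infty(B)$ be a cutoff with $\zeta \equiv 1$ on a smaller ball. On $B^+$ I would apply the fundamental theorem of calculus along the $y_d$-direction to get, for $y' \in \{y_d = 0\}$,
\begin{equation}
\zeta(y', 0)|u(y', 0)|^p = -\int_0^{r} \frac{\partial}{\partial y_d}\bigl(\zeta(y', y_d)|u(y', y_d)|^p\bigr)\, dy_d,
\end{equation}
expand the derivative using the chain rule, and invoke Young's inequality $|u|^{p-1}|\nabla u| \leq C(|u|^p + |\nabla u|^p)$ to bound the right-hand side by $C\int_{B^+}(|u|^p + |\nabla u|^p)\,dy$. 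Integrating over $y'$ gives the desired $L^p$ estimate on the flat piece of boundary, and pulling back through $\Phi$ (whose Jacobian and inverse Jacobian are bounded because $\Phi \in C^1$) produces the local bound $\|u\|_{L^p(\partial\Omega \cap B')}^p \leq C\|u\|_{W^{1,p}(\Omega \cap B)}^p$ with $B' \subset B$.

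Next, because $\partial\Omega$ is compact, I would cover it by finitely many such balls $B_1, \dots, B_N$ and choose a subordinate partition of unity $\{\eta_i\}$ on a neighborhood of $\partial\Omega$. Applying the local estimate to each $\eta_i u$ (with the cutoff absorbed into $\eta_i \zeta_i$) and summing in $i$ yields the global inequality $\|u\|_{L^p(\partial\Omega)} \leq C\|u\|_{W^{1,p}(\Omega)}$ for all $u \in C^1(\overline{\Omega})$, with $C$ depending only on $p$, $\Omega$, and the chosen charts/partition. This defines a bounded linear map $T_0$ on the dense subspace $C^1(\overline{\Omega}) \cap W^{1,p}(\Omega)$ of $W^{1,p}(\Omega)$. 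By the bounded linear transformation theorem, $T_0$ extends uniquely to a bounded linear operator $T: W^{1,p}(\Omega) \to L^p(\partial\Omega)$ satisfying the stated norm bound, and by construction $Tu = u|_{\partial\Omega}$ whenever $u \in W^{1,p}(\Omega) \cap C(\overline{\Omega})$ (since one approximates such $u$ by $C^1$ functions converging uniformly on $\overline{\Omega}$ and in $W^{1,p}$, so the traces converge to $u|_{\partial\Omega}$ in $L^p(\partial\Omega)$).

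The main obstacle is the flat-boundary estimate: one must carefully track the cutoff, the $p$-th power, and the change of variables so that no constants depend on $u$. The delicate point when pulling back from $B^+$ to $\Omega \cap B$ is that the surface measure on $\partial\Omega$ differs from the Lebesgue measure on $\{y_d=0\}$ by a factor involving $\sqrt{1 + |\nabla\phi|^2}$ (where $\phi$ is the local graph parametrizing $\partial\Omega$), which is bounded above and below precisely because $\phi \in C^1$ on a compact set; this is where the $C^1$ hypothesis on $\partial\Omega$ is essential, and any relaxation (e.g.\ to merely Lipschitz boundary) would require a separate argument.
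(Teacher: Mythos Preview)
Your proof is the standard textbook argument (essentially Evans' proof) and is correct. However, the paper does not actually prove this statement: the Trace Theorem is listed in the appendix among the functional-analysis preliminaries and is simply quoted without proof, as a classical result used later to pass from the $H^2(\mathbb{P})$ approximation bound in the interior to the $L^2(\mathbb{Q})$ bound on the boundary in Theorem~\ref{thm:approximation}. So there is nothing to compare against; your write-up would in fact supply a proof the paper omits.
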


\section{Proofs of the Barron Space}
\subsection{Proof of Theorem \ref{thm:barron_property}}

\begin{proof}
(Proof of Theorem \ref{thm:barron_property})
Because $X$ embeds continuously into $C^{2,1}(\Omega)$, there exists constants $C_1,C_2>0$ such that
\begin{equation}
\Vert D^\alpha g \Vert_{C^0(\Omega)} \leq C_1 \Vert g \Vert_{X}, \ [D^\alpha g]_{C^{0,1}(\Omega)} \leq C_2 \Vert g \Vert_{X}, \ \forall g \in X.
\end{equation}

\textbf{Banach Space}. By construction, $\mathcal{B}_{X,\Omega}$ is isometric to the quotient space $\mathcal{M}(B^X)/ N_K$ where
\begin{equation}
    N_K = \left\{\mu \in \mathcal{M}(B^X) | \int_{B^X} \rho(g) d\mu(g)) = 0, \forall \boldsymbol{x} \in \Omega \right\}.
\end{equation}
In particular, $\mathcal{B}_{X,\Omega}$ is a normed vector space with the norm $\Vert \cdot \Vert_{X,\Omega}$. Consider the mapping
\begin{equation}
\mathcal{M}(B^X) \rightarrow C^2(\overline{\Omega}), \ \mu \rightarrow f_\mu = \int_{B^X} \rho(g) d(\mu(g)).
\end{equation}
We prove that $f_\mu \in C^2(\overline{\Omega})$. For the continuity, we have
\begin{equation}
\begin{aligned}
|f_\mu(x) - f_\mu(y)| &= \int_{B^X} |\rho(g(x)) - \rho(g(y))| d(\mu(g)) \\
&\leq \int_{B^X} |g(x) - g(y)| d(\mu(g)).
\end{aligned}
\end{equation}
For the continuity at point $x$, consider arbitrary sequence $(x_n)_{n \in \mathbb{N}} \rightarrow x$. Consider the sequence of function $h_n: B^X \rightarrow \mathbb{R}$, $f \mapsto f(x_n)$ and $h: B^X \rightarrow \mathbb{R}$, $f \mapsto f(x)$. We have that $|h_n(f)| = |f(x_n)|\leq \Vert f \Vert_{C^0(\overline{\Omega})}$, which is integrable.
By Lemma \ref{lemma:dominated},
\begin{equation}
\begin{aligned}
\lim_{n \rightarrow \infty} |f_\mu(x) - f_\mu(x_n)|
&\leq  \lim_{n \rightarrow \infty}\int_{B^X} |g(x) - g(x_n)| d(\mu(g))\\
&=  \lim_{n \rightarrow \infty}\int_{B^X} |h(g) - h(g_n)| d(\mu(g))\\
&= \int_{B^X} \lim_{n \rightarrow \infty} |h(g) - h(g_n)| d(\mu(g))\\
&= \int_{B^X} \lim_{n \rightarrow \infty} |g(x) - g(x_n)| d(\mu(g))\\
&= 0,
\end{aligned}
\end{equation}
due to the continuity of $g$. Since the sequence $(x_n)_{n \in \mathbb{N}}$ is arbitrary, we know that $f_\mu$ is continuous at all points $x \in \overline{\Omega}$.

$f_\mu$ is first order differentiable due to the following:
\begin{equation}
\begin{aligned}
\frac{f_\mu(x) - f_\mu(x_n)}{x - x_n} -  \int_{B^X} \frac{\partial \rho(g(x))}{\partial x_i} d(\mu(g))&= \int_{B^X} \left[\frac{\rho(g(x)) - \rho(g(x_n))}{x - x_n} - \frac{\partial \rho(g(x))}{\partial x_i}\right] d(\mu(g)),
\end{aligned}
\end{equation}
where $x_i$ is the $i$-th coordinate of $x$, $(x_n)_i \rightarrow x_i$ as $n \rightarrow \infty$, and $(x_n)_{-i} = x_{-i}$, i.e., other coordinates are the same.
Because $X$ embeds continuously into $C^{2,1}(\Omega)$, we have
\begin{equation}
\begin{aligned}
&\left|\frac{\rho(g(x)) - \rho(g(x_n))}{x - x_n}\right|\\
&\leq \left|\frac{g(x) - g(x_n)}{x - x_n}\right|\\
&\leq C_2.
\end{aligned}
\end{equation}
Since the activation function and its derivatives are bounded, we also have
\begin{equation}
\begin{aligned}
\left|\frac{\partial \rho(g(x))}{\partial x_i}\right| &= \left|\rho^\prime(g(x))\frac{\partial \rho(g(x))}{\partial x_i}\right|\\
&\leq \left|\frac{\partial \rho(g(x))}{\partial x_i}\right|\\
&\leq C_1.
\end{aligned}
\end{equation}
By Lemma \ref{lemma:dominated},
\begin{equation}
\begin{aligned}
&\lim_{n \rightarrow \infty} \left|\frac{f_\mu(x) - f_\mu(x_n)}{x - x_n} -  \int_{B^X} \frac{\partial \rho(g(x))}{\partial x_i} d(\mu(g))\right|\\
&\leq  \int_{B^X} \lim_{n \rightarrow \infty}\left|\frac{\rho(g(x)) - \rho(g(x_n))}{x - x_n} - \frac{\partial \rho(g(x))}{\partial x_i}\right| d(\mu(g))\\
&= 0.
\end{aligned}
\end{equation}
Thus
\begin{equation}
\frac{\partial f_\mu(x)}{\partial x_i} = \int_{B^X} \frac{\partial \rho(g(x))}{\partial x_i}d(\mu(g)).
\end{equation}
Then, by induction we can know that $f_\mu$ is $C^k$ continuous and 
\begin{equation}
D^\alpha f_\mu = \int_{B^X} D^\alpha g d(\mu(g)).
\end{equation}
Back to the mapping
\begin{equation}
\mathcal{M}(B^X) \rightarrow C^2(\overline{\Omega}), \ \mu \rightarrow f_\mu = \int_{B^X} \rho(g) d(\mu(g)).
\end{equation}
It is continuous because
\begin{equation}
\begin{aligned}
&\qquad \left\| \int_{B^X} \rho(g) d(\mu(g)) \right\|_{C^2(\Omega)} \\
&\leq \int_{B^X} \Vert \rho(g) \Vert_{C^2(\Omega)} d(|\mu|(g))\\
&= \int_{B^X}\left\{ \Vert \rho(g) \Vert_{C^0(\Omega)} + \sum_{i=1}^d\Vert \rho'(g) \partial_i g \Vert_{C^0(\Omega)}\right\} d(|\mu|(g))\\
&\quad + \int_{B^X}\left\{ \sum_{i,j=1}^d\Vert \rho''(g) \partial_i g \partial_j g \Vert_{C^0(\Omega)} + \sum_{i,j=1}^d\Vert \rho'(g) \partial_{ij} g \Vert_{C^0(\Omega)}\right\} d(|\mu|(g))\\
&\leq \int_{B^X}\left\{ 1 + \sum_{i=1}^d\Vert \partial_i g \Vert_{C^0(\Omega)} + \sum_{i,j=1}^d\Vert \partial_i g \partial_j g \Vert_{C^0(\Omega)} + \sum_{i,j=1}^d\Vert  \partial_{ij} g \Vert_{C^0(\Omega)}\right\} d(|\mu|(g))\\
&\leq \left[1+(d+d^2)C_1 + d^2C_1^2\right] \Vert \mu \Vert_{\mathcal{M}(B^X)},
\end{aligned}
\end{equation}
by the definition of Brochner integrals and note that $x \in \mathbb{R}^d$, i.e., $d$ is the input dimension, and that $\partial_i$ denotes the first order derivative with respect to the $i$-th coordinate, and $\partial^2_{ij}$ denotes the second order derivative with respect to the $i$-th and the $j$-th coordinates. Due to the continuity, $N_K$ is the kernel of a continuous linear map. Therefore, $N_K$ is a closed subspace of $\mathcal{M}_{B^X}$. By the theorem in functional analysis, we conclude that $\mathcal{B}_{X,\Omega}$ is a Banach space. 

\textbf{$\mathcal{B}_{X,\Omega}$ embeds continuously into $C^{2,1}(\Omega)$.} In the proof of statement (1), we already have $\Vert f_\mu \Vert_{C^2(\Omega)}\leq 2C_1 \Vert \mu \Vert_{\mathcal{M}(B^X)}$. By taking infimum over $\mu$, we have $\Vert f \Vert_{C^2(\Omega)}\leq 2C_1 \Vert f \Vert_{\mathcal{B}_{X,\Omega}}$. Furthermore, for any $x \neq y \in \Omega$, we have
\begin{equation}
\begin{aligned}
| f_\mu(x) - f_\mu(y)|
&\leq \int_{B^X} | \rho(g(x)) - \rho(g(y)) | d |\mu|(g)\\
&\leq \int_{B^X} | g(x) - g(y) | d |\mu|(g)\\
&\leq \int_{B^X} [g]_{C^{0,1}(\overline{\Omega})} |x - y| d|\mu|(g)\\
&\leq C_2 \Vert \mu \Vert_{\mathcal{M}(B^X)} |x-y|.
\end{aligned}
\end{equation}
For the derivative, we have the similar conclusion:
\begin{equation}
\begin{aligned}
|\partial_i f_\mu(x) - \partial_i f_\mu(y)|
&\leq \int_{B^X} |\partial_i[ \rho(g(x)) ]- \partial_i [\rho(g(y))] | d |\mu|(g)\\
&\leq \int_{B^X} | \rho^\prime(g(x))\partial_i g(x) - \rho^\prime(g(y))\partial_i g(y) | d |\mu|(g)\\
&\leq \int_{B^X}\left\{ | \rho^\prime(g(x))\partial_i g(x) - \rho^\prime(g(x))\partial_i g(y) | + | \rho^\prime(g(x))\partial_i g(y) - \rho^\prime(g(y))\partial_i g(y) | \right\} d |\mu|(g) \\
&\leq \int_{B^X} \left\{ [g]_{C^{1,1}(\overline{\Omega})} |x - y| + [g]_{C^{0,1}(\overline{\Omega})}\Vert  \partial_i g\Vert_{C^0(\overline{\Omega})}|x - y| \right\} d|\mu|(g)\\
&\leq (C_2 + C_1C_2) \Vert \mu \Vert_{\mathcal{M}(B^X)} |x-y|,
\end{aligned}
\end{equation}
where $\partial_i$ denotes the first order derivative with respect to the $i$-th coordinate. Also, for the second order derivatives, we have
\begin{equation}
\begin{aligned}
|\partial_{ij} f_\mu(x) - \partial_{ij} f_\mu(y)|
&\leq \int_{B^X} |\partial_{ij}[ \rho(g(x)) ]- \partial_{ij} [\rho(g(y))] | d |\mu|(g)\\
&\leq \int_{B^X} (I_1 + I_2) d |\mu|(g).
\end{aligned}
\end{equation}
where $\partial^2_{ij}$ denotes the second order derivative with respect to the $i$-th and the $j$-th coordinates, and we have
\begin{equation}
\begin{aligned}
I_1 &= |\rho''(g(x)) \partial_i g(x) \partial_j g(x) - \rho''(g(y)) \partial_i g(y) \partial_j g(y)|\\
&\leq |\rho''(g(x)) \partial_i g(x) \partial_j g(x) - \rho''(g(x)) \partial_i g(y) \partial_j g(y)| + |\rho''(g(x)) \partial_i g(y) \partial_j g(y) - \rho''(g(y)) \partial_i g(y) \partial_j g(y)|\\
&\leq |\partial_i g(x) \partial_j g(x) - \partial_i g(y) \partial_j g(y)| +C_1^2 |\rho''(g(x))  - \rho''(g(y)) |\\
&\leq |\partial_i g(x) \partial_j g(x) - \partial_i g(x) \partial_j g(y)|+|\partial_i g(x) \partial_j g(y) - \partial_i g(y) \partial_j g(y)| +C_1^2 |g(x)  - g(y) |\\
&\leq 2C_1C_2|x-y| +C_1^2C_2 |x-y |\\
&\leq (2C_1C_2+C_1^2C_2)|x-y|.
\end{aligned}
\end{equation}
and
\begin{equation}
\begin{aligned}
I_2 &= |\rho'(g(x)) \partial_i \partial_j g(x) - \rho'(g(y)) \partial_i \partial_j g(y)|\\
&\leq |\rho'(g(x)) \partial_i \partial_j g(x) - \rho'(g(x)) \partial_i \partial_j g(y)| +|\rho'(g(x)) \partial_i \partial_j g(y) - \rho'(g(y)) \partial_i \partial_j g(y)| \\
&\leq 2C_2|x-y|.
\end{aligned}
\end{equation}
In sum, we have
\begin{equation}
\begin{aligned}
|\partial_{ij} f_\mu(x) - \partial_{ij} f_\mu(y)|
&\leq(2C_1C_2 + C_1^2C_2+2C_2)|x-y|.
\end{aligned}
\end{equation}

After taking infimum over $\mu$, we come to the conclusion.

\textbf{The closed unit ball of $\mathcal{B}_{X,\Omega}$ is a closed subset of $C^2(\Omega)$.} We assume that $(f_n)_{n \in \mathbb{N}}$ is a sequence such that $\Vert f_n \Vert_{X,K} \leq 1$. Choose a sequence of measures $(\mu_n)_{n \in \mathbb{N}}$ such that $f_n = f_{\mu_n}$ and $\Vert \mu_n \Vert \leq 1 + \frac{1}{n}$. These meansures exist because $f_n$ are from the unit ball of $\mathcal{B}(X,\Omega)$. By the compactness theorem of Radon meansures, there exists a subsequence $\mu_{n_k}$ and a Radon measure $\mu$, such that $\mu_{n_k}$ weak converge to $\mu$ with $\Vert \mu \Vert \leq 1$. Since all functions $\rho(g)$ are $C^2$ continuous and bounded, we have all $D^{\alpha}f_{n_k}$ converge in the product topology (pointwise convergence) for all $|\alpha| \leq 2$, since $f_{n} \in C^2(\overline{\Omega})$. In particular, if $f_{\mu_n} \rightarrow \hat{f}$ uniformly in the norm of $C^2$, then $\hat{f} = f_\mu \in B^{\mathcal{B}_{X,\Omega}}$, i.e. the unit ball of $\mathcal{B}_{X,\Omega}$ is closed in the $C^2(\Omega)$ topology.
\end{proof}

\subsection{Proof of Theorem \ref{thm:embedding}}
\begin{proof}
(Proof of Theorem \ref{thm:embedding}). This is immediate by the definition of generalized Barron space and that of neural networks in Definition \ref{def:DNN}.
\end{proof}

\subsection{Proof of Theorem \ref{thm:approximation}}
To prove the approximation property, we first prove the following useful lemma.
\begin{lemma}
Let $\mathcal{G}$ be a set in a Hilbert space $H$ such that $\Vert g \Vert_H \leq R$ for all $g \in \mathcal{G}$. If $f$ is in the closed convex hull of $G$, then for every $m \in \mathbb{N}$ and $\epsilon > 0$, there exist $m$ elements $g_1, \cdots , g_m \in G$ such that
\begin{equation}
\Vert f - \frac{1}{m} \sum_{i=1}^m g_i \Vert_H \leq \frac{R+\epsilon}{\sqrt{m}}.
\end{equation}
\end{lemma}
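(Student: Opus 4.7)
The statement to prove is the classical Maurey--Barron--Jones lemma, and the natural approach is the probabilistic (random sampling) method. The plan is as follows.

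First, I would exploit the fact that $f$ lies in the \emph{closed} convex hull of $\mathcal{G}$ to reduce to the case of a finite convex combination. Namely, given $\epsilon>0$ and $m\in\mathbb{N}$, choose a finite convex combination $\tilde f = \sum_{j=1}^N \lambda_j h_j$ with $h_j\in\mathcal{G}$, $\lambda_j\ge 0$, $\sum_j \lambda_j=1$, such that $\Vert f - \tilde f\Vert_H \le \epsilon/\sqrt{m}$. The $1/\sqrt{m}$ scaling of the tolerance is chosen precisely so that the triangle-inequality error matches the $1/\sqrt{m}$ rate of the sampling step.

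Next, interpret $(\lambda_j)_{j=1}^N$ as a probability distribution on $\{h_1,\dots,h_N\}\subset\mathcal{G}$ and let $G_1,\dots,G_m$ be i.i.d.\ random elements of $H$ drawn from this distribution, so that $\mathbb{E}[G_i]=\tilde f$ and $\Vert G_i\Vert_H\le R$ almost surely. Then I would compute, using independence and orthogonality of mean-zero terms in a Hilbert space,
\begin{equation}
\mathbb{E}\left\Vert \tilde f - \frac{1}{m}\sum_{i=1}^m G_i\right\Vert_H^2
= \frac{1}{m^2}\sum_{i=1}^m \mathbb{E}\Vert G_i - \tilde f\Vert_H^2
= \frac{1}{m}\bigl(\mathbb{E}\Vert G_1\Vert_H^2 - \Vert \tilde f\Vert_H^2\bigr)
\le \frac{R^2}{m}.
\end{equation}
By the probabilistic method, some realization $g_1,\dots,g_m\in\mathcal{G}$ of $(G_1,\dots,G_m)$ attains at most the expected value, giving $\Vert \tilde f - \tfrac{1}{m}\sum_i g_i\Vert_H \le R/\sqrt{m}$.

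Finally, a single application of the triangle inequality combines the approximation and sampling errors:
\begin{equation}
\left\Vert f - \frac{1}{m}\sum_{i=1}^m g_i\right\Vert_H
\le \Vert f - \tilde f\Vert_H + \left\Vert \tilde f - \frac{1}{m}\sum_{i=1}^m g_i\right\Vert_H
\le \frac{\epsilon}{\sqrt{m}} + \frac{R}{\sqrt{m}} = \frac{R+\epsilon}{\sqrt{m}}.
\end{equation}
There is no serious obstacle here: the only subtle point is calibrating the initial approximation tolerance to $\epsilon/\sqrt{m}$ (rather than $\epsilon$) so that both error contributions are on the same $1/\sqrt{m}$ scale, which is exactly what makes the stated bound $(R+\epsilon)/\sqrt{m}$ come out cleanly. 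The use of the Hilbert-space structure is essential for the variance computation via the Pythagorean identity; this is why the lemma is stated for Hilbert spaces.
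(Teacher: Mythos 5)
Your proof is correct and is exactly the standard Maurey sampling argument (variance computation in a Hilbert space followed by the probabilistic method and a triangle inequality), which is the "law of large numbers" argument the paper defers to the cited reference \cite{256500}; there is no substantive difference in approach.
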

\begin{proof}
This lemma is proved by using the law of large numbers. See \cite{256500} for details.
\end{proof}
\begin{proof}
(Proof of Theorem \ref{thm:approximation})
Due to the choice of $\mathcal{W}^0$ as linear functions, the constants of continuous embedding are $C_1,C_2=1$. By the fact that $C^{2,1}(\overline{\Omega})$ embeds continuously into $H^2(\Omega)$, we have $\Vert f \Vert_{H^2} \leq 2 \Vert f \Vert_{\mathcal{W}^L}$.

Recall that the unit ball of $\mathcal{W}^L$ is the closed convex hull of the class $\mathcal{H} = \left\{ \sigma(g) : \Vert g \Vert_{\mathcal{W}^L} = 1 \right\}$. Thus by Lemma, there exists $g_1,\cdots,g_m\in\mathcal{W}^{L-1}$ and $\epsilon_1,\cdots,\epsilon_m\in\left\{-1,1\right\}$ such that
\begin{equation}
\Vert f - \frac{1}{m}\sum_{i=1}^m \epsilon_i \sigma(g_i(\boldsymbol{x})) \Vert_{H^2} \leq \frac{3 \Vert f \Vert_{\mathcal{W}^L}}{\sqrt{m}}.
\end{equation}
If $L=1$, $g_i$ are linear functions and $u_m(\boldsymbol{x}) = \sum_{i=1}^m \frac{\epsilon_i}{m} \sigma(g_i(\boldsymbol{x}))$ is a two-layer neural netowrk. Thus the case $L=1$ in the theorem is proved.

We prove the remain by induction. Assume that the theorem has been proved for the case of $L - 1$. Then we note that $\Vert g_i \Vert_{\mathcal{W}^L} = 1$, so for $1 \leq i \leq m$ we can find a finite $L - 1$-layer network $\hat{g}_i$ such that
\begin{equation}
\begin{aligned}
\Vert f - \frac{1}{m}\epsilon_i \sigma(\hat{g}_i(\boldsymbol{x})) \Vert_{H^2} &\leq \Vert f - \frac{1}{m}\epsilon_i \sigma(g_i(\boldsymbol{x})) \Vert_{H^2} + \frac{1}{m} \sum_{i=1}^m \Vert g_i- \hat{g}_i \Vert_{H^2}\\
&\leq \frac{3}{\sqrt{m}} + \frac{m}{m} \frac{3(L-1)}{\sqrt{m}}\\
&= \frac{3L}{\sqrt{m}}.
\end{aligned}
\end{equation}
We merge the $m$ trees associated with $\hat{g}_i$ into a single tree, increasing the width of each layer by a factor of $m$, and add an outer layer of width $m$ with coefficients $W$.

For the remaining part of the theorem, we only need to apply Theorem \ref{thm:trace}.
\end{proof}

\section{Proofs of Rademacher Complexity}\label{section:appendix_b}
In this section, we provide proofs for Rademacher complexity. We first provide full details for the second order derivatives of neural networks.
\begin{equation}
\begin{aligned}
    &\ \frac{\partial^2 u_{\boldsymbol{\theta}}(\boldsymbol{x})}{\partial \boldsymbol{x}^2}\\
    &\overset{(1)}{=} \frac{\partial \text{vec}(\boldsymbol{W}^L \cdot \boldsymbol{\Phi}^{L-1} \boldsymbol{W}^{L-1} \cdot \dots \cdot \boldsymbol{\Phi}^1 \boldsymbol{W}^1)}{\partial \boldsymbol{x}}\\
    &\overset{(2)}{=}\sum_{l=1}^{L-1}\frac{\partial \text{vec} (\boldsymbol{W}^L \cdot \boldsymbol{\Phi}^{L-1} \boldsymbol{W}^{L-1} \cdot \dots \cdot \boldsymbol{\Phi}^1 \boldsymbol{W}^1)}{\partial \text{vec}(\boldsymbol{\Phi}^l)}\frac{\partial \text{vec}(\boldsymbol{\Phi}^l)}{\partial \boldsymbol{x}},\\
    &\overset{(3)}{=} \sum_{l=1}^{L-1} (\boldsymbol{W}^l\cdots \boldsymbol{\Phi}^1\boldsymbol{W}^1)^\mathrm{T} \otimes (\boldsymbol{W}^L\boldsymbol{\Phi}^{L-1}\cdots\boldsymbol{W}^{l+1})\frac{\partial \text{vec}(\boldsymbol{\Phi}^l)}{\partial \boldsymbol{x}}\\
    &\overset{(4)}{=} \left\{\sum_{l=1}^{L-1} (\boldsymbol{W}^l\cdots \boldsymbol{\Phi}^1\boldsymbol{W}^1)^\mathrm{T} \otimes (\boldsymbol{W}^L\boldsymbol{\Phi}^{L-1}\cdots\boldsymbol{W}^{l+1})\frac{\partial \text{vec}(\boldsymbol{\Phi}^l)}{\partial \boldsymbol{x}_j}\right\}_{1 \leq j \leq d}\\
    &= \{\sum_{l=1}^{L-1} (\boldsymbol{W}^l\cdots \boldsymbol{\Phi}^1\boldsymbol{W}^1)^\mathrm{T} \otimes (\boldsymbol{W}^L\boldsymbol{\Phi}^{L-1}\cdots\boldsymbol{W}^{l+1})\\
    &\qquad \frac{\partial \text{vec}(\text{diag}[\sigma'(\boldsymbol{W}^l\sigma(\cdots\boldsymbol{W}^1\boldsymbol{x}))])}{\partial \boldsymbol{x}_j}\}_{1 \leq j \leq d}\\
    &= \{\sum_{l=1}^{L-1} (\boldsymbol{W}^l\cdots \boldsymbol{\Phi}^1\boldsymbol{W}^1)^\mathrm{T} \otimes (\boldsymbol{W}^L\boldsymbol{\Phi}^{L-1}\cdots\boldsymbol{W}^{l+1})\\ &\qquad \text{vec}(\text{diag}[\frac{\partial\sigma'(\boldsymbol{W}^l\sigma(\cdots\boldsymbol{W}^1\boldsymbol{x}))}{\partial \boldsymbol{x}_j}])\}_{1 \leq j \leq d}\\
    &=\{\sum_{l=1}^{L-1} (\boldsymbol{W}^l\cdots \boldsymbol{\Phi}^1\boldsymbol{W}^1)^\mathrm{T} \otimes (\boldsymbol{W}^L\boldsymbol{\Phi}^{L-1}\cdots\boldsymbol{W}^{l+1})\\ &\qquad \text{vec}(\text{diag}[\boldsymbol{\Psi}^l\boldsymbol{W}^{l}\cdots\boldsymbol{\Psi}^1\boldsymbol{W}^1_{:,j})])\}_{1 \leq j \leq d}\\
    &\overset{(5)}{=}\left\{\sum_{l=1}^{L-1} 
    (\boldsymbol{W}^L\boldsymbol{\Phi}^{L-1}\cdots\boldsymbol{W}^{l+1})
    \text{diag}(\boldsymbol{\Psi}^l\boldsymbol{W}^l\cdots\boldsymbol{\Psi}^1\boldsymbol{W}^1_{:,j})
    (\boldsymbol{W}^l\cdots \boldsymbol{\Phi}^1\boldsymbol{W}^1)\right\}_{1 \leq j \leq d}.
\end{aligned}
\end{equation}
In (1), note that $\text{vec} (\boldsymbol{W}^L \cdot \boldsymbol{\Phi}^{L-1} \boldsymbol{W}^{L-1} \cdot \dots \cdot \boldsymbol{\Phi}^1 \boldsymbol{W}^1)\in\mathbb{R}^d,\boldsymbol{x}\in\mathbb{R}^d$, thus the result of (1) is in $\mathbb{R}^{d\times d}$. In (2), we apply the chain rule, with the first term $\in\mathbb{R}^{d\times m_l^2}$ and the second term $\in\mathbb{R}^{m_l^2\times d}$. In (3) we use the formula $\frac{\partial \text{vec}(AXB)}{\partial \text{vec}(X)}=B^\mathrm{T}\otimes A$, and the first term $\in\mathbb{R}^{d\times m_l}$, second $\in\mathbb{R}^{1\times m_l}$, third $\in\mathbb{R}^{m_l^2\times d}$. In (4), we decompose the calculation into dimensional-wise with $\frac{\partial\text{vec}(\boldsymbol{\Phi}^l)}{{\partial \boldsymbol{x}_j}}\in\mathbb{R}^{m_l^2}$. In (5), we use the fact that $B^\mathrm{T}\otimes A \text{vec}(X) = \text{vec}(AXB)$.

\subsection{Spectral Norm for Complexity}
In this subsection, we shall use a covering number approach to Rademacher complexity. The following lemma is the key to connect them.
\begin{lemma} \cite{bartlett2017spectrally}
Let $\mathcal{F}$ be a real-valued function class taking values in $[0,1]$, and assume that $\mathbf{0} \in \mathcal{F}$. Then
\begin{equation}
\text{Rad}\left(\mathcal{F};{ S}\right) \leq \inf _{\alpha>0}\left(\frac{4 \alpha}{\sqrt{n}}+\frac{12}{n} \int_{\alpha}^{\sqrt{n}} \sqrt{\log \mathcal{N}\left(\mathcal{F}_{ S}, \varepsilon,\|\cdot\|_{2,2}\right)} d \varepsilon\right),
\end{equation}
where $S$ is the dataset, and $\mathcal{F}_S$ is the set containing the image of the dataset $S$ under all mappings in $\mathcal{F}$.
\end{lemma}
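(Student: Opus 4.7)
The plan is to prove this classical Dudley-type entropy integral bound by a chaining argument over a sequence of successively finer covers of $\mathcal{F}_S$ under the empirical $\ell_{2,2}$ norm. First I would observe that, because $\mathcal{F}$ takes values in $[0,1]$, every $(f(x_1),\dots,f(x_n))\in\mathcal{F}_S$ lies in $[0,1]^n\subset\mathbb{R}^n$ and so has $\|\cdot\|_{2,2}$ norm at most $\sqrt{n}$; combined with $\mathbf{0}\in\mathcal{F}$, this lets me take the trivial single-element cover at scale $\sqrt{n}$ as the starting point of the chain.

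Next I would fix any $\alpha>0$ and define a geometrically decreasing sequence of scales $\varepsilon_0=\sqrt{n}\ge\varepsilon_1\ge\cdots\ge\varepsilon_N$ with $\varepsilon_j=2^{-j}\sqrt{n}$, stopping at the first $N$ with $\varepsilon_N\le\alpha$. At each scale I pick a minimum-cardinality $\varepsilon_j$-cover $T_j\subset\mathcal{F}_S$ in $\|\cdot\|_{2,2}$ and, for any $f\in\mathcal{F}$, write $\pi_j(f)\in T_j$ for its nearest element. The key telescoping identity is
\begin{equation}
f = \pi_N(f) + \sum_{j=1}^{N}\bigl(\pi_j(f)-\pi_{j-1}(f)\bigr) + \bigl(f-\pi_N(f)\bigr),
\end{equation}
where $\pi_0(f)=\mathbf{0}$. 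Plugging into the definition of $\mathrm{Rad}(\mathcal{F};S)$ and applying linearity of $\sup$ and expectation term-by-term decomposes the complexity into (i) chain-link contributions and (ii) a residual controlled by $\|f-\pi_N(f)\|_{2,2}\le\alpha$.

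The main workhorse step is Massart's finite-class lemma: since each link $\pi_j(f)-\pi_{j-1}(f)$ lies in a set of size at most $|T_j|\cdot|T_{j-1}|\le \mathcal{N}(\mathcal{F}_S,\varepsilon_j,\|\cdot\|_{2,2})^2$ with $\|\cdot\|_{2,2}$ norm at most $\varepsilon_j+\varepsilon_{j-1}\le 3\varepsilon_{j-1}$, Massart gives
\begin{equation}
\mathbb{E}_\epsilon\Bigl[\sup_{f\in\mathcal{F}} \tfrac{1}{n}\sum_i \epsilon_i\bigl(\pi_j(f)-\pi_{j-1}(f)\bigr)(x_i)\Bigr] \le \frac{3\varepsilon_{j-1}\sqrt{2\log \mathcal{N}(\mathcal{F}_S,\varepsilon_j,\|\cdot\|_{2,2})}}{n}.
\end{equation}
Summing over $j=1,\dots,N$ and bounding the resulting sum by a Riemann-sum comparison with the integral $\int_\alpha^{\sqrt{n}}\sqrt{\log\mathcal{N}(\mathcal{F}_S,\varepsilon,\|\cdot\|_{2,2})}\,d\varepsilon$ yields the $\tfrac{12}{n}$-prefactored integral term after absorbing the geometric-series constants. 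The residual at scale $\alpha$ is handled by Cauchy–Schwarz on the Rademacher average, giving $\mathbb{E}_\epsilon[\sup\tfrac{1}{n}\sum\epsilon_i(f-\pi_N f)(x_i)]\le \alpha/\sqrt{n}$, which together with a similar bound for the coarsest-scale term produces the $4\alpha/\sqrt{n}$ constant. Taking the infimum over $\alpha$ completes the proof.

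The main obstacle I anticipate is bookkeeping the constants so that they sharpen to exactly $4$ and $12$: the geometric-series sum for the chaining links, the $\sqrt{2}$ from Massart, and the factor gained (or lost) when the discrete sum $\sum_j \varepsilon_{j-1}\sqrt{\log\mathcal N(\varepsilon_j)}$ is upper-bounded by the continuous integral all interact, and tuning the cover ratio (here $2$) and the "starting" scale to get clean constants requires care. Everything else—Massart, chaining, Cauchy–Schwarz residual bound—is standard and pluggable.
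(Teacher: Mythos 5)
The paper does not actually prove this lemma: it is stated with a direct citation to Bartlett, Foster, and Telgarsky (2017), and the appendix then proceeds immediately to the truncated-network discussion without any argument for the Dudley bound itself. So there is no ``paper proof'' to compare against, but your proposed proof is indeed the standard chaining argument that appears in that cited reference, and the overall strategy --- geometric scales, proper covers, Massart's finite-class lemma per link, Cauchy--Schwarz for the residual, infimum over $\alpha$ --- is the right one.

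One genuine technical error to fix: your telescoping identity is incorrect as written. With $\pi_0(f)=\mathbf{0}$, the sum $\sum_{j=1}^N\bigl(\pi_j(f)-\pi_{j-1}(f)\bigr)$ already equals $\pi_N(f)$, so writing
\begin{equation}
f = \pi_N(f) + \sum_{j=1}^{N}\bigl(\pi_j(f)-\pi_{j-1}(f)\bigr) + \bigl(f-\pi_N(f)\bigr)
\end{equation}
actually evaluates to $f+\pi_N(f)$, not $f$. The correct decomposition is
\begin{equation}
f = \pi_0(f) + \sum_{j=1}^{N}\bigl(\pi_j(f)-\pi_{j-1}(f)\bigr) + \bigl(f-\pi_N(f)\bigr),
\end{equation}
which does reconstruct $f$ because $\pi_0(f)=\mathbf{0}$. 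This matters because the extra spurious $\pi_N(f)$ term would otherwise need its own Rademacher bound and would wreck the constants.

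Beyond that, there is a constant you are implicitly dropping: you correctly note the link set has cardinality at most $\mathcal{N}(\mathcal{F}_S,\varepsilon_j,\|\cdot\|_{2,2})^2$, but your Massart bound then only carries $\sqrt{2\log\mathcal{N}(\varepsilon_j)}$ rather than $\sqrt{2\log\mathcal{N}(\varepsilon_j)^2}=2\sqrt{\log\mathcal{N}(\varepsilon_j)}$. This factor of $2$, together with the $3\varepsilon_{j-1}/2$ (rather than $3\varepsilon_{j-1}$) diameter of a chain link and the geometric-to-integral comparison, is precisely what conspires to give the stated constants $4$ and $12$; you flag this as the anticipated difficulty, which is fair, but the dropped $\mathcal{N}^2\to\mathcal{N}$ factor is the specific place where your current write-up is inconsistent with itself. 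Fixing the telescoping identity and tracking the $\mathcal{N}^2$ factor through to the end would make this a complete and correct proof of the cited lemma.
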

We note that this lemma requires that the hypothesis is in the interval $[0,1]$. In practice, we can consider the class of truncated neural networks. More specifically, if the class of neural network is denoted $\mathcal{F}$, then we consider the following class:
\begin{equation}
\widehat{\mathcal{F}} = \mathcal{F}_+ + \mathcal{F}_{-},
\end{equation}
where $\mathcal{F}_+ = \left\{f: f \cap [0,1], f \in \mathcal{F}\right\}$ and $\mathcal{F}_- = \left\{f: f \cap [-1,0], f \in \mathcal{F}\right\}$. Then, the function class $\widehat{F}$ is bounded in the interval $[-1,1]$, which is suitable for prediction of the target function $u^*({\bx})$ which is bounded by 1. In addition, their Rademacher complexity have the relationship: $\text{Rad}(\widehat{\mathcal{F}}) \leq \text{Rad}(\mathcal{F}_+) + \text{Rad}(\mathcal{F}_-)$. Throughout this paper, we will adopt the truncated neural network function class unless specified.

\begin{definition}
(Matrix Covering) We use $\mathcal{N}(U, \epsilon, \Vert \cdot \Vert)$ to denote the least cardinality of any subset $V \subset U$ that covers $U$ at scale $\epsilon$ with norm $\Vert \cdot \Vert$, i.e.,
\begin{equation}
\sup_{A \in U}\min_{B \in V} \Vert A - B \Vert \leq \epsilon.
\end{equation}
\end{definition}

\begin{lemma}
Consider the Hilbert space $\mathcal{H}$ with the norm $\Vert \cdot \Vert_{\mathcal{H}}$. Let $U \in \mathcal{H}$ has the reresentation $U = \sum_{i=1}^d \alpha_i V_i$, where all $V_i \in \mathcal{H}$, and $\alpha_i > 0$. Then for any positive integer $k$, there exists a choice of non-negative integers $(k_1,k_2,\cdots,k_d)$, such that $\sum_{i=1}^d k_i = k$, and
\begin{equation}
\Vert U - \frac{\Vert \alpha \Vert_1}{k} \sum_{i=1}^d k_i V_i \Vert^2_{\mathcal{H}} \leq \frac{\Vert \alpha \Vert_1}{k}\sum_{i=1}^d \alpha_i \Vert V_i \Vert^2_{\mathcal{H}} \leq \frac{\Vert \alpha \Vert_1^2}{k} \max_{1 \leq i \leq d}\Vert V_i \Vert^2_{\mathcal{H}}.
\end{equation}
\end{lemma}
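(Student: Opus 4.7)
The plan is to prove this via a probabilistic (Maurey-type) argument, since the target object $\frac{\Vert \alpha \Vert_1}{k} \sum_{i=1}^d k_i V_i$ looks exactly like the empirical mean of $k$ samples drawn from a distribution placing mass $\alpha_i/\Vert\alpha\Vert_1$ on the atom $\Vert\alpha\Vert_1 V_i$. So first I would introduce the probability weights $p_i := \alpha_i/\Vert\alpha\Vert_1$ on $\{1,\dots,d\}$, and let $I_1,\dots,I_k$ be i.i.d.\ draws from this distribution. Define the random element
\begin{equation}
W := \frac{\Vert \alpha \Vert_1}{k} \sum_{j=1}^k V_{I_j} \in \mathcal{H}.
\end{equation}
A direct computation gives $\mathbb{E}[W] = \Vert\alpha\Vert_1 \sum_i p_i V_i = \sum_i \alpha_i V_i = U$, which is the crucial unbiasedness.

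Next I would compute the second moment of the error. By independence of the $I_j$ and the Hilbert-space identity $\mathbb{E}\Vert Z - \mathbb{E} Z\Vert_{\mathcal{H}}^2 = \mathbb{E}\Vert Z\Vert_{\mathcal{H}}^2 - \Vert \mathbb{E} Z\Vert_{\mathcal{H}}^2$ applied to sums,
\begin{equation}
\mathbb{E}\bigl\Vert U - W \bigr\Vert_{\mathcal{H}}^2 = \frac{\Vert\alpha\Vert_1^2}{k}\Bigl(\sum_{i=1}^d p_i \Vert V_i\Vert_{\mathcal{H}}^2 - \bigl\Vert \tfrac{1}{\Vert\alpha\Vert_1} U\bigr\Vert_{\mathcal{H}}^2\Bigr) \leq \frac{\Vert\alpha\Vert_1^2}{k}\sum_{i=1}^d p_i \Vert V_i\Vert_{\mathcal{H}}^2 = \frac{\Vert\alpha\Vert_1}{k}\sum_{i=1}^d \alpha_i \Vert V_i\Vert_{\mathcal{H}}^2.
\end{equation}
Since the expectation over realizations is at most this bound, there must exist a concrete realization $(i_1,\dots,i_k)$ whose corresponding $W$ satisfies the same inequality deterministically. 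Setting $k_i := \#\{j : i_j = i\}$ gives non-negative integers with $\sum_i k_i = k$ and $W = \frac{\Vert\alpha\Vert_1}{k}\sum_i k_i V_i$, which establishes the first inequality of the lemma.

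The final inequality $\frac{\Vert\alpha\Vert_1}{k}\sum_i \alpha_i \Vert V_i\Vert_{\mathcal{H}}^2 \leq \frac{\Vert\alpha\Vert_1^2}{k}\max_i \Vert V_i\Vert_{\mathcal{H}}^2$ is just Hölder (bound each $\Vert V_i\Vert^2$ by the maximum and use $\sum_i \alpha_i = \Vert\alpha\Vert_1$). I do not anticipate a serious obstacle here: the only mild subtlety is to be careful that the Hilbert-space variance identity gives equality (not just an inequality) for sums of independent centered vectors, and to justify passing from the expectation bound to an existential deterministic realization by the standard pigeonhole/probabilistic-method argument. The positivity of the $\alpha_i$ is only used to make $p_i$ a genuine probability distribution, so the argument goes through without any further regularity assumption on $\mathcal{H}$ beyond its being a Hilbert space.
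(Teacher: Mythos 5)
Your proof is correct and is precisely the Maurey empirical-method argument: the paper itself does not write out a proof but simply cites Pisier (1981) and Bartlett--Foster--Telgarsky (2017), both of which prove the lemma exactly as you do (sample i.i.d.\ from the distribution proportional to $\alpha$, compute the variance of the resulting Hilbert-space empirical average, and pass from the bound in expectation to a deterministic realization). So you have filled in what the paper defers to references, using the same approach those references take.
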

\begin{proof}
See \cite{pisier1981remarques,bartlett2017spectrally} for details.
\end{proof}

\begin{lemma}\label{lemma:1}
\cite{bartlett2017spectrally} Let conjugate exponents $(p, q)$ and $(r, s)$ be given with $p \leq 2$, as well as positive reals $(a, b, \epsilon)$, and positive integer $m$. Let matrix $X \in \mathbb{R}^{n \times d}$ be given with $\Vert X \Vert_{p,p} \leq b$, where $n$ is the number of training data, and $d$ is the input dimension. Then,
\begin{equation}
\log \mathcal{N}\Big( \left\{ XA: A \in \mathbb{R}^{d \times m}, \Vert A^\mathrm{T} \Vert_{q,s} \leq a \right\}, \epsilon, \Vert \cdot \Vert_{2,2} \Big) \leq \lceil \frac{a^2 b^2 m^{2/r}}{\epsilon^2} \rceil \log(2dm),
\end{equation}
where $m$ can be interpreted as the hidden dimension of the model.
\end{lemma}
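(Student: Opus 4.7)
The plan is to apply Maurey's sparsification lemma globally to a rank-one dictionary indexed by the triple (column of $X$, column of $A$, sign), rather than column-by-column, so that a single uniform scaling factor produces a cover by a finite, $A$-independent dictionary of the right size. The starting decomposition is
\[
XA \;=\; \sum_{j=1}^d \sum_{i=1}^m A_{ji}\, X_{:,j}\, e_i^\mathrm{T} \;=\; \sum_{j,i} |A_{ji}|\,\Vert X_{:,j}\Vert_2 \cdot V_{ji}^{\sigma(ji)},
\]
where $V_{ji}^{\pm} = \pm X_{:,j}\, e_i^\mathrm{T} / \Vert X_{:,j}\Vert_2$ has Frobenius norm one, $e_i$ is the $i$-th standard basis vector in $\mathbb{R}^m$, and $\sigma(ji)=\mathrm{sign}(A_{ji})$. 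This writes $XA$ as a nonnegative-weighted sum over a dictionary of $2dm$ unit-norm matrices, which is exactly the input required by the Maurey sparsification lemma stated earlier in this appendix.

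Next, I bound the total weight $L_A := \sum_{j,i}|A_{ji}|\,\Vert X_{:,j}\Vert_2$ by two successive H\"older inequalities. For each column index $i$,
\[
\sum_{j=1}^d |A_{ji}|\,\Vert X_{:,j}\Vert_2 \leq \Vert A_i\Vert_q \cdot \Vert (\Vert X_{:,j}\Vert_2)_{j=1}^d\Vert_p,
\]
and the second factor is at most $\Vert X\Vert_{p,p}\leq b$ via the power-mean inequality $(\sum_l X_{lj}^2)^{p/2} \leq \sum_l |X_{lj}|^p$, which is valid precisely for $p\leq 2$. Summing over $i$ and applying the $(r,s)$-H\"older inequality gives $\sum_i \Vert A_i\Vert_q \leq m^{1/r}\Vert A^\mathrm{T}\Vert_{q,s} \leq a\,m^{1/r}$. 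Combining, $L_A \leq L := ab\,m^{1/r}$ uniformly over all $A$ in the class being covered.

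Padding the dictionary with a dummy index $V_0=0$ that absorbs the residual weight $L-L_A$, so that the total weight equals exactly $L$ independent of $A$, the Maurey sparsification lemma supplies non-negative integers $(k_{ji}^\pm, k_0)$ summing to $k$ such that the approximator $\frac{L}{k}\sum_{j,i,\pm} k_{ji}^\pm V_{ji}^\pm$ lies within Frobenius distance $L/\sqrt{k}$ of $XA$. Choosing $k = \lceil a^2 b^2 m^{2/r}/\epsilon^2\rceil$ makes this distance at most $\epsilon$. The approximators are parameterized by multisets of size $k$ drawn from the augmented dictionary of $2dm+1$ elements, so there are at most $(2dm+1)^k$ candidates, yielding the bound $\log\mathcal{N} \leq \lceil a^2 b^2 m^{2/r}/\epsilon^2\rceil \log(2dm)$ after absorbing the additive $+1$ into the logarithm.

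The main obstacle is the two-H\"older bound on $L_A$. In particular, the power-mean inequality $\Vert(\Vert X_{:,j}\Vert_2)_j\Vert_p\leq \Vert X\Vert_{p,p}$ reverses direction for $p>2$, which is precisely why the hypothesis $p\leq 2$ is essential and the argument cannot be extended beyond that regime. A secondary technicality is making the Maurey scaling factor uniform in $A$: without the $V_0=0$ padding, the scale would be the $A$-dependent quantity $L_A$ and the resulting cover would be an $A$-indexed family rather than a single finite dictionary, spoiling the counting bound. This padding is the cleanest way to avoid an $(r,s)$-optimized per-column budget (which would otherwise require a union bound over $A$-dependent splittings of $\epsilon^2$ across columns), and it directly produces the advertised cardinality $(2dm)^k$ and the $\log(2dm)$ factor.
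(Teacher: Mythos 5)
Your proposal is correct and is, at bottom, the same Maurey-sparsification argument that the paper (following Bartlett--Foster--Telgarsky) uses: write $XA$ as a weighted sum over a rank-one dictionary built from the columns of $X$ and standard basis vectors, bound the total weight by $abm^{1/r}$, invoke the Maurey lemma to get a sparse approximant indexed by a multiset of size $k$, and count. The differences are implementation choices rather than a different route. You normalize each dictionary element by $\Vert X_{:,j}\Vert_2$ so it has unit Frobenius norm by construction, and the hypothesis $p\le 2$ then enters through the power-mean inequality $\Vert X_{:,j}\Vert_2\le\Vert X_{:,j}\Vert_p$ when bounding the total weight $L_A$ by two successive H\"older applications; the paper instead normalizes by $\Vert X_{:,j}\Vert_p$, uses $p\le 2$ to show $\max_i\Vert V_i\Vert_2\le 1$, and bounds the weight in one step via the duality $\langle\alpha,|A|\rangle\le\Vert\alpha\Vert_{p,r}\Vert A^\mathrm{T}\Vert_{q,s}$. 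Your explicit padding of the dictionary with $V_0=0$ to make the Maurey scale uniform is cleaner than the paper's implicit handling, which silently applies Maurey at the fixed scale $\bar a$ rather than the $A$-dependent scale $\Vert B\Vert_1$.

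One small slip worth noting: the padded dictionary has $2dm+1$ elements, so the multiset count is $\binom{2dm+k}{k}\le(2dm+1)^k$ and the logarithm factor is $\log(2dm+1)$, not $\log(2dm)$. The additive $+1$ does not ``absorb''; the resulting bound is very slightly weaker than the one stated (by a factor $\log(2dm+1)/\log(2dm)$, which is $1+O(1/(dm\log(dm)))$). The paper's unpadded count achieves $(2dm)^k$ on the nose but does so by sweeping the scale mismatch under the rug; to get the stated constant exactly one needs either a version of Maurey that allows an arbitrary fixed scale $\ge\Vert\alpha\Vert_1$, or a slightly different bookkeeping of multisets of size at most $k$ drawn from $dm$ unsigned base matrices with independent sign choices. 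None of this affects the downstream use of the lemma.
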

\begin{proof}
Fix the dataset matrix $X$, and construct $Y \in \mathbb{R}^{n \times d}$ by $Y_{:, j} := X_{:, j} / \Vert X_{:, j} \Vert_p$. We set the following quantites:
\begin{equation}
\begin{aligned}
N &:= 2dm \in \mathbb{N}^*,\\
k &:= \lceil \frac{a^2 b^2 m^{2/r}}{\epsilon^2} \rceil \in \mathbb{N}^*,\\
\overline{a} &:= a m^{1/r} \Vert X \Vert_p.
\end{aligned}
\end{equation}
Then, we define the following matrix sets:
\begin{equation}
\begin{aligned}
\{ V_1, V_2, \cdots, V_N\} &:= \left\{ g Y \boldsymbol{e}_i \boldsymbol{e}_j^{\mathrm{T}: g \in \{-1, 1\}}, i \in \{1,2,\cdots,d\}, j \in \{1,2,\cdots,m\}  \right\},\\
\mathcal{C} &:= \left\{ \frac{\overline{c}}{k} \sum_{i=1}^N k_i V_i : k_i \geq 0, \sum_{i=1}^N k_i = k \right\} = \left\{ \frac{\overline{c}}{k} \sum_{i=1}^N k_i V_i : k_i \geq 0, \sum_{i=1}^N k_i = k \right\}.
\end{aligned}
\end{equation}
We first note that since $p \leq 2$,
\begin{equation}
\max_i \Vert V_i \Vert_2 \leq \max_i \Vert Y \boldsymbol{e}_i \Vert_2 = \max_i \frac{\Vert X \boldsymbol{e}_i \Vert_2}{\Vert Y \boldsymbol{e}_i \Vert_p} \leq 1,
\end{equation}
where we use the fact that the matrix $\boldsymbol{e}_i \boldsymbol{e}_j^{\mathrm{T}}$ has only one non-zero entry at $(i, j)$.

We then show that $\mathcal{C}$ is the desired cover. Due to the construction of $\mathcal{C}$, its cardinal number $|\mathcal{C}| \leq N^k$, as we can interpret its construction as choose one from $V_i$ at each time and there are $k$ such choices to form an element in $\mathcal{C}$.

We consider a matrix $\Vert A \Vert_{q,s} \leq a$, and construct a covering element in $\mathcal{C}$ as follows. Let the matrix $\alpha \in \mathbb{R}^{d \times m}$, whose elements in the $j$-th row are all equal to $\Vert X_{:,j} \Vert_p$, then $XA = Y(\alpha \odot A)$.

\begin{equation}
\begin{aligned}
\|\alpha\|_{p, r} &=\left\|\left(\left\|\alpha_{:, 1}\right\|_{p}, \ldots,\left\|\alpha_{:, m}\right\|_{p}\right)\right\|_{r} \\
&=\left\|\left(\left\|\left(\left\|X_{:, 1}\right\|_{p}, \ldots,\left\|X_{:, d}\right\|_{p}\right)\right\|_{p}, \ldots,\left\|\left(\left\|X_{:, 1}\right\|_{p}, \ldots,\left\|X_{:, d}\right\|_{p}\right)\right\|_{p}\right)\right\|_{r} \\
&=m^{1 / r}\left\|\left(\left\|X_{:, 1}\right\|_{p}, \ldots,\left\|X_{:, d}\right\|_{p}\right)\right\|_{p}=m^{1 / r}\left(\sum_{j=1}^{d}\left\|X_{:, j}\right\|_{p}^{p}\right)^{1 / p} \\
&=m^{1 / r}\left(\sum_{j=1}^{d} \sum_{i=1}^{n} X_{i, j}^{p}\right)^{1 / p}=m^{1 / r}\|X\|_{p}.
\end{aligned}
\end{equation}
Define $B:=\alpha \odot A$, whereby using conjugacy of $\|\cdot\|_{p, r}$ and $\|\cdot\|_{q, s}$ gives
\begin{equation}
\|B\|_{1} \leq\langle\alpha,|A|\rangle \leq\|\alpha\|_{p, r}\|A\|_{q, s} \leq m^{1 / r}\|X\|_{p} a=\bar{a}.
\end{equation}
Consequently, $X A$ is equal to
$$
Y B=Y \sum_{i=1}^{d} \sum_{j=1}^{m} B_{i j} \mathbf{e}_{i} \mathbf{e}_{j}^{\top}=\|B\|_{1} \sum_{i=1}^{d} \sum_{j=1}^{m} \frac{B_{i j}}{\|B\|_{1}}\left(Y \mathbf{e}_{i} \mathbf{e}_{j}^{\top}\right) \in \bar{a} \cdot \operatorname{conv}\left(\left\{V_{1}, \ldots, V_{N}\right\}\right)
$$
where $\operatorname{conv}\left(\left\{V_{1}, \ldots, V_{N}\right\}\right)$ is the convex hull of $\left\{V_{1}, \ldots, V_{N}\right\}$.

Combining the preceding constructions with Lemma A.6 there exist nonnegative integers $\left(k_{1}, \ldots, k_{N}\right)$ with $\sum_{i} k_{i}=k$ with
$$
\left\|X A-\frac{\bar{a}}{k} \sum_{i=1}^{N} k_{i} V_{i}\right\|_{2}^{2}=\left\|Y B-\frac{\bar{a}}{k} \sum_{i=1}^{N} k_{i} V_{i}\right\|_{2}^{2} \leq \frac{\bar{a}^{2}}{k} \max _{i}\left\|V_{i}\right\|_{2}^2 \leq \frac{a^{2} m^{2 / r}\|X\|_{p}^{2}}{k} \leq \epsilon^{2}
$$
The desired cover element is thus $\frac{a}{k} \sum_{i} k_{i} V_{i} \in \mathcal{C}$.
\end{proof} 

We first revisite the definition of neural networks. Given weight matrices $\mathcal{W}=\left(W_{1}, \ldots, W_{L}\right)$, where $L$ is the network depth, we define the mapping $F_{\mathcal{W}}$, and more generally for $i \leq L$ define $\mathcal{W}_{1}^{i}:=\left(W_{1}, \ldots, W_{i}\right)$ and
\begin{equation}
F_{\mathcal{W}}(Z) = W_L\sigma\left(W_{L-1}  \cdots \sigma\left(W_{1} Z\right) \cdots\right),
\end{equation}
\begin{equation}
F_{\mathcal{W}_{1}^{i}}(Z):=\sigma\left(W_{i} \sigma\left(W_{i-1} \cdots \sigma\left(W_{1} Z\right) \cdots\right)\right),
\end{equation}
with the convention $F_{\emptyset}(Z)=Z$, where $Z$ is the input.

Define two sequences of matrix spaces $\mathcal{V}_{1}, \ldots, \mathcal{V}_{L}$ and $\mathcal{W}_{2}, \ldots, \mathcal{W}_{L+1}$, where $\mathcal{V}_{i}$ has a norm $|\cdot|_{i}$ and $\mathcal{W}_{i}$ has norm $||| \cdot |||_{i}$. Specifically, we choose all vector spaces as Euclidean spaces, and choose $|\cdot|_{i} = \Vert \cdot \Vert_{2,2}$ and $|||\cdot|||_{i} = \Vert \cdot \Vert_{2, 2}$.
The inputs $Z \in \mathcal{V}_{1} = \mathbb{R}^{d \times n}$ satisfy a norm constraint $\Vert Z \Vert_{\infty.\infty} \leq 1$, which means the absolute values of all entries in all data is not larger than $B$. Specifically, we are using $Z=X^{\mathrm{T}}$.

The linear operators $A_{i}: \mathcal{V}_{i} \rightarrow \mathcal{W}_{i+1}$ are associated with some operator norm $|A_{i}|_{i \rightarrow i+1} \leq c_{i}$, where we use $|A_{i}|_{i \rightarrow i+1} = \Vert A_i \Vert_{2}$, which is the spectral norm and satisfies:
\begin{equation}
\Vert A_i \Vert_{2} = \sup_{\Vert Z \Vert_{2,2} \leq 1} \Vert A_i Z \Vert_{2,2} := c_i.
\end{equation}
For the activation function, we consider the sine activation function, which is an 1-Lipschitz mappings $\sigma: \mathcal{W}_{i+1} \rightarrow \mathcal{V}_{i+1}$, having the Lipschitz constant $\rho_{i}=1$, measured with respect to norms $|\cdot|_{i} = \Vert \cdot \Vert_{\infty}$ and $||| \cdot |||_{i}= \Vert \cdot \Vert_{\infty}$. In other words, for any $z, z^{\prime} \in \mathcal{W}_{i+1}$, we have
\begin{equation}
|\sigma(z)-\sigma(z^{\prime})|_{i+1} = \Vert \sigma(z)-\sigma(z^{\prime}) \Vert_{2,2} \leq \Vert z-z^{\prime} \Vert_{2,2} = |||z-z^{\prime} |||_{i+1}.
\end{equation}
We will prove the following lemma on covering number.
\begin{lemma}\label{Lemma:Covering}
Let the resolutions for covering $(\epsilon_{1}, \ldots, \epsilon_{L})$ be given, and given the operator norm bounds $(c_{1}, \ldots, c_{L})$. Suppose the matrices $\mathcal{W}=\left(W_{1}, \ldots, W_{L}\right)$ lie within the set $\mathcal{B}_{1} \times \cdots \times \mathcal{B}_{L}$, where $\mathcal{B}_{i}$ are arbitrary classes with the property that each $W_{i} \in \mathcal{B}_{i}$ has $|W_{i}|_{i \rightarrow i+1} = \Vert W_{i} \Vert_{2} \leq c_{i}$. Lastly, let the dataset $Z = X^{\mathrm{T}}$ be given with $|Z|_{1} = \Vert Z \Vert_{\infty,\infty} \leq 1$. Then, letting
\begin{equation}
\tau:=\sum_{j= 1}^L \epsilon_{j} \rho_{j} \prod_{l=j+1}^{L} \rho_{l} c_{l} = \sum_{j= 1}^L \epsilon_{j} \prod_{l=j+1}^{L} c_{l},
\end{equation}
then the neural net images $\mathcal{H}_{Z}:=\left\{F_{\mathcal{W}}(Z): \mathcal{W} \in \mathcal{B}_{1} \times \cdots \times \mathcal{B}_{L}\right\}$ have covering number bound:
\begin{equation}
\begin{aligned}
&\quad \mathcal{N}(\mathcal{H}_{Z}, \tau,|\cdot|_{L+1})\\
&= \mathcal{N}(\mathcal{H}_{Z}, \tau,\Vert \cdot \Vert_{2,2})\\
&\leq \prod_{i=1}^{L} \sup_{W_{j} \in \mathcal{B}_{j}, \forall j < i} \mathcal{N}\left(\left\{W_{i} F_{\mathcal{W}_1^{i-1}}(Z): W_{i} \in \mathcal{B}_{i}\right\}, \epsilon_{i},||| \cdot |||_{i+1}\right)\\
&= \prod_{i=1}^{L} \sup_{W_{j} \in \mathcal{B}_{j}, \forall j < i} \mathcal{N}\left(\left\{W_{i} F_{\mathcal{W}_1^{i-1}}(Z): W_{i} \in \mathcal{B}_{i}\right\}, \epsilon_{i},\Vert \cdot \Vert_{2,2}\right).
\end{aligned}
\end{equation}
\end{lemma}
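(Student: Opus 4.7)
The plan is to construct a cover of $\mathcal{H}_Z$ layer by layer via a greedy telescoping procedure: pick cover elements $\hat{W}_i$ for each layer in sequence, then bound the total approximation error by propagating the per-layer cover error through the Lipschitz activations and operator-norm-bounded linear maps of the subsequent layers. This is the classical layer-peeling argument adapted to the Frobenius / $\Vert\cdot\Vert_{2,2}$ setup here.

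Concretely, fix any $\mathcal{W} = (W_1, \ldots, W_L) \in \mathcal{B}_1 \times \cdots \times \mathcal{B}_L$. Inductively, having chosen $\hat{W}_1, \ldots, \hat{W}_{i-1}$, let $A_{i-1} := F_{\hat{\mathcal{W}}_1^{i-1}}(Z)$ denote the post-activation matrix produced by the hybrid construction. By hypothesis, the matrix class $\{W_i A_{i-1} : W_i \in \mathcal{B}_i\}$ admits an $\epsilon_i$-cover in $\Vert \cdot \Vert_{2,2}$; I would choose $\hat{W}_i$ so that $\Vert W_i A_{i-1} - \hat{W}_i A_{i-1} \Vert_{2,2} \leq \epsilon_i$.

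For the error bound, introduce the hybrid networks $G_j := F_{(\hat{W}_1,\ldots,\hat{W}_j, W_{j+1},\ldots,W_L)}(Z)$, so $G_0 = F_{\mathcal{W}}(Z)$ and $G_L = F_{\hat{\mathcal{W}}}(Z)$. The telescoping identity gives
\begin{equation}
F_{\hat{\mathcal{W}}}(Z) - F_{\mathcal{W}}(Z) = \sum_{j=1}^L (G_j - G_{j-1}),
\end{equation}
where each $G_j - G_{j-1}$ arises purely from swapping $W_j$ with $\hat{W}_j$. At layer $j$ this swap contributes a $\Vert \cdot \Vert_{2,2}$-discrepancy of at most $\epsilon_j$ by the cover property; this discrepancy then passes through the remaining layers $l > j$, each an activation with Lipschitz constant $\rho_l$ composed with a linear map of operator norm at most $c_l$. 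Consequently $\Vert G_j - G_{j-1} \Vert_{2,2} \leq \epsilon_j \prod_{l=j+1}^L \rho_l c_l$, and summing in $j$ recovers the advertised scale $\tau$.

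For the cardinality, the greedy construction is injective in $(\hat{W}_1, \ldots, \hat{W}_L)$, so the number of reachable hybrid networks is at most the product over $i$ of the per-layer cover sizes. Because the layer-$i$ cover depends on the preceding choice of $\hat{\mathcal{W}}_1^{i-1}$ through $A_{i-1}$, I bound it uniformly by the supremum over $W_j \in \mathcal{B}_j$ for $j < i$; this is precisely the factor in the stated product. The main obstacle, and the point that distinguishes this argument from a naive product of independent per-layer covers, is exactly this coupling across layers: each layer's covering problem is shaped by the data matrix $A_{i-1}$ emerging from earlier approximations. Resolving it requires (i) phrasing the per-layer cover in terms of the map $W_i \mapsto W_i A_{i-1}$ rather than in terms of $W_i$ alone, and (ii) using the supremum over preceding weights to decouple the counting from the actual earlier choices, which is exactly what makes the per-layer counts (such as those supplied by Lemma~\ref{lemma:1}) combine multiplicatively into the claimed bound.
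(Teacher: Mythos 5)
Your proof is correct and uses the same layer-by-layer covering argument as the paper, differing only in bookkeeping: you telescope the final-output error through hybrid networks $G_j = F_{(\hat{W}_1,\ldots,\hat{W}_j,W_{j+1},\ldots,W_L)}(Z)$, whereas the paper runs a forward recursion on intermediate errors $|G_i - \hat{G}_i|_{i+1}$ and unrolls it at the end. Both hinge on the same two ideas the paper uses -- per-layer $\epsilon_i$-covers built on the already-approximated intermediates (so the covers are proper and the $\hat W_i$ exist), propagated through the remaining layers by the Lipschitz and operator-norm bounds, together with the supremum over $W_j\in\mathcal{B}_j$, $j<i$, to decouple the per-layer counts into the stated product.
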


\begin{proof}
In this paper, we consider the case when all $c_1 \geq 1$.
We first denote $\mathcal{F}_i$ as the covering set of the image set of all $i$-layer neural networks, which are inductively constructed as follows.
Choose an $\epsilon_{1}$-cover $\mathcal{F}_{1}$ of the set of one-layer neural networks $\left\{W_{1} Z: W_{1} \in \mathcal{B}_{1}\right\}$, then its cardinality satisfies
\begin{equation}
\left|\mathcal{F}_{1}\right| \leq \mathcal{N}\left(\left\{W_{1} Z: W_{1} \in \mathcal{B}_{1}\right\}, \epsilon_{1},||| \cdot |||_{2}\right)=: N_{1},
\end{equation}
by the definition of covering number. For every element $F \in \mathcal{F}_{i}$, which is a covering vector, we construct an $\epsilon_{i+1}$-cover $\mathcal{G}_{i+1}(F)$ of the set
\begin{equation}
\left\{W_{i+1} \sigma_{i}(F): W_{i+1} \in \mathcal{B}_{i+1}\right\},
\end{equation}
where $F$ is chosen and fixed. Since the covers are proper, i.e.,
\begin{equation}
F \in \mathcal{F}_i \subset \left\{W_{i} F_{\mathcal{W}_1^{i-1}}(Z): W_{j} \in \mathcal{B}_{j}, \forall j \leq i\right\},
\end{equation}
meaning that $F=W_{i} F_{W_1^{i-1}}(Z)$ for some matrices $\left(W_{1}, \ldots, W_{i}\right) \in$ $\mathcal{B}_{1} \times \cdots \times \mathcal{B}_{i}$. Then, we obtain
\begin{equation}
\left|\mathcal{G}_{i+1}(F)\right| \leq \sup _{\forall j \leq i . W_{j} \in \mathcal{B}_{j}} \mathcal{N}\left(\left\{W_{i+1} F_{W_{1}, \ldots, W_{i}}(Z): W_{i+1} \in \mathcal{B}_{i+1}\right\}, \epsilon_{i+1}, |||\cdot|||_{i+2}\right)=: N_{i+1}.
\end{equation}
Lastly we construct the cover
\begin{equation}
\mathcal{F}_{i+1}:=\bigcup_{F \in \mathcal{F}_{i}} \mathcal{G}_{i+1}(F),
\end{equation}
whose cardinality satisfies
\begin{equation}
\left|\mathcal{F}_{i+1}\right| \leq\left|\mathcal{F}_{i}\right| \cdot N_{i+1} \leq \prod_{l=1}^{i+1} N_{l}.
\end{equation}
Define $\mathcal{F}:=\left\{\sigma(F): F \in \mathcal{F}_{L}\right\}$. By construction, $\mathcal{F}$ satisfies the desired cardinality constraint. To show that it is indeed a cover, fix any $\left(W_{1}, \ldots, W_{L}\right)$ satisfying the above constraints, and for convenience define recursively the mapped elements
\begin{equation}
F_{1}=W_{1} X \in \mathcal{W}_{2}, \quad G_{i}=\sigma\left(F_{i}\right) \in \mathcal{V}_{i+1} \quad F_{i+1}=W_{i+1} G_{i} \in \mathcal{W}_{i+2}.
\end{equation}
The goal is to show the existence of $\hat{G}_{L} \in \mathcal{F}$ satisfying:
\begin{equation}
|G_{L}-\hat{G}_{L}|_{L+1} = \Vert G_{L}-\hat{G}_{L}\Vert_{2,2} \leq \tau
\end{equation}
To this end, inductively construct approximating elements $\left(\hat{F}_{i}, \hat{G}_{i}\right)$ as follows. The Base case: set $\hat{G}_{0}=X$ since $G_0 = X$. For other cases, choose $\hat{F}_{i} \in \mathcal{F}_{i}$ with $||| W_{i} \hat{G}_{i-1}-\hat{F}_{i}|||_{i+1} \leq \epsilon_{i}$, since $\mathcal{F}_i$ is an $\epsilon_i$ cover of the following set:
\begin{equation}
\left\{W_{i} F_{\mathcal{W}_1^{i-1}}(Z): W_{j} \in \mathcal{B}_{j}, \forall j \leq i\right\}.
\end{equation}
And set $\hat{G}_{i}:=\sigma(\hat{F}_{i})$. To complete the proof, it will be shown inductively that
\begin{equation}
\left|G_{i}-\widehat{G}_{i}\right|_{i+1} \leq \sum_{1 \leq j \leq i} \epsilon_{j} \rho_{j} \prod_{l=j+1}^{i} \rho_{l} c_{l}
\end{equation}
For the base case,
\begin{equation}
\left|G_{0}-\widehat{G}_{0}\right|_{1}=0
\end{equation}
For the inductive step, we obtain
\begin{equation}
\begin{aligned}
\left|G_{i+1}-\widehat{G}_{i+1}\right|_{i+2} & \leq \rho_{i+1}|||F_{i+1}-\widehat{F}_{i+1}|||_{i+2} \\
& \leq \rho_{i+1}|||F_{i+1}-W_{i+1} \widehat{G}_{i}|||_{i+2}+\rho_{i+1}||| W_{i+1} \widehat{G}_{i}-\widehat{F}_{i+1}|||_{i+2} \\
& \leq \rho_{i+1}\left|W_{i+1}\right|_{i+1 \rightarrow i+2}\left|G_{i}-\widehat{G}_{i}\right|_{i+1}+\rho_{i+1} \epsilon_{i+1} \\
& \leq \rho_{i+1} c_{i+1}\left(\sum_{j \leq i} \epsilon_{j} \rho_{j} \prod_{l=j+1}^{i} \rho_{l} c_{l}\right)+\rho_{i+1} \epsilon_{i+1} \\
&=\sum_{j \leq i+1} \epsilon_{j} \rho_{j} \prod_{l=j+1}^{i+1} \rho_{l} c_{l},
\end{aligned}
\end{equation}
where we note that $\prod_{l=i+1}^{i} \rho_{l} c_{l} = 1$.
\end{proof}
The whole-network covering bound in terms of spectral and $(2,1)$ norms now follows by the general norm covering number and the matrix covering lemma.
\begin{theorem}
Let spectral norm bounds $\left(s_{1}, \ldots, s_{L}\right)$, and matrix $(2,1)$ norm bounds $\left(b_{1}, \ldots, b_{L}\right)$ be given. Let data matrix $X \in \mathbb{R}^{n \times d}$ be given, where the $n$ rows correspond to data points. Let $\mathcal{H}_{X}$ denote the family of matrices obtained by evaluating $X$ with all choices of network $F_{\mathcal{W}}$, i.e., 
\begin{equation}
\mathcal{H}_{X}:=\left\{F_{\mathcal{W}}\left(X^{\mathrm{T}}\right): \mathcal{W}=\left(W_{1}, \ldots, W_{L}\right),\left\|W_{i}\right\|_{2} \leq s_{i},\left\|W_{i}\right\|_{2,1} \leq b_{i}\right\}.
\end{equation}
Then for any $\epsilon > 0$, we have the covering number bound:
\begin{equation}
\log \mathcal{N}\left(\mathcal{H}_{X}, \epsilon,\|\cdot\|_{2,2}\right) \leq \frac{nd  \log \left(2 h^{2}\right)}{\epsilon^{2}}\left(\prod_{j=1}^{L} s_{j}^{2}\right)\left(\sum_{i=1}^{L}\left(\frac{b_{i}}{s_{i}}\right)^{2 / 3}\right)^{3},
\end{equation}
where the network width is denoted $h$. 
\end{theorem}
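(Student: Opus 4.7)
The plan is to combine the multi-layer covering decomposition in Lemma \ref{Lemma:Covering} with the single-layer matrix covering bound in Lemma \ref{lemma:1}, and then optimize the per-layer resolutions to obtain the $(b_i/s_i)^{2/3}$ structure.

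First, I would apply Lemma \ref{Lemma:Covering} with $c_i = s_i$ and $\rho_i = 1$ (since the activation is $1$-Lipschitz), so that the aggregate resolution at the output is $\tau = \sum_{i=1}^L \epsilon_i \prod_{l=i+1}^L s_l$, and $\log\mathcal{N}(\mathcal{H}_X, \tau, \|\cdot\|_{2,2})$ is bounded by $\sum_{i=1}^L \log \mathcal{N}_i$, where $\mathcal{N}_i$ denotes the supremum over lower layers of the per-layer covering number at resolution $\epsilon_i$.

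Next, for each layer I would estimate $\mathcal{N}_i$ using Lemma \ref{lemma:1}. Writing $W_i F_{\mathcal{W}_1^{i-1}}(Z) = (F_{\mathcal{W}_1^{i-1}}(Z)^\mathrm{T} W_i^\mathrm{T})^\mathrm{T}$, I set the "data" matrix to be $F_{\mathcal{W}_1^{i-1}}(Z)^\mathrm{T}$ and the "parameter" matrix to be $W_i^\mathrm{T}$, and choose conjugate exponents $(p,q) = (2,2)$ and $(r,s) = (\infty,1)$. Then $\|A^\mathrm{T}\|_{q,s} = \|W_i\|_{2,1} \leq b_i$ and $m^{2/r} = 1$, while the data norm is the Frobenius norm, which I bound by $\|F_{\mathcal{W}_1^{i-1}}(Z)\|_{2,2} \leq \prod_{j<i} s_j \cdot \|Z\|_{2,2} \leq \sqrt{nd}\prod_{j<i} s_j$ using $1$-Lipschitzness of $\sigma$, the spectral norm bounds $\|W_j\|_2 \leq s_j$, and the entrywise bound $\|Z\|_{\infty,\infty}\leq 1$. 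Lemma \ref{lemma:1} then yields
\begin{equation}
\log \mathcal{N}_i \leq \frac{nd \, b_i^2 \prod_{j<i} s_j^2}{\epsilon_i^2}\log(2h^2).
\end{equation}

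Summing over layers gives
\begin{equation}
\log \mathcal{N}(\mathcal{H}_X,\tau,\|\cdot\|_{2,2}) \leq nd\log(2h^2) \sum_{i=1}^L \frac{b_i^2 \prod_{j<i} s_j^2}{\epsilon_i^2}.
\end{equation}
Finally, I would reparametrize $\alpha_i := \epsilon_i \prod_{l>i} s_l$, so that the constraint $\tau \leq \epsilon$ becomes $\sum_i \alpha_i \leq \epsilon$, and the sum inside becomes $\prod_{j=1}^L s_j^2 \cdot \sum_i (b_i/s_i)^2/\alpha_i^2$. Minimizing $\sum_i (b_i/s_i)^2/\alpha_i^2$ under $\sum_i \alpha_i = \epsilon$ (via a Lagrange multiplier, or equivalently Hölder's inequality) gives the optimal choice $\alpha_i \propto (b_i/s_i)^{2/3}$, producing the factor $\bigl(\sum_i (b_i/s_i)^{2/3}\bigr)^3/\epsilon^2$, which yields exactly the claimed bound.

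The main obstacle is the third step: carefully setting up the per-layer resolutions and running the Hölder optimization so that both the telescoping product of spectral norms and the $(2,1)$-norm contributions collect into the desired $\prod_j s_j^2 \cdot \bigl(\sum_i (b_i/s_i)^{2/3}\bigr)^3$ factor. Secondary care is needed in the per-layer bound to ensure that the Frobenius norm of $F_{\mathcal{W}_1^{i-1}}(Z)^\mathrm{T}$ is controlled correctly using only spectral norms and the $\|Z\|_{\infty,\infty}$ bound, since a naive propagation of $\|\cdot\|_{2,2}$ through $\sigma$ requires invoking $1$-Lipschitzness together with $\sigma(0)=0$ (absorbed as in the setting of the paper, where nets have no bias).
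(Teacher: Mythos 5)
Your proposal is correct and takes essentially the same route as the paper: combine Lemma \ref{Lemma:Covering} (with $c_i = s_i$, $\rho_i=1$) with the per-layer matrix covering bound of Lemma \ref{lemma:1} at exponents $p=2$, $s=1$, propagate the Frobenius bound $\|F_{\mathcal{W}_1^{i-1}}(Z)\|_{2,2}\leq \sqrt{nd}\prod_{j<i}s_j$, and then balance the per-layer resolutions so that $\alpha_i \propto (b_i/s_i)^{2/3}$. The only cosmetic difference is that you derive this allocation via Lagrange/H\"older optimization, whereas the paper simply posits $\alpha_i = \bar\alpha^{-1}(b_i/s_i)^{2/3}$ and verifies $\tau \le \epsilon$.
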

\begin{proof}
We set the matrix constraint sets as $\mathcal{B}_{i}=\left\{W_{i}:\left\|W_{i}\right\|_{2} \leq s_{i},\left\|W_{i}\right\|_{2,1} \leq b_{i}\right\}$, and lastly the per-layer cover resolutions $\left(\epsilon_{1}, \ldots, \epsilon_{L}\right)$ set according to
\begin{equation}
\epsilon_{i}:=\frac{\alpha_{i} \epsilon}{\rho_{i} \prod_{j>i} \rho_{j} s_{j}} = \frac{\alpha_{i} \epsilon}{ \prod_{j=i+1}^L s_{j}}, \quad \text {where} \quad \alpha_{i}:=\frac{1}{\bar{\alpha}}\left(\frac{b_{i}}{s_{i}}\right)^{2 / 3} \quad, \quad \bar{\alpha}:=\sum_{j=1}^{L}\left(\frac{b_{j}}{s_{j}}\right)^{2 / 3}.
\end{equation}
By this choice, it follows that the final cover resolution $\tau$ provided by Lemma satisfies
\begin{equation}
\tau \leq \sum_{j=1}^L \epsilon_{j} \prod_{l=j+1}^{L} s_{l}= \sum_{j=1}^L \alpha_{j} \epsilon=\epsilon
\end{equation}
To start, the covering number estimate from Lemma \ref{Lemma:Covering} can be combined with Lemma \ref{lemma:1} with $p=2, s=1$ to get
\begin{equation}
\begin{aligned}
&\quad \log \mathcal{N}\left(\mathcal{H}_{X}, \epsilon,\|\cdot\|_{2,2}\right) \\
&= \sum_{i=1}^{L} \sup_{W_{j} \in \mathcal{B}_{j}, \forall j < i} \log \left(\mathcal{N}\left(\left\{W_{i} F_{\mathcal{W}_1^{i-1}}(Z): W_{i} \in \mathcal{B}_{i}\right\}, \epsilon_{i},\Vert \cdot \Vert_{2,2}\right) \right)\\
&= \sum_{i=1}^{L} \sup_{W_{j} \in \mathcal{B}_{j}, \forall j < i} \log \left(\mathcal{N}\left(\left\{ F_{\mathcal{W}_1^{i-1}}(Z)^\mathrm{T}W_{i}^{\mathrm{T}}: W_{i} \in \mathcal{B}_{i}\right\}, \epsilon_{i},\Vert \cdot \Vert_{2,2}\right) \right)\\
&\leq \sum_{i=1}^{L} \sup_{W_{j} \in \mathcal{B}_{j}, \forall j < i} \frac{b_i^2\Vert F_{\mathcal{W}_1^{i-1}}(Z)^\mathrm{T}\Vert_{2,2}^2\log(2h^2)}{\epsilon_i^2}
\end{aligned}
\end{equation}
We can bound the intermediate outputs of the neural network as follows:
\begin{equation}
\begin{aligned}
\left\|F_{\left(W_{1}, \ldots, W_{i-1}\right)}\left(Z\right)^{\top}\right\|_{2,2} 
&=\left\|F_{\left(W_{1}, \ldots, W_{i-1}\right)}\left(Z\right)\right\|_{2,2} \\
&=\| \sigma\left(W_{i-1} F_{\left(W_{1}, \ldots, W_{i-2}\right)}\left(Z\right)\|_{2,2}\right.\\
& \leq \left\|W_{i-1} F_{\left(W_{1}, \ldots, W_{i-2}\right)}\left(Z\right)\right\|_{2,2} \\
& \leq \left\|W_{i-1}\right\|_{2}\left\|F_{\left(W_{1}, \ldots, W_{i-2}\right)}\left(Z\right)\right\|_{2,2},
\end{aligned}
\end{equation}
which by induction gives
\begin{equation}
\max _{j}\left\|F_{\left(W_{1}, \ldots, W_{i-1}\right)}\left(Z\right)^{\mathrm{T}}\right\|_{2,2} \leq\|Z\|_{2,2} \prod_{j=1}^{i-1} \left\|W_{j}\right\|_{2}.
\end{equation}
Combining all equations, and then expanding the choice of $\epsilon_{i}$ and collecting terms, we attain
\begin{equation}
\begin{aligned}
\log \mathcal{N}\left(\mathcal{H}_{X}, \epsilon,\|\cdot\|_{2}\right) & \leq \sum_{i=1}^{L} \sup _{\left(W_{1}, \ldots, W_{i-1}\right),j<i, W_{j} \in \mathcal{B}_{j}} \frac{b_{i}^{2}\|X\|_{2,2}^{2} \prod_{j<i} \left\|W_{j}\right\|_{2}^{2}}{\epsilon_{i}^{2}} \log \left(2 h^{2}\right) \\
& \leq \|X\|_{2,2}^{2}\sum_{i=1}^{L} \frac{b_{i}^{2}  \prod_{j<i} s_{j}^{2}}{\epsilon_{i}^{2}} \log \left(2 h^{2}\right) \\
&=\|X\|_{2,2}^{2}\frac{\log \left(2 h^{2}\right) \prod_{j=1}^{L} s_{j}^{2}}{\epsilon^{2}} \sum_{i=1}^{L} \frac{b_{i}^{2}}{\alpha_{i}^{2} s_{i}^{2}}\\
&=\|X\|_{2,2}^{2}\frac{\log \left(2 h^{2}\right) \prod_{j=1}^{L} s_{j}^{2}}{\epsilon^{2}} \sum_{i=1}^{L} \frac{\bar{\alpha}^2b_{i}^{2}}{ s_{i}^{2}} \cdot  \left(\frac{s_{i}^{2}}{b_{i}^{2}}\right)^{2/3} \\
&=\|X\|_{2,2}^{2}\frac{\log \left(2 h^{2}\right) \prod_{j=1}^{L} s_{j}^{2}}{\epsilon^{2}}\bar{\alpha}^2 \sum_{i=1}^{L} \left(\frac{b_{i}^{2}}{ s_{i}^{2}}\right)^{1/3} \\
&=\|X\|_{2,2}^{2}\frac{\log \left(2 h^{2}\right) \prod_{j=1}^{L}s_{j}^{2}}{\epsilon^{2}}\left(\bar{\alpha}^{3}\right)\\
&\leq nd\|X\|_{\infty,\infty}^{2}\frac{\log \left(2 h^{2}\right) \prod_{j=1}^{L}s_{j}^{2}}{\epsilon^{2}}\left(\bar{\alpha}^{3}\right)\\
\end{aligned}
\end{equation}
where we have used:
\begin{equation}
\epsilon_{i} = \frac{\alpha_{i} \epsilon}{ \prod_{j=i+1}^L s_{j}},  \quad \alpha_{i}=\frac{1}{\bar{\alpha}}\left(\frac{b_{i}}{s_{i}}\right)^{2 / 3}, \quad \bar{\alpha}:=\sum_{j=1}^{L}\left(\frac{b_{j}}{s_{j}}\right)^{2 / 3}.
\end{equation}
\end{proof}

\begin{lemma} 
For every $L$, and every set of $n$ points $S \subset \overline{\Omega}$, the hypothesis class $\mathcal{NN}^L_{M,N}$ given by the neural networks
\begin{equation}
\mathcal{NN}^L_{M,N} := \left\{\boldsymbol{x} \mapsto W_L \sigma (W_{L-1} \sigma(\cdots \sigma(W_1\boldsymbol{x})) )\ |\ \Vert W_l \Vert_{2} \leq M(l), \frac{\Vert W_l \Vert_{2,1}}{\Vert W_l \Vert_{2}} \leq N(l), \forall l \right\},
\end{equation}
satisfies the Rademacher complexity bound
\begin{equation}
\text{Rad}(\mathcal{NN}^L_{M,N};S) \leq   \frac{4}{n\sqrt{n}} + \frac{18\sqrt{d\log(2h^2)}\log n}{\sqrt{n}} \prod_{l=1}^L M(l)\Big(\sum_{l=1}^L N(l)^{2/3}\Big)^{3/2},
\end{equation} 
where $h$ is the maximal width of the neural network, i.e.,
\begin{equation}
h = \max(m_L, \cdots, m_0).
\end{equation}
\end{lemma}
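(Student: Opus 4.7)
The plan is to combine Dudley's entropy integral bound (the lemma already invoked in the spectral-norm subsection, which states $\mathrm{Rad}(\mathcal{F};S) \leq \inf_{\alpha>0}(4\alpha/\sqrt{n} + (12/n)\int_\alpha^{\sqrt{n}} \sqrt{\log \mathcal{N}(\mathcal{F}_S, \varepsilon, \|\cdot\|_{2,2})}\, d\varepsilon)$) with the whole-network covering number bound just proved. Concretely, I would identify $s_i = M(i)$ and $b_i/s_i = N(i)$ in that theorem, so that for $\mathcal{F} = \mathcal{NN}^L_{M,N}$ restricted to the dataset $S = \{x_1,\dots,x_n\} \subset [-1,1]^d$ (giving $\|X\|_{\infty,\infty} \leq 1$), the covering number satisfies
\begin{equation}
\log \mathcal{N}(\mathcal{F}_S, \varepsilon, \|\cdot\|_{2,2}) \leq \frac{n d \log(2h^2)}{\varepsilon^2} \Big(\prod_{l=1}^L M(l)\Big)^2 \Big(\sum_{l=1}^L N(l)^{2/3}\Big)^3.
\end{equation}
Note that although the covering theorem is stated for the full-dimensional output, the final layer in $\mathcal{NN}^L_{M,N}$ maps to $\mathbb R$, so evaluating at the $n$ sample points produces an element of $\mathbb R^n$ and the $\|\cdot\|_{2,2}$ cover directly controls the Euclidean cover of $\mathcal{F}_S$ needed by Dudley.

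Next I would substitute this bound into Dudley's integral. Writing $R := \prod_{l=1}^L M(l) \cdot (\sum_{l=1}^L N(l)^{2/3})^{3/2}$ and $A := \sqrt{n d \log(2h^2)}$, the integrand becomes $A R / \varepsilon$, so
\begin{equation}
\frac{12}{n}\int_\alpha^{\sqrt{n}} \frac{A R}{\varepsilon}\, d\varepsilon = \frac{12 A R}{n}\,\log\!\Big(\frac{\sqrt n}{\alpha}\Big).
\end{equation}
Choosing $\alpha = 1/n$ balances the two pieces of Dudley's bound: the $4\alpha/\sqrt n$ term contributes $4/(n\sqrt n)$, while the logarithmic factor becomes $\log(n^{3/2}) = \tfrac{3}{2}\log n$. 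Putting this together gives
\begin{equation}
\mathrm{Rad}(\mathcal{NN}^L_{M,N};S) \leq \frac{4}{n\sqrt n} + \frac{18 \sqrt{d\log(2h^2)}\,\log n}{\sqrt n}\, R,
\end{equation}
which is exactly the target bound once one uses $A/n = \sqrt{d\log(2h^2)/n}$.

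The main obstacle I expect is the mismatch between the $[0,1]$-valued hypothesis assumption in the Dudley-style lemma and the fact that outputs of $\mathcal{NN}^L_{M,N}$ are not bounded in $[0,1]$; this is precisely why the subsection introduces the truncated class $\widehat{\mathcal{F}} = \mathcal{F}_+ + \mathcal{F}_-$ and uses subadditivity of Rademacher complexity under sums, together with the fact that truncation (a $1$-Lipschitz contraction) does not increase covering numbers. A second minor wrinkle is verifying that the $\|X\|_{\infty,\infty}\leq 1$ hypothesis of the covering theorem applies here; this holds because $S \subset \overline{\Omega} = [-1,1]^d$, so the data matrix has entries bounded by $1$ in absolute value. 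Apart from these bookkeeping points, all other steps are direct consequences of the covering theorem and the standard Dudley argument.
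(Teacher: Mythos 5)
Your proposal follows the same route as the paper's proof: invoke the whole-network covering bound with $s_i = M(i)$, $b_i = N(i)\,s_i$, plug the resulting $\log\mathcal N \leq R/\epsilon^2$ into Dudley's entropy integral, evaluate the $1/\epsilon$ integral exactly, and pick $\alpha = 1/n$ to obtain $\log(n^{3/2}) = \tfrac{3}{2}\log n$, which produces the same constants $4/(n\sqrt n)$ and $18$. The remarks about the truncated class $\widehat{\mathcal F} = \mathcal F_+ + \mathcal F_-$ and the $\|X\|_{\infty,\infty}\leq 1$ hypothesis match the bookkeeping the paper already sets up earlier in the same subsection, so the argument is complete and correct.
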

\begin{proof}
Consider the covering number bound:
\begin{equation}
\log \mathcal{N}\left((\mathcal{NN}_{M,N}^L)_S, \epsilon,\|\cdot\|_{2}\right) \leq \frac{nd\log \left(2 h^{2}\right)}{ \epsilon^{2}}\left(\prod_{j=1}^{L} s_{j}^{2} \right)\left(\sum_{i=1}^{L}\left(\frac{b_{i}}{s_{i}}\right)^{2 / 3}\right)^{3}=: \frac{R}{\epsilon^{2}}.
\end{equation}
What remains is to relate covering numbers and Rademacher complexity via a Dudley entropy integral:
\begin{equation}
\text{Rad}(\mathcal{NN}^L_{M,N};S) \leq \inf _{\alpha>0}\left(\frac{4 \alpha}{\sqrt{n}}+\frac{12}{n} \int_{\alpha}^{\sqrt{n}} \sqrt{\frac{R}{\epsilon^{2}}} \mathrm{~d} \epsilon\right)=\inf _{\alpha>0}\left(\frac{4 \alpha}{\sqrt{n}}+\log (\sqrt{n} / \alpha) \frac{12 \sqrt{R}}{n}\right).
\end{equation}
The inf is uniquely minimized at $\alpha:=3 \sqrt{R / n}$, but the desired bound may be obtained by the simple choice $\alpha:=1 / n$, and plugging the resulting Rademacher complexity estimate:
\begin{equation}
\begin{aligned}
\text{Rad}(\mathcal{NN}^L_{M,N};S) &\leq  \frac{4}{n\sqrt{n}} + \log(n^{3/2})\frac{12\sqrt{R}}{n}\\
&= \frac{4}{n\sqrt{n}} +\frac{18\log n }{n}\sqrt{R}\\
&\leq \frac{4}{n\sqrt{n}} + \frac{18\sqrt{d\log(2h^2)}\log n}{\sqrt{n}} \prod_{l=1}^L M(l)\Big(\sum_{l=1}^L N(l)^{2/3}\Big)^{3/2}.
\end{aligned}
\end{equation}
\end{proof}

Recall the expressions of the differentiated PINNs:
\begin{equation}
\frac{\partial u_{\boldsymbol{\theta}}(\boldsymbol{x})}{\partial \boldsymbol{x}} = W_L\Phi_{L-1} W_{L-1} \dots \Phi_1 W_1 \in \mathbb{R}^{d},
\end{equation}
\begin{equation}
\frac{\partial^2 u_{\boldsymbol{\theta}}(\boldsymbol{x})}{\partial \boldsymbol{x}^2} =
\left\{\sum_{l=1}^{L-1} 
({W}_L{\Phi}_{L-1}\cdots{W}_{l+1})
\text{diag}({\Psi}_l{W}_l\cdots{\Psi}_1({W}_1)_{:,j})
({W}_l\cdots {\Phi}_1{W}_1)\right\}_{1 \leq j \leq d}.
\end{equation}
where
\begin{equation}
\Phi_l = \text{diag}[\sigma'(W_l\sigma(W^{l-1}\sigma(\cdots\sigma(W^1\boldsymbol{x})\cdots)] \in \mathbb{R}^{m_l\times m_l},
\end{equation}
\begin{equation}
\Psi_l = \text{diag}[\sigma''(W^l\sigma(W^{l-1}\sigma(\cdots\sigma(W^1\boldsymbol{x})\cdots)] \in \mathbb{R}^{m_l\times m_l}.
\end{equation}
The forwards passes for one input $\bx_i\in \mathbb{R}^d$ of the PINN model are:
\begin{equation}
W_L\Phi_{L-1} W_{L-1} \dots \Phi_1 W_1 b(\bx_i) \in \mathbb{R},
\end{equation}
\begin{equation}
\sum_{j=1}^d \sum_{l=1}^{L-1} 
{W}_L{\Phi}_{L-1}\cdots{W}_{l+1}
\text{diag}({\Psi}_l{W}_l\cdots{\Psi}_1({W}_1)_{:,j})
{W}_l\cdots {\Phi}_1{W}_1A_{:,j} \in \mathbb{R}.
\end{equation}
Now we begin to prove their Rademacher complexities. 
\begin{lemma}
The hypothesis class $\mathcal{F}_2$ given by
\begin{equation}
\mathcal{F}_2 = \left\{\bx \mapsto W_L\Phi_{L-1} W_{L-1} \dots \Phi_1 W_1 b(\bx)\ |\ \Vert W_l \Vert_{2} \leq M(l), \frac{\Vert W_l \Vert_{2,1}}{\Vert W_l \Vert_{2}} \leq N(l), \forall l \right\},
\end{equation}
satisfies the Rademacher complexity bound
\begin{equation}
\text{Rad}(\mathcal{F}_2;S) \leq \frac{4}{n\sqrt{n}} + \frac{18L\sqrt{2d\log(2h^2)}\log n}{\sqrt{n}}\left(\prod_{l=1}^L M(l)\right)^2 \left(\sum_{l=1}^L N(l)^{2/3}\right)^{3/2}.
\end{equation} 
\end{lemma}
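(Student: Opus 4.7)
The plan is to extend the covering-number argument behind Lemma~\ref{lemma:rad_nn} to the first-order differentiated class $\mathcal{F}_2$. The target quantity $f_\mathcal{W}(\bx) = W_L\Phi_{L-1}W_{L-1}\cdots\Phi_1 W_1 b(\bx)$ has a product structure with $L$ weight matrices alternating with $L-1$ diagonal activation derivatives $\Phi_l = \mathrm{diag}(\sigma'(z_l))$ of spectral norm at most $1$. The additional difficulty, absent in the plain-network case of Lemma~\ref{Lemma:Covering}, is that each $\Phi_l$ itself depends on $W_1,\ldots,W_l$ through the underlying forward pass, so perturbing a single $W_k$ affects both the direct $W_k$ factor in the product and every downstream $\Phi_l$ with $l\geq k$.

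First I would build, for each layer $l$ and fixed approximations to the previous layers, an $\epsilon_l$-cover of the affine class $\{W_l v : W_l\in\mathcal{B}_l\}$ using Lemma~\ref{lemma:1}, yielding per-layer log cardinality of order $b_l^2 \Vert v\Vert_{2,2}^2\log(2h^2)/\epsilon_l^2$. Then, following the telescoping induction of Lemma~\ref{Lemma:Covering}, I would replace $W_k$ by its cover element $\hat W_k$ sequentially. At step $k$ the perturbation of $f_\mathcal{W}(\bx)$ splits into a direct term $W_L\hat\Phi_{L-1}\cdots(W_k-\hat W_k)\cdots \hat\Phi_1\hat W_1\,b(\bx)$, bounded by $\epsilon_k\prod_{j\neq k}M(j)\Vert b\Vert$ exactly as in the plain-network case, and an indirect term $\sum_{l\geq k} W_L\cdots(\Phi_l-\hat\Phi_l)\cdots\hat W_1\,b(\bx)$. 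Using that $\sigma\in C^2$ so $\sigma'$ is Lipschitz, one obtains $\Vert\Phi_l-\hat\Phi_l\Vert_2 \lesssim \Vert z_l-\hat z_l\Vert_2 \lesssim \epsilon_k \prod_{j\leq l,\,j\neq k}M(j)$, and multiplying by the remaining factors produces an indirect contribution per downstream layer of order $\epsilon_k (\prod_j M(j))^2 \Vert b\Vert$. Summing over $l\geq k$ and applying $\Vert a+b\Vert^2\leq 2(\Vert a\Vert^2+\Vert b\Vert^2)$ accounts precisely for the extra $L$ and $\sqrt{2}$ factors in the stated bound.

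Finally I would optimize the per-layer resolutions by $\epsilon_l=\alpha_l\epsilon/\prod_{j>l}M(j)$ with $\alpha_l=\bar\alpha^{-1}N(l)^{2/3}$ and $\bar\alpha=\sum_j N(j)^{2/3}$, exactly as in the proof of the spectral covering theorem, using $\Vert b(\bx)\Vert_2\leq \sqrt d$ to absorb the input norm. Collecting terms gives a covering-number bound of order $2ndL^2\log(2h^2)(\prod_l M(l))^4 (\sum_l N(l)^{2/3})^3/\epsilon^2$, whose square root matches the stated Rademacher prefactor once plugged into Dudley's entropy integral with $\alpha=1/n$, exactly as at the end of the proof of Lemma~\ref{lemma:rad_nn}. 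The main obstacle is the indirect perturbation step: a naive bound on the cascade of $\Phi_l$ changes picks up a product of spectral norms at every downstream layer and would blow up exponentially in $L$. The crux is to exploit the $1$-Lipschitz property of $\sigma$ (so that $\Vert\Phi_l\Vert_2\leq 1$ uniformly and the $\hat\Phi_l$ do not amplify downstream errors) together with the Lipschitz property of $\sigma'$ to telescope this cascade into only a single extra factor of $\prod_l M(l)$ combined with a linear-in-$L$ sum over affected layers.
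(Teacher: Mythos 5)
Your proposal is correct and follows essentially the same route as the paper: a Dudley entropy integral on top of a layerwise covering argument that covers the plain intermediate activations (to control the diagonal factors $\Phi_l$) and the differentiated forward pass simultaneously, using that $\sigma'$ is $1$-Lipschitz and $\Vert\Phi_l\Vert_2\leq 1$ to telescope the cascade of downstream $\Phi_l$ perturbations into a single extra $\prod_l M(l)$ factor times a linear-in-$L$ sum, which is precisely how the paper obtains the $(\prod M(l))^2$ and $L$ prefactors. The only imprecision is that the per-layer resolutions must carry an additional $1/(L\prod_k M(k))$ normalization beyond the plain-network choice $\epsilon_l=\alpha_l\epsilon/\prod_{j>l}M(j)$ to ensure the final cover is at scale $\epsilon$ rather than $L(\prod_k M(k))\epsilon$, but since you already report the correct final covering-number bound the omission appears to be notational rather than substantive.
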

\begin{proof}
We only need to consider how to cover the set of PINNs. We first consider the first-order derivatives, i.e., how to cover:
\begin{equation}
\bx \in \mathbb{R}^d \mapsto W_L\Phi_{L-1} W_{L-1} \dots \Phi_1 W_1 b(\bx) \in \mathbb{R}.
\end{equation}
To do so, we consider the covering in Lemma \ref{Lemma:Covering}. Specifically, denote $\mathcal{F}_i$ as the $\epsilon_i$ covering of the following set:
\begin{equation}
\left\{W_{i} F_{\mathcal{W}_1^{i-1}}(Z): W_{j} \in \mathcal{B}_{j}, \forall j \leq i\right\},
\end{equation}
which is the set of all $i$-layer neural networks, with weight matrices in the set $\mathcal{B}_{j}$. In Lemma \ref{Lemma:Covering}, we have inductively constructed covers $\mathcal{F}_{1}, \ldots, \mathcal{F}_{L}$. Specifically, $\mathcal{F}_{1}$ is an $\epsilon_{1}$-cover of the set of one-layer neural networks $\left\{W_{1} Z: W_{1} \in \mathcal{B}_{1}\right\}$, and its cardinality satisfies
\begin{equation}
\left|\mathcal{F}_{1}\right| \leq \mathcal{N}\left(\left\{W_{1} Z: W_{1} \in \mathcal{B}_{1}\right\}, \epsilon_{1},\Vert \cdot \Vert_{2,2}\right).
\end{equation} 
For every element $F \in \mathcal{F}_{i}$, which is a covering vector of the set of image of all $i$-layer neural networks, we construct an $\epsilon_{i+1}$-cover $\mathcal{G}_{i+1}(F)$ of the set
\begin{equation}
\left\{W_{i+1} \sigma_{i}(F): W_{i+1} \in \mathcal{B}_{i+1}\right\},
\end{equation}
where $F$ is chosen and fixed. Then, we obtain
\begin{equation}
\left|\mathcal{G}_{i+1}(F)\right| \leq \sup _{\forall j \leq i . W_{j} \in \mathcal{B}_{j}} \mathcal{N}\left(\left\{W_{i+1} F_{W_{1}, \ldots, W_{i}}(Z): W_{i+1} \in \mathcal{B}_{i+1}\right\}, \epsilon_{i+1}, \Vert \cdot \Vert_{2,2}\right),
\end{equation}
where we note that the matrices $W_{1}^i$ are fixed. Lastly we construct the cover
\begin{equation}
\mathcal{F}_{i+1}:=\bigcup_{F \in \mathcal{F}_{i}} \mathcal{G}_{i+1}(F),
\end{equation}
whose cardinality satisfies
\begin{equation}
\left|\mathcal{F}_{i+1}\right| \leq\left|\mathcal{F}_{i}\right| \cdot N_{i+1} \leq \prod_{l=1}^{i+1} N_{l}.
\end{equation}
We consider how to cover the term $\Phi_i$ within the PINN. We note that each $\Phi_i = \Phi_i(Z_j)$ for the $j$-th data is a diagonal matrix:
\begin{equation}
\text{diag}(\Phi_i(Z_j)) = \sigma'(W_{i}\cdots\sigma(W_1Z_j)),
\end{equation}
where the operator diag means to take the diagonal of the matrix to form a vector, which is a $i$-layer neural network in particular. Therefore, $\mathcal{F}_{i}$ can cover $W_{i}\cdots\sigma(W_1Z)$ with error not larger than $\sum_{j \leq i} \epsilon_{j} \prod_{l=j+1}^{i}c_{l}$, and in particular the function class $\sigma'(\mathcal{F}_i)$ can cover $\sigma'(W_{i}\cdots\sigma(W_1Z))$ with error not larger than $\sum_{j \leq i} \epsilon_{j} \prod_{l=j+1}^{i}c_{l}$, where we use the fact that $\sigma'$ is 1-Lipschitz. 

Then, we consider the $\left(\prod_{k=1}^i c_k \right)\epsilon_i$-cover of the following set of intermediate output of a PINN:
\begin{equation}
\left\{W_{i} G_{\mathcal{W}_1^{i-1}}(Z): W_{j} \in \mathcal{B}_{j}, \forall j \leq i\right\},
\end{equation}
where
\begin{equation}
G_{\mathcal{W}_1^{i-1}}(Z) = \left[\Phi_{i-1}(z_j) W_{i-1} \dots \Phi_1(z_j) W_1 b(z_j)\right]_{j=1}^n \in \mathbb{R}^{m_{i-1} \times n}.
\end{equation}
We stress that the image of PINN should be computed data-wise, as each $\Phi_i(Z_j)$ is a matrix. More concretely, we shall inductively construct covers $\mathcal{G}_{1}, \ldots, \mathcal{G}_{L}$. Specifically, $\mathcal{G}_{1}$ is an $c_1 \epsilon_{1}$-cover of the set of one-layer neural networks $\left\{W_{1} b(Z) \in \mathbb{R}^{m_1 \times n}: W_{1} \in \mathcal{B}_{1}\right\}$, and its cardinality satisfies
\begin{equation}
\left|\mathcal{G}_{1}\right| \leq \mathcal{N}\left(\left\{W_{1} b(Z): W_{1} \in \mathcal{B}_{1}\right\}, c_1\epsilon_{1},\Vert \cdot \Vert_{2,2}\right).
\end{equation} 
For every element $G \in \mathcal{G}_{i}$, and $F \in \mathcal{F}_i$, we construct an $\prod_{k=1}^{i+1}c_k\epsilon_{i+1}$-cover of the set
\begin{equation}
\mathcal{H}(F, G) = \left\{\left[W_{i+1}\text{diag}(\sigma'\left(F(Z_j)\right))G\right]_{j=1}^n \in \mathbb{R}^{m_{i+1} \times n}: W_{i+1} \in \mathcal{B}_{i+1}\right\},
\end{equation}
where $G$ and $F$ are chosen and fixed. We denote
\begin{equation}
\mathcal{G}_{i+1} := \bigcup_{F \in \mathcal{F}_{i}, G \in \mathcal{G}_{i}}\mathcal{H}\left(F,G\right).
\end{equation}
whose cardinality satisfies
\begin{equation}
|\mathcal{H}(F,G)| \leq \sup _{\forall j \leq i, W_{j} \in \mathcal{B}_{j}} \mathcal{N}\left(\left\{W_{i+1} G_{W_{1}^{i}}(Z)\right\}, \prod_{k=1}^{i+1}c_k\epsilon_{i+1}, \Vert\cdot\Vert_{2,2}\right):=M_{i+1},
\end{equation}
where $\forall j \leq i, W_{j} \in \mathcal{B}_{j}$ are fixed. Consequently, its cardinality can be bounded as follows
\begin{equation}
|\mathcal{G}_{i+1}| \leq \prod_{l=1}^{i+1} M_{l} N_{l}.
\end{equation}
After the construction, we shall show that how the function class can cover the PINN before the $l$-th layer with an error no larger than $l\left(\prod_{k=1}^l c_k\right) \sum_{j=1}^l \epsilon_{j}  \prod_{k=j+1}^{l} c_{k}$, which will be shown inductively. For the first layer, we have
\begin{equation}
\Vert G_1 - {W}_1 b(Z) \Vert_{2,2} \leq c_1\epsilon_1,
\end{equation}
for some $G_1 \in \mathcal{G}_1$, due to its definition. For general cases, consider the following inequalities:
\begin{equation}
\begin{aligned}
&\quad \Vert\hat{\Phi}_{l} \hat{G}_{l} - {\Phi}_{l} \cdots {\Phi}_1 {W}_1 b(Z) \Vert_{2,2}\\
&=\Vert\left[ \hat{\phi}_{l}(Z_j) \odot \hat{G}_{l}(Z_j) - {\phi}_{l}(Z_j) \odot W_{l}\cdots {\Phi}_1(Z_j) {W}_1 b(Z_j) \right]_{j=1}^n\Vert_{2,2}\\ 
&\leq \Vert \left[ (\hat{\phi}_{l}(Z_j) -  {\phi}_{l}(Z_j)) \odot \hat{G}_{l}(Z_j)\right]_{j=1}^n \Vert_{2,2} + \Vert \left[ {\phi}_{l}(Z_j)\odot(\hat{G}_l - {W}_{l}\cdots {\Phi}_1 {W}_1 b(Z_j)) \right]_{j=1}^n\Vert_{2,2}\\
&\leq \Vert (\hat{\phi}_{l}(Z) - \phi_{l}(Z) ) \Vert_{2,2}\Vert \hat{G}_{l}(Z) \Vert_{\infty,\infty}+\Vert  {\phi}_{l}(Z)\Vert_{\infty,\infty}\Vert\hat{G}_{l} - {W}_{l}\cdots {\Phi}_1 {W}_1 b(Z)) \Vert_{2,2}\\
&\leq \left(\prod_{k=1}^l c_k\right) \sum_{j=1}^l \epsilon_{j}  \prod_{k=j+1}^{l} c_{k} + 
l\left(\prod_{k=1}^l c_k\right) \sum_{j=1}^l \epsilon_{j}  \prod_{k=j+1}^{l} c_{k}\\
&= (l+1)\left(\prod_{k=1}^l c_k\right) \sum_{j=1}^l \epsilon_{j}  \prod_{k=j+1}^{l} c_{k},
\end{aligned}
\end{equation}
where $\hat{\Phi}_l \in \mathcal{F}_l$. Furthermore, consider the new $W_{l+1}$, we attain
\begin{equation}
\begin{aligned}
&\quad \Vert\hat{G}_{l+1} - W_{l+1}{\Phi}_{l} \cdots {\Phi}_1 {W}_1 b(Z) \Vert_{2,2}\\
&\leq \Vert\hat{G}_{l+1} - W_{l+1}\hat{\Phi}_{l} \hat{G}_{l-1} \Vert_{2,2} + \Vert W_{l+1}\hat{\Phi}_{l} \hat{G}_{l-1} - W_{l+1}{\Phi}_{l} \cdots {\Phi}_1 {W}_1 b(Z)\Vert_{2,2}\\
&\leq \left(\prod_{k=1}^{l+1} c_k\right) \epsilon_{l+1} + (l+1)\left(\prod_{k=1}^{l+1} c_k\right) \sum_{j=1}^l \epsilon_{j}  \prod_{k=j+1}^{l} c_{k}\\
&\leq (l+1)\left(\prod_{k=1}^{l+1} c_k\right) \sum_{j=1}^{l+1} \epsilon_{j}  \prod_{k=j+1}^{l+1} c_{k}.
\end{aligned}
\end{equation}
By the same logic, we have
\begin{equation}
\begin{aligned}
&\quad \log \mathcal{N}\left(\mathcal{H}_{X}, \tau,\|\cdot\|_{2,2}\right) \\
&= \sum_{i=1}^{L} \sup_{W_{j} \in \mathcal{B}_{j}, \forall j < i} \log \left(\mathcal{N}\left(\left\{W_{i} F_{\mathcal{W}_1^{i-1}}(Z)\right\},  \epsilon_{i},\Vert \cdot \Vert_{2,2}\right) \right) + \sup_{W_{j} \in \mathcal{B}_{j}, \forall j < i} \log \left(\mathcal{N}\left(\left\{W_{i} G_{\mathcal{W}_1^{i-1}}(Z)\right\}, \prod_{k=1}^i c_i \epsilon_{i},\Vert \cdot \Vert_{2,2}\right) \right)\\
&\leq \sum_{i=1}^{L} \sup_{W_{j} \in \mathcal{B}_{j}, \forall j < i} \frac{ b_i^2\Vert F_{\mathcal{W}_1^{i-1}}(Z)^\mathrm{T}\Vert_{2,2}^2\log(2h^2)}{\epsilon_i^2} + \sup_{W_{j} \in \mathcal{B}_{j}, \forall j < i} \frac{ b_i^2\Vert G_{\mathcal{W}_1^{i-1}}(Z)^\mathrm{T}\Vert_{2,2}^2\log(2h^2)}{\left(\prod_{k=1}^{i} c_k\right)^2\epsilon_i^2}.
\end{aligned}
\end{equation}
Set $\mathcal{B}_{i}=\left\{W_{i}:\left\|W_{i}\right\|_{2} \leq s_{i},\left\|W_{i}\right\|_{2,1} \leq b_{i}\right\}$, and set the per-layer cover resolutions $\left(\epsilon_{1}, \ldots, \epsilon_{L}\right)$ as
\begin{equation}
\epsilon_{i}:=\frac{\alpha_{i} \epsilon}{L \left(\prod_{k=1}^L s_k\right)\prod_{j>i}  s_{j}} \quad \text { where } \quad \alpha_{i}:=\frac{1}{\bar{\alpha}}\left(\frac{b_{i}}{s_{i}}\right)^{2 / 3} \quad, \quad \bar{\alpha}:=\sum_{j=1}^{L}\left(\frac{b_{j}}{s_{j}}\right)^{2 / 3}.
\end{equation}
By this choice, it follows that the final cover resolution $\tau$ provided by Lemma satisfies
\begin{equation}
\begin{aligned}
\tau 
&\leq L\left(\prod_{k=1}^L s_k\right) \sum_{j=1}^L \epsilon_{j}  \prod_{k=j+1}^{L} s_{k}\\
&\leq  \sum_{i=1}^L \frac{\alpha_{i} \epsilon}{\prod_{j>i}  s_{j}} \prod_{l=j+1}^{L} s_{l}\\
&\leq \sum_{j \leq L} \alpha_j \epsilon\\
&= \epsilon.
\end{aligned}
\end{equation}
Therefore, the covering number bound can be written as:
\begin{equation}
\begin{aligned}
&\quad \log \mathcal{N}\left(\mathcal{H}_{X}, \epsilon,\|\cdot\|_{2}\right) \\
& \leq  \sum_{i=1}^{L} \sup_{A_{j} \in \mathcal{B}_{j}, \forall j < i} \frac{ b_i^2\Vert F_{\mathcal{A}_1^{i-1}}(Z)^\mathrm{T}\Vert_{2,2}^2\log(2h^2)}{\epsilon_i^2} + \sup_{A_{j} \in \mathcal{B}_{j}, \forall j < i} \frac{ b_i^2\Vert G_{\mathcal{A}_1^{i-1}}(Z)^\mathrm{T}\Vert_{2,2}^2\log(2h^2)}{\left(\prod_{k=1}^{i} c_k\right)^2\epsilon_i^2} \\
&\leq \sum_{i=1}^{L} \frac{ b_i^2nd\log(2h^2)}{\epsilon_i^2} + \frac{ b_i^2\left(\prod_{k=1}^{i} s_k\right)^2\Vert Z \Vert_{2,2}^2\log(2h^2)}{\left(\prod_{k=1}^{i} s_k\right)^2\epsilon_i^2} \\
&\leq 2nd\log \left(2 h^{2}\right) \sum_{i=1}^{L} \frac{ b_i^2\log(2h^2)}{\epsilon_i^2}\\
&\leq 2nd\log \left(2 h^{2}\right)L^2\left(\prod_{k=1}^L s_k\right)^2  \sum_{i=1}^{L} \frac{ b_i^2\left(\prod_{j>i}  s_{j}\right)^2}{\epsilon^2}\\
&\leq2nd\log \left(2 h^{2}\right)L^2\left(\prod_{k=1}^L s_k\right)^4  \left(\bar{\alpha}^{3}\right)/\epsilon^2.
\end{aligned}
\end{equation}
Consequently, the Rademacher complexity of the function class constructed by all first-order derivatives is bounded by
\begin{equation}
\frac{4}{n\sqrt{n}} + \frac{18L\sqrt{2d\log(2h^2)}\log n}{\sqrt{n}}\left(\prod_{l=1}^L M(l)\right)^2 \left(\sum_{l=1}^L N(l)^{2/3}\right)^{3/2}.
\end{equation} 
\end{proof}

\begin{lemma}
The hypothesis class $\mathcal{F}_1$ given by
\begin{equation}
\begin{aligned}
\mathcal{F}_1 &= \left\{\bx \mapsto \sum_{j=1}^d \sum_{l=1}^{L-1} 
{W}_L{\Phi}_{L-1}\cdots{W}_{l+1}
\text{diag}({\Psi}_l{W}_l\cdots{\Psi}_1({W}_1)_{:,j})
{W}_l\cdots {\Phi}_1{W}_1A_{:,j}(\bx)  \right\},
\end{aligned}
\end{equation}
where the weight matrices satisfy
\begin{equation}
\Vert W_l \Vert_{2} \leq M(l), \frac{\Vert W_l \Vert_{2,1}}{\Vert W_l \Vert_{2}} \leq N(l), \forall l,
\end{equation}
satisfies the Rademacher complexity bound
\begin{equation}
\text{Rad}(\mathcal{F}_1;S) \leq \frac{4}{n\sqrt{n}} + \frac{18(L+1)\sqrt{2d\log(2h^2)}\log n}{\sqrt{n}}\left(\prod_{l=1}^L M(l)\right)^3 \left(\sum_{l=1}^L N(l)^{2/3}\right)^{3/2}.
\end{equation} 
\end{lemma}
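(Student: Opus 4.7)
The plan is to extend the inductive covering-number argument used for $\mathcal{F}_2$ in the preceding lemma. The essential observation is that each summand in the mapping defining $\mathcal{F}_1$ is a triple product of matrix factors: an upper factor $W_L \Phi_{L-1} \cdots W_{l+1}$, a middle diagonal factor $\operatorname{diag}(\Psi_l W_l \cdots \Psi_1 (W_1)_{:,j})$, and a lower factor $W_l \cdots \Phi_1 W_1 A_{:,j}(\bx)$. The first-order case treated by $\mathcal{F}_2$ involves only a double product and produced the factor $\bigl(\prod_l M(l)\bigr)^2$ in its bound, so extending the same strategy to a triple product naturally produces the cubic factor $\bigl(\prod_l M(l)\bigr)^3$ appearing in the claim.

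First I would construct three cascades of nested covers. Two of them coincide with the cascades $\{\mathcal{F}_i\}$ and $\{\mathcal{G}_i\}$ already built in the $\mathcal{F}_2$ argument, which handle the forward-pass outputs and the first-derivative-like intermediate objects respectively. The new ingredient is a cascade $\{\mathcal{H}_i\}$ covering the $\Psi$-chain $\Psi_l W_l \cdots \Psi_1 (W_1)_{:,j}$ needed to approximate the middle diagonal factor. Since $\sigma''$ is bounded and Lipschitz under the smoothness assumptions on $\sigma$ underlying the generalized Barron space, $\Psi_i$ can be covered by essentially the same telescoping argument used for $\Phi_i$ in the $\mathcal{F}_2$ proof, with each layer's cover obtained by invoking Lemma~\ref{lemma:1} applied to the matrix of intermediate activations.

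Next I would combine the three covers via the triangle inequality, replacing one factor at a time. Each replacement contributes an error equal to the cover resolution of that factor times the spectral norms of the two remaining factors, which is where the cubic-in-$\prod M(l)$ scaling enters. I would then balance the per-layer resolutions using a schedule of the form $\epsilon_i = \alpha_i \epsilon / \bigl( (L+1) (\prod_k s_k)^2 \prod_{j > i} s_j \bigr)$, with $\alpha_i = (b_i/s_i)^{2/3}/\bar{\alpha}$ chosen exactly as in the $\mathcal{F}_2$ proof so that contributions across layers are equalised and so that the factor $\bigl(\sum_l N(l)^{2/3}\bigr)^{3/2}$ emerges from the $\bar{\alpha}^3$ term. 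Summing the per-$(j,l)$ bound over $j \in \{1,\ldots,d\}$ and $l \in \{1,\ldots,L-1\}$ produces the $d$-factor inside the square root and the $(L+1)$ prefactor respectively.

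Finally, I would apply Dudley's entropy integral with the choice $\alpha = 1/n$, exactly as in the $\mathcal{F}_2$ proof, to convert the covering-number bound into the stated Rademacher complexity bound. The main obstacle is the bookkeeping: one must verify that the triple-factor error estimate telescopes correctly when all three resolution cascades are run simultaneously, and that the $\Psi$-chain cover can be constructed at essentially the same per-layer cost as the $\Phi$-chain cover, so that no extra powers of $M(l)$ or $N(l)$ leak into the final expression beyond those stated.
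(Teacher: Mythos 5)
Your plan mirrors the paper's argument almost exactly: build the nested covering cascades $\{\mathcal{F}_i\}$, $\{\mathcal{G}_i\}$ (and a third for the $\Psi$-chain, which the paper handles a bit less explicitly by noting the $\Omega_i$ term can be covered ``by the same logic'' as a first-order-derivative object), telescope the approximation error through the triple product to get the cubic $\bigl(\prod_l M(l)\bigr)^3$, tune the per-layer resolutions with the $(b_i/s_i)^{2/3}$ schedule with an extra $(\prod_k s_k)^2$ in the denominator, and finish with Dudley's entropy integral. That is the proof.

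One piece of your bookkeeping is off, and it is worth noting because it touches a point the paper itself only waves at. You claim that summing over $j\in\{1,\dots,d\}$ and $l\in\{1,\dots,L-1\}$ ``produces the $d$-factor inside the square root and the $(L+1)$ prefactor.'' That is not where those factors come from. The $d$ inside $\sqrt{2d\log(2h^2)}$ is already present in the single-term covering number (it is the $nd$ in Lemma~\ref{lemma:1}, coming from the data matrix $X\in\mathbb R^{n\times d}$), and the $(L+1)$ prefactor comes from the $2l$-type telescoping constants, again for a single $(j,l)$ term. If one accounts for the double sum by a shared cover (same weight matrices cover all $d(L-1)$ summands simultaneously, errors add), one needs per-term resolution $\epsilon/(d(L-1))$, which multiplies $\sqrt{\log\mathcal N}$ and hence the Rademacher bound by $d(L-1)$. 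That is exactly the extra $d(L^2-1)=(L+1)\cdot d(L-1)$ appearing in the bracket of the combined Lemma~\ref{lemma:rad_pinn}. The appendix statement you are proving quietly drops this factor, and your misattribution papers over that same gap; if you trace the double sum through your own scheme honestly, you will reproduce the $d(L^2-1)(\prod_l M(l))^3$ scaling of Lemma~\ref{lemma:rad_pinn} rather than the stated $(L+1)(\prod_l M(l))^3$.
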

\begin{proof}
We consider the $d(L-1)$ terms one-by-one, and focus on the following term in particular:
\begin{equation}
{W}_L{\Phi}_{L-1}\cdots{W}_{i+1}
\text{diag}({\Psi}_i{W}_i\cdots{\Psi}_1({W}_1)_{:,j})
{W}_i\cdots {\Phi}_1{W}_1A_{:,j}(Z),
\end{equation}
where we recall that $A_{:j}$ is the $j$-th row of the fixed coefficient function and $A_{:,j}(Z) \in \mathbb{R}^{d \times n}$. We consider the covering in Lemma \ref{Lemma:Covering}. Specifically, denote $\mathcal{F}_i$ as the $\epsilon_i$ covering of the following set:
\begin{equation}
\left\{W_{i} F_{\mathcal{W}_1^{i-1}}(Z): W_{j} \in \mathcal{B}_{j}, \forall j \leq i\right\},
\end{equation}
which is the set of all $i$-layer neural networks, with weight matrices in the set $\mathcal{B}_{j}$. In Lemma \ref{Lemma:Covering}, we have inductively constructed covers $\mathcal{F}_{1}, \ldots, \mathcal{F}_{L}$. Specifically, $\mathcal{F}_{1}$ is an $\epsilon_{1}$-cover of the set of one-layer neural networks $\left\{W_{1} Z: W_{1} \in \mathcal{B}_{1}\right\}$, and its cardinality satisfies
\begin{equation}
\left|\mathcal{F}_{1}\right| \leq \mathcal{N}\left(\left\{W_{1} Z: W_{1} \in \mathcal{B}_{1}\right\}, \epsilon_{1},\Vert \cdot \Vert_{2,2}\right).
\end{equation} 
For every element $F \in \mathcal{F}_{i}$, which is a covering vector of the set of image of all $i$-layer neural networks, we construct an $\epsilon_{i+1}$-cover $\mathcal{G}_{i+1}(F)$ of the set
\begin{equation}
\left\{W_{i+1} \sigma(F): W_{i+1} \in \mathcal{B}_{i+1}\right\},
\end{equation}
where $F \in \mathcal{F}_i$ is chosen and fixed. Then, we obtain
\begin{equation}
\left|\mathcal{G}_{i+1}(F)\right| \leq \sup _{\forall j \leq i . W_{j} \in \mathcal{B}_{j}} \mathcal{N}\left(\left\{W_{i+1} F_{W_{1}, \ldots, W_{i}}(Z): W_{i+1} \in \mathcal{B}_{i+1}\right\}, \epsilon_{i+1}, \Vert \cdot \Vert_{2,2}\right).
\end{equation}
Lastly we construct the cover
\begin{equation}
\mathcal{F}_{i+1}:=\bigcup_{F \in \mathcal{F}_{i}} \mathcal{G}_{i+1}(F),
\end{equation}
whose cardinality satisfies
\begin{equation}
\left|\mathcal{F}_{i+1}\right| \leq\left|\mathcal{F}_{i}\right| \cdot N_{i+1} \leq \prod_{l=1}^{i+1} N_{l}.
\end{equation}
We consider how to cover the term $\Phi_i$ within the PINN. We note that $\Phi_i = \Phi_i(Z_j) \in \mathbb{R}_{m_i \times m_i}$ is a diagonal matrix:
\begin{equation}
\text{diag}(\Phi_i(Z_j)) = \sigma'(W_{i}\cdots\sigma(W_1Z_j)),
\end{equation}
where the which is a $i$-layer neural network in particular. Therefore, $\sigma'(\mathcal{F}_{i})$ can cover $\sigma'(W_{i}\cdots\sigma(W_1Z))$ with error not larger than $\sum_{j \leq i} \epsilon_{j} \prod_{l=j+1}^{i}c_{l}$. Similarly, since $\Psi_i$ is also a $i$-layer neural net, we can do the same reasoning to it. In these covering, we use the fact that $\sigma'$ and $\sigma''$ are 1-Lipschitz functions.

Then, we consider the $\left(\prod_{k=1}^i c_k \right)\epsilon_i$-cover of the following set of intermediate output of a PINN:
\begin{equation}
\left\{W_{i} G_{\mathcal{W}_1^{i-1}}(Z): W_{j} \in \mathcal{B}_{j}, \forall j \leq i\right\},
\end{equation}
where
\begin{equation}
G_{\mathcal{W}_1^{i-1}}(Z) = \left[\Phi_{i-1}(z_j) W_{i-1} \dots \Phi_1(z_j) W_1 A_{:,}(z_j)\right]_{j=1}^n \in \mathbb{R}^{m_{i-1} \times n}.
\end{equation}
More concretely, we shall inductively construct covers $\mathcal{G}_{1}, \ldots, \mathcal{G}_{L}$. Specifically, $\mathcal{G}_{1}$ is an $c_1 \epsilon_{1}$-cover of the set of one-layer neural networks $\left\{W_{1} A_{:,}(Z) \in \mathbb{R}^{m_1 \times n}: W_{1} \in \mathcal{B}_{1}\right\}$, and its cardinality satisfies
\begin{equation}
\left|\mathcal{G}_{1}\right| \leq \mathcal{N}\left(\left\{W_{1} A_{:,}(Z): W_{1} \in \mathcal{B}_{1}\right\}, c_1\epsilon_{1},\Vert \cdot \Vert_{2,2}\right).
\end{equation} 
For every element $G \in \mathcal{G}_{i}$, and $F \in \mathcal{F}_i$, we construct an $\left(\prod_{k=1}^{i+1}c_k\right)\epsilon_{i+1}$-cover of the set
\begin{equation}
\mathcal{H}(F, G) = \left\{\left[W_{i+1}\text{diag}(F(Z_j))G\right]_{j=1}^n \in \mathbb{R}^{m_{i+1} \times n}: W_{i+1} \in \mathcal{B}_{i+1}\right\},
\end{equation}
where $G$ and $F$ are chosen and fixed. We stree that $F(Z_j) \in \mathbb{R}^{m_i}$, and thus $\text{diag}(F(Z_j)) \in \mathbb{R}^{m_i \times m_i}$. Since for each input data, the matrix $F(Z_j)$ depends on itself, we cannot write the expression of the functions in $\mathcal{H}(F,G)$ as a direct matrix multiplication. We denote
\begin{equation}
\mathcal{G}_{i+1} := \bigcup_{F \in \mathcal{F}_{i}, G \in \mathcal{G}_{i}}\mathcal{H}\left(F,G\right).
\end{equation}
whose cardinality satisfies
\begin{equation}
|\mathcal{H}(F,G)| \leq \sup _{\forall j \leq i, W_{j} \in \mathcal{B}_{j}} \mathcal{N}\left(\left\{W_{i+1} G_{W_{1}^{i}}(Z)\right\}, \left(\prod_{k=1}^{i+1}c_k\right)\epsilon_{i+1}, \Vert\cdot\Vert_{2,2}\right)=: M_{i+1}.
\end{equation}
Consequently,
\begin{equation}
|\mathcal{G}_{i+1}| \leq \prod_{l=1}^{i+1} M_{l} N_{l}.
\end{equation}
For the first layer, we have
\begin{equation}
\Vert G_1 - {W}_1 A_{:,}(Z) \Vert_{2,2} \leq c_1\epsilon_1,
\end{equation}
for some $G_1 \in \mathcal{G}_1$, due to its definition. For general cases, consider the following inequalities:
\begin{equation}
\begin{aligned}
&\quad \Vert\hat{\Phi}_{l} \hat{G}_{l} - {\Phi}_{l} \cdots {\Phi}_1 {W}_1 A_{:,}(Z) \Vert_{2,2}\\
&=\Vert\hat{\phi}_{l} \odot \hat{G}_{l} - {\phi}_{l} \odot W_{l}\cdots {\Phi}_1 {W}_1 A_{:,}(Z) \Vert_{2,2}\\ 
&\leq \Vert (\hat{\phi}_{l} -  {\phi}_{l}) \odot \hat{G}_{l}\Vert_{2,2} + \Vert  {\phi}_{l}\odot(\hat{G}_l - {W}_{l}\cdots {\Phi}_1 {W}_1 A_{:,}(Z)) \Vert_{2,2}\\
&\leq \Vert (\hat{\phi}_{l} - \phi_{l} ) \Vert_{2,2}\Vert \hat{G}_{l} \Vert_{\infty,\infty}+\Vert  {\phi}_{l}\Vert_{\infty,\infty}\Vert\hat{G}_{l} - {W}_{l}\cdots {\Phi}_1 {W}_1 A_{:,}(Z)) \Vert_{2,2}\\
&\leq \left(\prod_{k=1}^l c_k\right) \sum_{j=1}^l \epsilon_{j}  \prod_{k=j+1}^{l} c_{k} + 
l\left(\prod_{k=1}^l c_k\right) \sum_{j=1}^l \epsilon_{j}  \prod_{k=j+1}^{l} c_{k}\\
&\leq (l+1)\left(\prod_{k=1}^l c_k\right) \sum_{j=1}^l \epsilon_{j}  \prod_{k=j+1}^{l} c_{k},
\end{aligned}
\end{equation}
where $\hat{\Phi}_l \in \mathcal{F}_i$ and $\phi = \text{diag}(\Phi)$. More concretely, we only each data point one-by-one in the above reasoning, i.e.,
\begin{equation}
\begin{aligned}
&\quad \Vert\hat{\Phi}_{l} \hat{G}_{l} - {\Phi}_{l} \cdots {\Phi}_1 {W}_1 A_{:,}(Z) \Vert_{2,2}\\
&=\Vert\left[ \hat{\phi}_{l}(Z_j) \odot \hat{G}_{l}(Z_j) - {\phi}_{l}(Z_j) \odot W_{l}\cdots {\Phi}_1(Z_j) {W}_1 A_{:,}(Z_j) \right]_{j=1}^n\Vert_{2,2}\\ 
&\leq \Vert \left[ (\hat{\phi}_{l}(Z_j) -  {\phi}_{l}(Z_j)) \odot \hat{G}_{l}(Z_j)\right]_{j=1}^n \Vert_{2,2} + \Vert \left[ {\phi}_{l}(Z_j)\odot(\hat{G}_l - {W}_{l}\cdots {\Phi}_1 {W}_1 A_{:,}(Z_j)) \right]_{j=1}^n\Vert_{2,2}\\
&\leq \Vert (\hat{\phi}_{l}(Z) - \phi_{l}(Z) ) \Vert_{2,2}\Vert \hat{G}_{l}(Z) \Vert_{\infty,\infty}+\Vert  {\phi}_{l}(Z)\Vert_{\infty,\infty}\Vert\hat{G}_{l} - {W}_{l}\cdots {\Phi}_1 {W}_1 A_{:,}(Z)) \Vert_{2,2}\\
&\leq \left(\prod_{k=1}^l c_k\right) \sum_{j=1}^l \epsilon_{j}  \prod_{k=j+1}^{l} c_{k} + 
l\left(\prod_{k=1}^l c_k\right) \sum_{j=1}^l \epsilon_{j}  \prod_{k=j+1}^{l} c_{k}\\
&\leq (l+1)\left(\prod_{k=1}^l c_k\right) \sum_{j=1}^l \epsilon_{j}  \prod_{k=j+1}^{l} c_{k},
\end{aligned}
\end{equation}
Furthermore, consider the new $W_l$, we attain
\begin{equation}
\begin{aligned}
&\quad \Vert\hat{G}_{l+1} - W_{l+1}{\Phi}_{l} \cdots {\Phi}_1 {W}_1 A_{:,}(Z) \Vert_{2,2}\\
&\leq \Vert\hat{G}_{l+1} - W_{l+1}\hat{\Phi}_{l} \hat{G}_{l-1} \Vert_{2,2} + \Vert W_{l+1}\hat{\Phi}_{l} \hat{G}_{l-1} - W_{l+1}{\Phi}_{l} \cdots {\Phi}_1 {W}_1 A_{:,}(Z)\Vert_{2,2}\\
&\leq \left(\prod_{k=1}^{l+1} c_k\right) \epsilon_{l+1} + (l+1)\left(\prod_{k=1}^{l+1} c_k\right) \sum_{j=1}^l \epsilon_{j}  \prod_{k=j+1}^{l} c_{k}\\
&\leq (l+1)\left(\prod_{k=1}^{l+1} c_k\right) \sum_{j=1}^{l+1} \epsilon_{j}  \prod_{k=j+1}^{l+1} c_{k}.
\end{aligned}
\end{equation}
Consider the case when $l=i$, and in particular the term $\Omega_i = \text{diag}({\Psi}_i{W}_i\cdots{\Psi}_1({W}_1)_{:,j})$, then the PINN can be rewritten as
\begin{equation}
{W}_L{\Phi}_{L-1}\cdots{W}_{i+1}\Omega_i{W}_i\cdots {\Phi}_1{W}_1A_{:,j}(Z),
\end{equation}
where we note that the only difference between the first-order derivatives and the second-order ones is in the term $\Omega_i$.
\begin{equation}
\begin{aligned}
&\quad \Vert\hat{\Omega}_{i} \hat{G}_{i} - {\Omega}_{i} \cdots {\Phi}_1 {W}_1 A_{:,}(Z) \Vert_{2,2}\\
&=\Vert\left[\hat{\omega}_{i} \odot \hat{G}_{i} - {\omega}_{i} \odot W_{i}\cdots {\Phi}_1 {W}_1 A_{:,}(Z_j)\right]_{j=1}^n \Vert_{2,2}\\ 
&\leq \Vert \left[ (\hat{\omega}_{i}(Z_j) -  {\omega}_{i}(Z_j)) \odot \hat{G}_{i}(Z_j)\right]_{j=1}^n\Vert_{2,2} + \Vert \left[ {\omega}_{i}(Z_j)\odot(\hat{G}_i - {W}_{i}\cdots {\Phi}_1 {W}_1 A_{:,}(Z_j))\right] \Vert_{2,2}\\
&\leq \Vert (\hat{\omega}_{i} - \omega_{i} ) \Vert_{2,2}\Vert \hat{G}_{i} \Vert_{\infty,\infty}+\Vert  {\omega}_{i}\Vert_{\infty,\infty}\Vert\hat{G}_{i} - {W}_{i}\cdots {\Phi}_1 {W}_1 A_{:,}(Z)) \Vert_{2,2}\\
&\leq \left(\prod_{k=1}^i c_k\right) \text{Approximate Error of Omega} + 
i\left(\prod_{k=1}^i c_k\right)^2 \sum_{j=1}^i \epsilon_{j} \prod_{k=j+1}^{i} c_{k}.
\end{aligned}
\end{equation}
Based on our discussions in the previous lemma on first-order derivatives, we know that the constructed function class can cover the PINN before the $i$-th layer with an error no larger than $i\left(\prod_{k=1}^i c_k\right) \sum_{j=1}^i \epsilon_{j}  \prod_{k=j+1}^{i} c_{k}$, i.e., the approximation error of $\Omega_l$ should be that quantity. Thus, we proceed with our bound and attain:
\begin{equation}
\begin{aligned}
&\quad \Vert\hat{\Omega}_{l} \hat{G}_{l} - {\Omega}_{l} \cdots {\Phi}_1 {W}_1 A_{:,}(Z) \Vert_{2,2}\\
&\leq \left(\prod_{k=1}^i c_k\right) \text{Approximate Error of Omega} + 
i\left(\prod_{k=1}^i c_k\right)^2 \sum_{j=1}^i \epsilon_{j} \prod_{k=j+1}^{i} c_{k}\\
&\leq i\left(\prod_{k=1}^i c_k\right)^2 \sum_{j=1}^i \epsilon_{j}  \prod_{k=j+1}^{i} c_{k} + 
i\left(\prod_{k=1}^i c_k\right)^2 \sum_{j=1}^i \epsilon_{j} \prod_{k=j+1}^{i} c_{k}\\
&\leq 2i\left(\prod_{k=1}^i c_k\right)^2 \sum_{j=1}^i \epsilon_{j} \prod_{k=j+1}^{i} c_{k}.
\end{aligned}
\end{equation}
Then, we guess the approximation error after the $l$-th layer is:
\begin{equation}
2l\left(\prod_{k=1}^l c_k\right)^2 \sum_{j=1}^l \epsilon_{j} \prod_{k=j+1}^{i} c_{k}.
\end{equation}
For $l \leq i$, the above bound has already been shown. For the case when $l = i + 1$, consider the new $W_{i+1}$, we attain
\begin{equation}
\begin{aligned}
&\quad \Vert\hat{G}_{i+1} - W_{i+1}{\Omega}_{i} \cdots {\Phi}_1 {W}_1 A_{:,}(Z) \Vert_{2,2}\\
&\leq \Vert\hat{G}_{i+1} - W_{i+1}\hat{\Omega}_{i} \hat{G}_{i} \Vert_{2,2} + \Vert W_{i+1}\hat{\Omega}_{i} \hat{G}_{i-1} - W_{i+1}{\Omega}_{i} \cdots {\Phi}_1 {W}_1 A_{:,}(Z)\Vert_{2,2}\\
&\leq \left(\prod_{k=1}^{i+1} c_k\right) \epsilon_{i+1} +2ic_{i+1}\left(\prod_{k=1}^i c_k\right)^2 \sum_{j=1}^i \epsilon_{j} \prod_{k=j+1}^{i} c_{k}\\
&\leq 2i\left(\prod_{k=1}^{i+1} c_k\right)^2 \sum_{j=1}^{i+1} \epsilon_{j}  \prod_{k=j+1}^{i+1} c_{k}.
\end{aligned}
\end{equation}
For the $l$-th layer after the $i$-th layer, there is no $\Omega$ terms any more, and we only need to focus on $\Phi$:
\begin{equation}
\begin{aligned}
&\quad \Vert\hat{\Phi}_{l} \hat{G}_{l} - {\Phi}_{l} \cdots {\Phi}_1 {W}_1 A_{:,}(Z) \Vert_{2,2}\\
&=\Vert\hat{\phi}_{l} \odot \hat{G}_{l} - {\phi}_{l} \odot W_{l}\cdots {\Phi}_1 {W}_1 A_{:,}(Z) \Vert_{2,2}\\ 
&\leq \Vert (\hat{\phi}_{l} -  {\phi}_{l}) \odot \hat{G}_{l}\Vert_{2,2} + \Vert  {\phi}_{l}\odot(\hat{G}_l - {W}_{l}\cdots {\Phi}_1 {W}_1 A_{:,}(Z)) \Vert_{2,2}\\
&\leq \Vert (\hat{\phi}_{l} - \phi_{l} ) \Vert_{2,2}\Vert \hat{G}_{l} \Vert_{\infty,\infty}+\Vert  {\phi}_{l}\Vert_{\infty,\infty}\Vert\hat{G}_{l} - {W}_{l}\cdots {\Phi}_1 {W}_1 A_{:,}(Z)) \Vert_{2,2}\\
&\leq \left(\prod_{k=1}^l c_k\right)^2 \sum_{j=1}^l \epsilon_{j}  \prod_{k=j+1}^{l} c_{k} + 
2l\left(\prod_{k=1}^l c_k\right)^2 \sum_{j=1}^l \epsilon_{j}  \prod_{k=j+1}^{l} c_{k}\\
&\leq2 (l+1)\left(\prod_{k=1}^l c_k\right)^2 \sum_{j=1}^l \epsilon_{j}  \prod_{k=j+1}^{l} c_{k}.
\end{aligned}
\end{equation}
And we consider the multiplication of $W_{l+1}$:
\begin{equation}
\begin{aligned}
&\quad \Vert\hat{G}_{l+1} - W_{l+1}{\Phi}_{l} \cdots {\Phi}_1 {W}_1 A_{:,}(Z) \Vert_{2,2}\\
&\leq \Vert\hat{G}_{l+1} - W_{l+1}\hat{\Phi}_{l} \hat{G}_{l-1} \Vert_{2,2} + \Vert W_{l+1}\hat{\Phi}_{l} \hat{G}_{l-1} - W_{l+1}{\Phi}_{l} \cdots {\Phi}_1 {W}_1 A_{:,}(Z)\Vert_{2,2}\\
&\leq \left(\prod_{k=1}^{l+1} c_k\right) \epsilon_{l+1} + 2 (l+1)\left(\prod_{k=1}^{l+1} c_k\right)^2 \sum_{j=1}^l \epsilon_{j}  \prod_{k=j+1}^{l} c_{k}\\
&\leq 2(l+1)\left(\prod_{k=1}^{l+1} c_k\right)^2 \sum_{j=1}^{l+1} \epsilon_{j}  \prod_{k=j+1}^{l+1} c_{k}.
\end{aligned}
\end{equation}
After covering the class of second-order derivatives in the PINN model, by the same logic, we have the Rademacher complexity of the function class constructed by all second-order derivatives is bounded by
\begin{equation}
\frac{4}{n\sqrt{n}} + \frac{18(L+1)\sqrt{2d\log(2h^2)}\log n}{\sqrt{n}}\left(\prod_{l=1}^L M(l)\right)^3 \left(\sum_{l=1}^L N(l)^{2/3}\right)^{3/2}.
\end{equation} 
\end{proof}

\subsection{Tree-Like Function Space}
Barron space is an important pat in this paper. The spectral norm based bound can be connected with the Barron space, and thus provide Rademacher complexity bound for tree-like functions in the Barron space.

To illustrate the idea, we use the original network, \begin{equation}
\mathcal{NN}^L_{M} := \left\{\boldsymbol{x} \mapsto W_L \sigma (W_{L-1} \sigma(\cdots \sigma(W_1\boldsymbol{x})) )\ |\ \Vert W_l \Vert_{2} \leq M(l), \frac{\Vert W_l \Vert_{2,1}}{\Vert W_l \Vert_{2}} \leq N(l), \forall l \right\},
\end{equation}
which satisfies the Rademacher complexity bound
\begin{equation}
\text{Rad}(\mathcal{NN}^L_{M};S) \leq \prod_{l=1}^L M(l)\Big(\sum_{l=1}^L N(l)^{2/3}\Big)^{3/2}\frac{\log(2h^2)\log n}{{n}^{1/2}},
\end{equation} 
where $h$ is the maximal width of the neural network, i.e.,
\begin{equation}
h = \max(m_L, \cdots, m_0).
\end{equation}
In other words, the Rademacher complexity is related to the following quantity:
\begin{equation}
\prod_{l=1}^L \Vert W_l \Vert_2\Big(\sum_{l=1}^L \left(\frac{\Vert W_l \Vert_{2,1}}{\Vert W_l \Vert_{2}}\right)^{2/3}\Big)^{3/2}\frac{\log(2h^2)\log n}{{n}^{1/2}},
\end{equation} 
which can be upper bounded by the path norm or $(1,\infty)$ norm related to the Barron space. Concretely, all matrix norms are equivalent, which means there exists a constant $C(h)$ that depends on the maximal width $h$, such that
\begin{equation}
\prod_{l=1}^L \Vert W_l \Vert_2\Big(\sum_{l=1}^L \left(\frac{\Vert W_l \Vert_{2,1}}{\Vert W_l \Vert_{2}}\right)^{2/3}\Big)^{3/2} \leq C(h)\prod_{l=1}^L\ \Vert W_l \Vert_{1,\infty}.
\end{equation} 
These intuitions are summarized in the following theorem.
\begin{lemma}
(Rademacher Complexity of Tree-Like Functions). For every $L$, and every set of $n$ points $S \subset \overline{\Omega}$, the hypothesis class $\mathcal{NN}^L_M$ given by the neural networks
\begin{equation}
\mathcal{NN}^L_{M} := \left\{\boldsymbol{x} \mapsto W_L \sigma (W_{L-1} \sigma(\cdots \sigma(W_1\boldsymbol{x})) )\ |\ \Vert W_l \Vert_{1,\infty} \leq M(l),\forall l \right\},
\end{equation}
satisfies the Rademacher complexity bound
\begin{equation}
\text{Rad}(\mathcal{NN}^L_{M};S) \leq \left(\prod_{l=1}^L \Vert W_l \Vert_{1,\infty}\right)\frac{C(h)\log n}{{n}^{1/2}},
\end{equation} 
where $h$ is the maximal width of the neural network, i.e.,
\begin{equation}
h = \max(m_L, \cdots, m_0),
\end{equation}
and $C(h)$ is a universal constant depending only on $h$.
\end{lemma}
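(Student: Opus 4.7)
The plan is to deduce the bound as a direct corollary of Lemma~\ref{lemma:rad_nn} via norm equivalence on the finite-dimensional matrix space, followed by a single power-mean manipulation that collapses the resulting spectral/$(2,1)$-norm expression into a product of $(1,\infty)$ norms. Since each $W_l$ has at most $h$ rows and at most $h$ columns, row-wise comparisons together with $\|v\|_2\le\|v\|_1$ yield
\[
\|W_l\|_2 \le \|W_l\|_F \le \sqrt{h}\,\|W_l\|_{1,\infty}, \qquad \|W_l\|_{2,1} \le h\,\|W_l\|_{1,\infty},
\]
so every network in $\mathcal{NN}^L_{M}$ also satisfies spectral and $(2,1)$-norm bounds of the form required by Lemma~\ref{lemma:rad_nn}, with the choice $\widetilde M(l)=\|W_l\|_2$ and $\widetilde N(l)=\|W_l\|_{2,1}/\|W_l\|_2$.

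Next, I invoke Lemma~\ref{lemma:rad_nn} and rewrite its capacity term in $(1,\infty)$ form. Applying the power-mean inequality with exponents $2/3<1$ gives $\bigl(\sum_l \widetilde N(l)^{2/3}\bigr)^{3/2} \le L^{1/2}\sum_l \widetilde N(l)$, hence
\[
\prod_{l=1}^L \widetilde M(l)\Bigl(\sum_{l=1}^L \widetilde N(l)^{2/3}\Bigr)^{3/2} \le L^{1/2}\sum_{l=1}^L \|W_l\|_{2,1}\prod_{j\ne l}\|W_j\|_2 \le L^{3/2}\,h^{(L+1)/2}\prod_{l=1}^L \|W_l\|_{1,\infty},
\]
where the last inequality substitutes the two norm comparisons above into each factor. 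Folding the prefactor $L^{3/2}h^{(L+1)/2}$ together with the $\sqrt{d\log(2h^2)}$ factor from Lemma~\ref{lemma:rad_nn}, and absorbing the lower-order $1/(n\sqrt n)$ additive term into the dominant $\log n/\sqrt n$ rate, produces the claimed bound with the constant $C(h)$ depending only on $h$ (and implicitly on $L$, held fixed throughout the paper's capacity-bound discussion).

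The main obstacle is conceptual rather than computational: the Bartlett--Foster--Telgarsky-type bound underlying Lemma~\ref{lemma:rad_nn} is not multiplicatively structured in the layerwise norms, because $\bigl(\sum_l N(l)^{2/3}\bigr)^{3/2}$ is an additive capacity penalty that does not factor across layers. The power-mean step is precisely what converts this additive term into something that pairs cleanly with $\prod_l \|W_l\|_2$, and it inevitably introduces an $L$-dependent prefactor that must be absorbed into $C(h)$. Once that step is performed, the rest is direct substitution of the elementary norm-equivalence inequalities, so no further technical difficulty arises.
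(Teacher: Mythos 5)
Your proposal is correct and follows essentially the same route the paper takes: invoke the spectral-norm Rademacher bound of Lemma~\ref{lemma:rad_nn} and then pass to $(1,\infty)$ norms via finite-dimensional norm equivalence. You are simply more explicit than the paper, which only asserts informally that ``all matrix norms are equivalent'' to justify the existence of $C(h)$; the $L$-dependence you flag in the absorbed prefactor (the $L^{3/2}h^{(L+1)/2}$ factor, via the power-mean step) is genuinely present in the paper's own argument too, so the paper's literal claim that $C(h)$ depends only on $h$ is loose, and as you note the paper treats $L$ as fixed throughout its usage of this lemma.
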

\begin{lemma}
(Rademacher Complexity of Tree-Like Functions). For every $L$, and every set of $n$ points $S \subset \overline{\Omega}$, the hypothesis class $\mathcal{PINN}^L_M$ given by the neural networks
\begin{equation}
\mathcal{PINN}^L_{M} := \left\{\boldsymbol{x} \mapsto \mathcal{L}u_{\bt}(\bx)\ |\ u_{\bt} \in \mathcal{NN}_M^L \right\},
\end{equation}
satisfies the Rademacher complexity bound
\begin{equation}
\text{Rad}(\mathcal{NN}^L_{M};S) \leq \left(\prod_{l=1}^L \Vert W_l \Vert_{1,\infty}\right)^3\frac{C(h,K)\log n}{{n}^{1/2}},
\end{equation} 
where $h$ is the maximal width of the neural network, i.e.,
\begin{equation}
h = \max(m_L, \cdots, m_0),
\end{equation}
and $C(h, K3)$ is a universal constant depending only on $h, K$.
\end{lemma}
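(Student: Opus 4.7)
The strategy is to reduce the claim to the already proved Rademacher bound for differentiated networks (Lemma \ref{lemma:rad_pinn}) via matrix norm equivalence, and then to collect all width-, depth-, dimension-, and coefficient-dependent factors into a single universal constant $C(h,K)$. The key observation is that membership in $\mathcal{NN}^L_M$ controls the $(1,\infty)$ norms of each weight matrix, which on a finite-dimensional matrix space of width $\leq h$ automatically controls the spectral norm and the $(2,1)$ norm used in Lemma \ref{lemma:rad_pinn}.

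\textbf{Step 1: Embed into the spectral/$(2,1)$-controlled class.} For any $u_{\bt}\in\mathcal{NN}^L_M$ with $\Vert W_l\Vert_{1,\infty}\leq M(l)$, set $M'(l):=\lceil\Vert W_l\Vert_{2}\rceil$ and $N'(l):=\lceil\Vert W_l\Vert_{2,1}/\Vert W_l\Vert_{2}\rceil$. Then $u_{\bt}\in\mathcal{NN}^L_{M',N'}$, so $\mathcal{PINN}^L_M\subset \mathcal{PINN}^L_{M',N'}$ and Lemma \ref{lemma:rad_pinn} applies.

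\textbf{Step 2: Norm equivalence.} Since the network has maximal width $h$, standard inequalities between matrix norms on $\mathbb{R}^{h\times h}$ give width-dependent constants $c_1(h),c_2(h)$ such that $\Vert W\Vert_2\leq c_1(h)\Vert W\Vert_{1,\infty}$ (e.g., via $\Vert W\Vert_2\leq\Vert W\Vert_F\leq\sqrt{h}\Vert W\Vert_{1,\infty}$) and $\Vert W\Vert_{2,1}\leq c_2(h)\Vert W\Vert_{2}$ (e.g., via $\Vert W\Vert_{2,1}\leq h\,\Vert W\Vert_{2}$). Consequently $M'(l)\leq c_1(h)\,\Vert W_l\Vert_{1,\infty}+1$ and $N'(l)\leq c_2(h)+1$, i.e.\ $N'(l)$ is bounded by a width-only constant.

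\textbf{Step 3: Substitute and identify the dominant term.} Plugging these into Lemma \ref{lemma:rad_pinn}, the bracket expands to three terms of orders $1$, $\prod M'(l)$, and $(\prod M'(l))^2$; multiplied by the leading $\prod M'(l)\cdot(\sum N'(l)^{2/3})^{3/2}$ factor, the cubic term dominates. Using Step 2, one obtains
\begin{equation*}
\prod_{l=1}^L M'(l)\!\left(\sum_{l=1}^L N'(l)^{2/3}\right)^{\!3/2}\!\Bigl[1+\sqrt{2}L\prod_{l=1}^L M'(l)+\sqrt{2}d(L^2-1)\bigl(\prod_{l=1}^L M'(l)\bigr)^{\!2}\Bigr]
\leq \widetilde{C}(h,L,d)\Bigl(\prod_{l=1}^L\Vert W_l\Vert_{1,\infty}\Bigr)^{\!3},
\end{equation*}
where the sub-leading $1$ and linear-in-$\prod M'(l)$ terms are absorbed using $\Vert W_l\Vert_{1,\infty}\geq$ some nontrivial lower bound (else the network is trivially zero and the bound holds vacuously), or simply by noting that they are majorized by the cubic term up to a width-dependent factor.

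\textbf{Step 4: Collect constants.} The remaining prefactor $\bigl[\tfrac{8K+4d(L-1)K}{n\sqrt{n}}+\tfrac{18K\sqrt{d\log(2h^2)}\log n}{\sqrt{n}}\bigr]$ is dominated by $C'(h,K,d,L)\log n/\sqrt{n}$ since $1/(n\sqrt{n})\leq \log n/\sqrt{n}$. Combining with Step 3 and absorbing the $L,d$ dependence (treated as ambient problem parameters) together with $h$ and $K$ into a single constant $C(h,K)$ yields the claimed bound.

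\textbf{Main obstacle.} The proof is almost entirely bookkeeping; the only subtlety is confirming that the three terms in the bracket of Lemma \ref{lemma:rad_pinn} can uniformly be controlled by the cubic $(\prod \Vert W_l\Vert_{1,\infty})^3$ factor. This is handled either by an a priori lower bound on the $(1,\infty)$ norms (e.g.\ restricting to networks with $\Vert W_l\Vert_{1,\infty}\geq 1$, which is compatible with the Barron normalization used in Theorem \ref{thm:embedding}) or by splitting into trivial and nontrivial cases. Once this is resolved, the final expression matches the form $\bigl(\prod_l\Vert W_l\Vert_{1,\infty}\bigr)^3\,C(h,K)\log n/\sqrt{n}$ exactly, which is what is needed for the prior generalization bound in Theorem \ref{thm:generalization}.
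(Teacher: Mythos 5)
Your proposal follows what is clearly the intended (though unwritten) argument: the paper states this lemma without proof, giving only an informal remark about matrix-norm equivalence just before, and your plan of feeding $(1,\infty)$-norm bounds through norm equivalence into Lemma~\ref{lemma:rad_pinn} matches that intent exactly. Two points are worth flagging.

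First, a minor inefficiency: the ceiling discretization in Step~1 is unnecessary here. Lemma~\ref{lemma:rad_pinn} holds for any real-valued spectral and $(2,1)$-norm caps $M(l),N(l)$; you only need the integer ceilings when doing the countable union bound over hypothesis classes in the posterior bound of Theorem~\ref{thm:post_generalization}. For the present lemma, set $M'(l)=c_1(h)\,\Vert W_l\Vert_{1,\infty}$ and $N'(l)=c_2(h)$ directly. The ceilings introduce a spurious $+1$ in $M'(l)$ that, if carried along, would itself reintroduce the very lower-bound problem you identify in Step~3.

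Second, and more substantively, you have correctly put your finger on the genuine gap: the bound from Lemma~\ref{lemma:rad_pinn} has a piece, namely $\frac{8K+4d(L-1)K}{n\sqrt{n}}$, and the additive ``$1$'' inside the bracket, that do not scale with $\prod_l M(l)$. When $\prod_l\Vert W_l\Vert_{1,\infty}$ is small, these survive and cannot be dominated by $\bigl(\prod_l\Vert W_l\Vert_{1,\infty}\bigr)^3$, so the lemma as stated fails. However, your proposed fix is not right as written: you invoke a restriction to $\Vert W_l\Vert_{1,\infty}\geq 1$ and claim it is ``compatible with the Barron normalization used in Theorem~\ref{thm:embedding}.'' In fact that theorem's normalization is the reverse direction, $\Vert W^l\Vert_{1,\infty}\leq 1$ for $1\leq l\leq L-1$, so for intermediate layers the inequality points the wrong way. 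The correct patch is either (i) to normalize the intermediate layers to $\Vert W_l\Vert_{1,\infty}=1$ exactly (so the product reduces to $\Vert W_L\Vert_{1,\infty}$), which is consistent with the approximation construction of Theorem~\ref{thm:approximation}, and to restrict attention to targets with Barron norm at least $1$; or (ii) to state the bound as $\text{Rad}(\mathcal{PINN}^L_M;S)\leq \max\bigl(1,\prod_l\Vert W_l\Vert_{1,\infty}\bigr)^3\,C(h,K)\log n/\sqrt{n}$, which is what the argument actually delivers and what the application inside Theorem~\ref{thm:generalization}'s proof uses implicitly (there $\Vert W_L\Vert_{1,\infty}\leq 2\Vert u^*\Vert_{\mathcal{W}^L(\Omega)}$ and the bound is paired with the assumption that this Barron norm dominates the statistical term). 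Once either patch is made, your Steps~3--4 go through and the constants collect into $C(h,K)$ as you describe.
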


\section{Proofs of Main Results}
\subsection{Proof of Theorem \ref{thm:generalization}}
\begin{proof}
(Proof of Theorem \ref{thm:generalization})
Let $u_{\hat{\bt}}$ parameterized by $\hat{\bt}$ satisfy the conditions in Theorem \ref{thm:approximation}, i.e., 
\begin{equation}
\begin{aligned}
\Vert u_{\bt} - f\Vert_{H^2(\mathbb{P})} &\leq \frac{3L\Vert f \Vert_{\mathcal{W}^L(\Omega)}}{\sqrt{m}},\\
\Vert u_{\bt} - f\Vert_{L^2(\mathbb{Q})} &\leq \frac{3C_{\Omega}L\Vert f \Vert_{\mathcal{W}^L(\Omega)}}{\sqrt{m}},\\
\Vert \hat{W}_L \Vert_{1,\infty} &\leq  \Vert u^* \Vert_{\mathcal{W}^L(\Omega)}.
\end{aligned}
\end{equation}
where $\Vert \hat{W}_L \Vert_{1,\infty}$ is the $L$-th layer weight parameter matrix of $\hat{\bt}$, and take the probability measures $\mathbb{P}, \mathbb{Q}$ as
\begin{equation}
\begin{aligned}
\mathbb{P} = \frac{1}{n_r}\sum_{\boldsymbol{x} \in S \cap \Omega} \delta_{\boldsymbol{x}}, \qquad
\mathbb{Q} = \frac{1}{n_b}\sum_{\boldsymbol{x} \in S \cap \partial \Omega} \delta_{\boldsymbol{x}},
\end{aligned}
\end{equation}
which means $\mathbb{P}$ contains the empirical distribution of residual points and $\mathbb{Q}$ contains the empirical distribution of boundary points. Thus
\begin{equation}
\begin{aligned}
R_S(\theta) &= \frac{1}{n_b}\sum_{i=1}^{n_b} {\left(u_{\hat{\bt}}(\boldsymbol{x}_{b,i})-g(\boldsymbol{x}_{b,i})\right)}^2+\frac{1}{n_r}\sum_{i=1}^{n_r} {\left(\mathcal{L}u_{\hat{\bt}}(\boldsymbol{x}_{r,i})-f(\boldsymbol{x}_{r,i})\right)}^2\\
&\leq \Vert u_{\bt} - u^*\Vert_{H^2(\mathbb{P})} + 2K\Vert u_{\bt} - u^*\Vert_{L^2(\mathbb{Q})}\\
&\leq \frac{3(2KC_{\Omega}+1)L\Vert f \Vert_{\mathcal{W}^L(\Omega)}}{\sqrt{m}}
\end{aligned}
\end{equation}
Then, by the fact that $\boldsymbol{\theta}^* = \arg \min_{\bt}  R_S(\boldsymbol{\theta}) + \lambda \Vert W_L \Vert^2_{1,\infty}$ and $\lambda = {3(2KC_{\Omega}+1)L^2}/{m}$, we have
\begin{equation}
\begin{aligned}
R_S(\bt^*) + \lambda \Vert W_L^* \Vert_{1, \infty}^2 &\leq R_S(\hat{\bt}) + \lambda \Vert \hat{W}_L \Vert_{1,\infty}^2\\
&\leq \frac{3 (2KC_{\Omega}+1) L^2\Vert u^* \Vert^2_{\mathcal{W}^L(\Omega)}}{m} + \lambda \Vert u^* \Vert^2_{\mathcal{W}^L(\Omega)}\\
&= 2 \lambda \Vert u^* \Vert_{\mathcal{W}^L(\Omega)}.
\end{aligned}
\end{equation}
In particular,
\begin{equation}
\Vert u_{\bt^*} \Vert_{\mathcal{W}^L} \leq \Vert W_L^* \Vert_{1, \infty} \leq \frac{2 \lambda \Vert u^* \Vert_{\mathcal{W}^L(\Omega)}}{\lambda} = 2 \Vert u^* \Vert_{\mathcal{W}^L(\Omega)}.
\end{equation}
The Rademacher complexity of the neural network model used for the boundary points prediction is upper bounded by
\begin{equation}
\Vert W_L \Vert_{1,\infty}\frac{C(h)\log n_b}{{\sqrt{n_b}}}\leq 2\Vert u^* \Vert_{\mathcal{W}^L(\Omega)}\frac{C(h)\log n_b}{{\sqrt{n_b}}}.
\end{equation} 
Hence, consider the function class of the composition of the MSE loss function and the neural network model. Since we have truncated the neural network function to $[-1,1]$, i.e.,
\begin{equation}
l(\bx, \bx') = \frac{1}{2}\Vert \bx - \bx' \Vert_2^2 \leq 2,
\end{equation}
we know that the loss function is $\overline{c}$-Lipschitz. Therefore, we can attain the following generalization bound for boundary points prediction:
\begin{equation}
R_{D \cap \partial \Omega}(\bt^*) \leq R_{S \cap \partial \Omega}(\bt^*) + 16\Vert u^* \Vert_{\mathcal{W}^L(\Omega)}\frac{C(h)\log n_b}{{\sqrt{n_b}}} + 2\sqrt{\frac{2\log(2/\delta)}{n_b}}.
\end{equation}
Similarly, Rademacher complexity of the PINN used for residual points, i.e., the differentiated networks are upper bounded by
\begin{equation}
\left(\Vert W_L \Vert_{1,\infty}\right)^3\frac{C(h,K)\log n_r}{{\sqrt{n_r}}},
\end{equation} 
This is due to the fact that for the neural networks in the tree-like function space section, we force $\Vert W_l \Vert_{1,\infty} \leq 1$, for all $1 \leq l \leq L-1$. Thus, only $\Vert W_L\Vert_{1,\infty}$ matters for its Rademacher complexity.
Therefore, the generalization bound for the residual loss is:
\begin{equation}
R_{D \cap \Omega}(\bt^*) \leq R_{S \cap \Omega}(\bt^*) + 16\left(\Vert u^* \Vert_{\mathcal{W}^L(\Omega)}\right)^3\frac{C(h,K)\log n_r}{{\sqrt{n_r}}} + 2\sqrt{\frac{2\log(2/\delta)}{n_r}}.
\end{equation}
In sum, we obtain the two generalization bounds on the boundary and in the residual, respectively.
\end{proof}

\subsection{Proof of Theorem \ref{thm:post_generalization}}
\begin{proof}
(Proof of Theorem \ref{thm:post_generalization}). Consider the function class
\begin{equation}
\mathcal{H}_{M,N}^L = \left\{\boldsymbol{x} \mapsto l(u(\boldsymbol{x}), u_{\bt}(\boldsymbol{x})) \ |\ \Vert \boldsymbol{W}^l \Vert_{2} \leq M(l), \frac{\Vert \boldsymbol{W}^l \Vert_{2,1}}{\Vert \boldsymbol{W}^l \Vert_{2}} \leq N(l), l=1,...,L \right\},
\end{equation}
where $M(1),...,M(L)$ and $N(1),...,N(L)$ are positive integers, $M$ and $N$ are the collection of all $M(1),...,M(L)$ and $N(1),...,N(L)$, respectively. And $l(\cdot,\cdot)$ is the mean square error (MSE) loss function, $u(\bx)$ is the PDE solution, and $\boldsymbol{W}^l$ is the $l$-th layer weight matrix of neural network $u_{\bt}(\bx)$. Then the class of composition of all $L$ layers neural networks and the loss function is 
\begin{equation}
\mathcal{H}^L = \cup_{M(1)=1}^\infty \cdots \cup_{M(L)=1}^\infty \cup_{N(1)=1}^\infty \cdots \cup_{N(L)=1}^\infty \mathcal{H}_{M,N}^L,
\end{equation}
where $M = (M(1), \cdots, M(L))$ and $N = (N(1), \cdots, N(L))$. Therefore, we subdivide $\delta > 0$ into
\begin{equation}
\delta(M,N) = \frac{\delta}{\left[\prod_{l=1}^L M(l)(M(l)+1)\right]\left[\prod_{l=1}^L N(l)(N(l)+1)\right]},
\end{equation}
such that
\begin{equation}
\sum_{M(1)=1}^\infty \cdots \sum_{M(L)=1}^\infty \sum_{N(1)=1}^\infty \cdots \sum_{N(L)=1}^\infty \delta(M,N) = \delta.
\end{equation}
By the result of Rademacher complexity of neural networks in Lemmas \ref{lemma:rad_nn} and \ref{lemma:rad_pinn}, for any given $\delta$ and any positive integers $M(1),...,M(L)$ and $N(1),...,N(L)$ with probability at least $1-\delta(M,N)$ over $S$, we have
\begin{equation}
\begin{aligned}
R_{D\cap\partial\Omega}&(\boldsymbol{\theta})-R_{S\cap\partial\Omega}(\boldsymbol{\theta}) \leq 8\mathbb{E}_S\text{Rad}(\mathcal{NN}_{M,N}^L;S) + 2\sqrt{\frac{\log(2/\delta(M,N))}{2n_b}}\\
&\leq \frac{32}{n_b\sqrt{n_b}} + \frac{144\sqrt{d\log(2h^2)}\log n_b}{\sqrt{n_b}} \prod_{l=1}^L M(l)\Big(\sum_{l=1}^L N(l)^{2/3}\Big)^{3/2}+ 2 \sqrt{\frac{\log(2/\delta(M,N))}{2n_b}}.
\end{aligned}
\end{equation}
For any parameter $\boldsymbol{\theta}$ minimizes the empirical loss, choose the integers $M(1),...,M(L)$ and $N(1),...,N(L)$ such that
\begin{equation}
\begin{aligned}
M(l) - 1 &< \Vert \boldsymbol{W}^l \Vert_{2} \leq M(l),\\
N(l) - 1 &<  \frac{\Vert\boldsymbol{W}^l \Vert_{2,1}}{\Vert\boldsymbol{W}^l \Vert_{2}} \leq N(l),
\end{aligned}
\end{equation}
and the integers are the smallest integers satisfying the above equations.
Then we have
\begin{equation}
R_{D\cap\partial\Omega}(\boldsymbol{\theta})\leq R_{S\cap\partial\Omega}(\boldsymbol{\theta})+ \frac{32}{n_b\sqrt{n_b}} + \frac{144\sqrt{d\log(2h^2)}\log n_b}{\sqrt{n_b}} \prod_{l=1}^L M(l)\Big(\sum_{l=1}^L N(l)^{2/3}\Big)^{3/2}+ 2 \sqrt{\frac{\log(2/\delta(M,N))}{2n_b}} ,
\end{equation}
where we note that $M(l) = \lceil \Vert \boldsymbol{W}^l \Vert_{2} \rceil$, and $N(l) = \lceil \Vert \boldsymbol{W}^l \Vert_{2,1} / \Vert \boldsymbol{W}^l \Vert_{2} \rceil$, in which $\lceil a \rceil$ of $a \in \mathbb{R}$ is the smallest integer that is greater than or equal to $a$.

The above bound just holds with probability $1-\delta(M,N)$ for any pair $(\boldsymbol{\theta},M,N)$ as long as $\bt$ satisfies $M(l) = \lceil \Vert \boldsymbol{W}^l \Vert_{2} \rceil$, and $N(l) = \lceil \Vert \boldsymbol{W}^l \Vert_{2,1} / \Vert \boldsymbol{W}^l \Vert_{2} \rceil$. Since $\sum_{M,N} \delta(M, N) = \delta$, the bound holds with probability $1-\delta$.

We have already proved the generalization bound of the boundary loss in PINN. That for residual loss is similar. Specifically, let 
\begin{equation}
\mathcal{G}_{M,N}^L = \left\{\bx \mapsto l(f(\boldsymbol{x}), \mathcal{L}u_{\bt}(\bx))\ |\ \Vert \boldsymbol{W}^l \Vert_{2} \leq M(l),\frac{\Vert \boldsymbol{W}^l \Vert_{2,1}}{\Vert \boldsymbol{W}^l \Vert_{2}} \leq N(l), l=1,...,L \right\}.
\end{equation}
Then the class of composition of the loss function and all $L$ layers differentiated neural networks becomes
\begin{equation}
\mathcal{G}^L = \cup_{M(1)=1}^\infty \cdots \cup_{M(L)=1}^\infty\cup_{N(1)=1}^\infty \cdots \cup_{N(L)=1}^\infty \mathcal{G}_{M,N}^L.
\end{equation}
Similarly, using our assumption of truncated neural network, we obtain:
\begin{equation}
\text{Rad}(\mathcal{G}_M^L;S) \leq 4 \text{Rad}(\mathcal{PINN}_{M,N}^L).
\end{equation}
By the result of Rademacher complexity of neural networks in Lemma \ref{lemma:rad_pinn}, for any given $\delta$ and any positive integers $M(1),...,M(L)$ and $N(1),...,N(L)$ with probability at least $1-\delta(M,N)$ over the training dataset $S$, we have
\begin{equation}
\begin{aligned}
&\quad R_{D\cap\Omega}(\boldsymbol{\theta})-R_{S\cap\Omega}(\boldsymbol{\theta})\\ 
&\leq 8\mathbb{E}_S\text{Rad}(\mathcal{PINN}_{M,N}^L;S) + 2 \sqrt{\frac{\log(2/\delta(M,N))}{2n_r}}\\
&\leq \frac{64K + 32d(L-1)K}{n_r\sqrt{n_r}} + 2 \sqrt{\frac{\log(2/\delta(M,N))}{2n_r}} +  \frac{144K\sqrt{d\log(2h^2)}\log n_r}{\sqrt{n_r}}\\
&\quad \prod_{l=1}^L M(l) \left(\sum_{l=1}^L N(l)^{2/3}\right)^{3/2} \left[ 1 + \sqrt{2} L\prod_{l=1}^L M(l)+ \sqrt{2} d(L^2-1)\left(\prod_{l=1}^L M(l)\right)^2 \right].
\end{aligned}
\end{equation}
For any parameter $\boldsymbol{\theta}$ minimizes the empirical loss, choose the integers $M(1),...,M(L)$ and $N(1),...,N(L)$ such that
\begin{equation}
\begin{aligned}
M(l) - 1 &< \Vert \boldsymbol{W}^l \Vert_{2} \leq M(l),\\
N(l) - 1 &<  \frac{\Vert\boldsymbol{W}^l \Vert_{2,1}}{\Vert\boldsymbol{W}^l \Vert_{2}} \leq N(l),
\end{aligned}
\end{equation}
and the integers are the smallest integers satisfying the above equations.
The above bound just holds with probability $1-\delta(M,N)$ for any pair $(\boldsymbol{\theta},M,N)$ as long as $\bt$ satisfies $M(l) = \lceil \Vert \boldsymbol{W}^l \Vert_{2} \rceil$, and $N(l) = \lceil \Vert \boldsymbol{W}^l \Vert_{2,1} / \Vert \boldsymbol{W}^l \Vert_{2} \rceil$. Since $\sum_{M,N} \delta(M, N) = \delta$, the bound holds with probability $1-\delta$.
\end{proof}

\subsection{Proof of Theorem \ref{thm:L2_generalization}}
\begin{proof}
We shall use Assumption \ref{assumption:L2} and the fact that
\begin{equation}
a + b \leq 2^{\frac{p-1}{p}} \left(a^p + b^p\right)^{\frac{1}{p}}.
\end{equation}
Specifically, specify $p=2$, we have
\begin{equation}
\begin{aligned}
\Vert u_{\theta} - u \Vert_{L_2(\Omega)} &\leq C_1^{-1} \left(\Vert \mathcal{L}u_{\theta} - \mathcal{L}u  \Vert_{L_2(\Omega)} + \Vert u_{\theta} - u  \Vert_{L_2(\partial\Omega)}\right)\\
&\leq \sqrt{2}C_1^{-1}\left(\Vert \mathcal{L}u_{\theta} - \mathcal{L}u  \Vert_{L_2(\Omega)}^2 + \Vert u_{\theta} - u  \Vert_{L_2(\partial\Omega)}^2\right)^{1/2}\\
&\leq \sqrt{2}C_1^{-1} \left(R_{D\cap\Omega}(\theta) + R_{D\cap\partial\Omega}(\theta)\right)^{1/2}.
\end{aligned}
\end{equation}
\end{proof}

\section{Related Work}
In this section, we summarize related works. We focus on related works on PINNs, Rademacher complexity of neural networks, and the theory of PINNs.

\subsection{Physics-Informed Neural Networks}
We first introduce some background on physics-informed neural networks (PINNs), which are the models we have considered throughout this paper. 

Due to its success in approximating high-dimensional functions while generalizing well, deep learning has been used to solve partial differential equations (PDEs). Among them, PINNs \cite{raissi2019physics} approximate the solutions of PDEs by neural networks, and then optimize them by stochastic gradient descent for expectation minimization to let them satisfy the physical rule described by the PDE. Later, the extended PINNs (XPINNs) \cite{jagtap2020extended} which adopt domain decomposition methods show faster convergence and better generalization performances than vanilla PINNs, but the underlying reason for this remains unknown. Prior to XPINN, CPINN \cite{jagtap2020conservative} is also a domain decomposition-based PDE solver. However, CPINN is only applicable to conservation laws and does not allow the general spatio-temporal domain decomposition. 

To the best of our knowledge, the present work provides the first proof on generalization of PINNs and XPINNs, and the first analysis on when and how  XPINNs perform better than PINNs. 

\subsection{Rademacher Complexity of Neural Networks}
In this subsection, we review the Rademacher complexity of neural networks, which plays a key role in our generalization theory on PINNs and XPINNs.

In statistical learning theory, the Rademacher complexity measures the richness of a class of functions on which the generalization error bound is based. In the literature, there have been various controls and estimations on the Rademacher complexity of the class of neural network functions. 

There are various ways to bound the Rademacher complexity of the class of neural networks, namely the norm-based control (adopted in this study), and sharpness. 
For norm-based capacity control, \cite{neyshabur2015norm} bounds Rademacher complexity by product of Frobenius norms of parameter matrices. However, their bounds grow exponentially as the depth increases, which contradicts the fact that deeper networks generalize better. To eliminate the exponential dependency on network depth, \cite{bartlett2017spectrally} uses a covering number approach to show a bound scaling as $O({\prod_{l=1}^L \Vert \boldsymbol{W}^l \Vert_2 (\sum_{l=1}^L (\frac{\Vert\boldsymbol{W}^{l}\Vert_{2,1}}{\Vert \boldsymbol{W}^l \Vert_2}^{\frac{2}{3}})^{\frac{3}{2}})}/{\sqrt{m}})$. Although the explicit dependency on network depth $L$ disappears, the bound still has polynomial dependency ($L^3$) on the depth due to the fact that $\Vert\boldsymbol{W}^{l}\Vert_{2,1} \geq \Vert \boldsymbol{W}^l \Vert$. To derive size-independent sample complexity for neural networks, \cite{Golowich2018SizeIndependentSC} further proves several useful results. Firstly, \cite{Golowich2018SizeIndependentSC} improves the dependency on depth from $L^3$ in \cite{bartlett2017spectrally} to $\sqrt{L}$. Secondly, \cite{Golowich2018SizeIndependentSC} uses Shatten $p$-norms of matrices to derive bounds which totally remove any dependency on the depth. \cite{neyshabur2017exploring} empirically validates the effectiveness of these norm-based capacity controls to explain the generalization mystery of deep learning. Another line of work focuses on sharpness, which adopts robustness of the training error to the perturbations in the parameters as a complexity measure for neural networks. \cite{neyshabur2017pac} combines sharpness measure with PAC-Bayesian approach, providing a generalization bound scaling at $O(\frac{\prod_{l=1}^L \Vert \boldsymbol{W}^l \Vert (\sum_{l=1}^L (\frac{\Vert\boldsymbol{W}^{l}\Vert_{F}}{\Vert \boldsymbol{W}^l \Vert}}{\sqrt{m}})$, which is shown to be similar to the bound in \cite{bartlett2017spectrally} when weights are sparse, and tighter than \cite{bartlett2017spectrally} when the weights
are fairly dense and are of uniform magnitude.

In this paper, we mainly consider \cite{bartlett2017spectrally} to control the Rademacher complexity of PINNs.

\subsection{Theory on PINNs}
Due to the success of PINNs in approximating high-dimensional complicated functions such as solutions of PDEs, theoretical evidence accounting for the outstanding empirical performance has increasingly attracted considerable attention.

The most related work is \cite{Luo2020TwoLayerNN}, where the authors consider Barron space for two-layer networks for prior and posterior generalization bounds. \cite{Luo2020TwoLayerNN} also leverages neural tangent kernel to show global convergence of PINNs. \cite{mishra2020estimates} introduces an abstract formalism and the stability properties of the underlying PDE are leveraged to derive an estimate for the generalization
error in terms of the training error and number of training samples. By adapting the Schauder approach and the maximum principle, \cite{shin2020convergence} shows that as number of training samples go to infinity, the minimizer converges to the solution in $C^0$ and $H^1$. \cite{lu2021priori} uses the Barron space for two-layer neural networks to provide a prior analysis on PINN with softplus activation, via adopting the similarity between softplus and ReLU.

Our work extends existing results to multi-layer networks, which is more general and realistic, and considers various kinds of capacity controls for PINNs, namely the Barron norm and the spectral norm. Extensive experiments and analytical examples further validate the effectiveness of our theory. Our work is also the first to analyze when and how XPINN is better than PINN.

\section{Why Barron Space?}
This subsection is devoted to clarify why we choose Barron space theory for developing our prior bound. Overall, it has the following two advantages.


Firstly, we should choose a theory that can measure complexity of both networks and target functions, which plays a key role in the prior generalization bound in Theorem \ref{thm:generalization}. In the Barron space, we are able to measure the complexity of target functions easily via Barron norm, and we can further show that complexities of trained neural networks are controlled by that of the target functions. Since the success of deep learning owns to its data-dependent training, i.e. although the class of networks has huge complexity, gradient descent does find out a simple network, which is reflected by the Barron space theory.

Secondly, the Barron space in high dimension neural networks resembles Sobolev and Besov space which are indispensable building blocks for low dimension classical theory. A proper function space is essential in analyzing PDEs. The class of network functions define a natural function space, i.e. the Barron space. By studying the target function of the PDE problem in the Barron space by its norm, the generalization error of the trained network can be obtained in terms of that norm. This reasoning resembles prior error analysis in classical finite element method where the error is controlled by the Sobolev norm of the target. Therefore, the Barron space adopted is appropriate for PDE analysis. 

\section{Additional Comparison}
\subsection{Comparison of Boundary Loss via Theorem \ref{thm:generalization}}
The comparison will be done via computing their respective theoretical bounds.
In particular, the generalization performance of PINN depends on the upper bound in Theorem \ref{thm:generalization}, which is 
\begin{equation}
R_{S \cap \partial \Omega}(\bt^*) + 8\Vert u^* \Vert_{\mathcal{W}^L(\Omega)}\frac{C(h)\log n_b}{{\sqrt{n_b}}} + 2\sqrt{\frac{\log(2/\delta)}{n_b}}.
\end{equation}
where $n_b$ is the number of boundary training points.

For XPINN's generalization, we can apply Theorem \ref{thm:generalization} to each of the subdomains in XPINN. Specifically, for the $i$-th sub-net in the $i$-th subdomain, i.e. the $\Omega_i, i\in\left\{1,2,...,N_D\right\}$, its generalization performance is upper bounded by
\begin{equation}
R_{S \cap \partial \Omega_i}(\bt^*) + 8\Vert u^* \Vert_{\mathcal{W}^L(\Omega_i)}\frac{C(h)}{{\sqrt{n_{b,i}}}} + 2\sqrt{\frac{\log(2/\delta)}{n_{b,i}}}.
\end{equation}
where $n_{b,i}$ is the number of training boundary points in the $i$-th subdomain. 

Hence, since the $i$-th subdomain has $n_{b,i}$ training boundary points and is in charge of the prediction of $\frac{n_{b,i}}{n_b}$ proportion of testing data, we weighted average their generalization errors to get the generalization error of XPINN
\begin{equation}
\begin{aligned}
\sum_{i=1}^{N_D} \frac{n_{b,i}}{n_b}\left(R_{S \cap \partial \Omega_i}(\bt^*) + 8\Vert u^* \Vert_{\mathcal{W}^L(\Omega_i)}\frac{C(h)}{{\sqrt{n_{b,i}}}} + 2\sqrt{\frac{\log(2/\delta)}{n_{b,i}}}\right),
\end{aligned}
\end{equation}
If we omit the last term and assume the empirical losses of PINN and XPINN are similar, i.e.
\begin{equation}
\begin{aligned}
R_{S \cap \partial \Omega} &\approx \sum_{i=1}^{N_D}\frac{n_{b,i}}{n_b}R_{S \cap \partial \Omega_i}, \sqrt{\frac{2\log(2/\delta)}{n_{b,i}}} &\ll \Vert u^* \Vert_{\mathcal{W}^L(\Omega)}, \Vert u^* \Vert_{\mathcal{W}^L(\Omega_{i})},
\end{aligned}
\end{equation}
then comparing the generalization ability of PINN and XPINN reduces to the following
comparison:\begin{equation}
    \underbrace{\Vert u^* \Vert_{\mathcal{W}^L(\Omega)}}_{\text{PINN}} \ \qquad \text{versus} \qquad \underbrace{\sum_{i=1}^{N_D}\sqrt{\frac{n_{b,i}}{n_b}}\Vert u^* \Vert_{\mathcal{W}^L(\Omega_i)}}_{\text{XPINN}},
\end{equation}
where model having smaller corresponding quantity is more generalizable.

\subsection{Comparison of Boundary Loss via Theorem \ref{thm:post_generalization}}
In this subsection, we compare PINN with XPINN by Theorem \ref{thm:post_generalization}, where we focus on the boundary losses of PINN and XPINN. we denote the upper bound of PINN testing loss as $B_{\text{PINN}}$ and those of the sub-net $i$ in XPINN as $B_{i,\text{XPINN}}$, $i \in \left\{1,2,...,N_D\right\}$ which are provided by the right sides of Theorem \ref{thm:post_generalization}, i.e. the bounds are
\begin{equation}
B_{\text{PINN}} =  R_{S\cap\partial\Omega}(\boldsymbol{\theta})+ \frac{32}{n_b\sqrt{n_b}} + \frac{144\sqrt{d\log(2h^2)}\log n_b}{\sqrt{n_b}} \prod_{l=1}^L M(l)\Big(\sum_{l=1}^L N(l)^{2/3}\Big)^{3/2}+ 2 \sqrt{\frac{\log(2/\delta(M,N))}{2n_b}}.
\end{equation}
\begin{equation}
B_{i,\text{XPINN}} =  R_{S\cap\partial\Omega_i}(\boldsymbol{\theta})+ \frac{32}{n_{b,i}\sqrt{n_{b,i}}} + \frac{144\sqrt{d\log(2h^2)}\log n_{b,i}}{\sqrt{n_{b,i}}} \prod_{l=1}^L M_i(l)\Big(\sum_{l=1}^L N_i(l)^{2/3}\Big)^{3/2}+ 2 \sqrt{\frac{\log(2/\delta(M_i,N_i))}{2n_{b,i}}}.
\end{equation}
Specifically, we assume that all sub-PINNs as well as the PINN model use neural networks with depth $L$ and width $h$. In the bound of PINN, $n_r$ is the total number of residual training samples. $M(l) = \lceil \Vert \boldsymbol{W}^l \Vert_{2} \rceil$, and $N(l) = \lceil \Vert \boldsymbol{W}^l \Vert_{2,1} / \Vert \boldsymbol{W}^l \Vert_{2} \rceil$, where $\boldsymbol{W}^l$ is the $l$-th layer parameter matrix in the PINN model. Moreover, in the bound of XPINN, $n_{r,i}$ is the number of residual training samples in subdomain $i$. $M_i(l) = \lceil \Vert \boldsymbol{W}^l_i \Vert_{2} \rceil$, and $N_i(l) = \lceil \Vert \boldsymbol{W}^l_i \Vert_{2,1} / \Vert \boldsymbol{W}^l_i \Vert_{2} \rceil$, where $\boldsymbol{W}^l_i$ is the $l$-th layer parameter matrix of the $i$-th subnet in the XPINN model.
Because the $i$-th sub-net in XPINN is in charge of the prediction of $\frac{n_{b,i}}{n_b}$ proportion of testing data, we weight-averaged their bounds to get that of XPINN, i.e., $B_{\text{XPINN}} = \sum_{i=1}^{N_D} ({n_{b,i}}/{n_b})B_{i,\text{XPINN}}$ where $B_{\text{XPINN}}$ is the bound for XPINN. Thus, we only need to compare $B_{\text{PINN}}$ with $B_{\text{XPINN}}$, where the model having smaller corresponding quantity is more generalizable.

\newpage

\bibliographystyle{unsrtnat}
\bibliography{references}

\end{document}